\documentclass{article}

\PassOptionsToPackage{numbers,sort&compress}{natbib}
\usepackage[preprint]{neurips_2026}
\usepackage[utf8]{inputenc} 
\usepackage[T1]{fontenc}    
\usepackage{url}            
\usepackage{nicefrac}       
\usepackage{microtype}      

\usepackage{subcaption}    
\usepackage{tikz}          
\usepackage{multirow}      
\usepackage{multicol}      
\usepackage{enumitem}      
\usepackage{comment}
\usepackage{amsmath}       
\usepackage{amssymb}       
\usepackage{mathtools}     
\usepackage{amsthm}        
\usepackage{xspace}        

\usepackage[linesnumbered,ruled,vlined]{algorithm2e} 
\usepackage[textsize=tiny]{todonotes} 

\usepackage{booktabs}
\usepackage{hyperref}       
\numberwithin{equation}{section} 

\theoremstyle{plain}
\newtheorem{theorem}{Theorem}
\newtheorem{lemma}[theorem]{Lemma}

\theoremstyle{definition}
\newtheorem{definition}{Definition}

\newtheorem{assumption}{Assumption}

\theoremstyle{remark}
\newtheorem{remark}{Remark}

\newcommand{\ibb}{\mathbb{I}}
\newcommand{\rbb}{\mathbb{R}}
\newcommand{\pbb}{\mathbb{P}}
\newcommand{\ebb}{\mathbb{E}}

\newcommand{\bz}{{z}}
\newcommand{\bw}{\mathbf{w}}

\newcommand{\bu}{\mathbf{u}}

\newcommand{\bv}{\mathbf{v}}

\newcommand{\wcal}{\mathcal{W}}
\newcommand{\vcal}{\mathcal{V}}

\newcommand{\zcal}{\mathcal{Z}}

\newcommand{\clip}{\mathrm{clip}}

\title{Generalization Analysis of Biased Stochastic Gradient Methods for Minimax Problems\thanks{The work of Yunwen Lei was supported by the Hong Kong Research Grants Council under the GRF projects 17305425 and 17302624.  The work of Yiming Ying was supported by Australian Research Council Discovery Project (DP250101359).}}

\author{%
  Shuang Zeng$^{1}$\quad
  Yunwen Lei$^{1}$\quad Yiming Ying$^{2}$\\[1.2pt]
  $^1$Department of Mathematics, The University of Hong Kong, Hong Kong, China\\[1.2pt]
  $^2$ School of Mathematics and Statistics, The University of Sydney, Sydney, Australia\\[1.2pt]
  \texttt{zengsh9@connect.hku.hk}\quad \texttt{leiyw@hku.hk}\quad \texttt{yiming.ying@sydney.edu.au}
}

\begin{document}

\maketitle


\begin{abstract}
Minimax optimization refers to a class of optimization problems with some variables to minimize and other variables to maximize. Biased stochastic gradient methods have shown practical success in solving minimax problems to either improve robustness, enhance communication efficiency, or decrease computational costs. These successes motivate a lot of theoretical works to study the convergence of biased stochastic gradient methods, while their generalization analysis remains untouched. In this paper, we present the first framework to study the stability and generalization of biased stochastic gradient methods for minimax problems. We establish a connection between stability and generalization for minimax problems by relaxing the existing bounded gradient assumption to a bounded second moment condition. We then introduce a generalized Lipschitz-type condition on bias and gradient estimators, and derive a general stability bound to clarify the connection among bias, gradient estimators, and stability. We apply our general analysis to Zeroth-order and Clipped stochastic gradient descent ascent (SGDA), and derive stability bounds that match those of SGDA under appropriate smoothing/clipping parameters. We combine stability and convergence analyses together, and derive optimal excess risk bounds of order $1/\sqrt{n}$, where $n$ is the sample size.
\end{abstract}

\section{Introduction}

Many machine learning problems can be formulated as minimax optimization problems, where the decision variables are categorized into a group of variables to minimize and another group of variables to maximize. Typical problems include adversarial training with minimization over model parameters and maximization over perturbations~\citep{madry2017towards}, and GANs with minimization over generators and maximization over discriminator's ability to distinguish fake from real data~\citep{goodfellow2014generative}. Beyond adversarial scenarios, minimax problems also appear in robust optimization~\citep{chen2017robust}, and AUC maximization~\citep{ying2016stochastic}. 


A popular class of optimization methods to solve minimax problems is the gradient-based method. A representative method is the gradient descent ascent (GDA), where we conduct a gradient descent over primal variables and a gradient ascent over dual variables. To further relieve the computational cost of the full gradient, stochastic gradient methods have been proposed by approximating the true gradient with gradient estimates based on random sampling~\citep{palaniappan2016stochastic}, leading to various efficient stochastic methods such as SGDA~\citep{nemirovski2009robust}, Variance-Reduced SGDA~\citep{palaniappan2016stochastic}, Optimistic Gradient Descent Ascent (OGDA)~\citep{daskalakis2018limit}, and Stochastic Recursive gradiEnt Descent Ascent (SREDA)~\citep{luo2020stochastic}. Oftentimes, these methods build unbiased gradient estimators whose conditional expectations recover the true gradient. This unbiasedness simplifies both the convergence analysis~\citep{nemirovski2009robust,lin2020gradient,luo2020stochastic,yang2020global,liu2021first} and generalization analysis~\citep{farnia2021train,zhang2021generalization,lei2021stability,zhu2024towards,zhang2024generalization}. However, in practical applications, people often build biased gradient estimators to either improve robustness, enhance communication efficiency, or decrease computational costs. These biased gradient estimators lead to the development of biased stochastic gradient methods for minimax problems. Below we present two representative examples:\\*[1pt]
\textbf{Zeroth-order SGDA} is a computationally efficient method for applications where explicit gradients are computationally infeasible, expensive, or impossible to compute. This is the case when generating adversarial examples with black-box neural networks to test the adversarial robustness~\citep{chen2017zoo}, and when solving automated machine learning tasks where the gradients w.r.t pipeline configuration parameters is infeasible to compute~\citep{wang2021flaml}.\\*[1pt]
\textbf{Clipped-SGDA} is effective for deep learning tasks in recent years since training deep learning models often encounters problems with heavy-tailed noises. Due to the adaptivity in preventing the algorithm from using arbitrarily large gradients~\cite{tucat2025regularized}, it has found very successful applications in natural language processing tasks, like GPT models \cite{brown2020language}. Additionally, gradient clipping helps preserve privacy in differentially private machine learning~\citep{abadi2016deep}.  

Due to the broad applications of biased stochastic gradient methods, efforts devoted to their theoretical analysis have gradually increased. Early work focused on stochastic gradient methods for minimization problems, e.g., the convergence analysis from the optimization perspective~\citep{driggs2022biased,hu2020biased} and the generalization analysis from the learning perspective~\citep{nikolakakis2022black}.
 Recently, some studies have extended these discussions on minimization problems to minimax problems, e.g., the convergence of Zeroth-order SGDA~\citep{beznosikov2020gradient, xu2023zeroth} and Clipped-SGDA~\citep{gorbunov2022clipped}. However, these discussions focus on the behavior of these biased methods on training examples, ignoring an important problem in machine learning--generalization. This gap motivates our core question:

{

\textit{Can we develop a general framework to study the generalization of biased stochastic gradient methods for minimax problems?}

}

In this paper, we answer the above question by developing a general generalization analysis of biased stochastic gradient methods for convex-concave and smooth minimax problems by leveraging the algorithmic stability concept. Our contributions are summarized as follows.

\noindent$\bullet$ We develop a quantitative connection between stability and generalization for minimax problems. We also introduce generalized Lipschitz-type conditions on the bias and gradient estimators, based on which we develop rather general stability bounds of biased stochastic gradient methods for minimax problems, showing how the bias, gradient estimators and step size affect the generalization performance.

\noindent$\bullet$ We show the generalized Lipschitz-type conditions hold for both Zeroth-order SGDA and Clipped-SGDA. Then, we derive the first stability and generalization bounds for Zeroth-order SGDA and Clipped-SGDA, both of which match bounds of SGDA under mild conditions on the smoothing/clipping parameter.

\noindent$\bullet$ Our stability analysis relaxes the bounded gradient assumption~\citep{lei2021stability} to a bounded second moment condition. Furthermore, for both Zeroth-order SGDA and Clipped-SGDA, we combine the stability and convergence analysis to give optimal excess risk bounds of order $O(1 / \sqrt{n})$, where $n$ is the sample size.



\section{Related Work}\label{sec:related_work}

\textbf{Stability and Generalization}. 
Algorithmic stability measures how robust an algorithm is to small perturbations in training data. \citet{bousquet2002stability}  introduced the concept of uniform stability, which is widely used to develop high-probability generalization bounds~\citep{feldman2019high,bousquet2020sharper}.
Additionally, uniform stability was weakened to study generalization bounds in expectation, including the on-average stability~\citep{shalev2010learnability}. Another pivotal advance came from~\citet{hardt2016train}, which pioneered the stability analysis of SGD. This work motivates many stability analyses of stochastic optimization algorithms~\citep{kuzborskij2018data,lei2020fine,bassily2020stability,zhang2024generalization,lei2026learning}. Some work studied the stability and generalization of biased stochastic gradient methods such as zeroth-order SGD and clipped-SGD~\citep{zeng2026stochastic,nikolakakis2022black,liu2023general}. 

For minimax problems,  \citet{zhang2021generalization} used argument stability to study generalization of empirical saddle points for strongly-convex-strongly-concave problems. \citet{farnia2021train} used uniform stability to study generalization of several minimax optimization algorithms such as GDA, GDmax, and proximal point method. \citet{lei2021stability} gave high-probability bounds of SGDA for convex-concave problems. \citet{zhang2024generalization} introduced uniform primal (argument) stability to study the generalization behavior of SGDA for nonconvex-concave problems.



\textbf{Biased Stochastic Gradient Methods for Minimax Problems}.
Now we turn to analyses of biased stochastic gradient methods for minimax problems.
First, biased stochastic gradient methods for minimization have been well studied. The works~\citep{ajalloeian2020convergence,driggs2022biased,hu2020biased,demidovich2023guide,surendran2024non} developed a general framework to study biased stochastic gradient methods for minimization problems, which covers many popular methods such as  Zeroth-order SGD, distributed optimization with biased compression operators, and SGD with delayed gradients.
Second, (unbiased) stochastic gradient methods for minimax problems have also been well studied. For example, the convergence analysis of SGDA and its variants has been widely studied for convex-concave~\citep{nemirovski2009robust},
strongly-convex-strongly-concave~\citep{palaniappan2016stochastic}, nonconvex-strongly-concave~\citep{lin2020gradient,luo2020stochastic},
nonconvex-concave~\citep{yang2020global} and nonconvex-nonconcave problems~\citep{liu2021first,yang2020global}.
These convergence analyses were extended to minimax gradient methods with biased gradient estimates, including Zeroth-order SGDA~\citep{beznosikov2020gradient,xu2023zeroth} and Clipped-SGDA~\citep{gorbunov2022clipped}. To our knowledge, there are no existing generalization analyses of these biased gradient methods for minimax problems.


\section{Problem Formulation\label{sec:problem}}

Let $\pbb$ be a probability measure defined on a sample space $\zcal$. For minimax learning, we aim to  learn two parameters $\bw\in\wcal$ and $\bv\in\vcal$, where $\wcal$ and $\vcal$ are parameter spaces in $\rbb^d$. The performance of a pair $(\bw,\bv)$ on an example $\bz$ can be quantified by $f(\bw,\bv;\bz)$, where
$f:\wcal\times\vcal\times\zcal\to\rbb$ is a loss function. We aim to find the best model $(\bw^*,\bv^*)$ by solving the minimax optimization problem:
\begin{equation}\label{eq:minimax-problem}
\min_{\bw\in \wcal}\max_{\bv\in\vcal} F(\bw,\bv) := \ebb_{z\sim\pbb} [f(\bw,\bv;z)],
\end{equation}
where $\ebb_{\bz}[\cdot]$ means the expectation with respect to (w.r.t.) $\bz$.
In practice, we do not know $\pbb$ but instead have access to a dataset $S=\{z_1,\ldots,z_n\}$ independently drawn from $\pbb$.  Then, we approximate $F$ by an empirical risk
$
F_S(\bw,\bv):=\frac{1}{n}\sum_{i=1}^nf(\bw,\bv;z_i).$
We apply a (randomized) algorithm $A$ to the dataset $S$ and get a model $A(S):=(A_{\bw}(S),A_{\bv}(S))\in\wcal\times \vcal$ as an approximate solution  of the problem \eqref{eq:minimax-problem}.
While an algorithm $A$ often finds a model $A(S)$ with a good performance on training, it may not generalize well to a test dataset due to the limited number of training examples. This leads to an important concept called the generalization gap as the difference between testing and training.

Since minimax problems involve both minimization and maximization, there are several performance measures to quantify the behavior of $A(S)$, including the strong primal-dual risk, weak primal-dual risk, and primal risk~\citep{zhang2021generalization,lei2021stability}. We focus on the weak primal-dual risk, which is widely used in the literature~\citep{zhang2021generalization,farnia2021train,lei2021stability,bassily2023differentially}. Let $\ebb[\cdot]$ denote the expectation w.r.t. both the algorithm $A$ and dataset~$S$. Denote by $B_R=\{\bw\in\rbb^d:\|\bw\|_2\leq R\}$ a ball of radius $R$. 


\begin{definition}[Weak Primal-Dual Risk]
{Let $\bw,\bv$ be two random variables, which can be the output model for applying the algorithm $A$ on the dataset $S$.} The weak Primal-Dual (PD) population risk of a  model $({\bw},{\bv})$ is defined as
\[
\triangle^w({\bw},{\bv})=\sup_{\bv'\in\vcal}\ebb_{\bw}\big[F({\bw},\bv')\big]-\inf_{\bw'\in\wcal}\ebb_{\bv}\big[F(\bw',{\bv})\big],
\]
which is a deterministic quantity
that depends only on the distributions of $\bw$ and $\bv$ (we abuse the notation $\triangle^w({\bw},{\bv})$ to indicate the dependency on the distribution of $\bw,\bv$ rather than the models).
\end{definition}

A standard approach to estimate population risk is to decompose it into a generalization gap and an optimization error~\citep{lei2021stability,zhu2023stability}
$$
\Delta^w(\mathbf{w}, \mathbf{v})=\left(\Delta^w(\mathbf{w}, \mathbf{v})-\Delta_S^w(\mathbf{w}, \mathbf{v})\right)+\Delta_S^w(\mathbf{w}, \mathbf{v}),
$$
where we refer to
\[
\triangle^w_S({\bw},{\bv})=\sup_{\bv'\in\vcal}\ebb\big[F_S({\bw},\bv')\big]-\inf_{\bw'\in\wcal}\ebb\big[F_S(\bw',{\bv})\big]
\]
as the weak PD optimization error of the model $({\bw},{\bv})$, and refer to $\triangle^w({\bw},{\bv})-\triangle^w_S({\bw},{\bv})$ as the weak PD generalization gap. The generalization gap quantifies the difference between training and testing for the output model, while the optimization error measures the training suboptimality of the output model as
compared to the best model. We will use algorithmic stability to study the generalization gap and use optimization theory to study the optimization error.

We now introduce some definitions and assumptions. Denote by $\|\cdot\|_2$ the Euclidean norm, and $\langle\cdot, \cdot\rangle$  the dot product in $\rbb^d$.
\begin{definition}
A function $g: \mathcal{W} \mapsto \mathbb{R}$ is said to be convex if for all $\mathbf{w}, \mathbf{w}^{\prime} \in \mathcal{W}$, $g(\mathbf{w}) \geq g\left(\mathbf{w}^{\prime}\right)+\left\langle\mathbf{w}-\mathbf{w}^{\prime}, \nabla g\left(\mathbf{w}^{\prime}\right)\right\rangle$, where $\nabla$ is the gradient operator. We say $g$ is concave if $-g$ is convex.
\end{definition}

\begin{definition}[Convexity-concavity]
Let $g:\wcal\times \vcal\mapsto\rbb$. We say $g$ is convex-concave if for any $\bv\in\vcal$, the map $\bw\mapsto g(\bw,\bv)$ is convex and for any $\bw\in\wcal$, the map $\bv\mapsto g(\bw,\bv)$ is concave.
\end{definition}

Our analysis relies on an $L$-smoothness assumption and a bounded second moment assumption. A related assumption is the bounded gradient (also called the Lipschitzness) assumption, meaning that $\|\nabla_{\bw}f(\bw,\bv;z)\|_2\leq G$ and $\|\nabla_{\bv}f(\bw,\bv;z)\|_2\leq G$ for all $z\in\zcal,\bw\in\wcal,\bv\in\vcal$. This assumption was widely used in the generalization analysis of minimax optimization problems~\citep{bassily2023differentially,lei2021stability,farnia2021train,zhang2024generalization}. We relax this assumption to a bounded second moment assumption.

\begin{assumption}\label{ass:smooth}
  Let $L>0$. For any $z$, we assume the function $(\bw,\bv)\mapsto f(\bw,\bv;z)$ is $L$-smooth, i.e., the following inequality holds for all $\bw,\bw'\in\wcal,\bv,\bv'\in\vcal$ and $z\in\zcal$
  \[
  \bigg\|\begin{pmatrix}
           \nabla_{\bw}f(\bw,\bv;z)-\nabla_{\bw}f(\bw',\bv';z) \\
           \nabla_{\bv}f(\bw,\bv;z)-\nabla_{\bv}f(\bw',\bv';z)
         \end{pmatrix}\bigg\|_2\!\leq\! L\bigg\|\begin{pmatrix}
                                             \bw\!-\!\bw' \\
                                             \bv\!-\!\bv'
                                           \end{pmatrix}\bigg\|_2.
  \]
\end{assumption}
\begin{assumption}\label{ass:lipschitz}
For any $\bw\in\wcal,\bv\in\vcal$, we impose the following bounded second moment condition
$$
\!\!\ebb_r[\|\nabla_{\bw} f(A_{\bw}(S),\bv;z_r)\|_2^2]\!\leq\! G_{S,A}^2,\quad
\ebb_r[\|\nabla_{\bv} f(\bw,A_{\bv}(S);z_r)\|_2^2]\!\leq\! G_{S,A}^2,
$$
where $r$ follows the uniform distribution over $[n]$, { $G_{S,A}>0$ is a constant depending on the dataset $S$ and algorithm $A$, and $\ebb[G_{S,A}^2]<\infty$.}
\end{assumption}
\section{Stability \& Generalization of Biased~SGDA}\label{sec:stab_gen}
\subsection{Connecting Stability and Generalization\label{sec:stab}}
Algorithmic stability is a fundamental concept for studying the generalization gap. We say two datasets are neighboring datasets if they differ by a single example.
\begin{definition}\label{def:neighbor}
Let $S=\left\{z_1, \ldots, z_n\right\},S'=\left\{z_1', \ldots, z_n'\right\}$ be drawn independently from $\pbb$. For any $i\in[n]$, define $S^{(i)}=\left\{z_1, \ldots, z_{i-1}, z_i', z_{i+1}, \ldots, z_n\right\}$ as the neighboring dataset by replacing the $i$-th element of $S$ with $z_i'$.
\end{definition}
Our first result is a connection between generalization gap and algorithmic stability.
We use the argument stability \cite{lei2021stability} to quantify the sensitivity of an algorithm w.r.t. the perturbation of the dataset.

\begin{definition}[Algorithmic Stability]
Let $S$ and $S^{(i)}$ be defined in Definition~\ref{def:neighbor}.
Let $A$ be a randomized algorithm and $\epsilon>0$. We say
$A$ is $\epsilon$-argument-stable in expectation if
    \[
    \ebb_{S,S',A}\bigg[\frac{1}{n}\sum_{i=1}^n\bigg\|\begin{pmatrix}
    A_{\bw}(S)-A_{\bw}(S^{(i)}) \\
    A_{\bv}(S)-A_{\bv}(S^{(i)})
  \end{pmatrix}\bigg\|_2^2\bigg]\leq\epsilon^2.
    \]
\end{definition}


\begin{theorem}\label{thm:stab-gen}
Let $A$ be a randomized algorithm and $\epsilon>0$.
If $A$ is $\epsilon$-argument-stable in expectation and Assumptions \ref{ass:smooth}, \ref{ass:lipschitz} hold, then the weak PD generalization error of $(A_{\bw}(S),A_{\bv}(S))$ satisfies
  \begin{align*}
        \triangle^w(A_{\bw}(S),A_{\bv}(S))\!-\!\triangle^w_S(A_{\bw}(S),A_{\bv}(S))
        \leq L\epsilon^2/2 + {\sqrt{2\ebb[G_{S,A}^2]}}\epsilon.
  \end{align*}
\end{theorem}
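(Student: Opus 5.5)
We follow the stability-to-generalization argument of \citet{lei2021stability}, using smoothness in place of Lipschitz continuity. First, split the weak PD generalization gap into a primal part and a dual part:
\begin{align*}
\triangle^w-\triangle^w_S \le \sup_{\bv'\in\vcal}\ebb\big[F(A_{\bw}(S),\bv')-F_S(A_{\bw}(S),\bv')\big] + \sup_{\bw'\in\wcal}\ebb\big[F_S(\bw',A_{\bv}(S))-F(\bw',A_{\bv}(S))\big].
\end{align*}
This uses $\sup a-\sup b\le\sup(a-b)$ and the analogous inequality for infima. We will show that each term is at most $L\epsilon_{\bw}^2/2+\sqrt{\ebb[G_{S,A}^2]}\,\epsilon_{\bw}$, respectively $L\epsilon_{\bv}^2/2+\sqrt{\ebb[G_{S,A}^2]}\,\epsilon_{\bv}$. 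Here $\epsilon_{\bw}^2:=\ebb[\frac1n\sum_i\|A_{\bw}(S)-A_{\bw}(S^{(i)})\|_2^2]$ and $\epsilon_{\bv}^2$ is defined analogously, so that $\epsilon_{\bw}^2+\epsilon_{\bv}^2\le\epsilon^2$. Summing the two bounds and using $\epsilon_{\bw}+\epsilon_{\bv}\le\sqrt2\,\epsilon$ then gives exactly $L\epsilon^2/2+\sqrt{2\ebb[G_{S,A}^2]}\,\epsilon$.

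For the primal term, fix any $\bv'$. It is deterministic, so no dependence on $S$ enters. By the symmetric (ghost-sample) argument, since $z_i'$ is independent of $S$ and $z_i$ plays the role of a test point for $S^{(i)}$,
\[
\ebb[F(A_{\bw}(S),\bv')-F_S(A_{\bw}(S),\bv')]=\frac1n\sum_{i=1}^n\ebb\big[f(A_{\bw}(S^{(i)}),\bv';z_i)-f(A_{\bw}(S),\bv';z_i)\big].
\]
We then apply the $L$-smoothness of $\bw\mapsto f(\bw,\bv';z_i)$, which is implied by Assumption~\ref{ass:smooth} with $\bv$ fixed. This bounds each summand by $\langle\nabla_{\bw}f(A_{\bw}(S),\bv';z_i),A_{\bw}(S^{(i)})-A_{\bw}(S)\rangle+\frac L2\|A_{\bw}(S^{(i)})-A_{\bw}(S)\|_2^2$. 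The quadratic part averages to $L\epsilon_{\bw}^2/2$. For the linear part we apply Cauchy--Schwarz twice, first over the average in $i$ and then over the expectation:
\[
\ebb\Big[\Big(\tfrac1n\sum_i\|\nabla_{\bw}f(A_{\bw}(S),\bv';z_i)\|_2^2\Big)^{1/2}\Big(\tfrac1n\sum_i\|A_{\bw}(S)-A_{\bw}(S^{(i)})\|_2^2\Big)^{1/2}\Big]\le\sqrt{\ebb[G_{S,A}^2]}\,\epsilon_{\bw}.
\]
The first factor is exactly $\ebb_r[\|\nabla_{\bw}f(A_{\bw}(S),\bv';z_r)\|_2^2]\le G_{S,A}^2$ by Assumption~\ref{ass:lipschitz}. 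The bound is uniform in $\bv'$, so it survives the supremum. The dual term is handled identically with the roles of $\bw$ and $\bv$ swapped, using concavity-free smoothness and the second inequality of Assumption~\ref{ass:lipschitz}.

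The main obstacle is making sure the gradient is evaluated at a point where Assumption~\ref{ass:lipschitz} applies. That assumption controls gradients at $A_{\bw}(S)$ evaluated on the training points $z_r$ of $S$, not at $A_{\bw}(S^{(i)})$ or on the fresh points $z_i'$. So the smoothness expansion must be taken around $A_{\bw}(S)$ with sample $z_i$, as done above, and the symmetrization must be written so that the training sample $z_i$, rather than $z_i'$, appears in both terms. The other point requiring care is the order of operations: the supremum over $\bv'$ must be taken outside the expectation, as in the definition of $\triangle^w$, so that a fixed $\bv'$ can be used inside.
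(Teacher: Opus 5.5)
Your proposal is correct and follows the paper's proof. It uses the same sup/inf split into primal and dual parts, the same ghost-sample symmetrization with the smoothness expansion taken around $A_{\bw}(S)$ (resp.\ $A_{\bv}(S)$) at the training point $z_i$, and Assumption~\ref{ass:lipschitz} to control the averaged squared gradient. The only difference is minor: you bound the linear term by Cauchy--Schwarz separately for each part and combine via $\epsilon_{\bw}+\epsilon_{\bv}\le\sqrt2\,\epsilon$, whereas the paper applies Young's inequality with a common parameter $\gamma$ to both parts and sets $\gamma=\sqrt{2\ebb[G_{S,A}^2]}/\epsilon$; both give the same final bound, and your intermediate bound $\sqrt{\ebb[G_{S,A}^2]}(\epsilon_{\bw}+\epsilon_{\bv})$ is slightly sharper when primal and dual stability are unbalanced.
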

\begin{remark}
To our best knowledge, all existing generalization analyses of minimax problems impose a $G$-Lipschitzness assumption~\citep{bassily2023differentially,lei2021stability,farnia2021train,zhang2024generalization}, which is also used in convergence analyses of biased-SGDA.
By using this assumption, the work \citep{lei2021stability} showed that
\begin{equation}\label{lei2021-connection}
    \triangle^w(A_{\bw}(S),A_{\bv}(S))-\triangle^w_S(A_{\bw}(S),A_{\bv}(S))\leq G\epsilon.
\end{equation}
In comparison, we relax this bounded gradient assumption to the bounded second moment assumption (Assumption \ref{ass:lipschitz}). Instead, we also impose a smoothness assumption to achieve this relaxation from the Lipschitzness assumption to the bounded second moment assumption. This additional assumption is not restrictive since stability analysis often imposes a smoothness assumption to get effective stability bounds~\citep{hardt2016train}. We are mostly interested in the case with $\epsilon \lesssim 1$\footnote{We say $A\lesssim B$ if there exists a universal constant $C>0$ such that $A\leq BC$. We denote $A\asymp B$ if $A\lesssim B$ and $B\lesssim A$.}. Then, the bound in Theorem \ref{thm:stab-gen} is dominated by the second term $O(\sqrt{\ebb[G_{S,A}^2]}\epsilon)$, which is of the same order as in Eq.~\eqref{lei2021-connection}.
\end{remark}

\subsection{Biased-SGDA: Convex-Concave Case\label{sec:convex}}
We are interested in a general class of SGDA with biased gradient estimates $g(\bw_t,\bv_t;z_{i_t})$ for convex-concave minimax optimization problems (i.e., $\ebb[g(\bw_t,\bv_t;z_{i_t})|\bw_t,\bv_t,S]$ may not be equal to $\nabla F_S(\bw_t,\bv_t)$), where $g:\wcal\times\vcal\times\rbb^d\to \rbb^{2d}, \{(\bw_t,\bv_t)\}_t$ is a model sequence and $i_t$ is drawn from the uniform distribution over $[n]:=\{1,\ldots,n\}$, for any $t\in[T]$. We consider a generalized bias, where the bias is measured w.r.t. surrogate loss functions $\tilde{f}_1,\tilde{f}_2:\wcal\times\vcal\times\zcal\mapsto\rbb$. The intuition is that while $g(\bw_t,\bv_t;z_{i_t})$ may be a biased estimator of $\nabla F_S(\bw_t,\bv_t)$, it can be an unbiased estimator of $\nabla \widetilde{F}_{S}(\bw_t,\bv_t)=(\nabla_{\bw} \widetilde{F}_{1,S}(\bw_t,\bv_t),\nabla_{\bv} \widetilde{F}_{2,S}(\bw_t,\bv_t))^{\top}$, where we define $\widetilde{F}_{1,S},\widetilde{F}_{2,S}$ as follows
\begin{align}\label{widetilde-F-1}
\widetilde{F}_{j,S}(\bw,\bv):=\frac{1}{n}\sum_{i=1}^{n}\tilde{f}_j(\bw,\bv;\bz_i), \quad j=1,2.
\end{align}
A closely related construction can be found in the work~\citep{zeng2026stochastic}, where they use the surrogate loss to analyze the stability of biased SGD for minimization problems.
We extend this idea to the minimax setting through two surrogate losses $\tilde{f}_1(\bw,\bv;z)$ and $\tilde{f}_2(\bw,\bv;z)$, where $\tilde{f}_1$ and $\tilde{f}_2$ can be different.
As we will show later, this flexibility works well for Zeroth-order SGDA to get 0 bias, where the smoothing parameters can be different in the primal and dual variables.
\begin{definition}[Biased-SGDA]
Let $\bw_1=0\in\wcal$ and $\bv_1=0\in\vcal$ be the initial point. {Let $\text{Proj}_{\wcal}(\cdot)$ and $\text{Proj}_{\vcal}(\cdot)$ denote the projection onto $\wcal$ and $\vcal$, respectively.}
Let $\{\eta_t\}_t$ be a positive stepsize sequence. At each iteration, we randomly draw $i_t$ from the uniform distribution over $[n]:=\{1,2,\ldots,n\}$ and update
\begin{equation}\label{gda}
  \begin{cases}
    \bw_{t+1}=\text{Proj}_{\wcal}(\bw_t-\eta_t g_{\bw} (\bw_t,\bv_t;z_{i_t})), & \\
    \bv_{t+1}=\text{Proj}_{\vcal}(\bv_t+\eta_t g_{\bv} (\bw_t,\bv_t;z_{i_t})), &
  \end{cases}
\end{equation}
where
$
g(\bw_t,\bv_t;z_{i_t}) =
(g_{\bw} (\bw_t,\bv_t;z_{i_t}),g_{\bv} (\bw_t,\bv_t;z_{i_t}))^{\top}
$
is a (biased) estimator of $\nabla \widetilde{F}_{S}(\bw_t,\bv_t)=(\nabla_{\bw} \widetilde{F}_{1,S}(\bw_t,\bv_t),\nabla_{\bv} \widetilde{F}_{2,S}(\bw_t,\bv_t))^{\top}.
$
\end{definition}
Define $z_{i_t}^{(i)}=z_{i_t}$ if $i_t\neq i$ and $z_{i_t}^{(i)}=z_i'$ if $i_t=i$.  We  apply the same algorithm to $S^{(i)}$ and update 
\begin{equation}\label{gda_prime}
  \begin{cases}
    \bw^{(i)}_{t+1}=\text{Proj}_{\wcal}(\bw^{(i)}_t-\eta_t g_{\bw} (\bw^{(i)}_t,\bv^{(i)}_t;z^{(i)}_{i_t})), & \\
    \bv^{(i)}_{t+1}=\text{Proj}_{\vcal}(\bv^{(i)}_t+\eta_t g_{\bv} (\bw^{(i)}_t,\bv^{(i)}_t;z^{(i)}_{i_t})). &
  \end{cases}
\end{equation}
Then, we define the bias $b_t=(b_{t,\bw},b_{t,\bv})$ as follows
\begin{equation}\label{bias}
\!\!\!\!\!\!\!\begin{cases}
b_t=\ebb_{t}[g(\bw_t,\bv_t;z_{i_t})]-\nabla \widetilde{F}_{S}(\bw_t,\bv_t),&\\ b_t^{(i)}\!=\!\ebb_t[g(\bw_t^{(i)},\!\bv_t^{(i)};z^{(i)}_{i_t})]\!-\!\nabla \widetilde{F}_{S^{(i)}}(\bw_t^{(i)},\!\bv_t^{(i)}),\!\!&
\end{cases}
\end{equation}
where
$\ebb_t := \ebb\big[\cdot \big| \bw_t, \bw_t^{(j)}, \bv_t, \bv_t^{(j)}, S, S',j\in[n]\big]$.

Our main result is a general stability bound of Biased-SGDA for convex-concave problems. To this aim, we impose some \emph{generalized Lipschitz-type} assumptions as shown in Eqs.~\eqref{simple-condition-a},~\eqref{simple-condition-c} and ~\eqref{simple-condition-d}.
Eq.~\eqref{simple-condition-a} and Eq.~\eqref{simple-condition-c} control the change rate of the gradient estimate and bias versus the change of the arguments, respectively. Eq.~\eqref{simple-condition-d} controls the change rate of the surrogate function's gradient. As we will show, these generalized Lipschitz-type assumptions hold for various Biased-SGDA methods.

\begin{theorem}\label{lem:stab-gda}
  Assume for all $z$, the map $(\bw,\bv)\mapsto \tilde{f}_j(\bw,\bv;z)$ is convex-concave, $j=1,2$. Let $A$ be Biased-SGDA \eqref{gda} with $t$ iterations.
  Assume there exist $A,\bar{A}\geq0$, $B,\bar{B}\geq0$ and $\widetilde{G} \geq 0$ such that
\begin{align}\label{simple-condition-a}
\!\!\!\!\ebb\Bigg[\frac{1}{n}\sum_{i=1}^n\Bigg\|\!\begin{pmatrix}
  g_{\bw}(\bw_t^{(i)},\bv_t^{(i)};z_{i_t}^{(i)}) \!-\! g_{\bw}(\bw_t,\bv_t;z_{i_t})\\
  g_{\bv}(\bw_t,\bv_t;z_{i_t}) \!-\! g_{\bv}(\bw_t^{(i)},\bv_t^{(i)};z^{(i)}_{i_t})
  \end{pmatrix}\!\Bigg\|_2^2 \Bigg]
\!\leq\! \ebb\Bigg[\frac{A}{n}\sum_{i=1}^n\Bigg\|\begin{pmatrix}
                \bw_t\!-\!\bw_t^{(i)} \\
                \bv_t\!-\!\bv_t^{(i)}
            \end{pmatrix}\Bigg\|_2^2\!+\!B\Bigg],
\end{align}
  \begin{align}\label{simple-condition-c}
   \!\!\!\!\!\!\!\!\ebb\Bigg[\frac{1}{n}\!\sum_{i=1}^n \Bigg\|\!\begin{pmatrix}
                    b_{t,\bw} \!-\! b_{t,\bw}^{(i)}\\
                     b_{t,\bv}^{(i)} \!-\! b_{t,\bv}
                        \end{pmatrix}\!\Bigg\|_2^2\Bigg]
                        \!\leq\! \ebb\Bigg[\frac{\bar{A}}{n}\!\sum_{i=1}^n \!\Bigg\|\!\begin{pmatrix}
                \bw_t\!-\!\bw_t^{(i)} \\
                \bv_t\!-\!\bv_t^{(i)}
            \end{pmatrix}\!\Bigg\|_2^2\!+\!\bar{B}\Bigg],
  \end{align}
    \begin{align}\label{simple-condition-d}
 \!\!\!\!\!\!\!\!\ebb\Bigg[\frac{1}{n}\sum_{i=1}^n\Bigg\|\begin{pmatrix}
    \nabla_{\bw} \tilde{f}_1(\bw_t,\bv_t;z_i) - \nabla_{\bw} \tilde{f}_1(\bw^{(i)}_t,\bv^{(i)}_t;z_i')\\
    \nabla_{\bv} \tilde{f}_2(\bw^{(i)}_t,\bv^{(i)}_t;z_i')- \nabla_{\bv} \tilde{f}_2(\bw_t,\bv_t;z_i)
    \end{pmatrix}\Bigg\|_2^2\Bigg]
    \leq \widetilde{G}^2.
  \end{align}
  If $\eta_t=\eta$ with $A\eta^2+2\bar{A}^{\frac{1}{2}}\eta\leq 1/T$, then $A$ is $\big(2\eta \sqrt{B T}
  + 4\big(\sqrt{\bar{B}}+\frac{\widetilde{G}}{n}\big) \eta T\big)$-argument-stable in expectation.
\end{theorem}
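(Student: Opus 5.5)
Write $\delta_t^{(i)}=\begin{pmatrix}\bw_t-\bw_t^{(i)}\\ \bv_t-\bv_t^{(i)}\end{pmatrix}$ and $\Delta_t=\ebb\big[\frac1n\sum_{i=1}^n\|\delta_t^{(i)}\|_2^2\big]$. We derive a one-step recursion for $\Delta_t$ and then unroll it. Projections onto convex sets are nonexpansive, so
\[
\|\delta_{t+1}^{(i)}\|_2^2\le \|\delta_t^{(i)}\|_2^2 - 2\eta\langle \delta_t^{(i)}, D_t^{(i)}\rangle + \eta^2\|D_t^{(i)}\|_2^2 .
\]
Here $D_t^{(i)}$ stacks $g_{\bw}(\bw_t,\bv_t;z_{i_t})-g_{\bw}(\bw_t^{(i)},\bv_t^{(i)};z_{i_t}^{(i)})$ and $-(g_{\bv}(\bw_t,\bv_t;z_{i_t})-g_{\bv}(\bw_t^{(i)},\bv_t^{(i)};z_{i_t}^{(i)}))$. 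After averaging over $i$ and taking expectation, the quadratic term is bounded by $\eta^2(A\Delta_t+B)$ using \eqref{simple-condition-a}.

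For the cross term, take $\ebb_t$ and decompose $\ebb_t[D_t^{(i)}]$ into three parts.
\begin{itemize}
\item[(i)] A bias difference, using \eqref{bias}.
\item[(ii)] The monotone-operator difference $\frac1n\sum_{j\ne i}$ of the surrogate gradients (with sign flipped in $\bv$) at $(\bw_t,\bv_t)$ versus $(\bw_t^{(i)},\bv_t^{(i)})$, each on the common example $z_j$.
\item[(iii)] $\frac1n$ times the mismatched term at index $i$, i.e.\ $\nabla\tilde f(\cdot;z_i)$ versus $\nabla\tilde f(\cdot;z_i')$.
\end{itemize}
Part (ii) is handled by convexity–concavity of $\tilde f_1$ in $\bw$ and of $\tilde f_2$ in $\bv$. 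The step I expect to be the main obstacle is making this work when $\tilde f_1\neq\tilde f_2$. The plan is to pair the $\bw$-gradient of $\tilde f_1$ with $\bw_t-\bw_t^{(i)}$, and the $\bv$-gradient of $\tilde f_2$ with $\bv_t-\bv_t^{(i)}$. Then monotonicity of $\nabla_{\bw}\tilde f_1(\cdot,\bv;z)$ and $-\nabla_{\bv}\tilde f_2(\bw,\cdot;z)$, combined by adding and subtracting cross terms, should give a nonpositive contribution. If the cross terms do not cancel, one falls back to the convexity inequalities for each function separately and bounds the leftover through \eqref{simple-condition-d}.

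Parts (i) and (iii) are handled by Cauchy–Schwarz and \eqref{simple-condition-c}, \eqref{simple-condition-d}. This gives
\[
-2\eta\,\ebb\Big[\frac1n\sum_i\langle\delta_t^{(i)},\cdot\rangle\Big]\le 2\eta\sqrt{\Delta_t}\Big(\sqrt{\bar A\Delta_t+\bar B}+\frac{\widetilde G}{n}\Big)\le 2\eta\bar A^{1/2}\Delta_t+2\eta\sqrt{\Delta_t}\Big(\sqrt{\bar B}+\frac{\widetilde G}n\Big).
\]
Combining the pieces yields
\[
\Delta_{t+1}\le (1+A\eta^2+2\bar A^{1/2}\eta)\Delta_t + \eta^2 B + 2\eta\Big(\sqrt{\bar B}+\frac{\widetilde G}{n}\Big)\sqrt{\Delta_t}.
\]

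Finally unroll with $\Delta_1=0$. The stepsize condition gives $(1+1/T)^T\le e$, so
\[
\Delta_{t+1}\le e\sum_{k\le t}\Big(\eta^2B+2\eta c\sqrt{\Delta_k}\Big),\qquad c=\sqrt{\bar B}+\frac{\widetilde G}n .
\]
Bounding $\sqrt{\Delta_k}\le\max_{k\le T}\sqrt{\Delta_k}=:u$ gives $u^2\le e\eta^2BT+2e\eta cT\,u$. Solving this quadratic inequality gives $u\le 2e\eta cT+\sqrt{e\eta^2BT}$. With slightly more careful constants (or the standard lemma $u_t^2\le S_t+\sum_k\alpha_k u_k\Rightarrow u_t\le\sqrt{S_t}+\sum_k\alpha_k$) this produces $2\eta\sqrt{BT}+4c\,\eta T$, which is the claimed argument stability.
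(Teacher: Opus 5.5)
Your architecture is exactly the paper's. It uses nonexpansive projection and conditional expectation, and splits $\ebb_t[D_t^{(i)}]$ into a bias part, a common-example part over $j\neq i$, and a $\frac1n$ replaced-example part. It then applies Cauchy--Schwarz with \eqref{simple-condition-c} and \eqref{simple-condition-d}, obtains the recursion $\Delta_{t+1}\le(1+A\eta^2+2\bar A^{1/2}\eta)\Delta_t+\eta^2B+2\eta c\sqrt{\Delta_t}$, and unrolls it with a running maximum. To land on the constants $2$ and $4$, bound $\sum_{k\le t}(1+x)^{t-k}=((1+x)^t-1)/x\le (e-1)T\le 2T$ for $x\le 1/T$ (instead of using $eT$), and use $u^2\le au+b\Rightarrow u^2\le a^2+2b$.

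The genuine gap is step (ii) when $\tilde f_1\neq\tilde f_2$, and neither of your two plans closes it. Write $\theta=(\bw,\bv)$, $\theta'=(\bw',\bv')$, and let $I$ be the inner product of $(\bw-\bw',\bv-\bv')$ with $(\nabla_{\bw}\tilde f_1(\theta)-\nabla_{\bw}\tilde f_1(\theta'),\nabla_{\bv}\tilde f_2(\theta')-\nabla_{\bv}\tilde f_2(\theta))$. The four convexity/concavity inequalities give
\[
I\ \ge\ \Phi_1-\Phi_2,\qquad \Phi_k:=\tilde f_k(\bw,\bv)-\tilde f_k(\bw',\bv)-\tilde f_k(\bw,\bv')+\tilde f_k(\bw',\bv'),
\]
and this leftover cancels only when $\tilde f_1=\tilde f_2$. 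In general it has no sign. For example, $\tilde f_1(w,v)=wv$ and $\tilde f_2(w,v)=-wv$ are both convex--concave, yet $I=2(w-w')(v-v')<0$ whenever $w-w'=-(v-v')\neq 0$. So the mixed operator $\theta\mapsto(\nabla_{\bw}\tilde f_1,-\nabla_{\bv}\tilde f_2)$ need not be monotone, and part (ii) can contribute $+c\,\eta\Delta_t$. A per-step factor $1+c\eta$ is not absorbed by the condition $A\eta^2+2\bar A^{1/2}\eta\le 1/T$.

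Your fallback through \eqref{simple-condition-d} does not help either. That condition concerns only the replaced pair $(z_i,z_i')$, and its $1/n$ comes from the fact that only one summand out of $n$ is mismatched. The leftover instead sits on the $n-1$ common examples, so bounding it by gradient norms produces an $O(\eta T)$ term with no $1/n$.

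What is needed is an additional hypothesis, e.g.\ that for every $z$ the map $\theta\mapsto(\nabla_{\bw}\tilde f_1(\theta;z),-\nabla_{\bv}\tilde f_2(\theta;z))$ is monotone. This is automatic when $\tilde f_1=\tilde f_2$ (SGDA, Clipped-SGDA), and under it your argument is complete and coincides with the paper's.

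The paper does not resolve this point either. It disposes of (ii) by invoking Lemma~\ref{lem:non-expansive-key} after noting that $\tilde f_1,\tilde f_2$ are each convex in $\bw$ and concave in $\bv$. That lemma, however, is stated for a single convex--concave function, so the paper's proof likewise covers only $\tilde f_1=\tilde f_2$, and your suspicion was well founded.

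If you want to keep $\tilde f_1\neq\tilde f_2$, you have to pay for the leftover, e.g.\ via $|\Phi_1-\Phi_2|\le 2\|\bw-\bw'\|_2\sup\|\nabla_{\bw}(\tilde f_1-\tilde f_2)\|_2$. For the zeroth-order surrogates this supremum is at most $L(\mu_1+\mu_2)$, which adds $2L(\mu_1+\mu_2)$ to $\sqrt{\bar B}+\widetilde G/n$ in the final bound.
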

\begin{remark}
A notable property of Theorem~\ref{lem:stab-gda} is that it applies to a rather general biased gradient descent ascent method, with the bias defined w.r.t. a surrogate function. Note that we do not need to compute the surrogate function in the implementation of the algorithm. To study the stability of any specific algorithm, one just needs to verify the generalized Lipschitzness condition, and plug the corresponding $A, B, \bar{A}, \bar{B}, \widetilde{G}$ into Theorem \ref{lem:stab-gda} to immediately get the stability bounds.
The assumption $A\eta^2+2\bar{A} \leq \frac{1}{T}$ holds if we choose a sufficiently small stepsize $\eta \lesssim 1/\sqrt{AT}$ and $\bar{A} \lesssim 1/T$, which, as we will see, is the case for SGDA, Zeroth-order SGDA, and Clipped-SGDA.
It is clear that the stability bound involves both $B$ and $\bar{B}$. The factor in front of $\sqrt{B}$ is $\eta\sqrt{T}$, while the factor in front of $\sqrt{\bar{B}}$ is $\eta T$. Therefore, the bias has larger impacts (as compared to gradient estimates) on the stability. To get meaningful stability bounds, we require small $B$ and $\bar{B}$, which is the case for Zeroth-order SGDA and Clipped-SGDA. In particular, as we will show, we can set $\bar{B}=0$ for zeroth-order SGDA by developing an appropriate surrogate function.
\end{remark}

\section{Applications}\label{sec:application}
This section shows the effectiveness of our general stability bound in Section \ref{sec:convex} by applying it to SGDA, Zeroth-order SGDA, and Clipped-SGDA. Before presenting our stability bounds of these specific algorithms, we consider another bounded second moment assumption, {which is stronger than Assumption~\ref{ass:lipschitz} (see by substituting $\mathbf{w} = A_{\mathbf{w}}(S)$, $\mathbf{v} = A_{\mathbf{v}}(S)$). Assumption~\ref{ass:moment} is still weaker than the Lipschitzness assumption:
for $f(w;z)=zw$ with $z\sim\mathcal{N}(0,1)$, one may take $G_S^2=\frac1n\sum_{i=1}^n z_i^2$ with $\ebb[G_S^2]=1$, while $\|\nabla f\|$ is unbounded.}
\begin{assumption}\label{ass:moment}
Let $r$ follow from the uniform distribution over $[n]$. Suppose for any $\bw\in\wcal,\bv\in\vcal$, {there exists a data-dependent constant $G_S$  such that ($\ebb[G_S^2]<\infty$)}
\begin{align*}
\!\!\!\!\!\!\!\!\max\Big\{\ebb_r[\|\nabla_{\bw} f(\bw,\bv;z_r)\|_2^2],
\ebb_r[\|\nabla_{\bv} f(\bw,\bv;z_r)\|_2^2]
\Big\}\!\leq\! G_S^2.
\end{align*}
\end{assumption}
Define $\bar{\bw}_T = \sum_{t=1}^T (\eta_t \bw_t)/\sum_{t=1}^T \eta_t$, $\bar{\bv}_T = \sum_{t=1}^T (\eta_t \bv_t)/\sum_{t=1}^T \eta_t$.

\begin{table*}[h]\centering\renewcommand{\arraystretch}{1.7}
\caption{\label{tab:risk} Excess risk bounds of Zeroth-order SGDA and Clipped-SGDA for convex-concave and smooth problems. For brevity, we assume $\eta_t=\eta$ and indicate the dependency on $n,L,G,R$. For Zeroth-order SGDA, we take $K\asymp d$ and a sufficiently small $\mu$, i.e., $\mu=(\mu_1^2+\mu_2^2)^{\frac{1}{2}} \lesssim
\frac{\sqrt{\ebb[G_S^2]}}{L\sqrt{n}}$. We take $\tau_1,\tau_2 \asymp L^{\frac{1}{2}} G T^{\frac{1}{2}}$ for Clipped-SGDA under Assumption \ref{ass:moment} with $G_S\leq G$ for some $G>0$.}
\begin{tabular}{|c|c|c|c|c|}
  \hline
  Algorithm & Reference & Step size & Iteration number & Excess risk bound \\ \hline
  \shortstack{Zeroth-order SGDA} & Remark~\ref{remark:zero-risk} & $\eta \asymp 1/(L\sqrt{T})$ & $T \asymp n$ & $\frac{\ebb[G_S^2]/L+LR^2}{\sqrt{n}}$ \\ \hline
  \shortstack{Clipped-SGDA} & Remark~\ref{remark:clip-risk}& $\eta \asymp 1/(L\sqrt{T})$  & $T \asymp n$  & $\frac{G^2/L+ L R^2 + G^2/L^2}{\sqrt{n}}$ \\
  \hline
\end{tabular}
\end{table*}

\subsection{SGDA}
We first consider SGDA with no bias, and our aim is to verify the effectiveness of our general result by recovering the existing results on the classical SGDA in \citet{lei2021stability}.
For SGDA, $\bw_{t+1},\bv_{t+1}$ are produced by Eq.~\eqref{gda} with
\begin{equation}\label{grad-sgda}
  \begin{cases}
    g_{\bw} (\bw_t,\bv_t;z_{i_t}) = \nabla_{\bw} f (\bw_t,\bv_t;z_{i_t}), & \\
    g_{\bv} (\bw_t,\bv_t;z_{i_t}) = \nabla_{\bv} f (\bw_t,\bv_t;z_{i_t}). &
  \end{cases}
\end{equation}
It is clear $g(\bw_t,\bv_t;z_{i_t})$ is an unbiased estimate of $\nabla F_S(\bw_t,\bv_t)$ and then we have $b_t=0$ by setting $\tilde{f}_1=\tilde{f}_2=f$.



\begin{theorem}[Stability of SGDA]\label{cor:sgda_stability}
Assume $\forall z, (\bw,\bv)\mapsto f(\bw,\bv;z)$ is convex-concave . Let Assumption \ref{ass:smooth}, \ref{ass:moment} hold. If $L^2 \eta^2 \leq \frac{1}{T}$, then SGDA is ${\frac{4\sqrt{2\ebb[G_S^2]}}{\sqrt{n}}}\eta\big(\sqrt{T}+2T/\sqrt{n}\big)$-argument-stable in expectation.
\end{theorem}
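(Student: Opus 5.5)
The plan is to obtain this as a direct corollary of Theorem~\ref{lem:stab-gda}, taking the surrogate losses to be $\tilde f_1=\tilde f_2=f$. Since the SGDA estimator \eqref{grad-sgda} is unbiased for $\nabla F_S$ (and likewise for $\nabla F_{S^{(i)}}$ on the perturbed run), we get $b_t=b_t^{(i)}=0$. Hence \eqref{simple-condition-c} holds with $\bar A=\bar B=0$, and the stepsize condition of Theorem~\ref{lem:stab-gda} reduces to $A\eta^2\le 1/T$. Convexity-concavity of $\tilde f_j$ is the assumed convexity-concavity of $f$. It then remains to identify $A$, $B$ and $\widetilde G$.

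\textbf{Condition \eqref{simple-condition-a}.} Condition on the iterates and take the expectation over $i_t$, which is independent of $(\bw_t,\bv_t,\bw_t^{(i)},\bv_t^{(i)})$.
\begin{itemize}
\item On the event $i_t\neq i$, both runs use the same example $z_{i_t}$. Assumption~\ref{ass:smooth} bounds the squared norm of the stacked difference by $L^2\|(\bw_t-\bw_t^{(i)},\bv_t-\bv_t^{(i)})\|_2^2$.
\item On the event $i_t=i$, which has probability $1/n$, we use $\|a-b\|^2\le 2\|a\|^2+2\|b\|^2$. This bounds the quantity by $2(\|\nabla_{\bw}f(\bw_t,\bv_t;z_i)\|^2+\|\nabla_{\bv}f(\bw_t,\bv_t;z_i)\|^2)$ plus the analogous term at $(\bw_t^{(i)},\bv_t^{(i)};z_i')$.
\item After averaging over $i$, Assumption~\ref{ass:moment} applied at the point $(\bw_t,\bv_t)$ bounds the first group by $4G_S^2$. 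This is legitimate because the assumption holds for every $(\bw,\bv)$, in particular for data-dependent points.
\end{itemize}

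\textbf{The $z_i'$ term (main obstacle).} The term $\frac1n\sum_i\|\nabla f(\bw_t^{(i)},\bv_t^{(i)};z_i')\|^2$ is not directly covered by Assumption~\ref{ass:moment}, because $z_i'$ need not belong to $S$. I would handle it by a symmetry argument. Swapping $z_i\leftrightarrow z_i'$ maps $(S,S')$ to a pair with the same joint distribution. Under this swap, the run on $S^{(i)}$ with example $z_i'$ becomes the run on $S$ with example $z_i$, while the algorithm's randomness $\{i_t\}$ is unchanged. Hence
\[
\ebb\|\nabla f(\bw_t^{(i)},\bv_t^{(i)};z_i')\|^2=\ebb\|\nabla f(\bw_t,\bv_t;z_i)\|^2 .
\]
After averaging over $i$, this is at most $2\ebb[G_S^2]$ by Assumption~\ref{ass:moment}. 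Combining the two events gives \eqref{simple-condition-a} with $A=L^2$ and $B=8\ebb[G_S^2]/n$.

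\textbf{Condition \eqref{simple-condition-d}.} The same decomposition $\|a-b\|^2\le2\|a\|^2+2\|b\|^2$, together with Assumption~\ref{ass:moment} and the same symmetry argument, yields $\widetilde G^2=8\ebb[G_S^2]$, that is, $\widetilde G=2\sqrt2\sqrt{\ebb[G_S^2]}$.

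\textbf{Conclusion.} Substituting these constants into Theorem~\ref{lem:stab-gda}, the hypothesis $L^2\eta^2\le1/T$ is exactly the stepsize condition $A\eta^2\le 1/T$. The resulting stability parameter is
\[
2\eta\sqrt{8\ebb[G_S^2]T/n}+4\cdot\frac{2\sqrt2\sqrt{\ebb[G_S^2]}}{n}\,\eta T
=\frac{4\sqrt{2\ebb[G_S^2]}}{\sqrt n}\,\eta\Big(\sqrt T+\frac{2T}{\sqrt n}\Big),
\]
which is the claimed bound.
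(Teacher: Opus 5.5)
Your proposal is correct and follows essentially the same route as the paper. The paper verifies \eqref{simple-condition-a} and \eqref{simple-condition-c} with $A=L^2$, $B=8\ebb[G_S^2]/n$, $\bar A=\bar B=0$ in Lemma~\ref{lem:conditions-sgda}, using the same split into $i_t\neq i$ (handled by smoothness) and $i_t=i$ (handled by Lemma~\ref{lem:gradient-norm}, which relies on exactly your $z_i\leftrightarrow z_i'$ symmetry identity); it then takes $\widetilde G=2\sqrt{2\ebb[G_S^2]}$ from Lemma~\ref{lem:gradient-norm}, plugs everything into Theorem~\ref{lem:stab-gda}, and arrives at the same constant you computed.
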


\begin{remark}
Under a $\bar{G}$-Lipschitzness assumption (i.e., $\|\nabla_{\bw} f(\bw,\bv;\bz)\|_2\leq \bar{G},\|\nabla_{\bv} f(\bw,\bv;\bz)\|_2\leq \bar{G}$ for all $\bz$),
\citet{lei2021stability} shows that SGDA is $\sqrt{8e(1+\frac{\bar{G}}{n})}\bar{G}\exp{(\frac{1}{2}L^2\eta^2 T)}\frac{\eta \sqrt{T}}{\sqrt{n}}$-argument-stable in expectation. In comparison, we relax the Lipschitzness assumption in \citet{lei2021stability} to the bounded second moment assumption (Assumption \ref{ass:moment}). If $n\asymp T$ and $L^2\eta^2\leq 1/T$, both our stability bound and the existing bound~\citep{lei2021stability} are of order $\eta\sqrt{T/n}$ (ignoring the dependency on $G_S,\bar{G}$). Then,  under a milder assumption, our general analysis for biased stochastic gradient methods recovers the existing stability bounds developed explicitly for SGDA.
\end{remark}

\subsection{Zeroth-order SGDA}

We now consider Zeroth-order SGDA~\citep{beznosikov2020gradient,xu2023zeroth}, which is a computationally efficient method for applications where explicit gradient is computationally infeasible, expensive, or impossible to compute. For example, this is the case for generating adversarial examples with black-box deep neural networks to test the adversarial robustness~\citep{chen2017zoo}, and also for solving automated machine learning tasks where the gradients w.r.t pipeline configuration parameters are infeasible to compute~\citep{wang2021flaml}. The basic idea is to approximate the first-order gradients based only on function values.

In particular, for Zeroth-order SGDA, $\bw_{t+1},\bv_{t+1}$ are produced by Eq.~\eqref{gda} with ($\bu_{1,t,l}, \bu_{2,t,l} \sim \operatorname{Unif}(\mathbb{S}^{d-1})$ denotes the uniform distribution over the unit sphere in $\rbb^d$)
\begin{equation}\label{g-zero}
\begin{cases}
g_{\bw}(\bw_t,\bv_t;z_{i_t}) = \frac{1}{K}\sum_{l=1}^K \frac{f(\bw_t+\mu_1\bu_{1,t,l},\bv_t;z_{i_t}) - f(\bw_t-\mu_1\bu_{1,t,l},\bv_t;z_{i_t})}{2\mu_1/d}\bu_{1,t,l},\\
g_{\bv}(\bw_t,\bv_t;z_{i_t}) = \frac{1}{K}\sum_{l=1}^K \frac{f(\bw_t,\bv_t+\mu_2\bu_{2,t,l};z_{i_t}) - f(\bw_t,\bv_t-\mu_2\bu_{2,t,l};z_{i_t})}{2\mu_2/d}\bu_{2,t,l},
\end{cases}
\end{equation}
where
$\mu_1,\mu_2 \in \mathbb{R}^{+}$ are smoothing parameters determining the step size in approximating the gradient of $f$. For simplicity, we omit the dependency on $\bu_{1,t,l}$ and $\bu_{2,t,l}$ in the notation of $g_{\bw}(\bw_t,\bv_t;z_{i_t})$ and $g_{\bv}(\bw_t,\bv_t;z_{i_t})$. We also denote $\mu=(\mu_1^2+\mu_2^2)^{1/2}$ {and assume $d\geq 2$}. The following lemma justifies the effectiveness of the surrogate function. The first statement show that the surrogate functions preserve the smoothness, convexity-concavity, and bounded second moment condition (Assumption~\ref{ass:moment}). While $g_{\bw}(\bw,\bv;\bz)$ and $g_{\bv}(\bw,\bv;\bz)$ are biased estimates of $\nabla_{\bw} F_S(\bw,\bv)$ and $\nabla_{\bv} F_S(\bw,\bv)$, the second statement shows that they are unbiased gradient estimates of the surrogate functions.
Then, the bias defined in Eq.~\eqref{bias} is 0.
This shows the benefit of
surrogate functions in the stability analyses.
Proofs are given in Section~\ref{sec:proof-zero-sgda}.

\begin{lemma}\label{lem:unbiase-zeroth}
Consider the surrogate functions
\begin{equation}\label{tilde-f-zero}
\tilde{f}_1(\bw,\bv;\bz)=\ebb_{\bu}\big[f(\bw+\mu_1\bu,\bv;\bz)\big],\quad
\tilde{f}_2(\bw,\bv;\bz)=\ebb_{\bu}\big[f(\bw,\bv+\mu_2\bu;\bz)\big],
\end{equation}
where $\bu \sim \operatorname{Unif}(\mathbb{B}^{d})$  with $\operatorname{Unif}(\mathbb{B}^{d})$ denoting the uniform distribution over the unit ball in $\rbb^d$, and we omit $\mu_1,\mu_2$ in $\tilde{f}_1,\tilde{f}_2$ for brevity.
\begin{itemize}\setlength\itemsep{1pt}
  \item Assume for any $z$, $(\bw,\bv) \mapsto f(\bw,\bv;\bz)$ is convex-concave, smooth, and satisfies Assumption~\ref{ass:moment}.  Then for any $z$, $(\bw,\bv) \mapsto \tilde{f}_1(\bw,\bv;\bz)$ and $(\bw,\bv) \mapsto \tilde{f}_2(\bw,\bv;\bz)$ are convex-concave, smooth, and satisfy Assumption~\ref{ass:moment}.
  \item
  Let $g$ be defined in Eq.~\eqref{g-zero}.
  Then, for any $\bw\in\wcal,\bv \in \vcal$, $\ebb_{\bu}[g_{\bw}(\bw,\bv;\bz)]=\nabla_{\bw}\tilde{f}_1(\bw,\bv;\bz)$ and $\ebb_{\bu}[g_{\bv}(\bw,\bv;\bz)]=\nabla_{\bv}\tilde{f}_2(\bw,\bv;\bz)$.
\end{itemize}
\end{lemma}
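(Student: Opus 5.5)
The plan is to treat the two statements separately, using only linearity of expectation, Jensen's inequality, and the classical sphere/ball identity from smoothing theory.

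\textbf{First statement.} Fix $z$ and $\bu\in\mathbb{B}^d$. The map $(\bw,\bv)\mapsto f(\bw+\mu_1\bu,\bv;z)$ is a translate of $f(\cdot,\cdot;z)$ in the $\bw$-coordinate. Hence it is convex in $\bw$, concave in $\bv$, and $L$-smooth. Averaging over $\bu$ preserves convexity, concavity, and the Lipschitz bound on the joint gradient; for the latter we use differentiation under the integral (justified since $\nabla f$ is Lipschitz, hence locally bounded). Concretely,
\[
\nabla\tilde f_1(\bw,\bv;z)=\ebb_{\bu}[\nabla f(\bw+\mu_1\bu,\bv;z)],
\]
and the norm of the difference is at most $\ebb_{\bu}$ of the norm of the difference, which is at most $L\|(\bw-\bw',\bv-\bv')\|_2$. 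The same argument applies to $\tilde f_2$.

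For Assumption~\ref{ass:moment}, Jensen's inequality gives
\[
\ebb_r\|\nabla_{\bw}\tilde f_1(\bw,\bv;z_r)\|_2^2\le \ebb_{\bu}\ebb_r\|\nabla_{\bw}f(\bw+\mu_1\bu,\bv;z_r)\|_2^2\le G_S^2 .
\]
The last step holds because Assumption~\ref{ass:moment} is uniform over all $(\bw,\bv)$. This requires the shifted point $\bw+\mu_1\bu$ to lie in the domain. I would note that $f$ is defined on $\rbb^d$, or that $\wcal,\vcal$ are taken large enough; this is the one point needing a small caveat. The $\bv$-gradient and $\tilde f_2$ are handled identically.

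\textbf{Second statement.} By linearity it suffices to treat one term $l$ with $\bu\sim\mathrm{Unif}(\mathbb{S}^{d-1})$. Since $\bu$ and $-\bu$ have the same distribution,
\[
\ebb_{\bu}\Big[\tfrac{d}{2\mu_1}\big(f(\bw+\mu_1\bu,\bv)-f(\bw-\mu_1\bu,\bv)\big)\bu\Big]=\tfrac{d}{\mu_1}\ebb_{\bu}[f(\bw+\mu_1\bu,\bv)\bu].
\]
The key identity is the divergence theorem (Stokes) for $h(\cdot)=f(\cdot,\bv;z)$:
\[
\nabla\int_{\mathbb{B}^d}h(\bw+\mu_1\bu)\,d\bu=\frac{1}{\mu_1}\int_{\mathbb{S}^{d-1}}h(\bw+\mu_1\bu)\,\bu\,d\sigma(\bu).
\]
To derive it, change variables to $\mathbf{y}=\bw+\mu_1\bu$, so the integral becomes one over a ball of radius $\mu_1$ centered at $\bw$. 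Its gradient in the center equals the surface integral of $h\,\mathbf n$. Normalizing by the volumes uses $\mathrm{vol}(\mathbb{S}^{d-1})/\mathrm{vol}(\mathbb{B}^d)=d$. This gives
\[
\nabla_{\bw}\tilde f_1=\tfrac{d}{\mu_1}\ebb_{\bu\sim\mathbb{S}^{d-1}}[f(\bw+\mu_1\bu,\bv;z)\bu],
\]
which matches the display above. The $\bv$ case is identical.

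The main obstacle is stating this ball–sphere identity cleanly with the correct constant $d$ and justifying the interchange of gradient and integral. Both are standard (Flaxman et al.; Nesterov--Spokoiny) and follow from the smoothness of $f$.
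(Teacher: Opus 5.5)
Your proposal is correct and follows the paper's proof essentially step for step: translation plus averaging for convexity-concavity, differentiation under the integral with the norm-of-integral bound for $L$-smoothness, and Jensen's inequality with the uniformity of Assumption~\ref{ass:moment} for the second-moment bound. For unbiasedness, the paper simply cites Lemma~4.4 of \citet{shalev2012online}, the ball--sphere smoothing identity; your $\bu\mapsto-\bu$ symmetrization and divergence-theorem derivation with $\mathrm{vol}(\mathbb{S}^{d-1})/\mathrm{vol}(\mathbb{B}^d)=d$ reproduce it, and your caveat that the shifted points $\bw+\mu_1\bu$ must lie where $f$ satisfies the assumptions is one the paper leaves implicit.
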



To apply Theorem \ref{lem:stab-gda}, we will also show that Eq.~\eqref{simple-condition-a} and Eq.~\eqref{simple-condition-d} hold with appropriate $A,B$ and $\widetilde{G}$, which are constructed in Lemma \ref{lem:condition-a-zero} and Lemma \ref{lem:tilde-difference}, proved in Section~\ref{sec:proof-zero-sgda}. We then apply Lemma \ref{lem:condition-a-zero} and Lemma \ref{lem:tilde-difference} to Theorem~\ref{lem:stab-gda} to derive the stability bounds for Zeroth-order SGDA.



\begin{theorem}[Stability of Zeroth-order SGDA]\label{cor:zero-stability}
Assume for all $z$, the map $(\bw,\bv)\mapsto f(\bw,\bv;z)$ is convex-concave. Let Assumptions \ref{ass:smooth}, \ref{ass:moment} hold. If $L^2(1+\frac{4d}{K})\eta^2 \leq \frac{1}{T}$, Zeroth-order SGDA is $\epsilon$-argument-stable in expectation with
\begin{align*}
\!\!\!\!\!\epsilon \lesssim \Big(\frac{\sqrt{\ebb[G_S^2]}}{\sqrt{n}}\Big(1+\frac{d^{\frac{1}{2}}}{K^{\frac{1}{2}}}\Big)\Big) \eta \sqrt{T} + \frac{\sqrt{\ebb[G_S^2]}}{n}\eta T.
\end{align*}
\end{theorem}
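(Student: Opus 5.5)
The plan is to derive the bound directly from Theorem~\ref{lem:stab-gda}, using the surrogate losses $\tilde f_1,\tilde f_2$ of Lemma~\ref{lem:unbiase-zeroth}. The first step handles convexity and the bias. By the first statement of Lemma~\ref{lem:unbiase-zeroth}, the surrogates are convex-concave, $L$-smooth, and satisfy Assumption~\ref{ass:moment} with the same $G_S$. By the second statement, $\ebb_t[g(\bw_t,\bv_t;z_{i_t})]=\nabla\widetilde F_S(\bw_t,\bv_t)$, and the analogous identity holds on $S^{(i)}$. Hence $b_t=b_t^{(i)}=0$, so \eqref{simple-condition-c} holds with $\bar A=\bar B=0$. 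The step-size condition then reduces to $A\eta^2\le 1/T$, and the resulting stability bound is $\epsilon\le 2\eta\sqrt{BT}+4\widetilde G\eta T/n$.

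The second step verifies \eqref{simple-condition-a} (this is the content of Lemma~\ref{lem:condition-a-zero}). Split according to whether $i_t=i$, which happens with probability $1/n$.

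\textbf{Case $i_t\neq i$.} Both runs use the same $z_{i_t}$ and the same directions $\bu$. Fix a direction $\bu$ and consider one term of the estimator,
$$\tfrac{d}{2\mu_1}\big(f(\bw+\mu_1\bu)-f(\bw-\mu_1\bu)\big)\bu .$$
Write the difference quotient as $d\int_{-1}^{1}\tfrac12\langle\nabla_{\bw} f(\bw+s\mu_1\bu),\bu\rangle\,ds\;\bu$. The difference of these terms at $(\bw_t,\bv_t)$ and $(\bw_t^{(i)},\bv_t^{(i)})$ is then $d\,\bu\bu^\top$ applied to an averaged gradient difference, whose norm is at most $L\|\Delta_t\|$ by smoothness. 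Using $\ebb[\bu\bu^\top]=I/d$ and $\|\bu\|=1$, the second moment of $d\,\bu\bu^\top\mathbf{x}$ equals $d\|\mathbf{x}\|^2$. Averaging $K$ independent directions decomposes into the mean-square error of the mean plus variance$/K$. The mean term is the gradient difference of the surrogate, bounded by $L^2\|\Delta_t\|^2$; the variance term is at most $dL^2\|\Delta_t\|^2/K$. Summing the $\bw$ and $\bv$ blocks should give the coefficient $A\asymp L^2(1+4d/K)$, which matches the step-size hypothesis.

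\textbf{Case $i_t=i$.} Bound the difference crudely by $2\|g(\cdot;z_i)\|^2+2\|g(\cdot;z_i')\|^2$. The same variance computation gives
$$\ebb_{\bu}\|g\|^2\lesssim (1+d/K)\,\|\nabla f\|^2 + L^2\mu^2 d^2/K\text{-type terms}.$$
Averaging over $i$ with weight $1/n$, and using Assumption~\ref{ass:moment} for $\frac1n\sum_i\|\nabla f(\cdot;z_i)\|^2$ (and its counterpart for the primed sample), gives $B\lesssim \ebb[G_S^2](1+d/K)/n$. The $\mu$-dependent remainder is either absorbed or controlled via smoothness. I expect this step to be the main obstacle: the bounded-second-moment condition must be applied at the random iterates $(\bw_t^{(i)},\bv_t^{(i)})$ paired with the $z_i'$-dependence, which needs symmetry between $S$ and $S^{(i)}$. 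The $\mu$-terms must also not leak into the final bound. A cleaner route is to express $\nabla f$ at perturbed points and use the uniform-in-$(\bw,\bv)$ form of Assumption~\ref{ass:moment}, which holds at every point, including $\bw\pm\mu_1\bu$.

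The third step controls \eqref{simple-condition-d} (Lemma~\ref{lem:tilde-difference}). Bound the expression by $2\|\nabla\tilde f(\cdot;z_i)\|^2+2\|\nabla\tilde f(\cdot;z_i')\|^2$. The surrogates inherit Assumption~\ref{ass:moment}, and an exchangeability argument between $z_i$ and $z_i'$ handles the primed sample, so $\widetilde G^2\lesssim\ebb[G_S^2]$. Substituting $A,B,\widetilde G$ into Theorem~\ref{lem:stab-gda} gives
$$\epsilon\lesssim \eta\sqrt{T}\sqrt{\ebb[G_S^2]/n}\,\bigl(1+\sqrt{d/K}\bigr)+\sqrt{\ebb[G_S^2]}\,\eta T/n,$$
which is the claimed bound.
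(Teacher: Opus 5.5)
Your reduction matches the paper's. You use the surrogates of Lemma~\ref{lem:unbiase-zeroth} to get zero bias, so $\bar A=\bar B=0$. You get $\widetilde G^2\lesssim\ebb[G_S^2]$ by symmetry, and then plug into Theorem~\ref{lem:stab-gda}. Those parts are fine.

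\textbf{Where the gap is.} The problem is in verifying \eqref{simple-condition-a}: you need a factor $d$, not $d^2$, and no dependence on $\mu$. Write $\Delta_t=(\bw_t-\bw_t^{(i)},\bv_t-\bv_t^{(i)})$.

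\emph{Case $i_t\neq i$.} You use $\ebb\|d\,\bu\bu^\top\mathbf{x}\|_2^2=d\|\mathbf{x}\|_2^2$. That identity requires $\mathbf{x}$ to be independent of $\bu$. Here $\mathbf{x}=\mathbf{x}(\bu)=\frac12\int_{-1}^{1}\big(\nabla_{\bw}f(\bw_t^{(i)}+s\mu_1\bu,\bv_t^{(i)};z_j)-\nabla_{\bw}f(\bw_t+s\mu_1\bu,\bv_t;z_j)\big)\,ds$ is evaluated along the segment in direction $\bu$, so it does depend on $\bu$.
\begin{itemize}
\item From $\|\mathbf{x}(\bu)\|_2\le L\|\Delta_t\|_2$ alone you only get $d^2\ebb_{\bu}[\langle\bu,\mathbf{x}(\bu)\rangle^2]\le d^2L^2\|\Delta_t\|_2^2$. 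That gives $A\asymp L^2(1+d^2/K)$, which is incompatible with the hypothesis $L^2(1+4d/K)\eta^2\le 1/T$.
\item Freezing $\mathbf{x}$ at $s=0$ instead leaves an additive error of order $d^2L^2\mu_1^2$. This case has probability $1-1/n$, so the error enters $B$ without a $1/n$ factor. It adds a term of order $\eta\sqrt{T}\,dL\mu/\sqrt{K}$ to $\epsilon$, which the statement does not contain. Removing it would need a smallness condition on $\mu$ of the kind required in earlier work.
\end{itemize}

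\emph{Case $i_t=i$.} You correctly flag this as the obstacle, but it has the same defect and is left unresolved. Assumption~\ref{ass:moment} at $\bw\pm\mu_1\bu$ together with $\langle\bu,\mathbf{y}\rangle^2\le\|\mathbf{y}\|_2^2$ only gives $\ebb_{\bu}\frac1n\sum_i\|\widehat g_{\bw}(\bw,\bv;z_i,\bu)\|_2^2\le d^2G_S^2$. Hence $B\lesssim\ebb[G_S^2](1+d^2/K)/n$, and the final bound carries $1+d/\sqrt K$ instead of $1+\sqrt{d/K}$. The $\mu$-remainder cannot simply be ``absorbed'' without a condition on $\mu$.

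\textbf{The missing ingredient} is concentration of measure on the sphere. The central difference $h(\bu)=f(\bw+\mu_1\bu,\bv;z)-f(\bw-\mu_1\bu,\bv;z)$ is odd in $\bu$, and so is its difference across the two coupled runs. Hence it has mean zero, and the spherical Poincar\'e inequality (Lemma~\ref{lem:hu}) gives $\ebb_{\bu}[h(\bu)^2]\le\frac{1}{d-1}\ebb_{\bu}\|\nabla_{\mathbb{S}}h(\bu)\|_2^2$.
\begin{itemize}
\item For $i_t\neq i$, the difference of central differences is $2\mu_1L\|\Delta_t\|_2$-Lipschitz in $\bu$ (Lemma~\ref{lem:lip_u_central_local}).
\item For $i_t=i$, $\|\nabla_{\mathbb{S}}h_i(\bu)\|_2^2\le 2\mu_1^2\big(\|\nabla_{\bw}f(\bw+\mu_1\bu,\bv;z_i)\|_2^2+\|\nabla_{\bw}f(\bw-\mu_1\bu,\bv;z_i)\|_2^2\big)$. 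Only at this point is Assumption~\ref{ass:moment} applied at the perturbed points.
\end{itemize}
In both cases the $\mu_1^2$ cancels the prefactor $d^2/(4\mu_1^2)$ exactly. This leaves $\frac{d^2}{d-1}\le 2d$ per block, with no $\mu$ anywhere. That yields $A=L^2(1+\frac{4d}{K})$ and $B=\frac{8\ebb[G_S^2]}{n}(1+\frac{2d}{K})$ as in Lemma~\ref{lem:condition-a-zero}, after which your final substitution goes through.
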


\begin{remark}[Comparison with SGDA]\label{remark:zero}
If we choose $K \asymp d$, then the conditions in Theorem \ref{cor:zero-stability} hold with the standard choice $\eta \lesssim 1/(L\sqrt{T})$. The stability bound of Zeroth-order SGDA is of the same order as the stability bound for SGDA in Theorem \ref{cor:sgda_stability} in this case, {which is also verified in our experiments (Section~\ref{sec:experiment}).}
\end{remark}

\begin{remark}[Parameter choices]\label{rem:para_zero}
The works~\citep{nikolakakis2022black,liu2023general} study Zeroth-order SGD with a fast-decaying step size $\eta_t \lesssim 1 / t$ and require the smoothing parameter $\mu \lesssim 1/(dn)$. More recently, \cite{zeng2026stochastic} improves their results with a milder condition $\eta \lesssim 1/\sqrt{T}$, but still requires $\mu \lesssim 1/(d\sqrt{n})$. In comparison, we conduct the first stability analysis for Zeroth-order SGDA under the same mild condition $\eta \lesssim 1/\sqrt{T}$, and do not need any requirement for $\mu$. {We give the detailed comparison in Table~\ref{tab:stability}.}
\end{remark}

\begin{table}[h]
\centering
\caption{Stability Analysis for Zeroth-order Methods, $G$ denotes the bounded gradient assumption and $G_S$ denotes the bounded second moment assumption (Assumption~\ref{ass:moment}), $L$ denotes the smootheness assumption (\citet{nikolakakis2022black,liu2023general,zeng2026stochastic} consider Zeroth-order SGD). We ignore all the factors independent of $d,K,n,T,\eta$.}
\tiny
\footnotesize
\setlength{\tabcolsep}{3pt}
\begin{tabular}{@{}ccccc@{}}
\toprule
\textbf{Analysis} & $\boldsymbol{\eta}$ & $\boldsymbol{\mu}$ & \textbf{Assumption} & \textbf{Bounds} \\
\midrule
Stability  in~\citep{nikolakakis2022black} & $O(\frac{1}{t})$ & $O(\frac{1}{d^{3/2}n})$ & $L,G$ & $O(\frac{T}{n})$ \\
Stability  in~\citep{liu2023general} & $O(\frac{1}{t})$ & $O(\frac{1}{dn})$ & $L,G$ & $O(\frac{T}{n})$ \\
Stability in~\citep{zeng2026stochastic} & $O(\frac{1}{\sqrt{T}})$ & $O(\frac{1}{d\sqrt{n}})$ & $L$ & $O(\frac{(1+\sqrt{d/K})\eta\sqrt{T}}{\sqrt{n}}+\frac{\eta T}{n})$ \\
Our stability & $O(\frac{1}{\sqrt{T}})$ & No requirement & $L,G_S$ & $O(\frac{(1+\sqrt{d/K})\eta\sqrt{T}}{\sqrt{n}}+\frac{\eta T}{n})$ \\
\bottomrule
\end{tabular}
\label{tab:stability}
\end{table}

We now bound optimization errors for Zeroth-order SGDA with $\wcal=\vcal=B_R$.


\begin{lemma}[Optimization of Zeroth-order SGDA]\label{lem:zero-optimization}
Assume for all $z$, the function $(\bw,\bv)\mapsto f(\bw,\bv;z)$ is convex-concave. Let Assumption~\ref{ass:moment} hold. If we choose the gradient estimator in Eq.~\eqref{g-zero}, then 
\begin{align*}
\ebb[\sup_{\bv \in B_R}F_S(\bar{\bw}_T,\bv)-\inf_{\bw \in B_R} F_S(\bw,\bar{\bv}_T)] \lesssim \frac{R^2}{\eta T}+ (1+\frac{d}{K})\ebb[G_S^2] \eta + \mu \sqrt{\ebb[G_S^2]}.
\end{align*}
\end{lemma}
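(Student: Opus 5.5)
The plan is the standard primal-dual regret argument for projected SGDA, run on the surrogate functions $\widetilde F_{1,S},\widetilde F_{2,S}$. By Lemma~\ref{lem:unbiase-zeroth}, $g$ is an unbiased estimator of their partial gradients. The smoothing error is paid only once, at the end, when we pass from the surrogates back to $F_S$.

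\textbf{Per-step inequality.} Fix arbitrary $\bw\in B_R$ and $\bv\in B_R$. By nonexpansiveness of the projection,
\[
\|\bw_{t+1}-\bw\|_2^2\le\|\bw_t-\bw\|_2^2-2\eta\langle \bw_t-\bw,g_{\bw}(\bw_t,\bv_t;z_{i_t})\rangle+\eta^2\|g_{\bw}(\bw_t,\bv_t;z_{i_t})\|_2^2 ,
\]
and analogously for $\bv$ with the opposite sign. Conditioning on $(\bw_t,\bv_t,S)$ turns the linear term into $\langle\bw_t-\bw,\nabla_{\bw}\widetilde F_{1,S}(\bw_t,\bv_t)\rangle$. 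By convexity of $\widetilde F_{1,S}(\cdot,\bv_t)$ (Lemma~\ref{lem:unbiase-zeroth}), this is at least $\widetilde F_{1,S}(\bw_t,\bv_t)-\widetilde F_{1,S}(\bw,\bv_t)$. The $\bv$-part similarly gives $\widetilde F_{2,S}(\bw_t,\bv)-\widetilde F_{2,S}(\bw_t,\bv_t)$.

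\textbf{Removing the surrogates.} The two surrogates differ, so we replace each by $F_S$. Since $f$ is convex in $\bw$, Jensen gives $\tilde f_1\ge f$, and we also have $|\tilde f_1-f|\le\mu_1\ebb_{\bu}\|\nabla_{\bw} f\|$-type bounds. Concretely, $\tilde f_1(\bw,\bv;z)-f(\bw,\bv;z)=\ebb_{\bu}\int_0^1\langle\nabla_{\bw}f(\bw+s\mu_1\bu,\bv;z),\mu_1\bu\rangle ds$. Averaging over $z_i\in S$ and applying Cauchy--Schwarz with Assumption~\ref{ass:moment} gives $|\widetilde F_{1,S}-F_S|\le\mu_1 G_S$ uniformly, and the same for $\widetilde F_{2,S}$ with $\mu_2$. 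This step uses only the second moment condition, not smoothness, consistent with the lemma's hypotheses.

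Summing the primal and dual inequalities, the $\widetilde F_{1,S}(\bw_t,\bv_t)$ and $\widetilde F_{2,S}(\bw_t,\bv_t)$ terms no longer cancel exactly. Replacing each of the four surrogate terms by $F_S$ costs at most $2(\mu_1+\mu_2)G_S\lesssim\mu G_S$ per step. This leaves $F_S(\bw_t,\bv)-F_S(\bw,\bv_t)$ plus the error.

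\textbf{Second moment of the estimator.} We need $\ebb_{\bu}\|g_{\bw}\|_2^2\lesssim(1+d/K)\,\ebb$ of squared gradients. For a single direction $\bu$ on the sphere, write the difference quotient via the mean value theorem along $\bu$:
\[
\frac{d}{2\mu_1}\big(f(\bw+\mu_1\bu)-f(\bw-\mu_1\bu)\big)=d\int_{-1}^{1}\tfrac12\langle\nabla_{\bw}f(\bw+s\mu_1\bu),\bu\rangle ds .
\]
Averaging over $K$ i.i.d.\ directions splits the second moment into the squared norm of the mean, which is $\|\nabla_{\bw}\tilde f_1\|^2\le\ebb\|\nabla_{\bw} f\|^2$-type quantities, plus $1/K$ times a single-sample second moment. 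The single-sample term is of order $d\,\ebb\|\nabla f\|^2$, using $\ebb\langle a,\bu\rangle^2=\|a\|^2/d$. Averaging over $i_t$ then lets Assumption~\ref{ass:moment} give $\ebb_t\|g\|^2\lesssim(1+d/K)G_S^2$.

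This is the main obstacle. The gradient in the integral is evaluated at the shifted point $\bw+s\mu_1\bu$, which depends on $\bu$, so $\ebb\langle\nabla f(\cdot),\bu\rangle^2=\|\nabla f\|^2/d$ does not apply directly. I would try Assumption~\ref{ass:moment} at each shifted point: it holds for every $(\bw,\bv)$ after averaging over $r$, but that only bounds $\|\nabla f\|^2$, not its projection onto $\bu$. A cleaner fallback is the symmetry/antithetic structure: the central difference equals $\mu_1$ times the average of $\langle\nabla f,\bu\rangle$ over the segment. One could bound $\ebb_{\bu}$ of the projected square by a rotation-invariance argument applied to the surrogate-smoothed gradient. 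If that fails, I would accept an extra $\mu$-dependent term and absorb it using the $L$-smoothness assumed elsewhere.

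\textbf{Summation.} Sum over $t$, divide by $\eta T$, and use Jensen via convexity-concavity of $F_S$ on the averaged iterates $\bar\bw_T,\bar\bv_T$. This gives, for fixed $(\bw,\bv)$,
\[
F_S(\bar\bw_T,\bv)-F_S(\bw,\bar\bv_T)\le\frac{\|\bw\|^2+\|\bv\|^2}{2\eta T}+\text{martingale terms}+\eta(1+d/K)G_S^2+\mu G_S .
\]
The supremum over $(\bw,\bv)$ sits outside the expectation, so the martingale terms $\langle\bw_t-\bw,\xi_t\rangle$, with $\xi_t$ the noise, cannot simply be dropped. I would use the standard ghost-iterate trick of Nemirovski et al.: run auxiliary sequences $\tilde\bw_{t+1}=\text{Proj}(\tilde\bw_t+\eta\xi_t)$, which bounds $\sup_{\bw}\sum\langle\bw,\xi_t\rangle$ by $R^2/\eta+\eta\sum\|\xi_t\|^2$. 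Together with $\|\bw\|\le R$, this yields the stated bound after taking expectations, with $\ebb[G_S]\le\sqrt{\ebb[G_S^2]}$.
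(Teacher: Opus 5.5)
Your regret skeleton is sound. Replacing each surrogate value by $F_S$ at cost $\mu_j G_S$ is exactly what the paper does. Your ghost-iterate treatment of $\sup_{\bw}\sum_t\langle\bw,\xi_t\rangle$ is a valid substitute for the paper's route, which bounds $\sup_{\bw\in B_R}\langle\bw,\sum_t\xi_t\rangle\le R\|\sum_t\xi_t\|_2$, then uses $\ebb\|\sum_t\xi_t\|_2\le(\sum_t\ebb\|\xi_t\|_2^2)^{1/2}$ and Young's inequality. The genuine gap is the step you flag and leave open, namely
\[
\frac1n\sum_{i=1}^n\ebb_{\bu}\Big[\Big\|\frac{d}{2\mu_1}\big(f(\bw+\mu_1\bu,\bv;z_i)-f(\bw-\mu_1\bu,\bv;z_i)\big)\bu\Big\|_2^2\Big]\lesssim d\,G_S^2 ,
\]
which is the sole source of the factor $(1+d/K)$. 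None of your routes delivers it:
\begin{itemize}
\item Bounding $\langle\nabla_{\bw}f(\bw+s\mu_1\bu,\bv;z),\bu\rangle^2\le\|\nabla_{\bw}f(\bw+s\mu_1\bu,\bv;z)\|_2^2$ gives $d^2G_S^2$, i.e.\ $(1+d^2/K)$.
\item Rotation invariance alone only yields $\ebb[\bu\bu^\top]=\mathbf{I}_d/d$, which again needs a $\bu$-independent test vector. That dependence is precisely the difficulty.
\item The smoothness fallback uses a hypothesis the lemma does not make. Via $\frac{d}{2\mu_1}(f(\bw+\mu_1\bu)-f(\bw-\mu_1\bu))=d\langle\nabla_{\bw}f(\bw),\bu\rangle+O(dL\mu_1)$, it leaves an additive term of order $\eta d^2L^2\mu^2/K$. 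That term is absent from the claimed bound and can only be absorbed under a restriction such as $\mu\lesssim\sqrt{\ebb[G_S^2]}/(L\sqrt d)$, which the statement does not impose.
\end{itemize}

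The missing idea is the spherical Poincar\'e inequality (Lemma~\ref{lem:hu}). For $\bu\sim\operatorname{Unif}(\mathbb{S}^{d-1})$, $d\ge 2$, and $h$ locally Lipschitz, $\ebb_{\bu}[(h(\bu)-\ebb_{\bu}h(\bu))^2]\le\frac{1}{d-1}\ebb_{\bu}\|\nabla_{\mathbb{S}}h(\bu)\|_2^2$.
\begin{itemize}
\item Apply it to $h_i(\bu)=f(\bw+\mu_1\bu,\bv;z_i)-f(\bw-\mu_1\bu,\bv;z_i)$. This is odd in $\bu$, so $\ebb_{\bu}h_i(\bu)=0$.
\item Its tangential gradient is $(\mathbf{I}_d-\bu\bu^\top)\mu_1\big(\nabla_{\bw}f(\bw+\mu_1\bu,\bv;z_i)+\nabla_{\bw}f(\bw-\mu_1\bu,\bv;z_i)\big)$. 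Hence $\ebb_{\bu}[h_i(\bu)^2]\le\frac{2\mu_1^2}{d-1}\ebb_{\bu}\big[\|\nabla_{\bw}f(\bw+\mu_1\bu,\bv;z_i)\|_2^2+\|\nabla_{\bw}f(\bw-\mu_1\bu,\bv;z_i)\|_2^2\big]$.
\item This controls the variance by the \emph{full} gradient norm at the shifted points. You never need $\ebb\langle a,\bu\rangle^2=\|a\|_2^2/d$ for a $\bu$-dependent $a$; the $1/d$ gain comes from the inequality itself.
\item Multiply by $d^2/(4\mu_1^2)$, exchange $\frac1n\sum_i$ with $\ebb_{\bu}$, and apply Assumption~\ref{ass:moment} pointwise in $\bu$ at the shifted points. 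This yields $\frac{d^2}{d-1}G_S^2\le 2dG_S^2$.
\end{itemize}
Hence $\ebb_{i_t,\bu}\|g_{\bw}\|_2^2\le G_S^2+2dG_S^2/K$, with no dependence on $\mu$ or $L$. With this estimate in hand, the rest of your plan goes through.
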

\begin{remark}[Comparison]
Our optimization error recovers the results in existing works on the convergence analysis of Zeroth-order methods~\citep{duchi2015optimal,shamir2017optimal,nesterov2017random,beznosikov2020gradient}. Furthermore, we show that when $K \asymp d$, an optimization error of order $1/\sqrt{T}$ is achievable under dimension-independent choices of $\eta \lesssim 1/\sqrt{T}, \mu \lesssim 1/\sqrt{T}$, while the existing works require dimension-dependent parameters $\eta, \mu$.
\end{remark}

{We now combine Theorem~\ref{cor:zero-stability} and Lemma~\ref{lem:zero-optimization}
to derive excess risk bounds for projected Zeroth-order SGDA on $B_R$.}

\begin{theorem}[Excess risk of Zeroth-order SGDA\label{thm:stab-zero}]
Let assumptions and definitions in Theorem \ref{thm:stab-gen}, Theorem \ref{cor:zero-stability} and Lemma \ref{lem:zero-optimization} hold. Assume
\begin{align}\label{thm:zero-risk-condition}
\frac{1}{L\sqrt{n}}
+\frac{\eta T}{n}
\lesssim
\frac{1}{L}.
\end{align}
For $R>0$, let $\triangle^{w,R}({\bw},{\bv}):=\sup_{\bv'\in B_R}\ebb[F({\bw},\bv')]-\inf_{\bw'\in B_R}\ebb[F(\bw',{\bv})].
$ 
Then,
\begin{align}\label{thm:zero-risk}
\Delta^{w,R}(\bar{\mathbf{w}}_T,\bar{\mathbf{v}}_T)
\lesssim
\frac{\ebb[G_S^2]/L}{\sqrt{n}}
+\frac{\ebb[G_S^2]}{n}\eta T
+\frac{R^2}{\eta T}
+\frac{\ebb[G_S^2]}{L^2\eta T}
+\mu\sqrt{\ebb[G_S^2]}.
\end{align}
\end{theorem}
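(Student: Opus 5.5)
The plan is to decompose $\Delta^{w,R}(\bar{\bw}_T,\bar{\bv}_T)$ into a generalization gap plus the weak PD optimization error on $B_R$, exactly as in Section~\ref{sec:problem}. The optimization error $\sup_{\bv'\in B_R}\ebb[F_S(\bar{\bw}_T,\bv')]-\inf_{\bw'\in B_R}\ebb[F_S(\bw',\bar{\bv}_T)]$ is at most $\ebb[\sup_{\bv}F_S(\bar{\bw}_T,\bv)-\inf_{\bw}F_S(\bw,\bar{\bv}_T)]$, by Jensen (sup of expectations $\le$ expectation of sup). This is bounded by Lemma~\ref{lem:zero-optimization}, giving $\frac{R^2}{\eta T}+(1+\frac dK)\ebb[G_S^2]\eta+\mu\sqrt{\ebb[G_S^2]}$. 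With $K\asymp d$ (implicit in the stated form) the middle term is $\ebb[G_S^2]\eta$.

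For the generalization gap I apply Theorem~\ref{thm:stab-gen} to the averaged output. First, the averaged iterates inherit stability: by convexity of $\|\cdot\|_2^2$,
\[
\|(\bar{\bw}_T-\bar{\bw}_T^{(i)},\bar{\bv}_T-\bar{\bv}_T^{(i)})\|_2^2\le \frac{\sum_t\eta_t\|(\bw_t-\bw_t^{(i)},\bv_t-\bv_t^{(i)})\|_2^2}{\sum_t \eta_t}.
\]
The per-iterate bounds from Theorem~\ref{cor:zero-stability}, which hold for every $t\le T$ and are monotone in $t$, therefore give the same $\epsilon\lesssim \sqrt{\ebb[G_S^2]}\,(\eta\sqrt{T}/\sqrt n+\eta T/n)$. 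Second, Assumption~\ref{ass:lipschitz} follows from Assumption~\ref{ass:moment} with $G_{S,A}=G_S$. Theorem~\ref{thm:stab-gen} then yields the gap bound $L\epsilon^2+\sqrt{\ebb[G_S^2]}\epsilon$.

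The remaining work is bookkeeping, and the main obstacle is balancing the cross terms. For the linear term,
\[
\sqrt{\ebb[G_S^2]}\,\epsilon\lesssim \ebb[G_S^2]\frac{\eta\sqrt T}{\sqrt n}+\ebb[G_S^2]\frac{\eta T}{n}.
\]
The second piece already appears in \eqref{thm:zero-risk}. For the first I use AM--GM:
\[
\frac{\eta\sqrt T}{\sqrt n}=\sqrt{\eta\cdot\frac{\eta T}{n}}\le \eta+\frac{\eta T}{n},
\]
so it is absorbed into $\ebb[G_S^2]\eta$ plus a term already present. I then control $\ebb[G_S^2]\eta$ itself by AM--GM against $\frac{\ebb[G_S^2]}{L^2\eta T}$ and $\frac{\ebb[G_S^2]}{L\sqrt n}$: either $\eta\lesssim 1/(L\sqrt n)$, or $\eta\le \eta^2 L\sqrt n$, which I bound using $\eta^2 T\lesssim 1/L^2$ (from the step-size condition of Theorem~\ref{cor:zero-stability}). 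This last step is where the precise form of \eqref{thm:zero-risk} needs checking, and I would go through it carefully.

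For the quadratic term, I use $\epsilon^2\lesssim \ebb[G_S^2]\big(\frac{\eta^2T}{n}+\frac{\eta^2T^2}{n^2}\big)$. Then $L\,\frac{\eta^2 T}{n}\ebb[G_S^2]\le \frac{\ebb[G_S^2]}{Ln}$, using $L^2\eta^2T\lesssim1$, which is dominated by $\frac{\ebb[G_S^2]/L}{\sqrt n}$. For the other part, $L\frac{\eta^2T^2}{n^2}\ebb[G_S^2]=\frac{\ebb[G_S^2]\eta T}{n}\cdot\frac{L\eta T}{n}$, and condition \eqref{thm:zero-risk-condition} gives $\frac{\eta T}{n}\lesssim\frac1L$, so this is $\lesssim \frac{\ebb[G_S^2]\eta T}{n}$. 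Summing the gap and optimization bounds gives \eqref{thm:zero-risk}.
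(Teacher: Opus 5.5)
Your architecture matches the paper's. You split $\Delta^{w,R}$ into a generalization gap plus an optimization error, apply Theorem~\ref{thm:stab-gen} with $G_{S,A}=G_S$, bound the optimization error by Lemma~\ref{lem:zero-optimization}, and use \eqref{thm:zero-risk-condition} to control $L\epsilon^2$. Your quadratic-term bookkeeping is fine, and your convexity argument for the averaged iterates fills a step the paper leaves implicit.

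The gap is in how you dispose of the $d/K$ factors and of the term $\ebb[G_S^2]\eta$. First, you set $K\asymp d$ to drop $(1+\sqrt{d/K})$ from $\epsilon$ and $(1+\frac dK)$ from the optimization error. That is not a hypothesis of the theorem, so as written you prove only a special case. Second, your case analysis for $\ebb[G_S^2]\eta$ does not close. When $\eta>1/(L\sqrt n)$ you get $\eta\le\eta^2L\sqrt n\le\sqrt n/(LT)$. This exceeds $1/(L^2\eta T)$ by the factor $L\eta\sqrt n>1$. In general it is not dominated by the right-hand side of \eqref{thm:zero-risk}: take $T\ll n$, $\eta=1/(L\sqrt T)$, and $R,\mu$ small, and then $\ebb[G_S^2]\sqrt n/(LT)$ exceeds every term.

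The missing idea is to use the step-size hypothesis of Theorem~\ref{cor:zero-stability}, $L^2(1+\frac{4d}{K})\eta^2\le\frac1T$, directly and in two forms.

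\emph{Square roots.} Taking square roots gives $(1+\sqrt{d/K})\eta\sqrt T\le\sqrt{2(1+4d/K)}\,\eta\sqrt T\le\sqrt2/L$. So the first piece of $\epsilon$ is immediately $\lesssim\sqrt{\ebb[G_S^2]}/(L\sqrt n)$ for any $K$, with no AM--GM. This is where the term $\frac{\ebb[G_S^2]/L}{\sqrt n}$ comes from.

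\emph{Dividing by $\eta$.} Dividing by $\eta$ gives $(1+\frac dK)\eta\le\frac{1}{L^2\eta T}$. This turns the optimization term $(1+\frac dK)\ebb[G_S^2]\eta$ into exactly the term $\frac{\ebb[G_S^2]}{L^2\eta T}$ of \eqref{thm:zero-risk}, which explains where that term comes from. It also closes your AM--GM detour if you prefer to keep it.

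With these two substitutions, carry $(1+\frac dK)$ through your quadratic term, where $L(1+\frac dK)\eta^2T/n\le 1/(Ln)$ still holds. No restriction on $K$ and no case split are needed, and the rest of your argument goes through. This is exactly the paper's proof.
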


\begin{remark}[{Minimax-optimal excess risk}]\label{remark:zero-risk}
With our choice of
$
\mu\lesssim
\frac{\sqrt{\ebb[G_S^2]}}{L\sqrt{n}}$,
Eq.~\eqref{thm:zero-risk} implies
\begin{align*}
\Delta^{w,R}(\bar{\mathbf{w}}_T,\bar{\mathbf{v}}_T)
\lesssim
\frac{\ebb[G_S^2]/L}{\sqrt{n}}
+\frac{\ebb[G_S^2]}{n}\eta T
+\frac{R^2}{\eta T}
+\frac{\ebb[G_S^2]}{L^2\eta T}.
\end{align*}
By choosing $\eta\asymp 1/(L\sqrt{T})$ with a sufficiently small
universal constant and $T\asymp n$, this further yields
\begin{align*}
\Delta^{w,R}(\bar{\mathbf{w}}_T,\bar{\mathbf{v}}_T)
\lesssim
\frac{\ebb[G_S^2]/L+LR^2}{\sqrt{n}}.
\end{align*}
That is, our analysis gives an excess risk bound of order
$1/\sqrt{n}$, which has the minimax-optimal dependence on the sample
size $n$. To our knowledge, this gives the first optimal risk bounds for Zeroth-order SGDA. It is clear that Eq.~\eqref{thm:zero-risk-condition} holds under our
choices of $\eta$ and $T$. Moreover, if we take $K\asymp d$, then our parameter choice shows
$
L^2(1+\frac{d}{K})\eta^2
\lesssim \frac{1}{T}$
which is consistent with the condition $
L^2(1+\frac{4d}{K})\eta^2
\leq \frac{1}{T}$ required in
Theorem~\ref{cor:zero-stability}.
\end{remark}

\begin{table}[h]
\centering
\caption{Excess Risk Analysis for Zeroth-order Methods, $G_S$ denotes the bounded second moment assumption, and $L$ denotes the smootheness assumption (\citet{zeng2026stochastic} consider Zeroth-order SGD).}
\tiny
\footnotesize
\setlength{\tabcolsep}{3pt}
\begin{tabular}{@{}ccccc@{}}
\toprule
\textbf{Analysis} & $\boldsymbol{\eta}$ & $\boldsymbol{\mu}$ & \textbf{Assumption} & \textbf{Bounds} \\
\midrule
Excess risk in~\citep{zeng2026stochastic} & $O\left(\frac{\sqrt{K/d}}{\sqrt{n}}\right)$ & $O(\frac{1}{d\sqrt{n}})$ & $L$ & $O(n^{-1/2})$ \\
Our excess risk bound & $O\left(\frac{\sqrt{K/d}}{\sqrt{n}}\right)$ & $O(\frac{1}{\sqrt{n}})$ & $L,G_S$ & $O(n^{-1/2})$ \\
\bottomrule
\end{tabular}
\label{tab:key_results}
\end{table}

\subsection{Clipped-SGDA}
Gradient clipping is known to alleviate the problem of exploding gradients in recurrent neural networks.  
Clipped-SGDA~\citep{gorbunov2022clipped} applies the clipping operator
$
\clip(\bv,\tau):=\bv\min(1,\tau/\|\bv\|_2)
$ with the clipping  parameter $\tau$
to the gradient estimator. Specifically, for Clipped-SGDA,
$\bw_{t+1},\bv_{t+1}$ are produced by Eq.~\eqref{gda} with
\begin{equation}\label{g-clip}
\begin{cases}
g_{\bw}(\bw_t,\bv_t;z_{i_t})=\clip(\nabla_{\bw} f (\bw_t,\bv_t;z_{i_t}),\tau_1), &\\
g_{\bv}(\bw_t,\bv_t;z_{i_t})=\clip(\nabla_{\bv} f (\bw_t,\bv_t;z_{i_t}),\tau_2), &
\end{cases}
\end{equation}
where $\tau_1,\tau_2 >0$ are clipping parameters.

By clipping gradients, Clipped-SGDA improves the robustness and has found successful applications, e.g., training GPT models \cite{brown2020language} and differentially private algorithms~\citep{abadi2016deep}. 
To apply our general results to Clipped-SGDA, we verify Eq.~\eqref{simple-condition-a} and Eq.~\eqref{simple-condition-c} in Lemma~\ref{lem:condition-a-clip} and Lemma~\ref{lem:condition-b-clip}, proved in Section \ref{sec:proof-clipped-sgda}. We always assume $\tilde{f}_1=\tilde{f}_2=f$, by which we can easily show that Eq.~\eqref{simple-condition-d} holds with $\widetilde{G} = 2\sqrt{2\ebb[G_S^2]}$. We then apply Lemma~\ref{lem:condition-a-clip} and Lemma~\ref{lem:condition-b-clip} to Theorem~\ref{lem:stab-gda}, and get the stability bounds for Clipped-SGDA. 



\begin{theorem}[Stability of Clippeed-SGDA]\label{cor:clip-stability}
Assume for all $z$, the function $(\bw,\bv)\mapsto f(\bw,\bv;z)$ is convex-concave. Let Assumptions \ref{ass:smooth}, \ref{ass:moment} hold with $G_S\leq G$ for some $G>0$. For $\eta,\tau_1,\tau_2$ satisfying {$L^2\eta^2+4\sqrt{2}LG^2(\frac{1}{\tau_1^2}+\frac{1}{\tau_2^2})\eta \leq \frac{1}{T}$}, Clipped-SGDA is $\epsilon$-argument-stable in expectation with
$\epsilon \lesssim G\eta T/n + G \eta \sqrt{T}/\sqrt{n}$.
\end{theorem}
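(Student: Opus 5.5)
The plan is to obtain Theorem~\ref{cor:clip-stability} as a corollary of Theorem~\ref{lem:stab-gda} with $\tilde f_1=\tilde f_2=f$. With this choice $\widetilde F_{S}=F_S$, the bias is $b_t=\ebb_t[\clip(\nabla f(\bw_t,\bv_t;z_{i_t}))]-\nabla F_S(\bw_t,\bv_t)$ (coordinatewise with $\tau_1,\tau_2$), and convexity-concavity of $\tilde f_j$ is inherited from $f$. Three ingredients are needed:
\begin{itemize}
\item Eq.~\eqref{simple-condition-d} with $\widetilde G=2\sqrt{2\ebb[G_S^2]}\le 2\sqrt2 G$, as already noted before the theorem.
\item Eq.~\eqref{simple-condition-a} with $A=L^2$ and $B\lesssim G^2/n$ (Lemma~\ref{lem:condition-a-clip}).
\item Eq.~\eqref{simple-condition-c} with $\bar A^{1/2}= 2\sqrt2 LG^2(\tau_1^{-2}+\tau_2^{-2})$ and $\bar B\lesssim G^2/n^2$ (Lemma~\ref{lem:condition-b-clip}).
\end{itemize}
Then $A\eta^2+2\bar A^{1/2}\eta\le 1/T$ is exactly the stated stepsize condition. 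Theorem~\ref{lem:stab-gda} gives $\epsilon\le 2\eta\sqrt{BT}+4(\sqrt{\bar B}+\widetilde G/n)\eta T\lesssim G\eta\sqrt{T}/\sqrt n+G\eta T/n$.

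For \eqref{simple-condition-a}, I use that $\clip(\cdot,\tau)$ is the Euclidean projection onto the ball of radius $\tau$, hence $1$-Lipschitz. I condition on $i_t$. If $i_t\neq i$, both runs use the same example, so the left-hand side is at most the squared gradient difference, which is at most $L^2\|(\bw_t-\bw_t^{(i)},\bv_t-\bv_t^{(i)})\|_2^2$ by Assumption~\ref{ass:smooth}. If $i_t=i$, which happens with probability $1/n$, I bound the squared difference by $2(\|\nabla f(\cdot;z_i)\|^2+\|\nabla f(\cdot;z_i')\|^2)$ using $\|\clip(x)\|\le\|x\|$. Averaging over $i$ and using Assumption~\ref{ass:moment} for $S$ and $S^{(i)}$ with $G_S\le G$ gives a contribution $\lesssim G^2/n$, so $B\lesssim G^2/n$ and $A=L^2$.

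The main obstacle is \eqref{simple-condition-c}, because the bias must be Lipschitz with a constant that is small, of order $1/\tau^2$, rather than merely $L$. Set $h(x)=\clip(x,\tau)-x$, so that $b_{t,\bw}=\frac1n\sum_r h(\nabla_{\bw}f(\bw_t,\bv_t;z_r))$, and similarly for $b^{(i)}_{t,\bw}$ with $z_i$ replaced by $z_i'$. I split $b_{t,\bw}-b^{(i)}_{t,\bw}$ into two parts.
\begin{itemize}
\item The common-sample part is $\frac1n\sum_{r\neq i}[h(x_r)-h(y_r)]$, where $x_r=\nabla_{\bw}f(\bw_t,\bv_t;z_r)$ and $y_r=\nabla_{\bw}f(\bw^{(i)}_t,\bv^{(i)}_t;z_r)$. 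Here $h$ is $2$-Lipschitz and vanishes on the $\tau$-ball, so $\|h(x)-h(y)\|\le 2\|x-y\|(\ibb[\|x\|>\tau]+\ibb[\|y\|>\tau])$. Smoothness gives $\|x_r-y_r\|\le L\delta_i$ with $\delta_i$ the joint distance. Markov's inequality with Assumption~\ref{ass:moment} gives $\frac1n\sum_r\ibb[\|x_r\|>\tau]\le G^2/\tau^2$, and likewise for $y_r$ using the second moment over $S^{(i)}$. Hence the norm of the average is $\lesssim LG^2\delta_i/\tau^2$, and squaring gives $\bar A\asymp L^2G^4(\tau_1^{-2}+\tau_2^{-2})^2$.
\item The replaced-sample part is $\frac1n[h(\nabla f(\cdot;z_i))-h(\nabla f(\cdot;z_i'))]$. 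Using $\|h(x)\|\le\|x\|$, its squared norm is $\lesssim\frac1{n^2}(\|\nabla f(\cdot;z_i)\|^2+\|\nabla f(\cdot;z_i')\|^2)$. Averaging over $i$ and invoking the moment bound gives $\bar B\lesssim G^2/n^2$.
\end{itemize}
The $\bv$-block is identical with $\tau_2$. Tracking the constants so that $2\bar A^{1/2}$ matches $4\sqrt2LG^2(\tau_1^{-2}+\tau_2^{-2})$ is the delicate bookkeeping step; if the constants come out slightly off, I would absorb the discrepancy into the $\lesssim$ of the final bound.
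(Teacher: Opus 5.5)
Your plan matches the paper's proof. Both apply Theorem~\ref{lem:stab-gda} with $\tilde f_1=\tilde f_2=f$, $A=L^2$, $B=8G^2/n$ and $\widetilde G=2\sqrt{2\ebb[G_S^2]}$, and both verify \eqref{simple-condition-c} by splitting into common samples (Lipschitz-with-indicator plus Markov) and the replaced sample, as in Lemmas~\ref{lem:bias-clip-st} and~\ref{lem:condition-b-clip}. Two small fixes tighten it: $x\mapsto x-\clip(x,\tau)$ is the residual of a projection onto a convex set and hence $1$-Lipschitz, not merely $2$-Lipschitz (this is Lemma~\ref{lem:lipschitz-clip}), which gives exactly $\bar A=8L^2G^4(\tau_1^{-2}+\tau_2^{-2})^2$ and the stated stepsize condition with no constant to absorb; and the replaced-sample terms $\nabla f(\bw_t^{(i)},\bv_t^{(i)};z_i')$ are controlled by the exchangeability identity in Lemma~\ref{lem:gradient-norm}, not by Assumption~\ref{ass:moment} on $S^{(i)}$ directly, since that assumption bounds an average over samples at a fixed point, not a single term.
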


\begin{remark}\label{rem:clip-stab}
If we choose $\tau_1,\tau_2 \gtrsim G T^{\frac{1}{4}}$ and $\eta \lesssim 1/(L\sqrt{T})$, then the conditions in Theorem \ref{cor:clip-stability} hold. It is clear that the stability bound of Clipped-SGDA is of the same order as the stability bound for SGDA in Theorem \ref{cor:sgda_stability}, {which is also verified in our experiments (Section~\ref{sec:experiment}).}
\end{remark}

We now control optimization errors for Clipped-SGDA.
{Analogous to Lemma~\ref{lem:zero-optimization}, we consider projected Clipped-SGDA on $B_R$.}
\begin{lemma}[Optimization error for Clipped-SGDA\label{lem:opt-clip}]
Assume for all $z$, the map $(\bw,\bv)\mapsto f(\bw,\bv;z)$ is convex-concave. Let Assumptions \ref{ass:smooth}, \ref{ass:moment} hold with $G_S\leq G$ for some $G>0$.
Assume $G \lesssim \tau_1$ and $G \lesssim \tau_2$. If we choose gradient estimator in Eq.~\eqref{g-clip}, then 
\begin{align*}
\ebb\big[\sup_{\bv\in B_R}F_S(\bar{\bw}_T,\bv)-\inf_{\bw\in B_R}F_S(\bw,\bar{\bv}_T)\big] \lesssim \frac{R^2}{\eta T}+G^2\eta
+G^{4}(\tau_1^{-2}+\tau_2^{-2})\eta T.
\end{align*}
\end{lemma}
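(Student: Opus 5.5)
We follow the standard convergence analysis of projected SGDA for convex-concave problems, treating the clipped gradient as the true stochastic gradient plus an error term. Write $g_t=(g_{\bw,t},g_{\bv,t})$ for the clipped estimator at $(\bw_t,\bv_t)$ with sample $z_{i_t}$. Let $\nabla_t=(\nabla_{\bw}f(\bw_t,\bv_t;z_{i_t}),\nabla_{\bv}f(\bw_t,\bv_t;z_{i_t}))$ be the unclipped one, and set $e_t=g_t-\nabla_t$.

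\textbf{Step 1: one-step inequality.} Fix any $\bw\in B_R$. Nonexpansiveness of $\text{Proj}_{B_R}$ gives
\[
\|\bw_{t+1}-\bw\|_2^2\le\|\bw_t-\bw\|_2^2-2\eta\langle\bw_t-\bw,g_{\bw,t}\rangle+\eta^2\|g_{\bw,t}\|_2^2 .
\]
The analogous inequality holds for $\bv$, with the sign flipped on the inner-product term. Split $g_{\bw,t}=\nabla_{\bw}F_S(\bw_t,\bv_t)+\xi_{\bw,t}+e_{\bw,t}$, where $\xi_t$ is the zero-mean sampling noise. Also use $\|g_t\|\le\|\nabla_t\|$, so that $\ebb_{i_t}\|g_{\bw,t}\|^2\le G^2$.

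Convexity-concavity of $F_S$ then yields
\[
F_S(\bw_t,\bv)-F_S(\bw,\bv_t)\le\langle\bw_t-\bw,\nabla_{\bw}F_S\rangle-\langle\bv_t-\bv,\nabla_{\bv}F_S\rangle .
\]
Summing over $t$, dividing by $\eta T$, and applying Jensen to the averages $\bar\bw_T,\bar\bv_T$ gives, for all $\bw,\bv\in B_R$,
\[
F_S(\bar\bw_T,\bv)-F_S(\bw,\bar\bv_T)\le\frac{4R^2}{\eta T}+\eta G^2+\frac1T\sum_t\bigl(\langle\bw_t-\bw,-\xi_{\bw,t}-e_{\bw,t}\rangle+\langle\bv_t-\bv,\xi_{\bv,t}+e_{\bv,t}\rangle\bigr).
\]

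\textbf{Step 2: sup over $(\bw,\bv)$ before expectation.} The noise terms depend on the comparator, which is chosen after the supremum. We handle this with the ghost-iterate trick. Write $\langle\bw_t-\bw,\xi_t\rangle=\langle\bw_t,\xi_t\rangle-\langle\bw,\xi_t\rangle$. The first part has zero expectation. For the second, introduce an auxiliary sequence $\tilde\bw_{t+1}=\text{Proj}(\tilde\bw_t+\eta\xi_{\bw,t})$ and apply the same projection argument, which yields
\[
\ebb\sup_{\bw}\sum_t\langle\tilde\bw_t-\bw,\xi_t\rangle\lesssim R^2/\eta+\eta\sum_t\ebb\|\xi_t\|^2\lesssim R^2/\eta+\eta TG^2 .
\]
This is the standard Nemirovski argument and contributes at the same order $R^2/(\eta T)+G^2\eta$.

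\textbf{Step 3: the bias term (main obstacle).} Using $\|\bw_t-\bw\|\le2R$ directly would give $R\,\ebb\|e_t\|$, which is not of the claimed form $G^4\tau^{-2}\eta T$. Instead we apply Young's inequality:
\[
\langle\bw_t-\bw,e_t\rangle\le\frac{\|\bw_t-\bw\|^2}{2\eta T}+\frac{\eta T}{2}\|e_t\|^2 .
\]
After averaging over $t$, the first part is at most $2R^2/(\eta T)$. For the second, bound the squared clipping error: since $\|\clip(x,\tau)-x\|=(\|x\|-\tau)_+$, we have
\[
\|e_t\|^2\le\|\nabla_t\|^2\,\mathbb{I}[\|\nabla_t\|>\tau]\le\|\nabla_t\|^4/\tau^2 .
\]
To control the fourth moment, we would use $G\lesssim\tau$ together with a bounded fourth moment, or else argue that $\ebb_r\|\nabla f\|^4\lesssim G^4$ under the $G_S\le G$ interpretation. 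This yields $\ebb\|e_t\|^2\lesssim G^4\tau^{-2}$, and hence the term $G^4(\tau_1^{-2}+\tau_2^{-2})\eta T$.

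The step I expect to be delicate is justifying this fourth-moment bound from a second-moment assumption. If a direct bound is unavailable, I would fall back on
\[
\ebb\bigl[\|\nabla_t\|^2\,\mathbb{I}[\|\nabla_t\|>\tau]\bigr]\le\frac{\ebb\|\nabla_t\|^4}{\tau^2},
\]
and read the assumption $G_S\le G$ as controlling the gradient magnitudes pointwise on the sample. Summing the three contributions gives the claimed bound.
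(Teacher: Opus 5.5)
Your proposal has a genuine gap in Step 3. By applying Young's inequality to $\langle\bw_t-\bw,e_t\rangle$ before taking expectations, you have to control $\ebb\|e_t\|_2^2$. Your bound $\ebb\|e_t\|_2^2\le\ebb\|\nabla_t\|_2^4/\tau^2$ then needs a fourth moment, which Assumption~\ref{ass:moment} does not supply. $G_S\le G$ only bounds the empirical second moment $\ebb_r\|\nabla f(\bw,\bv;z_r)\|_2^2$, and the paper stresses that this is strictly weaker than a pointwise (Lipschitz) bound, so your fallback reading is not available. The intermediate claim $\ebb\|e_t\|_2^2\lesssim G^4\tau^{-2}$ is in fact false under the assumption. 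Take a linear loss whose gradient has norm $\sqrt{n}G$ on one sample and $0$ on the others, and set $\tau=\sqrt{n}G/2$. Then $G_S=G$ and $G\lesssim\tau$, yet $\ebb\|e_t\|_2^2=G^2/4$, far above $G^4/\tau^2=4G^2/n$.

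The missing observation is that you dismissed the direct route too quickly. Pointwise, $\|e_t\|_2\le\|\nabla_t\|_2\,\ibb[\|\nabla_t\|_2>\tau]\le\|\nabla_t\|_2^2/\tau$, with only the first power of $\|e_t\|_2$. Hence $\ebb\|e_{\bw,t}\|_2\le G^2/\tau_1$ follows from second moments alone. Since $|\langle\bw_t-\bw,e_{\bw,t}\rangle|\le2R\|e_{\bw,t}\|_2$ uniformly over $\bw\in B_R$, taking the supremum before the expectation causes no trouble. The resulting term is absorbed by AM--GM, $2RG^2\tau_1^{-1}\le R^2/(\eta T)+G^4\tau_1^{-2}\eta T$, which is exactly the claimed form; the same holds for the $\bv$-block with $\tau_2$.

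The paper proceeds differently. It bounds the conditional bias $\|\ebb_t[\hat{g}_{\bw,t}]-\nabla_{\bw}F_S(\bw_t,\bv_t)\|_2\le G^2\tau_1^{-1}$ by the same second-moment argument and moves the fluctuation $\hat{g}_{\bw,t}-\ebb_t[\hat{g}_{\bw,t}]$ into the noise. It then controls $G^2\tau_1^{-1}\eta\sum_t\ebb\|\bw_t-\bw_{S,1}\|_2$ through a quadratic recursion on $\max_t\ebb\|\bw_t-\bw_{S,1}\|_2^2$. With the correction above, your argument reaches the same bound more directly.

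Your Steps 1--2 are fine. For the noise, the ghost-iterate trick works; the paper instead uses $R\,\ebb\|\sum_t\xi_t\|_2\le RG\sqrt{T}$ together with $GR/\sqrt{T}\le R^2/(\eta T)+G^2\eta$, which is simpler.
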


{
We now combine Theorem~\ref{cor:clip-stability} and Lemma~\ref{lem:opt-clip}
to get the following risk bounds for projected Clipped-SGDA on $B_R$.}
Recall $\Delta^{w,R}$ as defined in Theorem~\ref{thm:stab-zero}.
\begin{theorem}\label{thm:clip-gen}
Let assumptions and definitions in Theorem~\ref{thm:stab-gen}, Theorem~\ref{cor:clip-stability} and Lemma \ref{lem:opt-clip} hold.
Then,
\begin{align}
\Delta^{w,R}(\bar{\mathbf{w}}_T, \bar{\mathbf{v}}_T)
\lesssim \frac{LG^2\eta^2T^2}{n^2} + \frac{LG^2\eta^2T}{n} + \frac{G^2\eta T}{n}
+ \frac{G^2 \eta \sqrt{T}}{\sqrt{n}}
+\frac{R^2}{\eta T}+G^2\eta +G^4(\tau_1^{-2}+\tau_2^{-2})\eta T.\notag
\end{align}
\end{theorem}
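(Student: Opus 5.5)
We decompose $\Delta^{w,R}(\bar{\bw}_T,\bar{\bv}_T)$ into the weak PD generalization gap on $B_R$ plus the weak PD optimization error, and bound each term with results already available. The starting point is
\[
\Delta^{w,R}(\bar{\bw}_T,\bar{\bv}_T)=\big(\Delta^{w,R}-\Delta^{w,R}_S\big)(\bar{\bw}_T,\bar{\bv}_T)+\Delta^{w,R}_S(\bar{\bw}_T,\bar{\bv}_T),
\]
where $\Delta^{w,R}_S$ is the empirical analogue with $F_S$ and suprema/infima over $B_R$.

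\textbf{Optimization term.} Since $\sup_{\bv'}\ebb[F_S(\bar{\bw}_T,\bv')]\le \ebb[\sup_{\bv'}F_S(\bar{\bw}_T,\bv')]$, and symmetrically for the infimum, $\Delta^{w,R}_S$ is at most the quantity controlled in Lemma~\ref{lem:opt-clip}. This yields $\frac{R^2}{\eta T}+G^2\eta+G^4(\tau_1^{-2}+\tau_2^{-2})\eta T$, which are exactly the last three terms of the claimed bound.

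\textbf{Generalization term.} We apply Theorem~\ref{thm:stab-gen} to the algorithm $S\mapsto(\bar{\bw}_T,\bar{\bv}_T)$ with $\wcal=\vcal=B_R$. The proof of Theorem~\ref{thm:stab-gen} only uses $\bv'\in\vcal$ and $\bw'\in\wcal$, so restricting to $B_R$ is harmless. Assumption~\ref{ass:lipschitz} holds with $G_{S,A}\le G_S\le G$ because Assumption~\ref{ass:moment} is stronger, so $\ebb[G_{S,A}^2]\le G^2$.

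We also need argument stability of the averaged iterates. Theorem~\ref{cor:clip-stability} is stated for the output of Biased-SGDA, but its proof (through Theorem~\ref{lem:stab-gda}) bounds $\ebb[\frac1n\sum_i\|(\bw_t-\bw_t^{(i)},\bv_t-\bv_t^{(i)})\|_2^2]$ uniformly for every $t\le T$. By convexity of $\|\cdot\|_2^2$ (Jensen over the weights $\eta_t/\sum\eta_t$, with $\eta_t=\eta$), the averaged output inherits the same bound. Thus $\epsilon\lesssim G\eta T/n+G\eta\sqrt{T}/\sqrt{n}$.

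Plugging into Theorem~\ref{thm:stab-gen} gives
\[
\frac{L\epsilon^2}{2}+\sqrt{2}G\epsilon\lesssim L\Big(\frac{G^2\eta^2T^2}{n^2}+\frac{G^2\eta^2T}{n}\Big)+\frac{G^2\eta T}{n}+\frac{G^2\eta\sqrt T}{\sqrt n},
\]
where $(a+b)^2\le 2a^2+2b^2$ is used for $\epsilon^2$. These are the first four terms of the claim.

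Summing the two pieces gives the stated bound. The main point needing care is the transfer of stability from the last iterate to the weighted average. We expect this to be routine: Theorem~\ref{lem:stab-gda}'s recursion yields a bound valid for every iterate index up to $T$, which is monotone in $t$, so the average is dominated by the $t=T$ bound.
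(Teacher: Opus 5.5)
Your proposal is correct and follows the paper's proof: decompose via Eq.~\eqref{decomp}, bound the generalization gap by combining Theorem~\ref{thm:stab-gen} with Theorem~\ref{cor:clip-stability} (using $\ebb[G_{S,A}^2]\le G^2$), and bound the optimization error by Lemma~\ref{lem:opt-clip}. Your two extra remarks are valid and fill in steps the paper leaves implicit. First, $\sup_{\bv'}\ebb[\cdot]\le\ebb[\sup_{\bv'}\cdot]$ connects the quantity in Lemma~\ref{lem:opt-clip} to $\Delta^{w,R}_S$. Second, the bound on $\max_{t\le T}\bar{\Delta}_t$ in the proof of Theorem~\ref{lem:stab-gda} transfers to the averaged iterates by Jensen's inequality.
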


\begin{remark}[{Minimax optimal excess risk}]\label{remark:clip-risk}
Setting $\tau_1,\tau_2 \asymp L^{\frac{1}{2}}G T^{\frac{1}{2}}$ similarly to Clipped-SGD~\citep{nguyen2023high}), Theorem~\ref{thm:clip-gen} implies
\begin{align*}
\Delta^{w,R}(\bar{\mathbf{w}}_T, \bar{\mathbf{v}}_T)
\lesssim \frac{LG^2\eta^2T^2}{n^2} + \frac{LG^2\eta^2T}{n} + \frac{G^2\eta T}{n}
+ \frac{G^2 \eta \sqrt{T}}{\sqrt{n}}
+\frac{R^2}{\eta T}+G^2\eta+\frac{G^2}{L}\eta T.
\end{align*}
By choosing $\eta \asymp 1/(L\sqrt{T})$ and $T \asymp n$, this further yields
\begin{align*}
\Delta^{w,R}(\bar{\mathbf{w}}_T, \bar{\mathbf{v}}_T)
\lesssim (G^2/L+ R^2 L + G^2/L^2)/\sqrt{n}.
\end{align*}
That is, our analysis gives excess risk rates of order $1/\sqrt{n}$. This shows our generalization analysis of general biased gradient methods implies optimal excess risk bounds for Clipped-SGDA.
With our choices of $\tau_1,\tau_2,\eta$, it is clear that
$
L^2 \eta^2 + L G^2 \big(\frac{1}{\tau_1^2}+\frac{1}{\tau_2^2}\big)\eta \lesssim 1/T,
$
which is consistent with condition $L^2\eta^2+4\sqrt{2}LG^2(\frac{1}{\tau_1^2}+\frac{1}{\tau_2^2})\eta \leq \frac{1}{T}$ in Theorem~\ref{cor:clip-stability}.
\end{remark}

\section{Future work}\label{sec:limitation}
There are some interesting problems for further study.
First, it is interesting to apply our framework to other biased minimax optimization methods such as decentralized SGDA~\cite{zhu2023stability}. Second, as we only consider the convex-concave problem, it is interesting to extend our analysis to strongly-convex-strongly-concave~\citep{palaniappan2016stochastic}, nonconvex-strongly-concave~\citep{lin2020gradient,luo2020stochastic} and
~\citep{lin2020gradient,luo2020stochastic,rafique2022weakly}
nonconvex-concave problems~\citep{yang2020global}. 
Furthermore, we can extend our work to minimax problems with neural networks, which satisfy the $\rho$-weak-convexity-weak-concavity assumption~\cite{lei2021stability}, with the parameter $\rho$ decaying as the order of $1 / \sqrt{m}$ ($m$ is the network width)~\citep{richards2021stability}.

\section*{Impact Statements}
This paper presents work whose goal is to advance the field of machine learning. There are many potential societal consequences of our work, none of which we feel must be specifically highlighted here.

\section*{Acknowledgments}
We are grateful to the anonymous reviewers and area chairs for their insightful and constructive
comments. The work of Yunwen Lei was supported by the Hong Kong Research Grants Council under the GRF projects 17305425 and 17302624.  The work of Yiming Ying is supported by Australian Research Council Discovery Project (DP250101359).

{\bibliographystyle{abbrvnat}\small\setlength{\bibsep}{0.04cm}
\bibliography{neurips}}

@inproceedings{lei2021stability,
  title={Stability and generalization of stochastic gradient methods for minimax problems},
  author={Lei, Yunwen and Yang, Zhenhuan and Yang, Tianbao and Ying, Yiming},
  booktitle={International Conference on Machine Learning},
  pages={6175--6186},
  year={2021},
  organization={PMLR}
}

@inproceedings{zhu2023stability,
  title={Stability and generalization of the decentralized stochastic gradient descent ascent algorithm},
  author={Zhu, Miaoxi and Shen, Li and Du, Bo and Tao, Dacheng},
  booktitle={Advances in Neural Information Processing Systems},
  volume={36},
  pages={31717--31751},
  year={2023}
}

@article{zeng2026stochastic,
  title={Stochastic Gradient Methods: Bias, Stability and Generalization},
  author={Zeng, Shuang and Lei, Yunwen},
  journal={Journal of Machine Learning Research},
  volume={27},
  number={6},
  pages={1--55},
  year={2026}
}

@article{duchi2015optimal,
  title={Optimal rates for zero-order convex optimization: The power of two function evaluations},
  author={Duchi, John C and Jordan, Michael I and Wainwright, Martin J and Wibisono, Andre},
  journal={IEEE Transactions on Information Theory},
  volume={61},
  number={5},
  pages={2788--2806},
  year={2015},
  publisher={IEEE}
}

@article{nesterov2017random,
  title={Random gradient-free minimization of convex functions},
  author={Nesterov, Yurii and Spokoiny, Vladimir},
  journal={Foundations of Computational Mathematics},
  volume={17},
  number={2},
  pages={527--566},
  year={2017},
  publisher={Springer}
}

@inproceedings{zhang2020adaptive,
  title={Why are adaptive methods good for attention models?},
  author={Zhang, Jingzhao and Karimireddy, Sai Praneeth and Veit, Andreas and Kim, Seungyeon and Reddi, Sashank and Kumar, Sanjiv and Sra, Suvrit},
  booktitle={Advances in Neural Information Processing Systems},
  volume={33},
  pages={15383--15393},
  year={2020}
}

@article{xu2023zeroth,
  title={Zeroth-order alternating gradient descent ascent algorithms for a class of nonconvex-nonconcave minimax problems},
  author={Xu, Zi and Wang, Zi-Qi and Wang, Jun-Lin and Dai, Yu-Hong},
  journal={Journal of Machine Learning Research},
  volume={24},
  number={313},
  pages={1--25},
  year={2023}
}

@inproceedings{beznosikov2020gradient,
  title={Gradient-free methods with inexact oracle for convex-concave stochastic saddle-point problem},
  author={Beznosikov, Aleksandr and Sadiev, Abdurakhmon and Gasnikov, Alexander},
  booktitle={International Conference on Mathematical Optimization Theory and Operations Research},
  pages={105--119},
  year={2020},
  organization={Springer}
}

@inproceedings{chen2017zoo,
  title={Zoo: Zeroth order optimization based black-box attacks to deep neural networks without training substitute models},
  author={Chen, Pin-Yu and Zhang, Huan and Sharma, Yash and Yi, Jinfeng and Hsieh, Cho-Jui},
  booktitle={ACM Workshop on Artificial Intelligence and Security},
  pages={15--26},
  year={2017}
}

@article{wang2021flaml,
  title={Flaml: A fast and lightweight automl library},
  author={Wang, Chi and Wu, Qingyun and Weimer, Markus and Zhu, Erkang},
  journal={Proceedings of Machine Learning and Systems},
  volume={3},
  pages={434--447},
  year={2021}
}

@inproceedings{nikolakakis2022black,
  title={Black-box generalization: Stability of zeroth-order learning},
  author={Nikolakakis, Konstantinos and Haddadpour, Farzin and Kalogerias, Dionysis and Karbasi, Amin},
  inproceedings={Advances in Neural Information Processing Systems},
  volume={35},
  pages={31525--31541},
  year={2022}
}

@inproceedings{brown2020language,
  title={Language models are few-shot learners},
  author={Brown, Tom and Mann, Benjamin and Ryder, Nick and Subbiah, Melanie and Kaplan, Jared D and Dhariwal, Prafulla and Neelakantan, Arvind and Shyam, Pranav and Sastry, Girish and Askell, Amanda and others},
  booktitle={Advances in Neural Information Processing Systems},
  volume={33},
  pages={1877--1901},
  year={2020}
}

@inproceedings{abadi2016deep,
  title={Deep learning with differential privacy},
  author={Abadi, Martin and Chu, Andy and Goodfellow, Ian and McMahan, H Brendan and Mironov, Ilya and Talwar, Kunal and Zhang, Li},
  booktitle={ACM SIGSAC Conference on Computer and Communications Security},
  pages={308--318},
  year={2016}
}

@article{tucat2025regularized,
  title={Regularized Gradient Clipping Provably Trains Wide and Deep Neural Networks},
  author={Tucat, M and Mukherjee, A and Sen, P and Sun, M and Rivasplata, O},
  journal={Transactions on Machine Learning Research},
  volume={2025},
  year={2025}
}

@article{rockafellar1976monotone,
  title={Monotone operators and the proximal point algorithm},
  author={Rockafellar, R Tyrrell},
  journal={SIAM Journal on Control and Optimization},
  volume={14},
  number={5},
  pages={877--898},
  year={1976},
  publisher={SIAM}
}

@inproceedings{zhang2021generalization,
  title={Generalization bounds for stochastic saddle point problems},
  author={Zhang, Junyu and Hong, Mingyi and Wang, Mengdi and Zhang, Shuzhong},
  booktitle={International Conference on Artificial Intelligence and Statistics},
  pages={568--576},
  year={2021},
  organization={PMLR}
}

@inproceedings{farnia2021train,
  title={Train simultaneously, generalize better: Stability of gradient-based minimax learners},
  author={Farnia, Farzan and Ozdaglar, Asuman},
  booktitle={International Conference on Machine Learning},
  pages={3174--3185},
  year={2021},
  organization={PMLR}
}

@inproceedings{bassily2023differentially,
  title={Differentially private algorithms for the stochastic saddle point problem with optimal rates for the strong gap},
  author={Bassily, Raef and Guzm{\'a}n, Crist{\'o}bal and Menart, Michael},
  booktitle={Annual Conference on Learning Theory},
  pages={2482--2508},
  year={2023},
  organization={PMLR}
}

@article{nemirovski2009robust,
  title={Robust stochastic approximation approach to stochastic programming},
  author={Nemirovski, Arkadi and Juditsky, Anatoli and Lan, Guanghui and Shapiro, Alexander},
  journal={SIAM Journal on optimization},
  volume={19},
  number={4},
  pages={1574--1609},
  year={2009},
  publisher={SIAM}
}

@article{lei2026learning,
  title={Learning Theory of Stochastic Regularized Dual Averaging},
  author={Lei, Yunwen and Wang, Zimeng and Yuan, Xiaoming},
  journal={IEEE Transactions on Information Theory},
  year={2026},
  volume={72},
  number={10},
  pages={7709--7730},
  publisher={IEEE}
}

@inproceedings{palaniappan2016stochastic,
  title={Stochastic variance reduction methods for saddle-point problems},
  author={Palaniappan, Balamurugan and Bach, Francis},
  booktitle={Advances in Neural Information Processing Systems},
  volume={29},
  year={2016}
}

@inproceedings{lin2020gradient,
  title={On gradient descent ascent for nonconvex-concave minimax problems},
  author={Lin, Tianyi and Jin, Chi and Jordan, Michael},
  booktitle={International Conference on Machine Learning},
  pages={6083--6093},
  year={2020},
  organization={PMLR}
}

@inproceedings{luo2020stochastic,
  title={Stochastic recursive gradient descent ascent for stochastic nonconvex-strongly-concave minimax problems},
  author={Luo, Luo and Ye, Haishan and Huang, Zhichao and Zhang, Tong},
  booktitle={Advances in Neural Information Processing Systems},
  volume={33},
  pages={20566--20577},
  year={2020}
}

@article{rafique2022weakly,
  title={Weakly-convex--concave min--max optimization: provable algorithms and applications in machine learning},
  author={Rafique, Hassan and Liu, Mingrui and Lin, Qihang and Yang, Tianbao},
  journal={Optimization Methods and Software},
  volume={37},
  number={3},
  pages={1087--1121},
  year={2022},
  publisher={Taylor \& Francis}
}

@article{liu2021first,
  title={First-order convergence theory for weakly-convex-weakly-concave min-max problems},
  author={Liu, Mingrui and Rafique, Hassan and Lin, Qihang and Yang, Tianbao},
  journal={Journal of Machine Learning Research},
  volume={22},
  number={169},
  pages={1--34},
  year={2021}
}

@inproceedings{yang2020global,
  title={Global convergence and variance reduction for a class of nonconvex-nonconcave minimax problems},
  author={Yang, Junchi and Kiyavash, Negar and He, Niao},
  booktitle={Advances in Neural Information Processing Systems},
  volume={33},
  pages={1153--1165},
  year={2020}
}

@article{ajalloeian2020convergence,
  title={On the convergence of {SGD} with biased gradients},
  author={Ajalloeian, Ahmad and Stich, Sebastian U},
  journal={arXiv preprint arXiv:2008.00051},
  year={2020}
}

@article{driggs2022biased,
  title={On biased stochastic gradient estimation},
  author={Driggs, Derek and Liang, Jingwei and Sch{\"o}nlieb, Carola-Bibiane},
  journal={Journal of Machine Learning Research},
  volume={23},
  number={24},
  pages={1--43},
  year={2022}
}

@inproceedings{hu2020biased,
  title={Biased stochastic first-order methods for conditional stochastic optimization and applications in meta learning},
  author={Hu, Yifan and Zhang, Siqi and Chen, Xin and He, Niao},
  booktitle={Advances in Neural Information Processing Systems},
  volume={33},
  pages={2759--2770},
  year={2020}
}

@inproceedings{liu2023general,
  title={General Stability Analysis for Zeroth-Order Optimization Algorithms},
  author={Liu, Xinyue and Zhang, Hualin and Gu, Bin and Chen, Hong},
  booktitle={International Conference on Learning Representations},
  year={2024}
}

@inproceedings{zhu2024towards,
  title={Towards sharper risk bounds for minimax problems},
  author={Zhu, Bowei and Li, Shaojie and Liu, Yong},
  booktitle={International Joint Conference on Artificial Intelligence},
  pages={5698--5706},
  year={2024}
}

@inproceedings{zhang2024generalization,
  title={Generalization bounds of nonconvex-(strongly)-concave stochastic minimax optimization},
  author={Zhang, Siqi and Hu, Yifan and Zhang, Liang and He, Niao},
  booktitle={International Conference on Artificial Intelligence and Statistics},
  pages={694--702},
  year={2024},
  organization={PMLR}
}

@article{bousquet2002stability,
  title={Stability and generalization},
  author={Bousquet, Olivier and Elisseeff, Andr{\'e}},
  journal={Journal of Machine Learning Research},
  volume={2},
  number={Mar},
  pages={499--526},
  year={2002}
}

@article{shalev2010learnability,
  title={Learnability, stability and uniform convergence},
  author={Shalev-Shwartz, Shai and Shamir, Ohad and Srebro, Nathan and Sridharan, Karthik},
  journal={The Journal of Machine Learning Research},
  volume={11},
  pages={2635--2670},
  year={2010},
  publisher={JMLR. org}
}

@InProceedings{feldman2019high,
  title={High probability generalization bounds for uniformly stable algorithms with nearly optimal rate},
  author={Feldman, Vitaly and Vondrak, Jan},
  booktitle = 	 {Conference on Learning Theory},
  pages = 	 {1270--1279},
  year = 	 {2019}
}

@inproceedings{bousquet2020sharper,
  title={Sharper bounds for uniformly stable algorithms},
  author={Bousquet, Olivier and Klochkov, Yegor and Zhivotovskiy, Nikita},
  booktitle={Conference on Learning Theory},
  pages={610--626},
  year={2020}
}

@inproceedings{kuzborskij2018data,
  title={Data-Dependent Stability of Stochastic Gradient Descent},
  author={Kuzborskij, Ilja and Lampert, Christoph},
  booktitle={International Conference on Machine Learning},
  pages={2820--2829},
  year={2018}
}

@inproceedings{hardt2016train,
  title={Train faster, generalize better: Stability of stochastic gradient descent},
  author={Hardt, Moritz and Recht, Ben and Singer, Yoram},
  booktitle={International Conference on Machine Learning},
  pages={1225--1234},
  year={2016}
}

@inproceedings{lei2020fine,
  title={Fine-Grained Analysis of Stability and Generalization for Stochastic Gradient Descent},
  author={Lei, Yunwen and Ying, Yiming},
  booktitle={International Conference on Machine Learning},
  pages={5809--5819},
  year={2020}
}

@inproceedings{bassily2020stability,
  title={Stability of Stochastic Gradient Descent on Nonsmooth Convex Losses},
  author={Bassily, Raef and Feldman, Vitaly and Guzm{\'a}n, Crist{\'o}bal and Talwar, Kunal},
  booktitle={Advances in Neural Information Processing Systems},
  pages={4381--4391},
  volume={33},
  year={2020}
}

@inproceedings{gorbunov2022clipped,
  title={Clipped stochastic methods for variational inequalities with heavy-tailed noise},
  author={Gorbunov, Eduard and Danilova, Marina and Dobre, David and Dvurechenskii, Pavel and Gasnikov, Alexander and Gidel, Gauthier},
  booktitle={Advances in Neural Information Processing Systems},
  volume={35},
  pages={31319--31332},
  year={2022}
}

@inproceedings{goodfellow2014generative,
  title={Generative adversarial nets},
  author={Goodfellow, Ian J and Pouget-Abadie, Jean and Mirza, Mehdi and Xu, Bing and Warde-Farley, David and Ozair, Sherjil and Courville, Aaron and Bengio, Yoshua},
  booktitle={Advances in Neural Information Processing Systems},
  volume={27},
  year={2014}
}

@article{madry2017towards,
  title={Towards deep learning models resistant to adversarial attacks},
  author={Madry, Aleksander and Makelov, Aleksandar and Schmidt, Ludwig and Tsipras, Dimitris and Vladu, Adrian},
  journal={arXiv preprint arXiv:1706.06083},
  year={2017}
}

@inproceedings{chen2017robust,
  title={Robust optimization for non-convex objectives},
  author={Chen, Robert S and Lucier, Brendan and Singer, Yaron and Syrgkanis, Vasilis},
  booktitle={Advances in Neural Information Processing Systems},
  volume={30},
  year={2017}
}

@inproceedings{ying2016stochastic,
  title={Stochastic online AUC maximization},
  author={Ying, Yiming and Wen, Longyin and Lyu, Siwei},
  booktitle={Advances in Neural Information Processing Systems},
  volume={29},
  year={2016}
}

@inproceedings{daskalakis2018limit,
  title={The limit points of (optimistic) gradient descent in min-max optimization},
  author={Daskalakis, Constantinos and Panageas, Ioannis},
  booktitle={Advances in Neural Information Processing Systems},
  volume={31},
  year={2018}
}

@article{chang2011libsvm,
  title={LIBSVM: A library for support vector machines},
  author={Chang, Chih-Chung and Lin, Chih-Jen},
  journal={ACM Transactions on Intelligent Systems and Technology},
  volume={2},
  number={3},
  pages={1--27},
  year={2011},
  publisher={Acm New York, NY, USA}
}

@article{gao2018information,
  title={On the information-adaptive variants of the ADMM: an iteration complexity perspective},
  author={Gao, Xiang and Jiang, Bo and Zhang, Shuzhong},
  journal={Journal of Scientific Computing},
  volume={76},
  number={1},
  pages={327--363},
  year={2018},
  publisher={Springer}
}

@article{shalev2012online,
  title={Online learning and online convex optimization},
  author={Shalev-Shwartz, Shai},
  journal={Foundations and Trends{\textregistered} in Machine Learning},
  volume={4},
  number={2},
  pages={107--194},
  year={2012},
  publisher={Now Publishers, Inc.}
}

@article{lecun2002gradient,
  title={Gradient-based learning applied to document recognition},
  author={LeCun, Yann and Bottou, L{\'e}on and Bengio, Yoshua and Haffner, Patrick},
  journal={Proceedings of the IEEE},
  volume={86},
  number={11},
  pages={2278--2324},
  year={2002},
  publisher={Ieee}
}

@article{nguyen2023high,
  title={High probability convergence of clipped-sgd under heavy-tailed noise},
  author={Nguyen, Ta Duy and Nguyen, Thien Hang and Ene, Alina and Nguyen, Huy Le},
  journal={arXiv preprint arXiv:2302.05437},
  year={2023}
}

@inproceedings{demidovich2023guide,
  title={A guide through the zoo of biased SGD},
  author={Demidovich, Yury and Malinovsky, Grigory and Sokolov, Igor and Richt{\'a}rik, Peter},
  booktitle={Advances in Neural Information Processing Systems},
  volume={36},
  pages={23158--23171},
  year={2023}
}

@inproceedings{surendran2024non,
  title={Non-asymptotic analysis of biased adaptive stochastic approximation},
  author={Surendran, Sobihan and Fermanian, Adeline and Godichon-Baggioni, Antoine and Le Corff, Sylvain},
  booktitle={Advances in Neural Information Processing Systems},
  volume={37},
  pages={12897--12943},
  year={2024}
}

@article{krizhevsky2009learning,
  title={Learning multiple layers of features from tiny images},
  author={Krizhevsky, Alex and Hinton, Geoffrey},
  year={2009},
  publisher={Toronto, ON, Canada}
}

@inproceedings{liu2015faceattributes,
  title = {Deep Learning Face Attributes in the Wild},
  author = {Liu, Ziwei and Luo, Ping and Wang, Xiaogang and Tang, Xiaoou},
  booktitle = {International Conference on Computer Vision},
  month = {December},
  year = {2015} 
}

@inproceedings{richards2021stability,
  title={Stability \& generalisation of gradient descent for shallow neural networks without the neural tangent kernel},
  author={Richards, Dominic and Kuzborskij, Ilja},
  booktitle={Advances in Neural Information Processing Systems},
  volume={34},
  pages={8609--8621},
  year={2021}
}

@article{shamir2017optimal,
  title={An optimal algorithm for bandit and zero-order convex optimization with two-point feedback},
  author={Shamir, Ohad},
  journal={Journal of Machine Learning Research},
  volume={18},
  number={52},
  pages={1--11},
  year={2017}
}

@book{ledoux2001concentration,
  title={The concentration of measure phenomenon},
  author={Ledoux, Michel},
  number={89},
  year={2001},
  publisher={American Mathematical Soc.}
}

@article{dolbeault2013sharp,
  title={Sharp interpolation inequalities on the sphere: new methods and consequences},
  author={Dolbeault, Jean and Esteban, Maria J and Kowalczyk, Michal and Loss, Michael},
  journal={Chinese Annals of Mathematics, Series B},
  volume={34},
  number={1},
  pages={99--112},
  year={2013},
  publisher={Springer}
}

@book{absil2008optimization,
  title={Optimization algorithms on matrix manifolds},
  author={Absil, P-A and Mahony, Robert and Sepulchre, Rodolphe},
  year={2008},
  publisher={Princeton University Press}
}


\appendix
\newpage
\section{Experiments}\label{sec:experiment}
In this section, we report experimental results to verify our theoretical findings. Our code is available  \href{https://github.com/Zeng-Shuang/Generalization-Analysis-of-Biased-Stochastic-Gradient-Methods-for-Minimax-Problems}{here}. Our experimental setup follows the work~\citep{lei2021stability}. Specifically, we consider the Area Under ROC Curve (AUC) maximization problem, which can be formulated as a minimax problem~\cite{ying2016stochastic}:
\[
\min _{(\mathbf{w}, a, b) \in \mathbb{R}^{d+2}} \max _{\alpha \in \mathbb{R}} \mathbb{E}[f(\mathbf{w}, a, b, \alpha ; z)],
\]
where
$
f(\mathbf{w}, a, b, \alpha ; z)=(h(\mathbf{w} ; x)- a)^2 \mathbb{I}_{[y=1]} / p+(h(\mathbf{w} ; x)-b)^2 \mathbb{I}_{[y=-1]} /(1-p)
+2(1- \alpha) h(\mathbf{w} ; x)(\mathbb{I}_{[y=-1]} /(1-p)-\mathbb{I}_{[y=1]} / p)-\alpha^2,
$
$h(\mathbf{w}; x)$ denotes a scoring function, $p=\mathbb{P}[y=1]$, and
$\mathbb{I}_{[\cdot]}$ is the indicator function.
We use three datasets from~\citet{chang2011libsvm}: \texttt{svmguide3}, \texttt{sonar}, and \texttt{DNA}. For each training dataset $S$, we build a neighboring dataset $S'$ by changing its last example. We then apply SGDA, Zeroth-order SGDA, and Clipped SGDA to both $S$ and $S'$ to get model sequences $\{(\mathbf{w}_t, \mathbf{v}_t)\}$ and $\{(\mathbf{w}_t^{\prime}, \mathbf{v}_t^{\prime})\}$. We then evaluate the stability and generalization error of these algorithms along the training process by the Euclidean distance $\Delta_t=(\|\mathbf{w}_t-\mathbf{w}_t^{\prime}\|_2^2+\|\mathbf{v}_t-\mathbf{v}_t^{\prime}\|_2^2)^{\frac{1}{2}}$ and Test AUC (AUC on the test dataset). \\
For each method applied to each data, we use $\eta_t = \eta/\sqrt{T}$ with $\eta \in \{0.01,0.03,0.05\}$, repeat the experiments 25 times to ensure reliability, and show results with mean and standard deviation. We choose $\mu_1=\mu_2=0.0001, K=d$ ($21$ for \texttt{svmguide3}, $60$ for \texttt{sonar} and $180$ for \texttt{DNA}) defined in Eq.~\eqref{g-zero} for Zeroth-order SGDA, and $\tau_1=\tau_2=
0.5\texttt{(svmguide3)},0.1\texttt{(sonar,DNA)}$ defined in Eq.~\eqref{g-clip} for Clipped-SGDA. We use the number of passes to denote the iteration number divided by the training sample size. As shown in Figure~\ref{fig:stability}, $\Delta_t$ of SGDA, Zeroth-order SGDA, and Clipped-SGDA increase along the training process and appear to be comparable (note that $\Delta_t$ of Zeroth-order SGDA is slightly larger due to the scalar $d$ in the definition of gradient estimators of Zeroth-order SGDA in Eq.~\eqref{g-zero}), which verifies our theoretical findings in Theorem~\ref{cor:zero-stability} and Theorem~\ref{cor:clip-stability}. As shown in Figure~\ref{fig:test-auc}, the generalization errors of SGDA, Zeroth-order SGDA, and Clipped-SGDA also appear to be comparable, which verifies our theoretical findings in Theorem~\ref{thm:stab-zero} and Theorem~\ref{thm:clip-gen}.

\begin{figure}[!ht]
    \centering
    \begin{minipage}{0.27\textwidth}
        \centering
        \includegraphics[width=\linewidth]{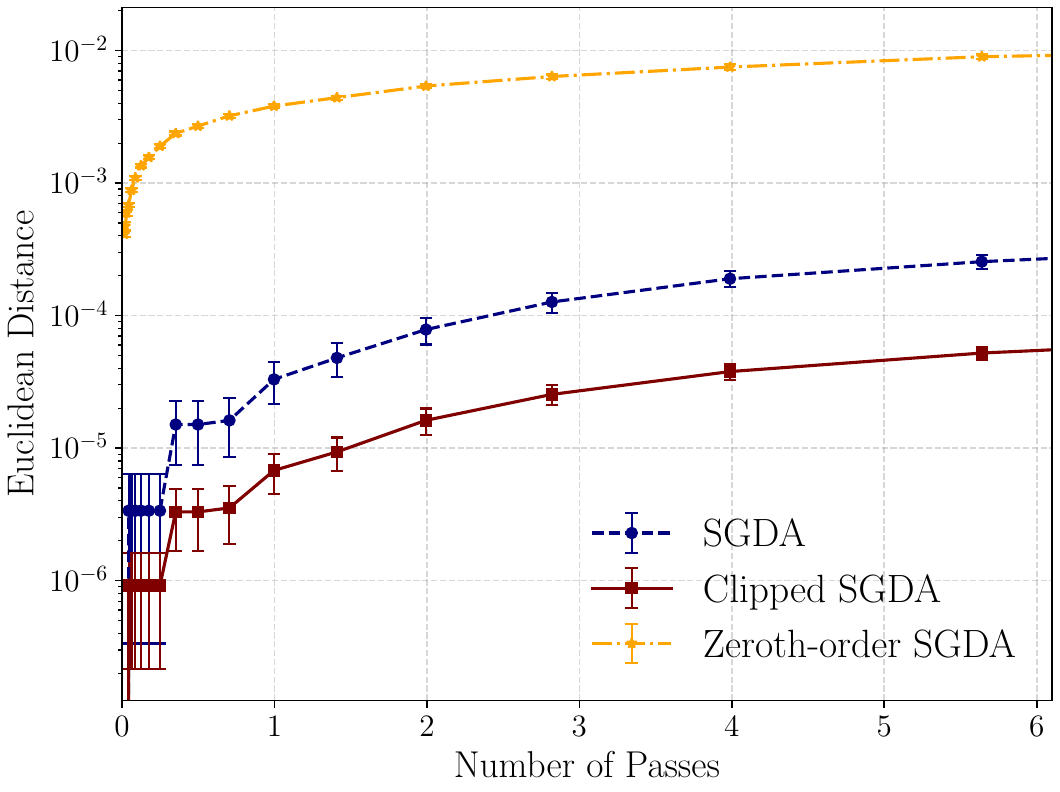}
        \vspace{3pt}
        \scriptsize \texttt{svmguide3}, $\eta=0.01$
    \end{minipage}
    \hfill
    \begin{minipage}{0.27\textwidth}
        \centering
        \includegraphics[width=\linewidth]{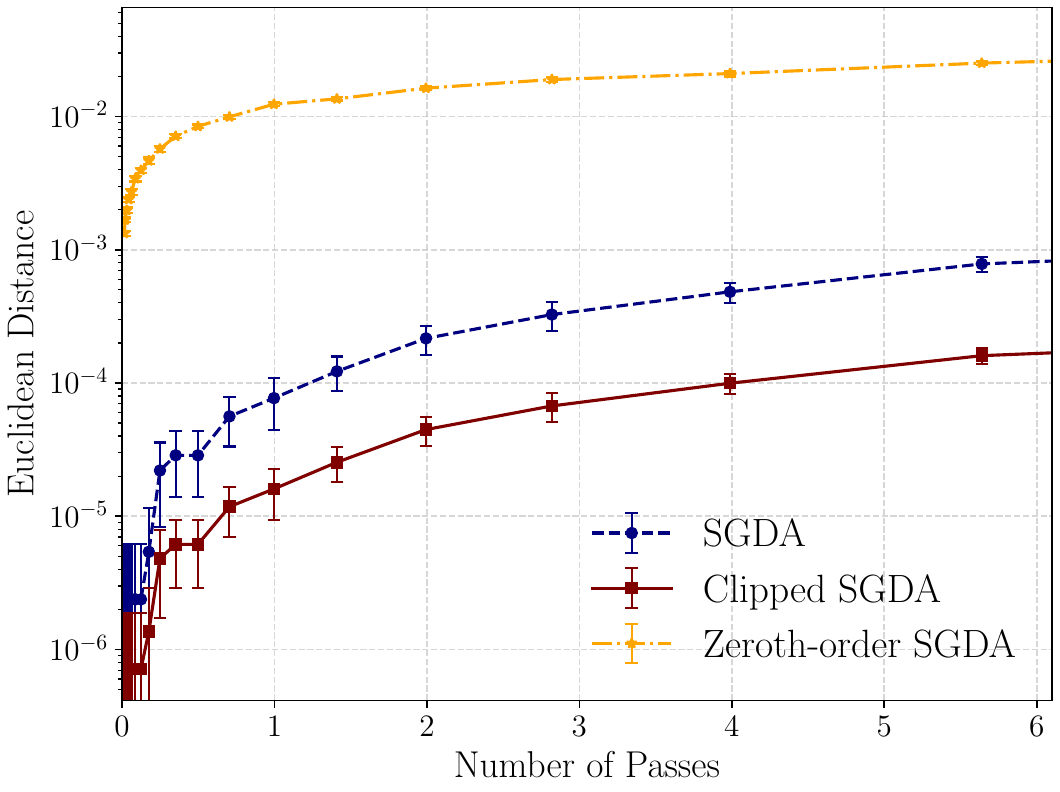}
        \vspace{3pt}
        \scriptsize \texttt{svmguide3}, $\eta=0.03$
    \end{minipage}
    \hfill
    \begin{minipage}{0.27\textwidth}
        \centering
        \includegraphics[width=\linewidth]{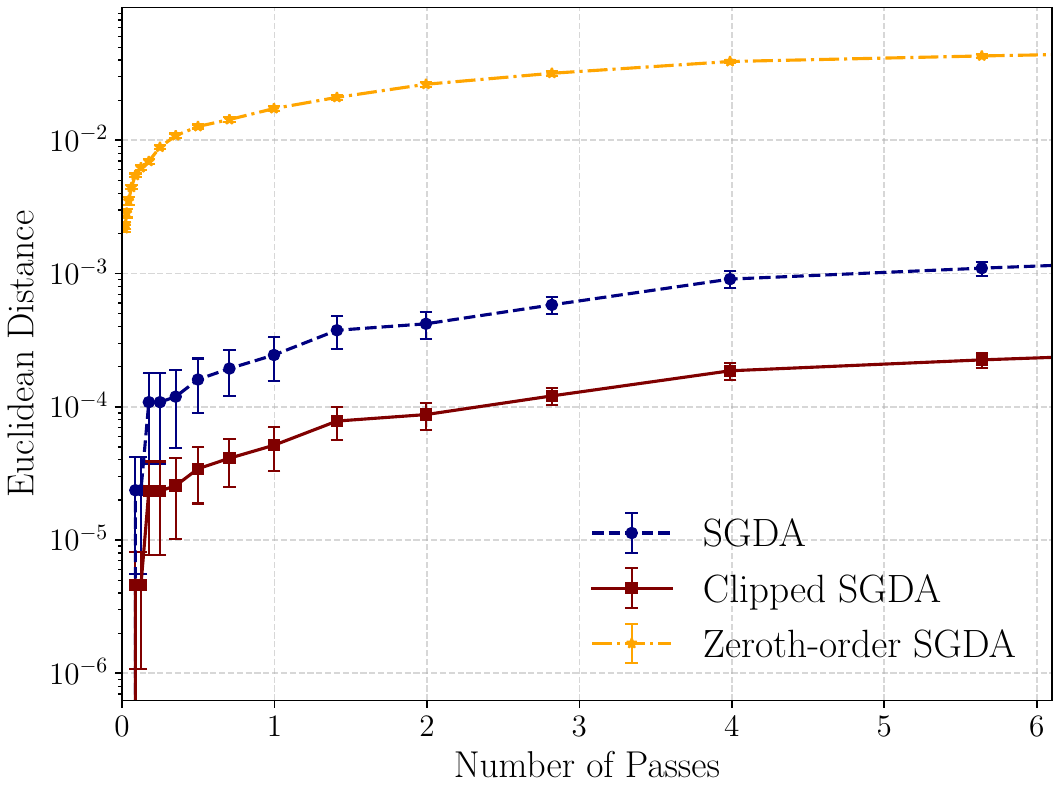}
        \vspace{3pt}
        \scriptsize \texttt{svmguide3}, $\eta=0.05$
    \end{minipage}

    \vspace{3pt} 

    \begin{minipage}{0.27\textwidth}
        \centering
        \includegraphics[width=\linewidth]{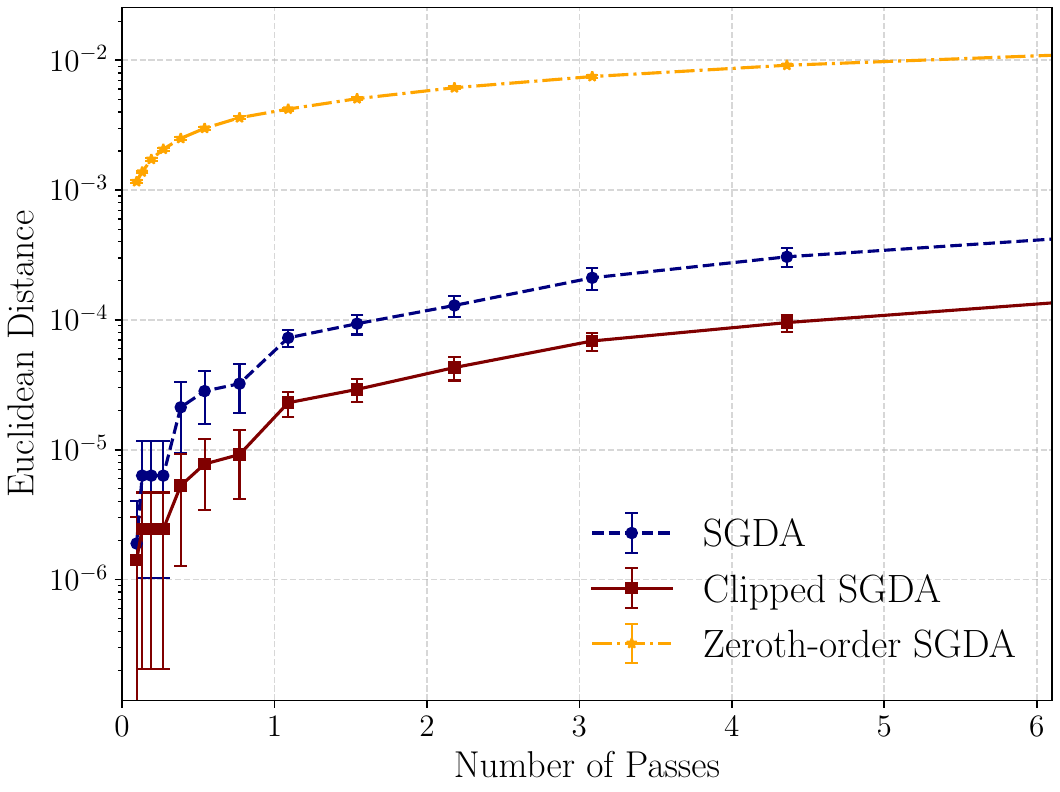}
        \vspace{3pt}
        \scriptsize \texttt{sonar}, $\eta=0.01$
    \end{minipage}
    \hfill
    \begin{minipage}{0.27\textwidth}
        \centering
        \includegraphics[width=\linewidth]{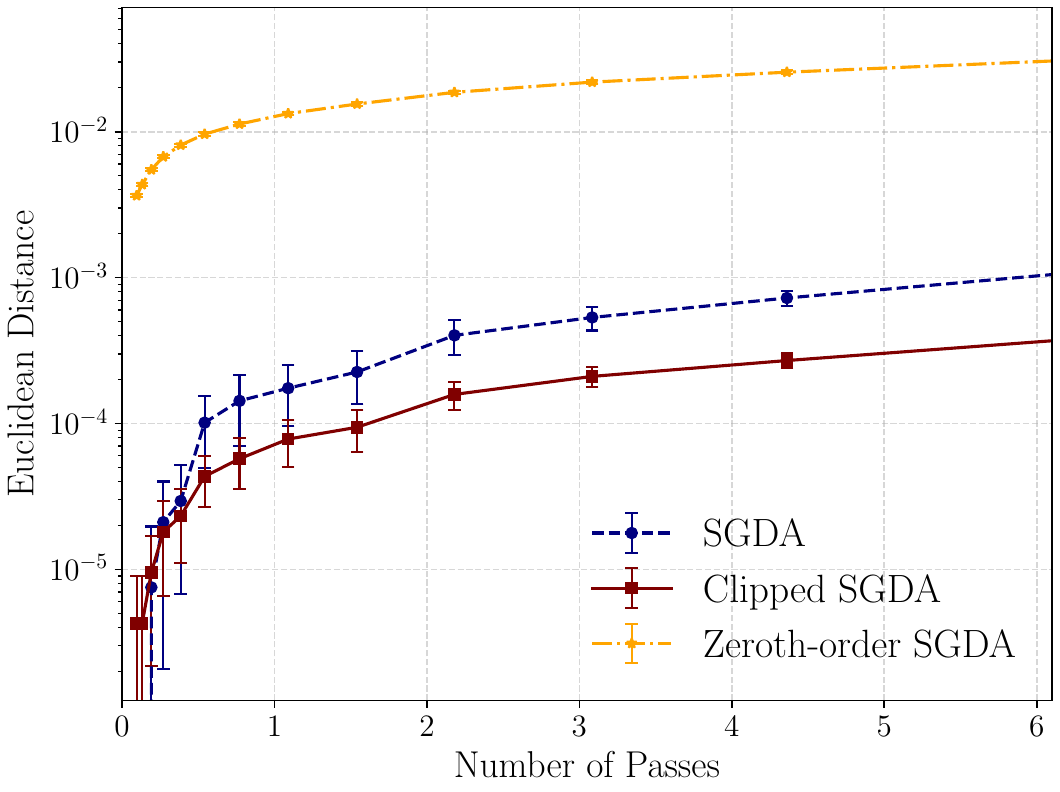}
        \vspace{3pt}
        \scriptsize \texttt{sonar}, $\eta=0.03$
    \end{minipage}
    \hfill
    \begin{minipage}{0.27\textwidth}
        \centering
        \includegraphics[width=\linewidth]{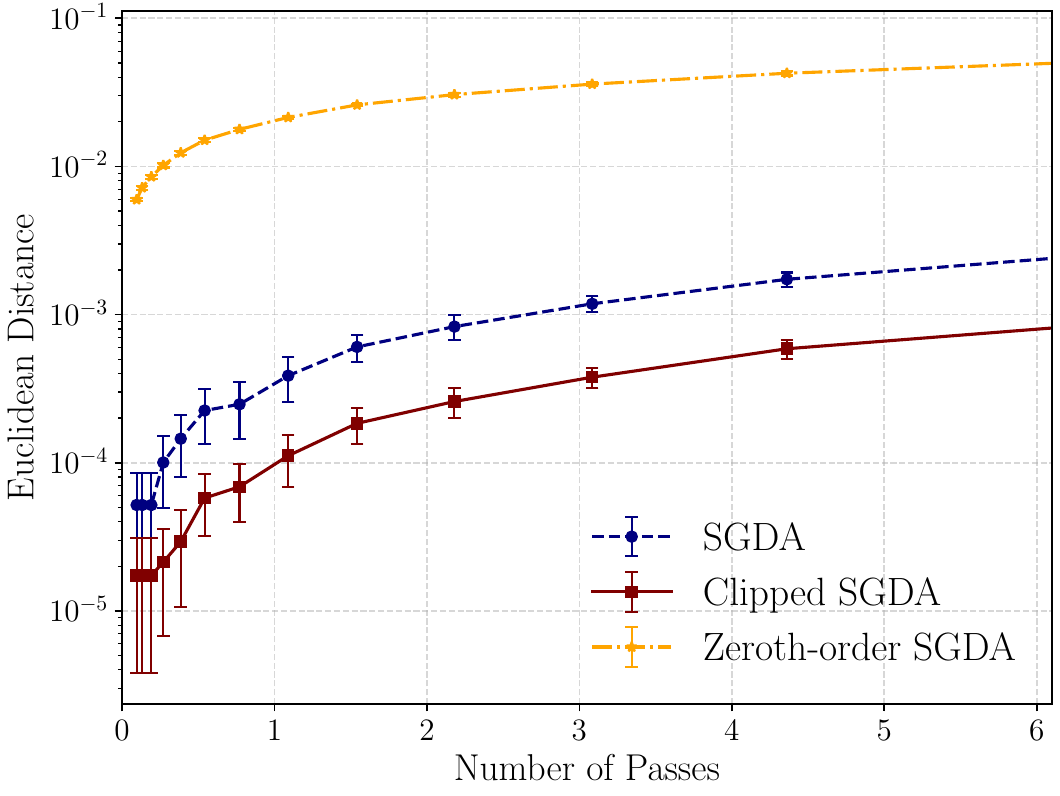}
        \vspace{3pt}
        \scriptsize \texttt{sonar}, $\eta=0.05$
    \end{minipage}

     \vspace{3pt}

    \begin{minipage}{0.27\textwidth}
        \centering
        \includegraphics[width=\linewidth]{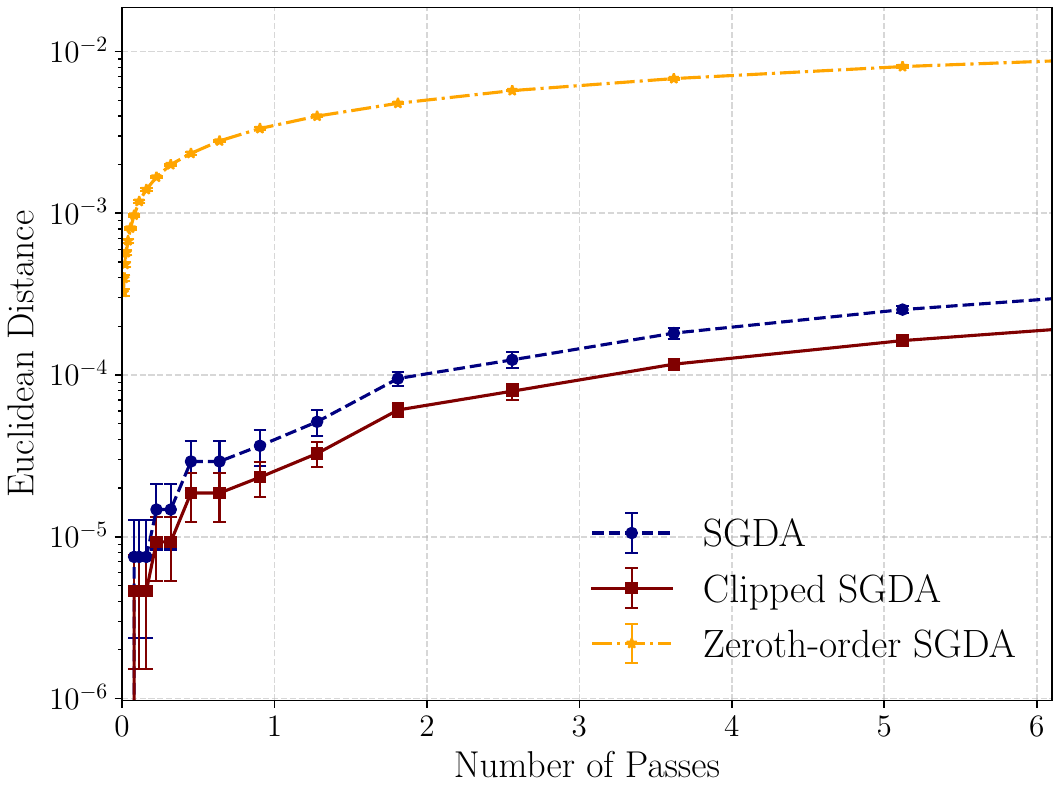}
        \vspace{3pt}
        \scriptsize \texttt{dna}, $\eta=0.01$
    \end{minipage}
    \hfill
    \begin{minipage}{0.27\textwidth}
        \centering
        \includegraphics[width=\linewidth]{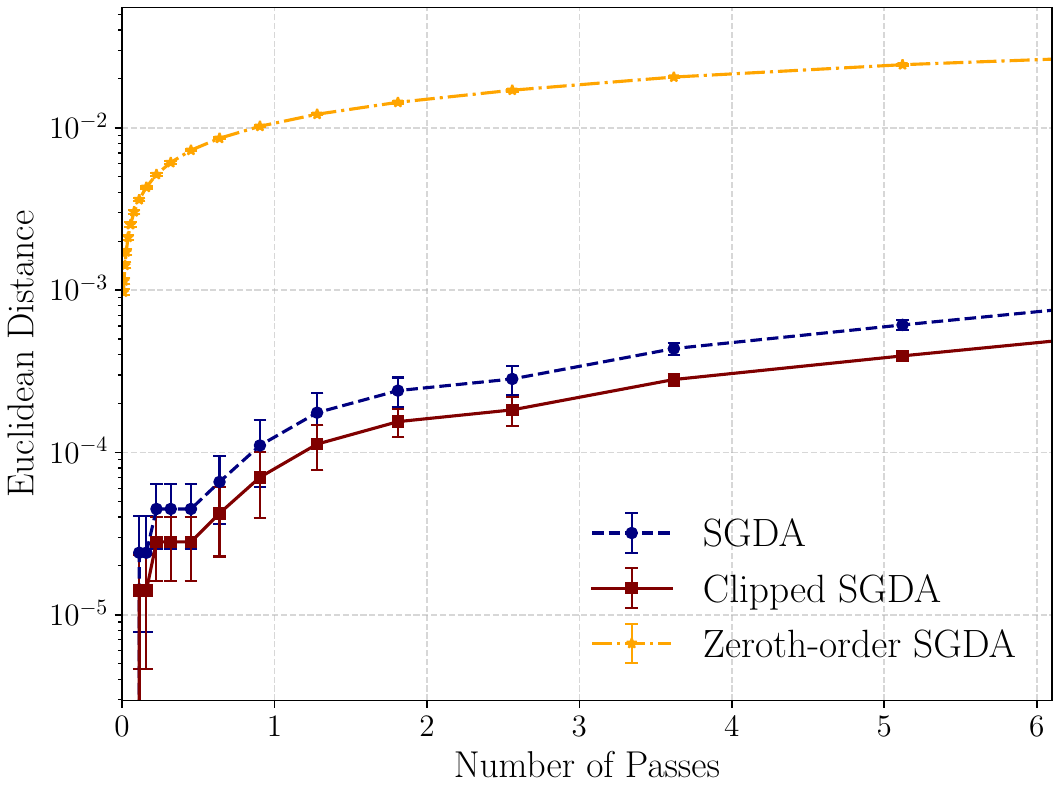}
        \vspace{3pt}
        \scriptsize \texttt{dna}, $\eta=0.03$
    \end{minipage}
    \hfill
    \begin{minipage}{0.27\textwidth}
        \centering
        \includegraphics[width=\linewidth]{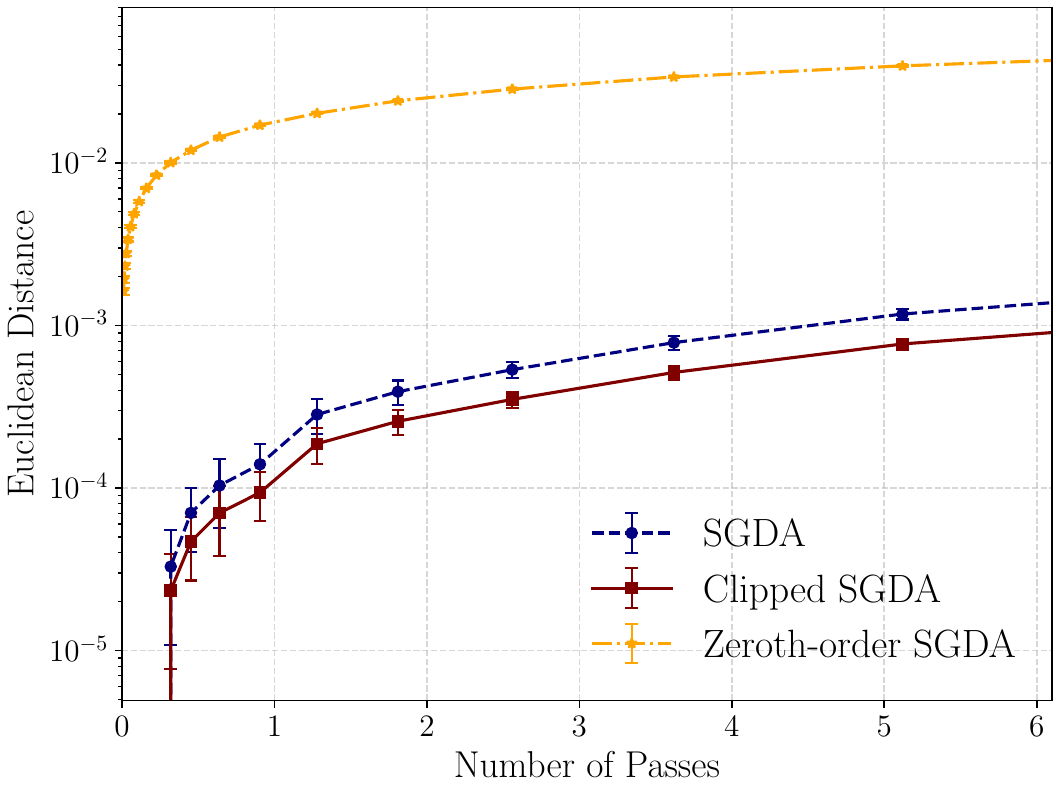}
        \vspace{3pt}
        \scriptsize \texttt{dna}, $\eta=0.05$
    \end{minipage}

     \vspace{3pt}
    \caption{$\Delta_t$ versus the number of passes for different datasets (\texttt{svmguide3}, \texttt{sonar}, \texttt{DNA}) and $\eta\in\{0.01,0.03,0.05\}$.}
    \label{fig:stability}
\end{figure}

\begin{figure}[!ht]
    \centering
    \begin{minipage}{0.27\textwidth}
        \centering
        \includegraphics[width=\linewidth]{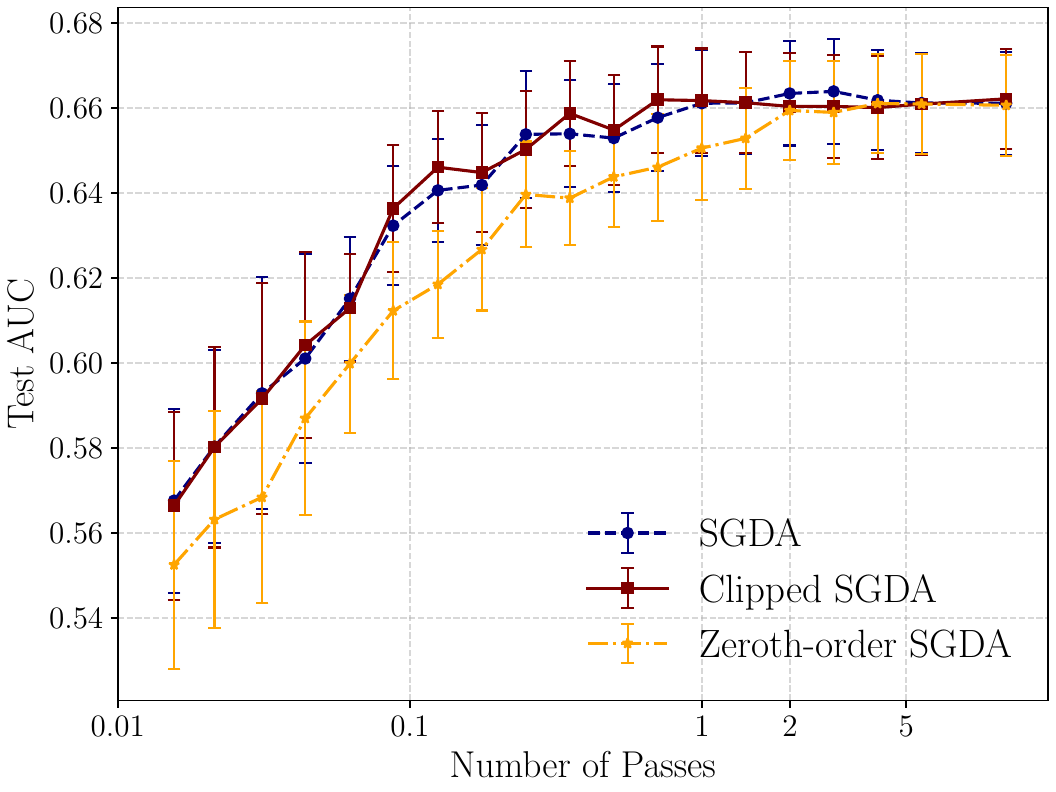}
        \vspace{3pt}
        \scriptsize \texttt{svmguide3}, $\eta=0.01$
    \end{minipage}
    \hfill
    \begin{minipage}{0.27\textwidth}
        \centering
        \includegraphics[width=\linewidth]{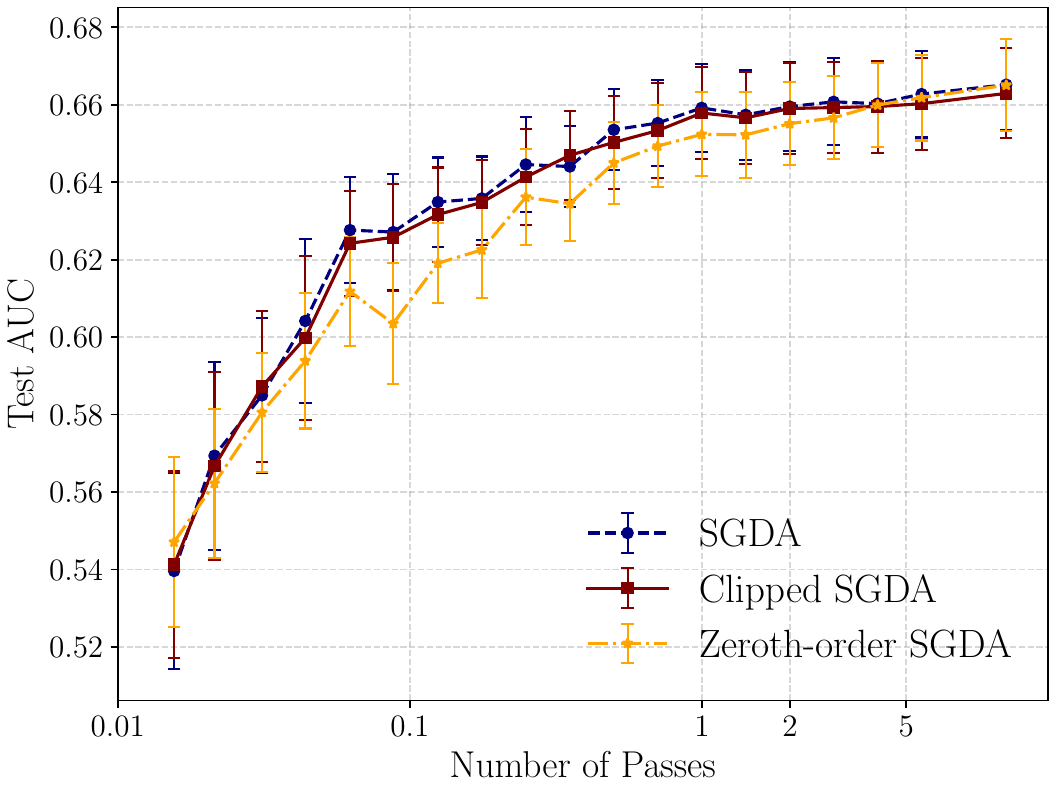}
        \vspace{3pt}
        \scriptsize \texttt{svmguide3}, $\eta=0.03$
    \end{minipage}
    \hfill
    \begin{minipage}{0.27\textwidth}
        \centering
        \includegraphics[width=\linewidth]{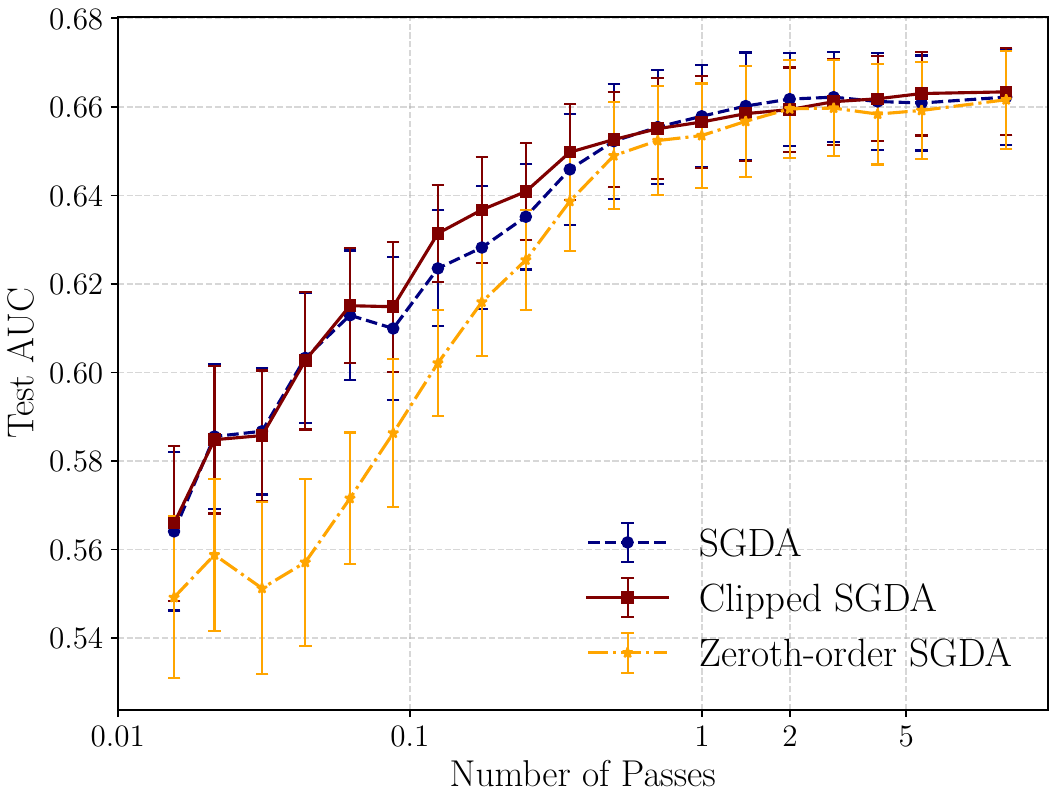}
        \vspace{3pt}
        \scriptsize \texttt{svmguide3}, $\eta=0.05$
    \end{minipage}

    \vspace{3pt} 

    \begin{minipage}{0.27\textwidth}
        \centering
        \includegraphics[width=\linewidth]{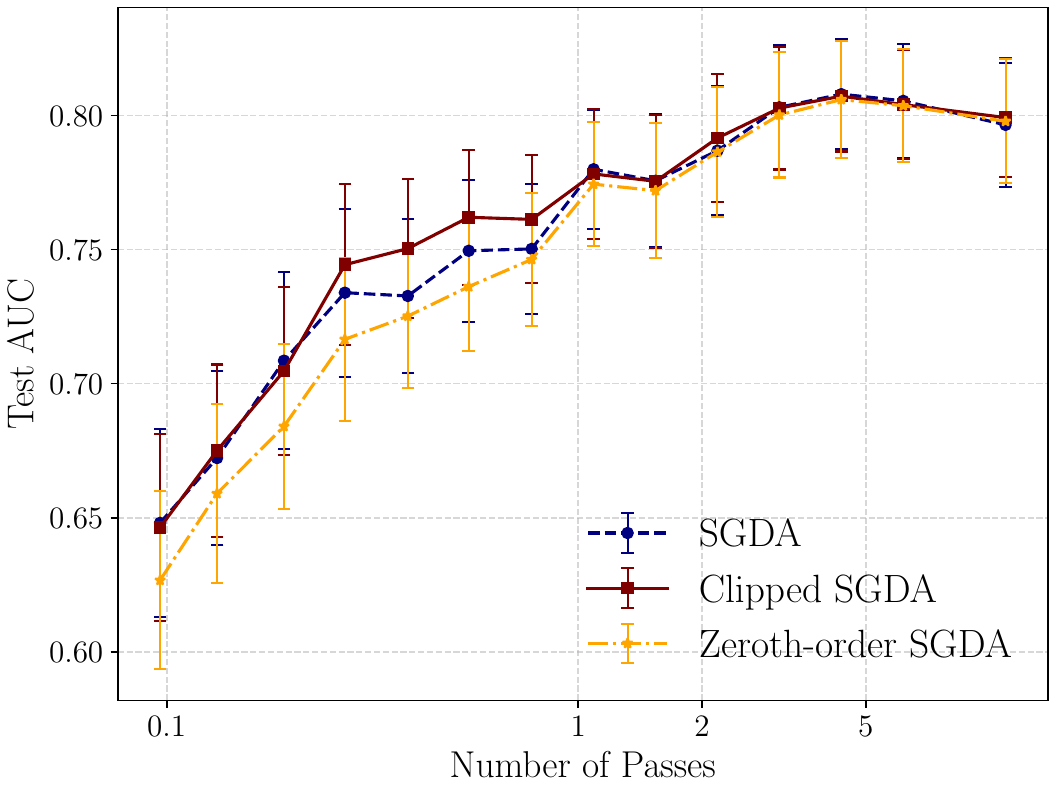}
        \vspace{3pt}
        \scriptsize \texttt{sonar}, $\eta=0.01$
    \end{minipage}
    \hfill
    \begin{minipage}{0.27\textwidth}
        \centering
        \includegraphics[width=\linewidth]{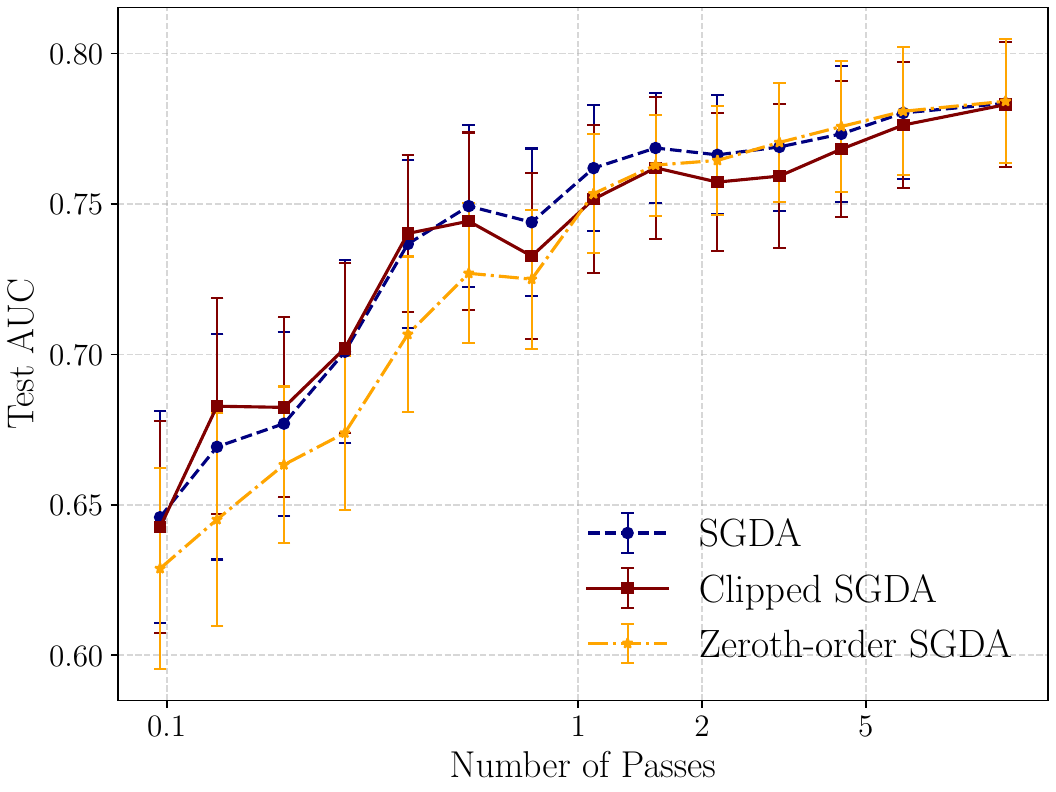}
        \vspace{3pt}
        \scriptsize \texttt{sonar}, $\eta=0.03$
    \end{minipage}
    \hfill
    \begin{minipage}{0.27\textwidth}
        \centering
        \includegraphics[width=\linewidth]{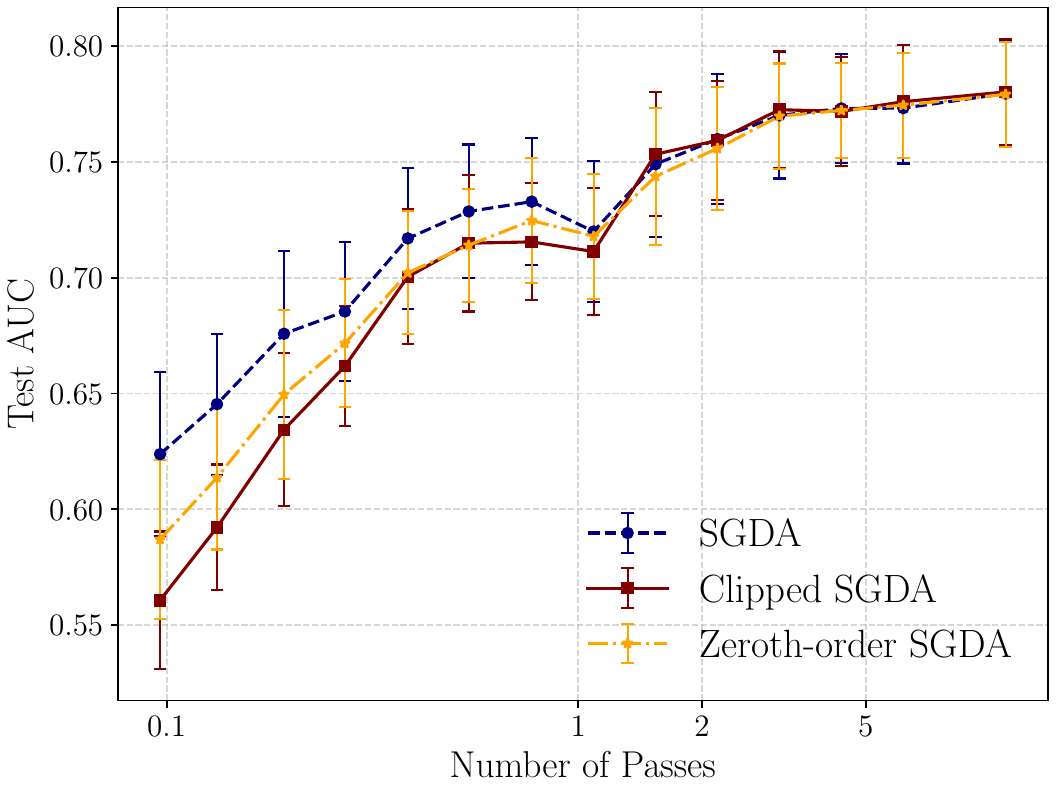}
        \vspace{3pt}
        \scriptsize \texttt{sonar}, $\eta=0.05$
    \end{minipage}

     \vspace{3pt}

    \begin{minipage}{0.27\textwidth}
        \centering
        \includegraphics[width=\linewidth]{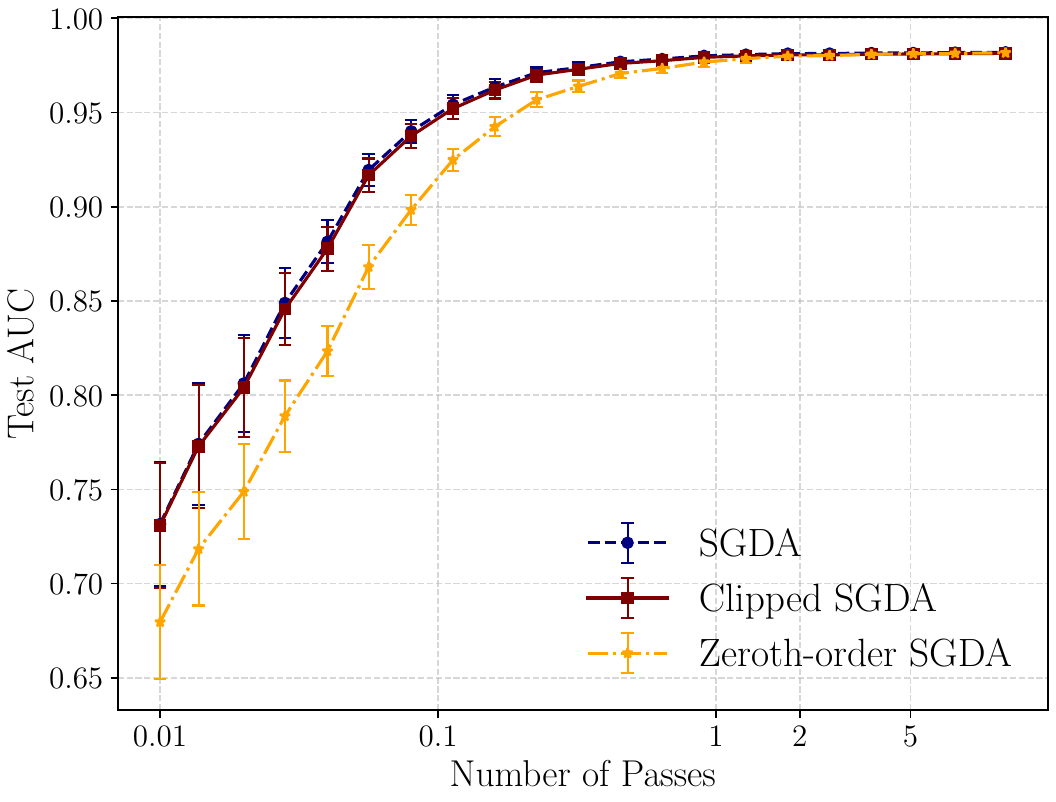}
        \vspace{3pt}
        \scriptsize \texttt{dna}, $\eta=0.01$
    \end{minipage}
    \hfill
    \begin{minipage}{0.27\textwidth}
        \centering
        \includegraphics[width=\linewidth]{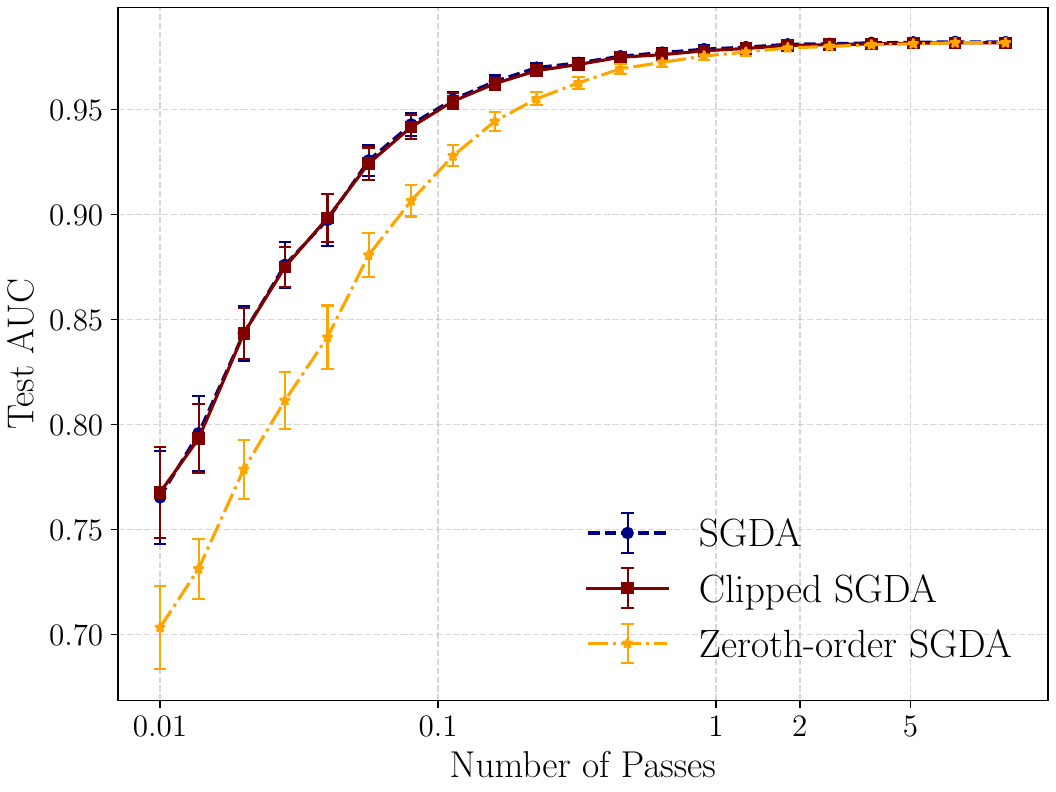}
        \vspace{3pt}
        \scriptsize \texttt{dna}, $\eta=0.03$
    \end{minipage}
    \hfill
    \begin{minipage}{0.27\textwidth}
        \centering
        \includegraphics[width=\linewidth]{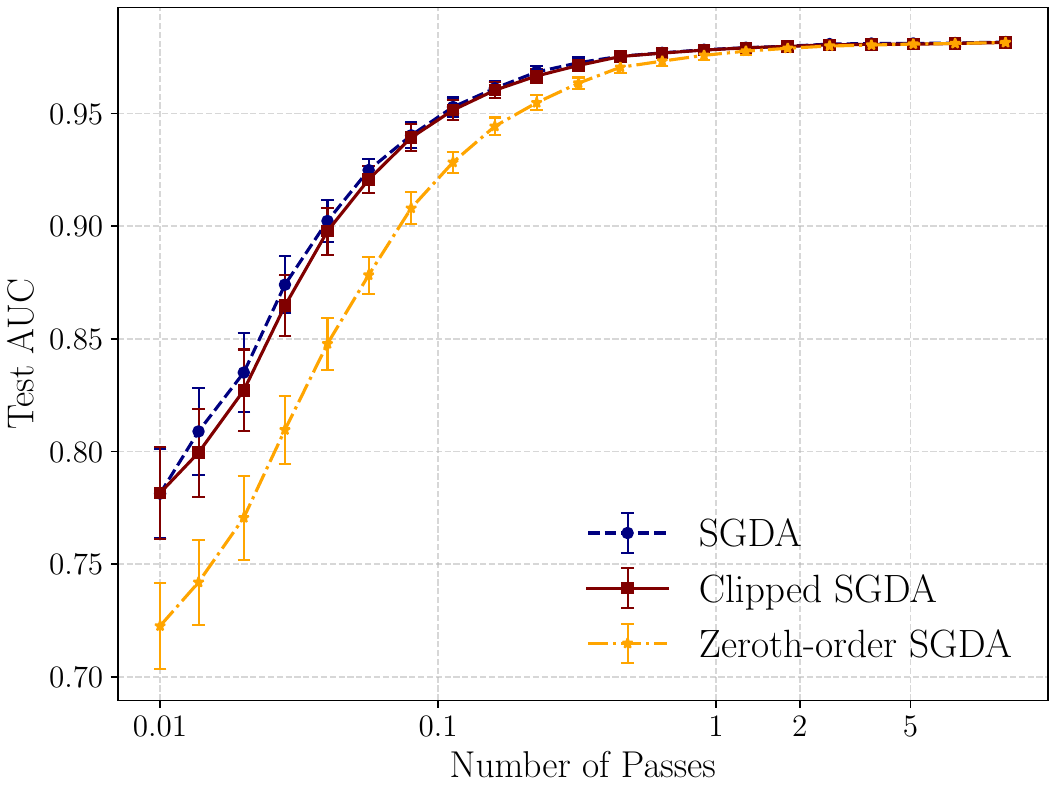}
        \vspace{3pt}
        \scriptsize \texttt{dna}, $\eta=0.05$
    \end{minipage}

     \vspace{3pt}
    \caption{Test AUC versus \# of passes for different datasets (\texttt{svmguide3}, \texttt{sonar}, \texttt{dna}) and $\eta \in\{0.01,0.03,0.05\}$.}
    \label{fig:test-auc}
\end{figure}

We also consider the stability of biased stochastic gradient methods for  a nonconvex-nonconcave problem: GAN~\cite{goodfellow2014generative}  on the MNIST~\cite{lecun2002gradient}, CIFAR-10~\cite{krizhevsky2009learning}, and CelebA~\cite{liu2015faceattributes} datasets.
We conduct the experiments with one NVIDIA GeForce RTX 3090 24GB GPU.
We use $\eta_t = 0.0002$,
choose $\mu_1=\mu_2=0.0001, K\asymp d$ for Zeroth-order SGDA, and clipping parameter $\tau_1=\tau_2=1$  in Eq.~\eqref{g-clip} for Clipped-SGDA.
As shown in Figure~\ref{fig:stability-gan}, $\Delta_t$ of SGDA, Zeroth-order SGDA, and Clipped-SGDA increase along the training process and also appear to be comparable. Interestingly, the stability of Clipped-SGDA outperforms that of SGDA in this case.

\begin{figure}[h]
    \centering

    \begin{subfigure}{\textwidth}
        \centering
        \includegraphics[width=0.8\linewidth]{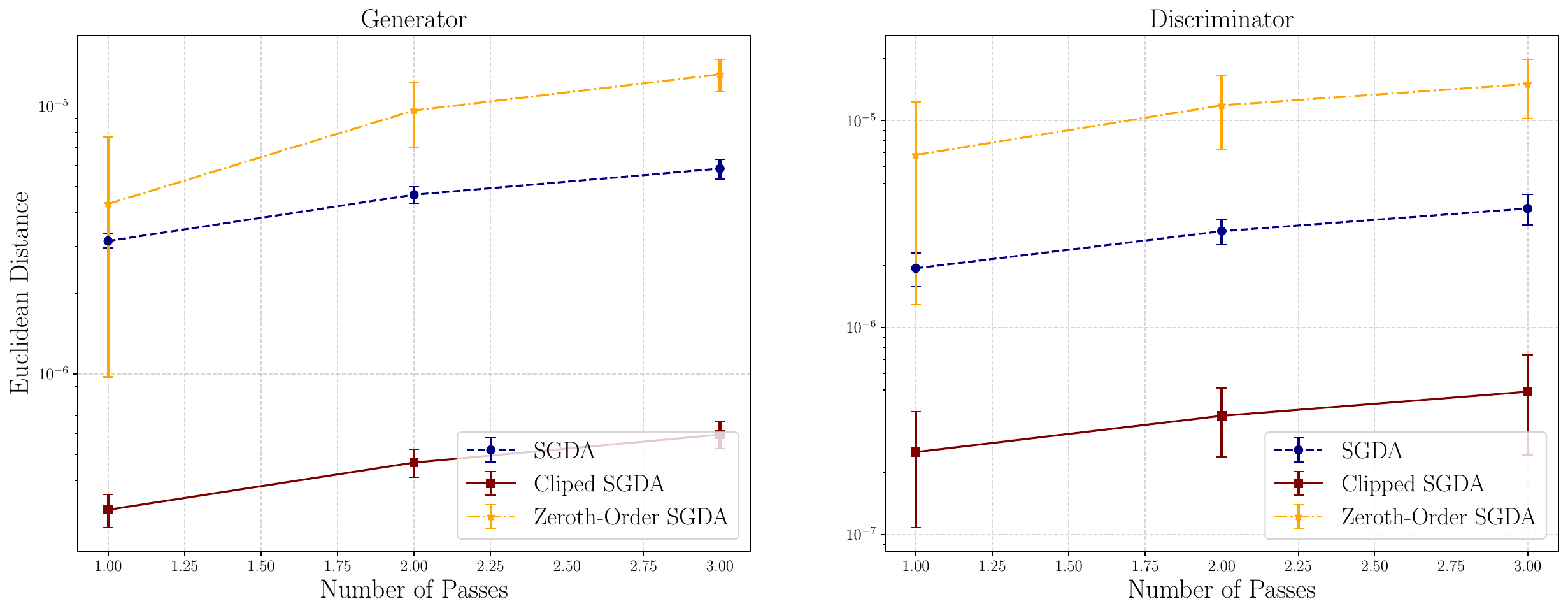}
        \caption{MNIST}
    \end{subfigure}

    \vspace{3pt} 

    \begin{subfigure}{\textwidth}
        \centering
        \includegraphics[width=0.8\linewidth]{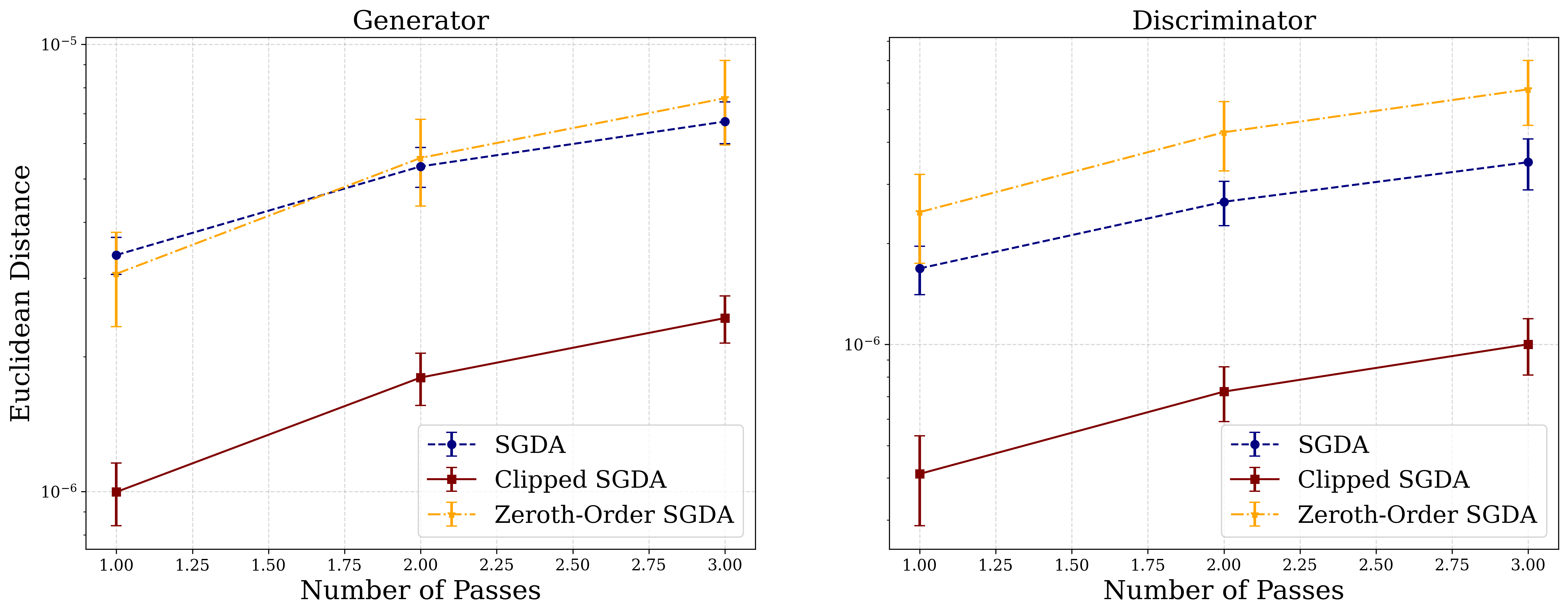}
        \caption{CIFAR-10}
    \end{subfigure}

    \vspace{3pt} 

    \begin{subfigure}{\textwidth}
        \centering
        \includegraphics[width=0.8\linewidth]{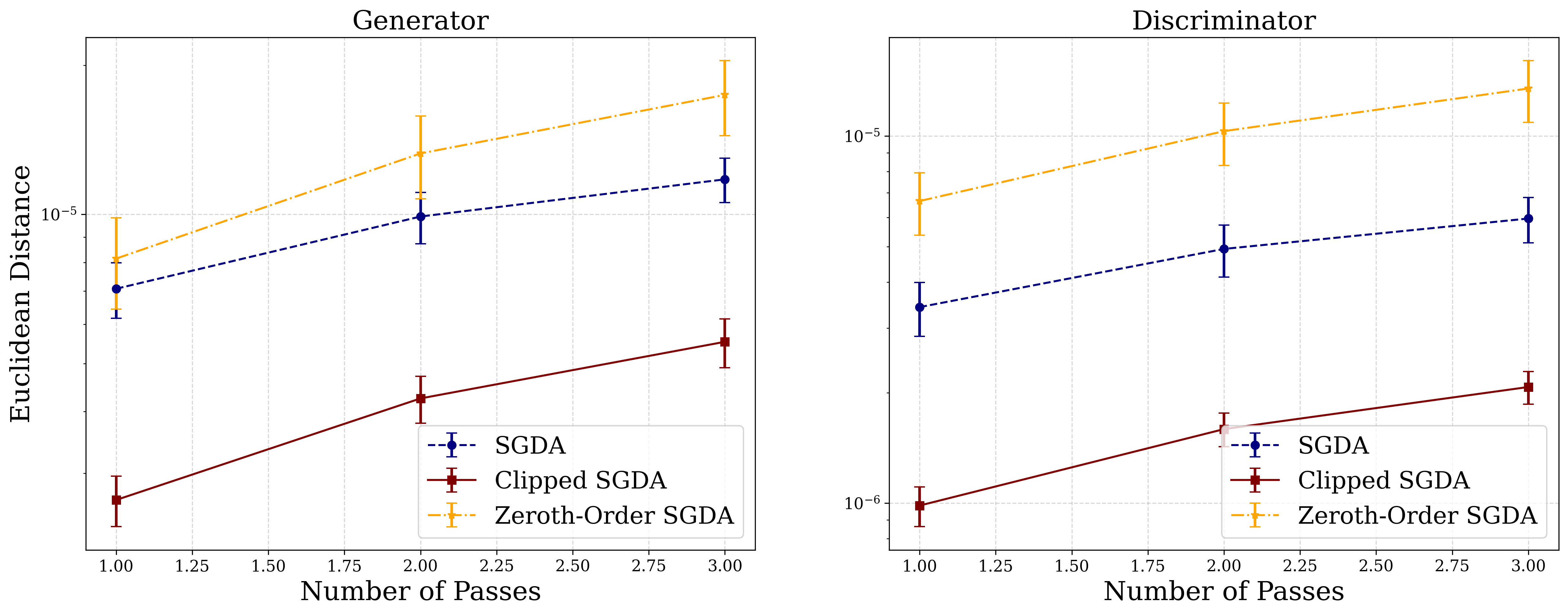}
        \caption{CelebA}
    \end{subfigure}

    \caption{$\Delta_t$ versus \# of passes. Left: generator, right: discriminator. $\eta=0.0002$, $K=300$, $\mu_1=\mu_2=0.0001,\tau_1=\tau_2=1$.}
    \label{fig:stability-gan}
\end{figure}

We further study the parameter sensitivity to the smoothing parameter $\mu$ and clipping threshold $\tau$.
Figure~\ref{fig:mu-param-sensitivity} shows that the stability of Zeroth-order SGDA is independent of the choice of
$\mu$, and that the generalization error of it matches that of SGDA when $\mu\lesssim \frac{\sqrt{\ebb[G_S^2]}}{L\sqrt{n}}$, verifying
our theoretical findings in Remark~\ref{rem:para_zero} and Remark~\ref{remark:zero-risk}.
Figure~\ref{fig:clip-param-sensitivity} shows that the stability and generalization error of Clipped SGDA match that of SGDA when $\tau_1,\tau_2 \gtrsim G T^{\frac{1}{4}}$ and $\eta \lesssim 1/\sqrt{T}$, verifying our theoretical findings in Remark~\ref{rem:clip-stab} and Remark~\ref{remark:clip-risk}.
\begin{figure}[!ht]
    \centering
    \begin{minipage}{0.48\textwidth}
        \centering
        \includegraphics[width=\linewidth]{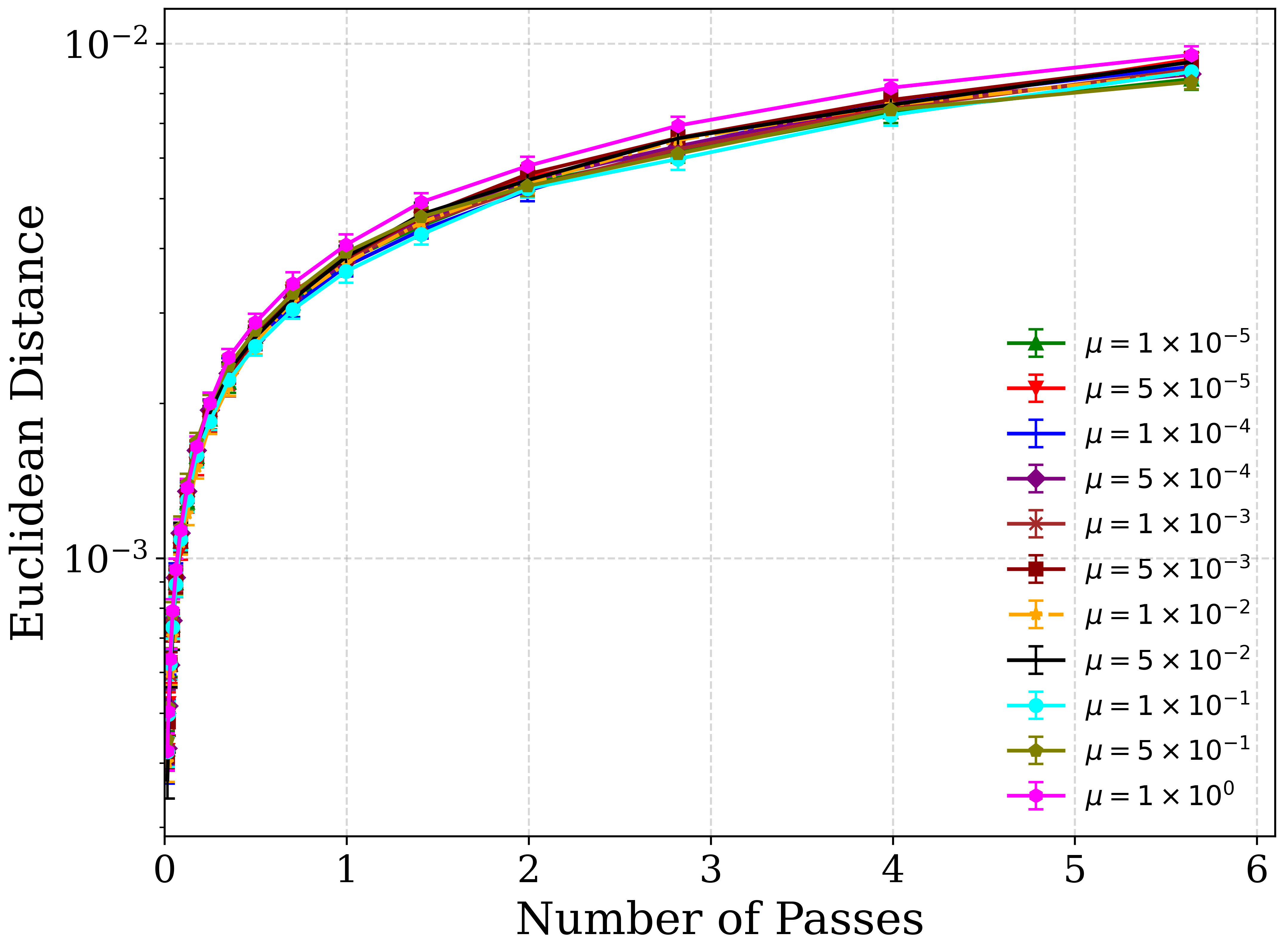}
        \vspace{3pt}
        \scriptsize (a) Euclidean distance $\Delta_t$
        \label{fig:mu-param-stab}
    \end{minipage}
    \hfill
    \begin{minipage}{0.48\textwidth}
        \centering
        \includegraphics[width=\linewidth]{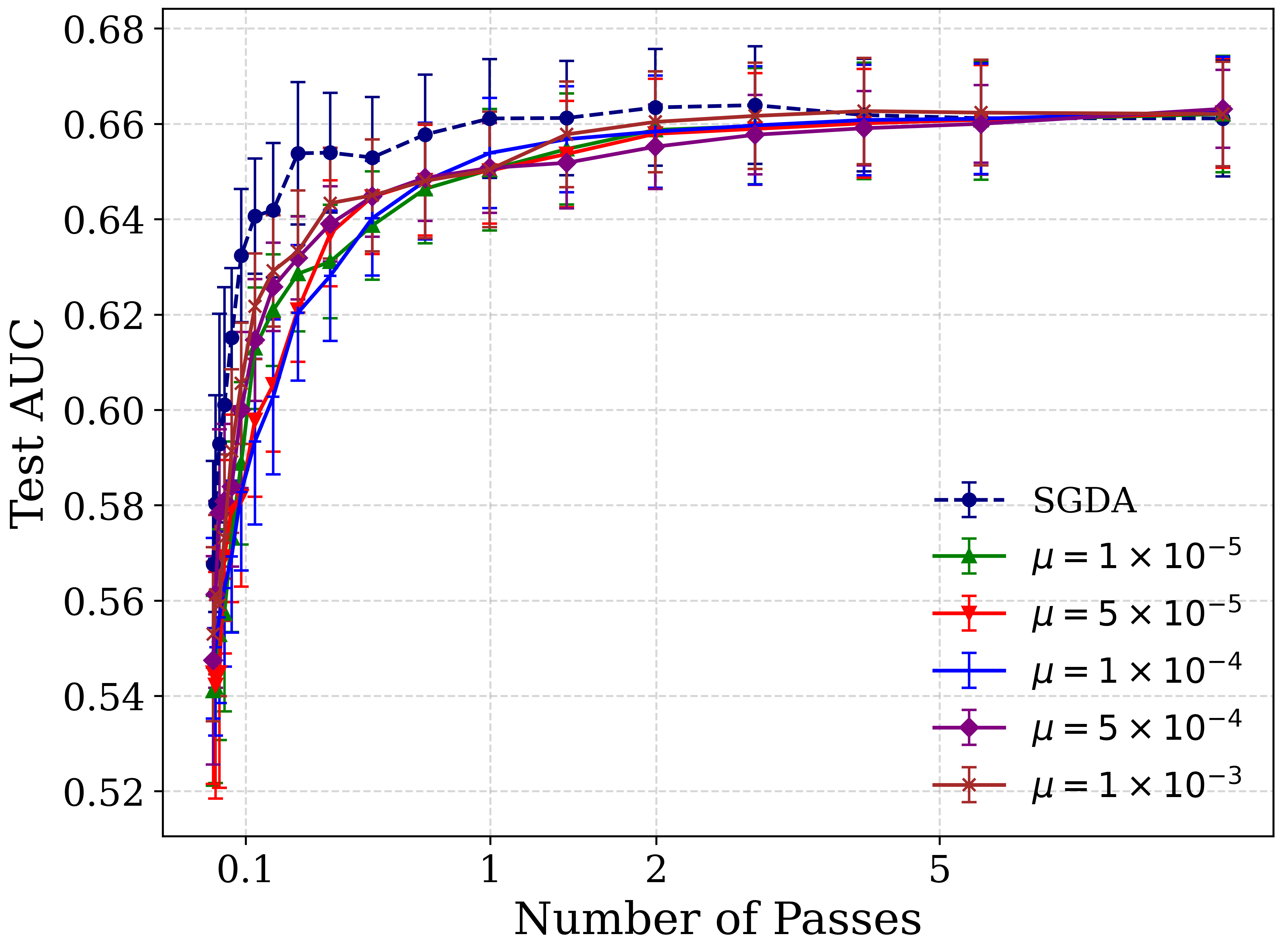}
        \vspace{3pt}
        \scriptsize (b) Test AUC
        \label{fig:mu-param-auc}
    \end{minipage}
    \caption{Parameter sensitivity of Zeroth-order SGDA w.r.t.\ the smoothing parameter $\mu$ on \texttt{svmguide3}:
    (a) $\Delta_t$ versus \# of passes; (b) Test AUC versus \# of passes. $\eta=0.01$.}
    \label{fig:mu-param-sensitivity}
\end{figure}

\begin{figure}[!ht]
    \centering
    \begin{minipage}{0.48\textwidth}
        \centering
        \includegraphics[width=\linewidth]{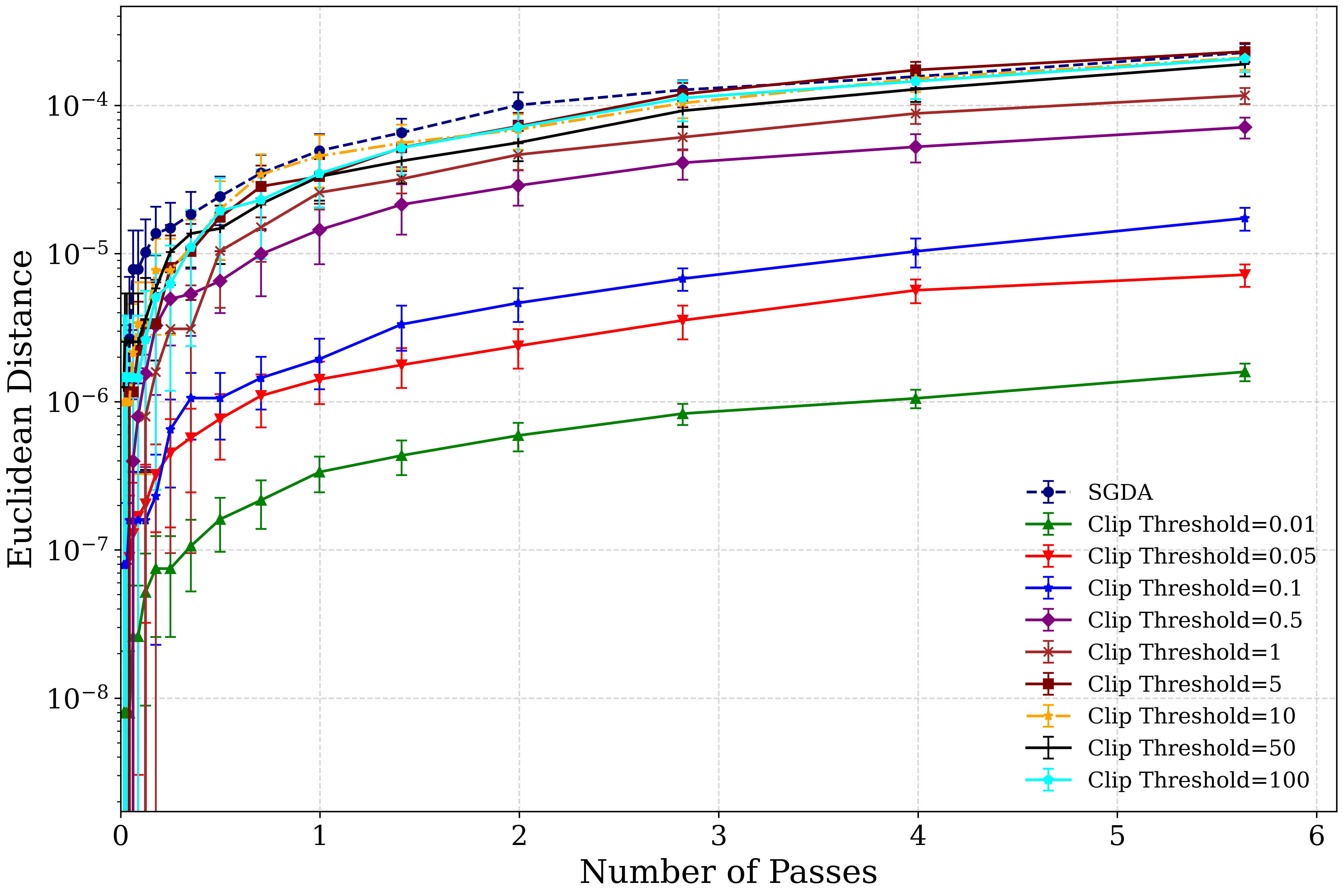}
        \vspace{3pt}
        \scriptsize (a) Euclidean distance $\Delta_t$
        \label{fig:clip-param-stab}
    \end{minipage}
    \hfill
    \begin{minipage}{0.48\textwidth}
        \centering
        \includegraphics[width=\linewidth]{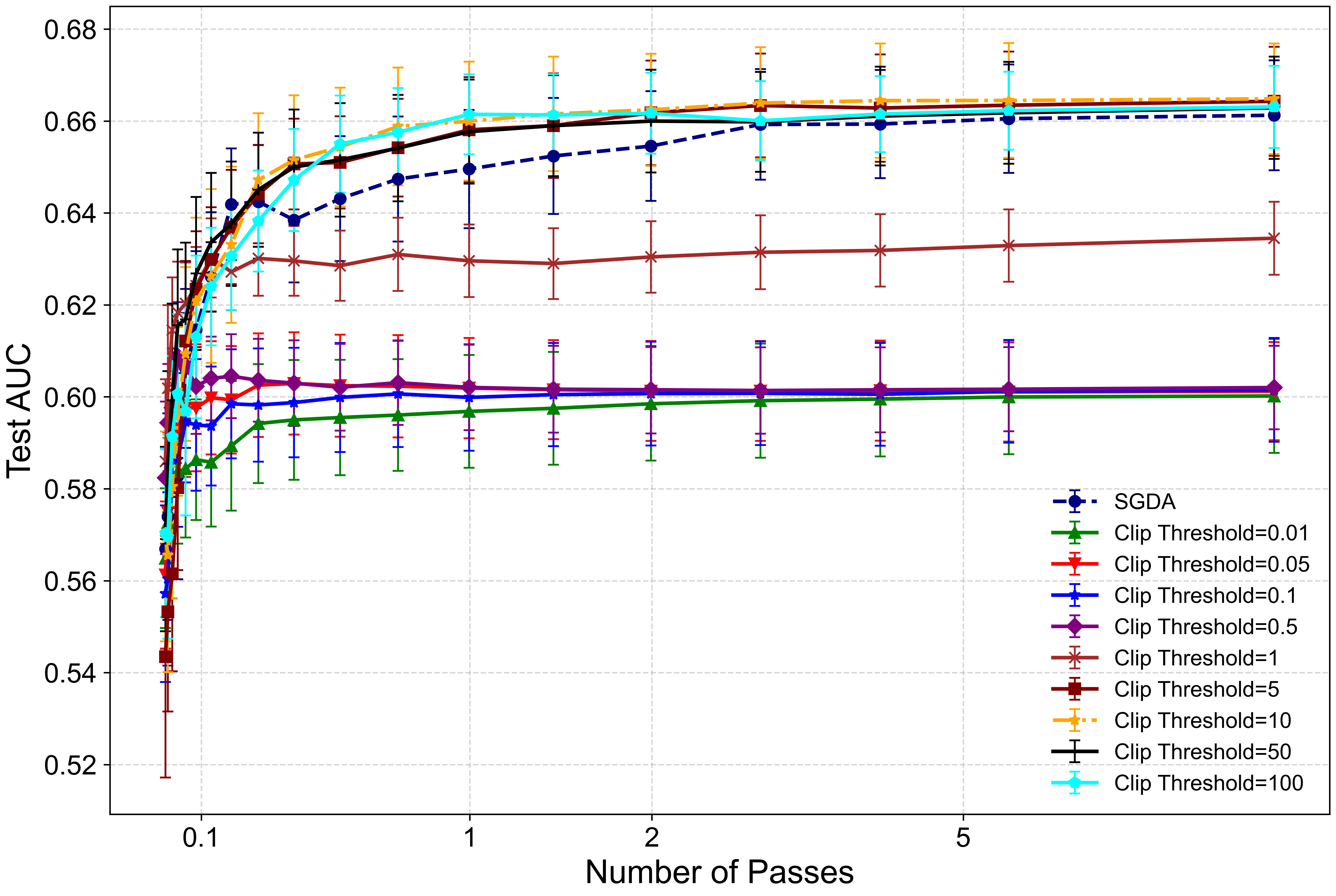}
        \vspace{3pt}
        \scriptsize (b) Test AUC
        \label{fig:clip-param-auc}
    \end{minipage}
    \caption{Parameter sensitivity of Clipped-SGDA w.r.t.\ the clipping threshold $\tau$ on \texttt{svmguide3}:
    (a) $\Delta_t$ versus \# of passes; (b) Test AUC versus \# of passes. $\eta=0.01$.}
    \label{fig:clip-param-sensitivity}
\end{figure}

\newpage
\section{Proof}
\subsection{Proof on Connecting Stability and Generalization}
In this section, we present the proof of our results on connecting stability and generalization.
\begin{proof}[Proof of Theorem~\ref{thm:stab-gen}]
Let $S=\{z_1,\ldots,z_n\}$ and $S'=\{z_1',\ldots,z_n'\}$ be two datasets drawn from the same distribution. For any $i\in[n]$, define $S^{(i)}=\{z_1,\ldots,z_{i-1},z_i',z_{i+1},\ldots,z_n\}$.
For any function $g,\widetilde{G}$, we have the basic inequalities
\begin{equation}\label{sup-inf}
  \begin{split}
      & \sup_{\bw}g(\bw)-\sup_{\bw}\widetilde{G}(\bw)\leq \sup_{\bw}\big(g(\bw)-\widetilde{G}(\bw)\big)\\
      & \inf_{\bw}g(\bw)-\inf_{\bw}\widetilde{G}(\bw)\leq \sup_{\bw}\big(g(\bw)-\widetilde{G}(\bw)\big).
  \end{split}
\end{equation}
It follows from \eqref{sup-inf} that
\begin{multline}\label{gen-decomp}
  \triangle^w(A_{\bw}(S),A_{\bv}(S))-\triangle^w_S(A_{\bw}(S),A_{\bv}(S)) \leq \sup_{\bv'\in\vcal}\ebb[F(A_{\bw}(S),\bv')-F_S(A_{\bw}(S),\bv')]\\+
  \sup_{\bw'\in\wcal}\ebb[F_S(\bw',A_{\bv}(S))-F(\bw',A_{\bv}(S))].
\end{multline}
According to the symmetry between $z_i$ and $z_i'$ we know
\begin{align*}
  \ebb[F(A_{\bw}(S),\bv')-F_S(A_{\bw}(S),\bv')] & =  \frac{1}{n}\sum_{i=1}^{n} \ebb[F(A_{\bw}(S^{(i)}),\bv')]-\ebb[F_S(A_{\bw}(S),\bv')]\\
  &=  \frac{1}{n}\sum_{i=1}^{n} \ebb\big[f(A_{\bw}(S^{(i)}),\bv';z_i)-f(A_{\bw}(S),\bv';z_i)\big],
\end{align*}
where the second identity holds since $z_i$ is not used to train $A_{\bw}(S^{(i)})$.
In a similar way, we can prove
\[
\ebb[F_S(\bw',A_{\bv}(S))-F(\bw',A_{\bv}(S))]=
\frac{1}{n}\sum_{i=1}^{n}\big[f(\bw',A_{\bv}(S^{(i)});z_i)-f(\bw',A_{\bv}(S);z_i)\big].
\]
As a combination of the above three inequalities we get
\begin{multline}\label{gen-decomp-26}
    \triangle^w(A_{\bw}(S),A_{\bv}(S))-\triangle^w_S(A_{\bw}(S),A_{\bv}(S)) \leq  \sup_{\bv'\in\vcal}\Big[\frac{1}{n}\sum_{i=1}^{n} \ebb\big[f(A_{\bw}(S^{(i)}),\bv';z_i)-f(A_{\bw}(S),\bv';z_i)\big]\Big]\\
    +\sup_{\bw'\in\wcal}\Big[\frac{1}{n}\sum_{i=1}^{n}\ebb\big[f(\bw',A_{\bv}(S^{(i)});z_i)-f(\bw',A_{\bv}(S);z_i)\big]\Big].
\end{multline}
According to Assumption \ref{ass:smooth}, we know
\begin{multline*}
f(A_{\mathbf{w}}(S^{(i)}), \bv' ; z_i)-f(A_{\mathbf{w}}(S), \bv' ; z_i)
\leq \bigg\langle\begin{pmatrix}A_{\mathbf{w}}(S^{(i)})-A_{\mathbf{w}}(S)\\
\bv'-\bv'\end{pmatrix},\begin{pmatrix}\nabla_{\mathbf{w}} f(A_{\mathbf{w}}(S), \bv' ; z_i)\\
\nabla_{\mathbf{v}} f(A_{\mathbf{w}}(S), \bv' ; z_i)\end{pmatrix}\bigg\rangle+ \\
\frac{L}{2}\bigg\|\begin{pmatrix}A_{\mathbf{w}}(S^{(i)})-A_{\mathbf{w}}(S)\\
\bv'-\bv'\end{pmatrix}\bigg\|_2^2
= \big\langle A_{\mathbf{w}}(S^{(i)})-A_{\mathbf{w}}(S), \nabla_{\mathbf{w}} f(A_{\mathbf{w}}(S), \bv' ; z_i) \big\rangle + \frac{L}{2}\big\|A_{\mathbf{w}}(S^{(i)})-A_{\mathbf{w}}(S)\big\|_2^2.
\end{multline*}
According to Schwartz’s inequality we know for any $\gamma>0$,
\begin{multline*}
\big\langle A_{\mathbf{w}}(S^{(i)})-A_{\mathbf{w}}(S), \nabla_{\mathbf{w}} f(A_{\mathbf{w}}(S), \bv' ; z_i) \big\rangle
\leq \big\|A_{\mathbf{w}}(S^{(i)})-A_{\mathbf{w}}(S)\big\|_2\big\|\nabla_{\mathbf{w}} f(A_{\mathbf{w}}(S), \bv' ; z_i)\big\|_2\\
\leq \frac{\gamma}{2}\big\|A_{\mathbf{w}}(S^{(i)})-A_{\mathbf{w}}(S)\big\|_2^2 + \frac{1}{2\gamma}\big\|\nabla_{\mathbf{w}} f(A_{\mathbf{w}}(S), \bv' ; z_i)\big\|_2^2.
\end{multline*}
Therefore,
\begin{multline*}
\ebb[f(A_{\mathbf{w}}(S^{(i)}), \bv' ; z_i)-f(A_{\mathbf{w}}(S), \bv' ; z_i)]
\leq \frac{L+\gamma}{2}\ebb\big[\big\|A_{\mathbf{w}}(S^{(i)})-A_{\mathbf{w}}(S)\big\|_2^2\big] \\
+ \frac{1}{2\gamma} \ebb\big[\big\|\nabla_{\mathbf{w}} f(A_{\mathbf{w}}(S), \bv' ; z_i)\big\|_2^2\big].
\end{multline*}
Similarly, we have
\begin{multline*}
\ebb\big[f(\bw',A_{\bv}(S^{(i)});z_i)-f(\bw',A_{\bv}(S);z_i)\big]\Big] \leq \frac{L+\gamma}{2}\ebb\big[\big\|A_{\mathbf{v}}(S^{(i)})-A_{\mathbf{v}}(S)\big\|_2^2\big]  \\
+ \frac{1}{2\gamma} \ebb\big[\big\|\nabla_{\mathbf{v}} f(\bw', A_{\mathbf{v}}(S) ; z_i)\big\|_2^2\big].
\end{multline*}
Combining the above two inequalities with Eq.~\eqref{gen-decomp-26}, we have
\begin{multline*}
\triangle^w(A_{\bw}(S),A_{\bv}(S))-\triangle^w_S(A_{\bw}(S),A_{\bv}(S)) \leq  \frac{L+\gamma}{2n}\sum_{i=1}^{n} \ebb\bigg[\bigg\|\begin{pmatrix}A_{\mathbf{w}}(S^{(i)})-A_{\mathbf{w}}(S)\\
A_{\mathbf{v}}(S^{(i)})-A_{\mathbf{v}}(S)
\end{pmatrix}\bigg\|_2^2 \bigg]\\
+ \frac{1}{2\gamma}\sup_{\bv'\in \vcal} \ebb\Big[\frac{1}{n}\sum_{i=1}^n \big\|\nabla_{\mathbf{w}} f(A_{\mathbf{w}}(S), \bv' ; z_i)\big\|_2^2\Big] + \frac{1}{2\gamma} \sup_{\bw'\in \wcal}\ebb\Big[\frac{1}{n}\sum_{i=1}^n\big\|\nabla_{\mathbf{v}} f(\bw', A_{\mathbf{v}}(S) ; z_i)\big\|_2^2\Big].
\end{multline*}
Using the fact that (Assumption~\ref{ass:lipschitz})
\begin{align*}
& \frac{1}{n}\sum_{i=1}^n \big\|\nabla_{\mathbf{w}} f(A_{\mathbf{w}}(S), \bv' ; z_i)\big\|_2^2 = \ebb_r[\|\nabla_{\bw} f(A_{\bw}(S),\bv';S_r)\|_2^2] \leq G_{S,A}^2,\\
& \frac{1}{n}\sum_{i=1}^n\big\|\nabla_{\mathbf{v}} f(\bw', A_{\mathbf{v}}(S) ; z_i)\big\|_2^2 = \ebb_r[\|\nabla_{\bv} f(\bw',A_{\bv}(S);S_r)\|_2^2] \leq G_{S,A}^2,
\end{align*}
we further get
\[
\triangle^w(A_{\bw}(S),A_{\bv}(S))-\triangle^w_S(A_{\bw}(S),A_{\bv}(S)) \leq  \frac{L+\gamma}{2n}\sum_{i=1}^{n} \ebb\bigg[\bigg\|\begin{pmatrix}A_{\mathbf{w}}(S^{(i)})-A_{\mathbf{w}}(S)\\
A_{\mathbf{v}}(S^{(i)})-A_{\mathbf{v}}(S)
\end{pmatrix}\bigg\|_2^2 \bigg]+\frac{\ebb[G_{S,A}^2]}{\gamma}.
\]
Since $A$ is $\epsilon$-argument-stable in expectation, we know
\[
    \triangle^w(A_{\bw}(S),A_{\bv}(S))-\triangle^w_S(A_{\bw}(S),A_{\bv}(S))\leq \frac{L+\gamma}{2}\epsilon^2 + \frac{\ebb[G_{S,A}^2]}{\gamma}.
\]
If we choose $\gamma = \sqrt{2\ebb[G_{S,A}^2]}/\epsilon$, then
\[
        \triangle^w(A_{\bw}(S),A_{\bv}(S))-\triangle^w_S(A_{\bw}(S),A_{\bv}(S))\leq L\epsilon^2/2 + \sqrt{2\ebb[G_{S,A}^2]}\epsilon.
\]
This completes the proof.
\end{proof}

\subsection{Proofs on General Stability Bounds}
In this section, we present the proof for our general stability bounds of Biased-SGDA under the
convex-concave case.

To prove Theorem~\ref{lem:stab-gda}, we require the following standard lemma~\cite{rockafellar1976monotone}, which shows the monotonicity of the gradient map for convex-concave functions.
\begin{lemma}[\cite{rockafellar1976monotone}]\label{lem:non-expansive-key}
  Let $f$ be a convex-concave function. Then
  \begin{equation}\label{non-expansive-key}
  \bigg\langle\begin{pmatrix}
    \bw-\bw' \\
    \bv-\bv'
  \end{pmatrix},\begin{pmatrix}
                  \nabla_{\bw}f(\bw,\bv)-\nabla_{\bw}f(\bw',\bv') \\
                  \nabla_{\bv}f(\bw',\bv')-\nabla_{\bv}f(\bw,\bv)
                \end{pmatrix}\bigg\rangle \geq 0.
  \end{equation}
\end{lemma}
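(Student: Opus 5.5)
The plan is to use only the first-order characterizations of convexity in $\bw$ and concavity in $\bv$ from the definitions above. I write each at the two ``mixed'' points $(\bw',\bv)$ and $(\bw,\bv')$, which lie in $\wcal\times\vcal$ because the domain is a product set. Adding the four inequalities should make all function values cancel and leave exactly the inner product in \eqref{non-expansive-key}.

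\textbf{Step 1 (convexity in $\bw$).} Since $\bw\mapsto f(\bw,\bv)$ is convex,
\[
f(\bw',\bv)\ge f(\bw,\bv)+\langle \bw'-\bw,\nabla_{\bw}f(\bw,\bv)\rangle .
\]
Since $\bw\mapsto f(\bw,\bv')$ is convex,
\[
f(\bw,\bv')\ge f(\bw',\bv')+\langle \bw-\bw',\nabla_{\bw}f(\bw',\bv')\rangle .
\]
Adding the two gives
\[
f(\bw',\bv)+f(\bw,\bv')\ge f(\bw,\bv)+f(\bw',\bv')-\langle \bw-\bw',\nabla_{\bw}f(\bw,\bv)-\nabla_{\bw}f(\bw',\bv')\rangle .
\]

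\textbf{Step 2 (concavity in $\bv$).} Since $\bv\mapsto f(\bw,\bv)$ is concave,
\[
f(\bw,\bv')\le f(\bw,\bv)+\langle \bv'-\bv,\nabla_{\bv}f(\bw,\bv)\rangle .
\]
Since $\bv\mapsto f(\bw',\bv)$ is concave,
\[
f(\bw',\bv)\le f(\bw',\bv')+\langle \bv-\bv',\nabla_{\bv}f(\bw',\bv')\rangle .
\]
Adding the two gives
\[
f(\bw,\bv')+f(\bw',\bv)\le f(\bw,\bv)+f(\bw',\bv')+\langle \bv-\bv',\nabla_{\bv}f(\bw',\bv')-\nabla_{\bv}f(\bw,\bv)\rangle .
\]

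\textbf{Step 3 (chain and cancel).} The left-hand side of Step 1 equals the left-hand side of Step 2, so I chain the two bounds. The terms $f(\bw,\bv)+f(\bw',\bv')$ then appear on both sides and cancel, leaving
\[
0\le \langle \bw-\bw',\nabla_{\bw}f(\bw,\bv)-\nabla_{\bw}f(\bw',\bv')\rangle+\langle \bv-\bv',\nabla_{\bv}f(\bw',\bv')-\nabla_{\bv}f(\bw,\bv)\rangle .
\]
This is exactly \eqref{non-expansive-key}.

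There is no real obstacle here; the one point needing care is the bookkeeping of signs and arguments. The mixed-point values $f(\bw',\bv)$ and $f(\bw,\bv')$ must occur with the same orientation in both sums for the chaining to be valid, and the dual block must come out as $\nabla_{\bv}f(\bw',\bv')-\nabla_{\bv}f(\bw,\bv)$.
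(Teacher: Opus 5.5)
Your proof is correct. The four first-order inequalities are oriented properly, and the mixed values $f(\bw',\bv)+f(\bw,\bv')$ are sandwiched between the two sums. The common terms $f(\bw,\bv)+f(\bw',\bv')$ then cancel, leaving exactly \eqref{non-expansive-key}.

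The paper gives no argument of its own here; it only cites Rockafellar (1976) for the monotonicity (indeed maximal monotonicity) of the saddle operator $(\partial_{\bw}f,-\partial_{\bv}f)$, which covers general, possibly nonsmooth convex-concave functions. Your derivation is the elementary differentiable case of that fact. It needs only the first-order definitions the paper states, differentiability, and the product structure of $\wcal\times\vcal$, so that $(\bw',\bv)$ and $(\bw,\bv')$ are admissible. The citation additionally buys subgradients and maximality, which the paper never uses.

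Your closing remark about bookkeeping deserves one more step: the cancellation works only because the same function appears in Step 1 and Step 2. Hence neither the lemma nor your proof covers the mixed operator $(\nabla_{\bw}\tilde f_1,-\nabla_{\bv}\tilde f_2)$ with $\tilde f_1\neq\tilde f_2$. That is exactly what the proof of Theorem~\ref{lem:stab-gda} applies the lemma to in the Zeroth-order case. There the inequality can genuinely fail, even though each $\tilde f_j$ is convex-concave.

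For a concrete counterexample, take scalar $w,v$ (embed in the first coordinates if $d\ge2$) and
\[
f(w,v)=g(w)+\phi(w)\sin v-\tfrac{1}{2}v^2 .
\]
Let $g\equiv0$ and $\phi(w)=w/2$ on $[-1,1]$, extended so that $|\phi|\le1$ and $g''\ge|\phi''|$; this makes $f$ smooth and convex-concave. For $|w|\le1-\mu_1$ (with $\mu_1<1$), the mixed operator equals $(\tfrac12\sin v,\ v-\tfrac{\kappa}{2}w\cos v)$, where $\kappa=\ebb[\cos(\mu_2u)]<1$ and $u$ is the (first coordinate of the) smoothing variable. Pairing $z=(-a,v)$, with $0<a\le1-\mu_1$, against $z'=(0,0)$ gives $\tfrac{a}{2}(\kappa-1)v+v^2+O(av^3)$, which is negative for small $v>0$.
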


\begin{proof}[Proof of Theorem~\ref{lem:stab-gda}]
Let $S=\{z_1,\ldots,z_n\}$ and $S^{(i)}=\{z_1,\ldots,z_{i-1},z_i',z_{i+1},\ldots,z_n\}$. Let $\{\bw_t,\bv_t\}$ and $\{\bw_t^{(i)},\bv_t^{(i)}\}$ be the sequence produced by \eqref{gda} w.r.t. $S$ and $S^{(i)}$, respectively. Note that the projection step is nonexpansive, according to Eq.~\eqref{gda}, Eq.~\eqref{gda_prime}, we know
\begin{align}
  \Bigg\|\begin{pmatrix}
           \bw_{t+1}-\bw_{t+1}^{(i)} \notag\\
           \bv_{t+1}-\bv_{t+1}^{(i)}
         \end{pmatrix}\Bigg\|_2^2
        & \leq \Bigg\|\begin{pmatrix}
                \bw_t-\bw_t^{(i)} \\
                \bv_t-\bv_t^{(i)}
                \end{pmatrix}\Bigg\|_2^2
  + \eta^2\Bigg\|\begin{pmatrix}
  g_{\bw}(\bw_t^{(i)},\bv_t^{(i)};z_{i_t}^{(i)}) - g_{\bw}(\bw_t,\bv_t;z_{i_t})\\
  g_{\bv}(\bw_t,\bv_t;z_{i_t}) - g_{\bv}(\bw_t^{(i)},\bv_t^{(i)};z^{(i)}_{i_t})
  \end{pmatrix}\Bigg\|_2^2 \notag\\
  &-2\eta\Bigg\langle\begin{pmatrix}
                      \bw_t-\bw_t^{(i)} \\
                      \bv_t-\bv_t^{(i)}
                    \end{pmatrix},\begin{pmatrix}
                        g_{\bw}(\bw_t,\bv_t;z_{i_t})-g_{\bw}(\bw_t^{(i)},\bv_t^{(i)};z_{i_t}^{(i)}) \\
                        g_{\bv}(\bw_t^{(i)},\bv_t^{(i)};z^{(i)}_{i_t})-g_{\bv}(\bw_t,\bv_t;z_{i_t})
                        \end{pmatrix}\Bigg\rangle.
                 \label{stab-gda-1}
\end{align}
Recall the definition of the bias as in Eq.~\eqref{bias}. Taking a conditional expectation implies
\begin{align}\label{stab-gda-2}
  & \ebb_t\Bigg[\Bigg\|\begin{pmatrix}
           \bw_{t+1}-\bw_{t+1}^{(i)} \notag\\
           \bv_{t+1}-\bv_{t+1}^{(i)}
         \end{pmatrix}\Bigg\|_2^2 \Bigg]\\
         & \leq \Bigg\|\begin{pmatrix}
                \bw_t-\bw_t^{(i)} \\
                \bv_t-\bv_t^{(i)}
                \end{pmatrix}\Bigg\|_2^2
+
  \eta^2 \ebb_t\Bigg[\Bigg\|\begin{pmatrix}
  g_{\bw}(\bw_t^{(i)},\bv_t^{(i)};z_{i_t}^{(i)}) - g_{\bw}(\bw_t,\bv_t;z_{i_t})\\
  g_{\bv}(\bw_t,\bv_t;z_{i_t}) - g_{\bv}(\bw_t^{(i)},\bv_t^{(i)};z^{(i)}_{i_t})
  \end{pmatrix}\Bigg\|_2^2 \Bigg] \notag\\
  & -2\eta\Bigg\langle\begin{pmatrix}
                      \bw_t-\bw_t^{(i)} \\
                      \bv_t-\bv_t^{(i)}
                    \end{pmatrix},\begin{pmatrix}
                        \nabla_{\bw} \widetilde{F}_{1,S}(\bw_t,\bv_t) + b_{t,\bw} - \nabla_{\bw} \widetilde{F}_{1,S^{(i)}}(\bw^{(i)}_t,\bv^{(i)}_t) - b_{t,\bw}^{(i)}\\
                        \nabla_{\bv} \widetilde{F}_{2,S^{(i)}}(\bw^{(i)}_t,\bv^{(i)}_t) + b_{t,\bv}^{(i)} - \nabla_{\bv} \widetilde{F}_{2,S}(\bw_t,\bv_t) - b_{t,\bv}
                        \end{pmatrix}\Bigg\rangle.
\end{align}
We know
\begin{align*}
& -2\eta \Bigg\langle\begin{pmatrix}
                      \bw_t-\bw_t^{(i)} \\
                      \bv_t-\bv_t^{(i)}
                    \end{pmatrix},\begin{pmatrix}
                        \nabla_{\bw} \widetilde{F}_{1,S}(\bw_t,\bv_t) - \nabla_{\bw} \widetilde{F}_{1,S^{(i)}}(\bw^{(i)}_t,\bv^{(i)}_t)\\
                        \nabla_{\bv} \widetilde{F}_{2,S^{(i)}}(\bw^{(i)}_t,\bv^{(i)}_t)- \nabla_{\bv} \widetilde{F}_{2,S}(\bw_t,\bv_t)
                        \end{pmatrix}\Bigg\rangle \notag\\
& = -\frac{2\eta}{n} \sum_{j:j\neq i} \Bigg\langle\begin{pmatrix}
                      \bw_t-\bw_t^{(i)} \\
                      \bv_t-\bv_t^{(i)}
                    \end{pmatrix},\begin{pmatrix}
                        \nabla_{\bw} \tilde{f}_1(\bw_t,\bv_t;z_j) - \nabla_{\bw} \tilde{f}_1(\bw^{(i)}_t,\bv^{(i)}_t;z_j)\\
                        \nabla_{\bv} \tilde{f}_2(\bw^{(i)}_t,\bv^{(i)}_t;z_j)- \nabla_{\bv} \tilde{f}_2(\bw_t,\bv_t;z_j)
                        \end{pmatrix}\Bigg\rangle \notag\\
& - \frac{2\eta}{n}
\Bigg\langle\begin{pmatrix}
    \bw_t-\bw_t^{(i)} \\
    \bv_t-\bv_t^{(i)}
    \end{pmatrix},\begin{pmatrix}
    \nabla_{\bw} \tilde{f}_1(\bw_t,\bv_t;z_i) - \nabla_{\bw} \tilde{f}_1(\bw^{(i)}_t,\bv^{(i)}_t;z_i')\\
    \nabla_{\bv} \tilde{f}_2(\bw^{(i)}_t,\bv^{(i)}_t;z_i')- \nabla_{\bv} \tilde{f}_2(\bw_t,\bv_t;z_i)
    \end{pmatrix}\Bigg\rangle.
\end{align*}
Noting that $\tilde{f}_1,\tilde{f}_2$ are convex w.r.t. $\bw$ and concave w.r.t $\bv$, it then follows from Lemma \ref{lem:non-expansive-key} and Schwartz’s inequality that
\begin{align}\label{inner_gradient}
& -2\eta \Bigg\langle\begin{pmatrix}
                      \bw_t-\bw_t^{(i)} \\
                      \bv_t-\bv_t^{(i)}
                    \end{pmatrix},\begin{pmatrix}
                        \nabla_{\bw} \widetilde{F}_{1,S}(\bw_t,\bv_t) - \nabla_{\bw} \widetilde{F}_{1,S^{(i)}}(\bw^{(i)}_t,\bv^{(i)}_t)\\
                        \nabla_{\bv} \widetilde{F}_{2,S^{(i)}}(\bw^{(i)}_t,\bv^{(i)}_t)- \nabla_{\bv} \widetilde{F}_{2,S}(\bw_t,\bv_t)
                        \end{pmatrix}\Bigg\rangle \notag\\
& \leq \frac{2\eta}{n}
\Bigg\|\begin{pmatrix}
    \bw_t-\bw_t^{(i)} \\
    \bv_t-\bv_t^{(i)}
    \end{pmatrix}\Bigg\|_2
    \Bigg\|\begin{pmatrix}
    \nabla_{\bw} \tilde{f}_1(\bw_t,\bv_t;z_i) - \nabla_{\bw} \tilde{f}_1(\bw^{(i)}_t,\bv^{(i)}_t;z_i')\\
    \nabla_{\bv} \tilde{f}_2(\bw^{(i)}_t,\bv^{(i)}_t;z_i')- \nabla_{\bv} \tilde{f}_2(\bw_t,\bv_t;z_i)
    \end{pmatrix}\Bigg\|_2.
\end{align}
Using Schwarz’s inequality, we also know
\begin{align}\label{inner_bias}
&-2\eta\Bigg\langle\begin{pmatrix}
                      \bw_t-\bw_t^{(i)} \\
                      \bv_t-\bv_t^{(i)}
                    \end{pmatrix},\begin{pmatrix}
                    b_{t,\bw} - b_{t,\bw}^{(i)}\\
                     b_{t,\bv}^{(i)} - b_{t,\bv}
                        \end{pmatrix}\Bigg\rangle \leq 2\eta
\Bigg\|\begin{pmatrix}
    \bw_t-\bw_t^{(i)} \\
    \bv_t-\bv_t^{(i)}
    \end{pmatrix}\Bigg\|_2
    \Bigg\|\begin{pmatrix}
                    b_{t,\bw} - b_{t,\bw}^{(i)}\\
                     b_{t,\bv}^{(i)} - b_{t,\bv}
                        \end{pmatrix}\Bigg\|_2.
\end{align}
Plugging Eq.~\eqref{inner_gradient} and Eq.~\eqref{inner_bias} into Eq.~\eqref{stab-gda-2}, we know
\begin{align*}
  &\ebb_t\Bigg[\Bigg\|\begin{pmatrix}
           \bw_{t+1}-\bw_{t+1}^{(i)} \notag\\
           \bv_{t+1}-\bv_{t+1}^{(i)}
         \end{pmatrix}\Bigg\|_2^2 \Bigg]\\
         & \leq \Bigg\|\begin{pmatrix}
                \bw_t-\bw_t^{(i)} \\
                \bv_t-\bv_t^{(i)}
                \end{pmatrix}\Bigg\|_2^2
+\eta^2 \ebb_t\Bigg[\Bigg\|\begin{pmatrix}
  g_{\bw}(\bw_t^{(i)},\bv_t^{(i)};z_{i_t}^{(i)}) - g_{\bw}(\bw_t,\bv_t;z_{i_t})\\
  g_{\bv}(\bw_t,\bv_t;z_{i_t}) - g_{\bv}(\bw_t^{(i)},\bv_t^{(i)};z^{(i)}_{i_t})
  \end{pmatrix}\Bigg\|_2^2 \Bigg]\\
  & +\frac{2\eta}{n}
\Bigg\|\begin{pmatrix}
    \bw_t-\bw_t^{(i)} \\
    \bv_t-\bv_t^{(i)}
    \end{pmatrix}\Bigg\|_2
    \Bigg\|\begin{pmatrix}
    \nabla_{\bw} \tilde{f}_1(\bw_t,\bv_t;z_i) - \nabla_{\bw} \tilde{f}_1(\bw^{(i)}_t,\bv^{(i)}_t;z_i')\\
    \nabla_{\bv} \tilde{f}_2(\bw^{(i)}_t,\bv^{(i)}_t;z_i')- \nabla_{\bv} \tilde{f}_2(\bw_t,\bv_t;z_i)
    \end{pmatrix}\Bigg\|_2 \\
& +2\eta
\Bigg\|\begin{pmatrix}
    \bw_t-\bw_t^{(i)} \\
    \bv_t-\bv_t^{(i)}
    \end{pmatrix}\Bigg\|_2
    \Bigg\|\begin{pmatrix}
                    b_{t,\bw} - b_{t,\bw}^{(i)}\\
                     b_{t,\bv}^{(i)} - b_{t,\bv}
                        \end{pmatrix}\Bigg\|_2.
\end{align*}
Taking an average of $i \in [n]$, and using the fact that $\sum_{i=1}^n a_i b_i \leq (\sum_{i=1}^n a_i^2)^{\frac{1}{2}}(\sum_{i=1}^n b_i^2)^{\frac{1}{2}}$, we get
\begin{align*}
  & \ebb_t\Bigg[\frac{1}{n}\sum_{i=1}^n \Bigg\|\begin{pmatrix}
           \bw_{t+1}-\bw_{t+1}^{(i)} \notag\\
           \bv_{t+1}-\bv_{t+1}^{(i)}
         \end{pmatrix}\Bigg\|_2^2 \Bigg]\\
    & \leq \frac{1}{n}\sum_{i=1}^n\Bigg\|\begin{pmatrix}
                \bw_t-\bw_t^{(i)} \\
                \bv_t-\bv_t^{(i)}
                \end{pmatrix}\Bigg\|_2^2
+\eta^2 \ebb_t\Bigg[\frac{1}{n}\sum_{i=1}^n\Bigg\|\begin{pmatrix}
  g_{\bw}(\bw_t^{(i)},\bv_t^{(i)};z_{i_t}^{(i)}) - g_{\bw}(\bw_t,\bv_t;z_{i_t})\\
  g_{\bv}(\bw_t,\bv_t;z_{i_t}) - g_{\bv}(\bw_t^{(i)},\bv_t^{(i)};z^{(i)}_{i_t})
  \end{pmatrix}\Bigg\|_2^2 \Bigg] \notag\\
&+\frac{2\eta}{n}
\Bigg(\frac{1}{n}\sum_{i=1}^n\Bigg\|\begin{pmatrix}
    \bw_t-\bw_t^{(i)} \\
    \bv_t-\bv_t^{(i)}
    \end{pmatrix}\Bigg\|_2^2\Bigg)^{\frac{1}{2}}
    \Bigg(\frac{1}{n}\sum_{i=1}^n\Bigg\|\begin{pmatrix}
    \nabla_{\bw} \tilde{f}_1(\bw_t,\bv_t;z_i) - \nabla_{\bw} \tilde{f}_1(\bw^{(i)}_t,\bv^{(i)}_t;z_i')\\
    \nabla_{\bv} \tilde{f}_2(\bw^{(i)}_t,\bv^{(i)}_t;z_i')- \nabla_{\bv} \tilde{f}_2(\bw_t,\bv_t;z_i)
    \end{pmatrix}\Bigg\|_2^2\Bigg)^{\frac{1}{2}} \notag\\
& +2\eta
\Bigg(\frac{1}{n}\sum_{i=1}^n\Bigg\|\begin{pmatrix}
    \bw_t-\bw_t^{(i)} \\
    \bv_t-\bv_t^{(i)}
    \end{pmatrix}\Bigg\|_2^2\Bigg)^{\frac{1}{2}}
    \Bigg(\frac{1}{n}\sum_{i=1}^n \Bigg\|\begin{pmatrix}
                    b_{t,\bw} - b_{t,\bw}^{(i)}\\
                     b_{t,\bv}^{(i)} - b_{t,\bv}
                        \end{pmatrix}\Bigg\|_2^2\Bigg)^{\frac{1}{2}}.
\end{align*}
Introduce
\begin{equation*}
\bar{\Delta}_{t}=\Bigg(\frac{1}{n}\sum_{i=1}^n\ebb\Bigg[\Bigg\|\begin{pmatrix}
                \bw_t-\bw_t^{(i)} \\
                \bv_t-\bv_t^{(i)}
                \end{pmatrix}\Bigg\|_2^2\Bigg]\Bigg)^{\frac{1}{2}},\quad\forall t\in[T].
\end{equation*}
Taking the expectation to the above inequality and using the inequality $\ebb[XY] \leq (\ebb[X^2])^{\frac{1}{2}} (\ebb[Y^2])^{\frac{1}{2}}$, we have
\begin{align*}
  \bar{\Delta}_{t+1}^2 & \leq \bar{\Delta}_{t}^2
+\eta^2 \ebb\Bigg[\frac{1}{n}\sum_{i=1}^n\Bigg\|\begin{pmatrix}
  g_{\bw}(\bw_t^{(i)},\bv_t^{(i)};z_{i_t}^{(i)}) - g_{\bw}(\bw_t,\bv_t;z_{i_t})\\
  g_{\bv}(\bw_t,\bv_t;z_{i_t}) - g_{\bv}(\bw_t^{(i)},\bv_t^{(i)};z^{(i)}_{i_t})
  \end{pmatrix}\Bigg\|_2^2 \Bigg] \notag\\
&+\frac{2\eta}{n}
\bar{\Delta}_{t}
    \Bigg(\ebb\Bigg[\frac{1}{n}\sum_{i=1}^n\Bigg\|\begin{pmatrix}
    \nabla_{\bw} \tilde{f}_1(\bw_t,\bv_t;z_i) - \nabla_{\bw} \tilde{f}_1(\bw^{(i)}_t,\bv^{(i)}_t;z_i')\\
    \nabla_{\bv} \tilde{f}_2(\bw^{(i)}_t,\bv^{(i)}_t;z_i')- \nabla_{\bv} \tilde{f}_2(\bw_t,\bv_t;z_i)
    \end{pmatrix}\Bigg\|_2^2\Bigg]\Bigg)^{\frac{1}{2}} \notag\\
& +2\eta
\bar{\Delta}_{t}
    \Bigg(\ebb\Bigg[\frac{1}{n}\sum_{i=1}^n \Bigg\|\begin{pmatrix}
                    b_{t,\bw} - b_{t,\bw}^{(i)}\\
                     b_{t,\bv}^{(i)} - b_{t,\bv}
                        \end{pmatrix}\Bigg\|_2^2\Bigg]\Bigg)^{\frac{1}{2}}.
\end{align*}
Using conditions as in Eq.~\eqref{simple-condition-a}, Eq.~\eqref{simple-condition-c} and Eq.~\eqref{simple-condition-d}, this further yields
\begin{align*}
  \bar{\Delta}_{t+1}^2
\leq (1+A\eta^2+2\bar{A}^{\frac{1}{2}}\eta)\bar{\Delta}_{t}^2
    +2\eta \big(\sqrt{\bar{B}}+\frac{\widetilde{G}}{n}\big)
\bar{\Delta}_{t}
  + \eta^2 B.
\end{align*}
Applying this inequality recursively and using the fact that $\bar{\Delta}_{1}=0$ imply that
\begin{align*}
  \bar{\Delta}_{t+1}^2  & \leq \eta^2 B \sum_{k=1}^{t}(1+A\eta^2+2\bar{A}^{\frac{1}{2}}\eta)^{t-k}
  + 2\eta \sum_{k=1}^{t}
\big(\sqrt{\bar{B}}+\frac{\widetilde{G}}{n}\big) \bar{\Delta}_{k}
  (1+A\eta^2+2\bar{A}^{\frac{1}{2}}\eta)^{t-k}
  \\
  & = \eta^2 B\frac{\big(1+A\eta^2+2\bar{A}^{\frac{1}{2}}\eta\big)^{t}-1}{A\eta^2+2\bar{A}^{\frac{1}{2}}\eta}
  + 2\eta \sum_{k=1}^{t} \big(\sqrt{\bar{B}}+\frac{\widetilde{G}}{n}\big)
\bar{\Delta}_{k}
  (1+A\eta^2+2\bar{A}^{\frac{1}{2}}\eta)^{t-k}.
\end{align*}
Since the above inequality applies for any $t\in[T]$, if we introduce
$\bar{\Delta}=\max_{t\leq T}\bar{\Delta}_{t}$, we further get
\begin{align*}
  \bar{\Delta}^2
  \leq \eta^2 B\frac{\big(1+A\eta^2+2\bar{A}^{\frac{1}{2}}\eta\big)^{T}-1}{A\eta^2+2\bar{A}^{\frac{1}{2}}\eta}
  + 2\eta \big(\sqrt{\bar{B}}+\frac{\widetilde{G}}{n}\big) \frac{\big(1+A\eta^2+2\bar{A}^{\frac{1}{2}}\eta\big)^{T}-1}{A\eta^2+2\bar{A}^{\frac{1}{2}}\eta}\bar{\Delta}.
\end{align*}
 Note the right-hand side of the above inequality is an increasing function of $A\eta^2+2\bar{A}^{\frac{1}{2}}\eta$.
By taking $\eta$ such that $A\eta^2+2\bar{A}^{\frac{1}{2}}\eta \leq \frac{1}{T}$ in the above inequality and using $(1+1/T)^T\leq e$, we get
\begin{align*}
  \bar{\Delta}^2
  \leq 2\eta^2 B T
  + 4\eta \big(\sqrt{\bar{B}}+\frac{\widetilde{G}}{n}\big)T\bar{\Delta}.
\end{align*}
Solving the above quadratic inequality  of $\bar{\Delta}$ implies that (if $x^2\leq ax+b$ for some $a,b\geq0$, then $x^2\leq a^2+2b$)
\begin{align*}
  \bar{\Delta}^2
  \leq 4B\eta^2T
  + 16\big(\sqrt{\bar{B}}+\frac{\widetilde{G}}{n}\big)^2 \eta^2 T^2.
\end{align*}
The stated bound follows directly.
\end{proof}

\subsection{Proofs on Applications\label{sec:proof-application}}
In this section, we present stability bounds for the specific Biased-SGDA algorithms for the convex-concave case. We first present a lemma that is often applied in our proof. This lemma  shows the condition Eq.~\eqref{simple-condition-d} holds with $\widetilde{G}=2\sqrt{2\ebb[G_S^2]}$ when we choose $\tilde{f}_1=\tilde{f}_2=f$.
\begin{lemma}\label{lem:gradient-norm}
Under Assumption~\ref{ass:moment},  we know
\[
\ebb\Bigg[\frac{1}{n}\sum_{i=1}^n\Bigg\|\begin{pmatrix}
  \nabla_{\bw} f(\bw_t^{(i)},\bv_t^{(i)};z_i') - \nabla_{\bw} f(\bw_t,\bv_t;z_i)\\
  \nabla_{\bv} f(\bw_t,\bv_t;z_i) - \nabla_{\bv} f(\bw_t^{(i)},\bv_t^{(i)};z_i')
  \end{pmatrix}\Bigg\|_2^2\Bigg] \leq 8\ebb[G_S^2].
\]
\end{lemma}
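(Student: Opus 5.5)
We bound the squared norm of the stacked difference by splitting it with the elementary inequality $\|a-b\|_2^2\le 2\|a\|_2^2+2\|b\|_2^2$, applied separately to the $\bw$-block and the $\bv$-block. This gives, for each $i$,
\[
\Bigg\|\begin{pmatrix}
  \nabla_{\bw} f(\bw_t^{(i)},\bv_t^{(i)};z_i') - \nabla_{\bw} f(\bw_t,\bv_t;z_i)\\
  \nabla_{\bv} f(\bw_t,\bv_t;z_i) - \nabla_{\bv} f(\bw_t^{(i)},\bv_t^{(i)};z_i')
  \end{pmatrix}\Bigg\|_2^2
\le 2\big(\|\nabla_{\bw} f(\bw_t^{(i)},\bv_t^{(i)};z_i')\|_2^2+\|\nabla_{\bv} f(\bw_t^{(i)},\bv_t^{(i)};z_i')\|_2^2\big)
+2\big(\|\nabla_{\bw} f(\bw_t,\bv_t;z_i)\|_2^2+\|\nabla_{\bv} f(\bw_t,\bv_t;z_i)\|_2^2\big).
\]

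For the terms evaluated at $(\bw_t,\bv_t)$ with $z_i$, we average over $i$. Since $(\bw_t,\bv_t)$ is a fixed point of $\wcal\times\vcal$ once we condition on the randomness, Assumption~\ref{ass:moment} applies pointwise. It gives $\frac1n\sum_i\|\nabla_{\bw} f(\bw_t,\bv_t;z_i)\|_2^2\le G_S^2$, and the same bound for the $\bv$-gradient. Taking expectations, this part contributes at most $2\cdot 2\ebb[G_S^2]=4\ebb[G_S^2]$.

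The terms at $(\bw_t^{(i)},\bv_t^{(i)})$ with $z_i'$ are the main obstacle, because the evaluation point changes with $i$. Assumption~\ref{ass:moment} controls an average over the sample at a single common point, so it cannot be applied to this average directly. Instead we use exchangeability. The pair $(S^{(i)},z_i')$ has the same joint law as $(S,z_i)$, because swapping $z_i$ and $z_i'$ is measure-preserving. The algorithm's randomness ($i_t$ and any auxiliary variables) is shared and independent of the data. Therefore $\ebb\|\nabla_{\bw} f(\bw_t^{(i)},\bv_t^{(i)};z_i')\|_2^2=\ebb\|\nabla_{\bw} f(\bw_t,\bv_t;z_i)\|_2^2$ for each $i$, and likewise for $\nabla_{\bv}$. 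Averaging over $i$ reduces these terms to the previous case, which contributes another $4\ebb[G_S^2]$.

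Summing the two parts yields the claimed bound $8\ebb[G_S^2]$. The delicate point is stating the symmetry step carefully, namely that $(\bw_t^{(i)},\bv_t^{(i)})$ is produced by the same measurable map applied to $S^{(i)}$ with the same algorithmic randomness as $(\bw_t,\bv_t)$ is from $S$, and that $z_i'$ plays the role of $z_i$. With that in place, the rest is just the averaging inequality.
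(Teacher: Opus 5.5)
Your proposal is correct and follows the same route as the paper's proof. Both split with $\|a-b\|_2^2\le 2\|a\|_2^2+2\|b\|_2^2$ into four squared-gradient terms, use the exchangeability identity $\ebb[\|\nabla_{\bw} f(\bw_t^{(i)},\bv_t^{(i)};z_i')\|_2^2]=\ebb[\|\nabla_{\bw} f(\bw_t,\bv_t;z_i)\|_2^2]$ (and its $\bv$-analogue) for the primed terms, and apply Assumption~\ref{ass:moment} at the common point $(\bw_t,\bv_t)$ to get $4\ebb[G_S^2]+4\ebb[G_S^2]$; your explicit justification of the symmetry step (same measurable map applied to $S^{(i)}$ with shared algorithmic randomness) spells out what the paper asserts without comment.
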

\begin{proof}
By the inequality that $(a+b)^2 \leq 2a^2 + 2b^2$, we know
\begin{align*}
& \Bigg\|\begin{pmatrix}
  \nabla_{\bw} f(\bw_t^{(i)},\bv_t^{(i)};z_i') - \nabla_{\bw} f(\bw_t,\bv_t;z_i)\\
  \nabla_{\bv} f(\bw_t,\bv_t;z_i) - \nabla_{\bv} f(\bw_t^{(i)},\bv_t^{(i)};z_i')
\end{pmatrix}\Bigg\|_2^2\\
& = \big\|\nabla_{\bw} f(\bw_t^{(i)},\bv_t^{(i)};z_i') - \nabla_{\bw} f(\bw_t,\bv_t;z_i)\big\|_2^2
+ \big\|\nabla_{\bv} f(\bw_t,\bv_t;z_i) - \nabla_{\bv} f(\bw_t^{(i)},\bv_t^{(i)};z_i')\big\|_2^2\\
& \leq 2\Big(\big\|\nabla_{\bw} f(\bw_t^{(i)},\bv_t^{(i)};z_i')\big\|_2^2 + \big\|\nabla_{\bw} f(\bw_t,\bv_t;z_i)\big\|_2^2
+ \big\|\nabla_{\bv} f(\bw_t,\bv_t;z_i) \big\|_2^2
+\big\|\nabla_{\bv} f(\bw_t^{(i)},\bv_t^{(i)};z_i')\big\|_2^2\Big).
\end{align*}
Taking the expectation of the above inequality, then taking an average of $i \in [n]$, we get ($r$ follows the uniform distribution over $[n]$)
\begin{align*}
& \ebb\Bigg[\frac{1}{n}\sum_{i=1}^n\Bigg\|\begin{pmatrix}
  \nabla_{\bw} f(\bw_t^{(i)},\bv_t^{(i)};z_i') - \nabla_{\bw} f(\bw_t,\bv_t;z_i)\\
  \nabla_{\bv} f(\bw_t,\bv_t;z_i) - \nabla_{\bv} f(\bw_t^{(i)},\bv_t^{(i)};z_i')
  \end{pmatrix}\Bigg\|_2^2\Bigg]\\
& \leq 4\ebb\bigg[\frac{1}{n}\sum_{i=1}^n \big\|\nabla_{\bw} f(\bw_t,\bv_t;z_i)\big\|_2^2 \bigg] + 4\ebb\bigg[\frac{1}{n}\sum_{i=1}^n \big\|\nabla_{\bv} f(\bw_t,\bv_t;z_i)\big\|_2^2 \bigg]\\
& = 4\ebb\big[\|\nabla_{\bw} f(\bw_t,\bv_t;z_r)\|_2^2 \big] + 4\ebb\big[\|\nabla_{\bv} f(\bw_t,\bv_t;z_r)\|_2^2 \big] \leq 8\ebb[G_S^2],
\end{align*}
where we have used the identity
\[\ebb[\|\nabla_{\bw} f(\bw_t^{(i)},\bv_t^{(i)};z_i')\|_2^2] \!=\! \ebb[\|\nabla_{\bw} f(\bw_t,\bv_t;z_i)\|_2^2],\ebb[\|\nabla_{\bv} f(\bw_t^{(i)},\bv_t^{(i)};z_i')\|_2^2] \!=\! \ebb[\|\nabla_{\bv} f(\bw_t,\bv_t;z_i)\|_2^2]\]
in the first step, and Assumption~\ref{ass:moment} in the last step.
\end{proof}

\subsubsection{SGDA \label{sec:proof-sgda}}
In this subsection, we give the proofs for SGDA.
We first prove Lemma~\ref{lem:conditions-sgda} to show that Eq.~\eqref{simple-condition-a}, Eq.~\eqref{simple-condition-c} hold for SGDA.

\begin{lemma}\label{lem:conditions-sgda}
Assume for all $z$, the function $(\bw,\bv)\mapsto f(\bw,\bv;z)$ is convex-concave. Let Assumption \ref{ass:smooth}, \ref{ass:moment} hold. If we choose the gradient estimator in Eq.~\eqref{grad-sgda} and $\tilde{f}_1=\tilde{f}_2 = f$, then Eq.~\eqref{simple-condition-a}, Eq.~\eqref{simple-condition-c} hold with $A=L^2,B={\frac{8\ebb[G_S^2]}{n}},\bar{A}=0,\bar{B}=0$.
\end{lemma}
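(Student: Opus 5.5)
The bias part is immediate. With $\tilde f_1=\tilde f_2=f$, the estimator in Eq.~\eqref{grad-sgda} satisfies
\[
\ebb_t[g(\bw_t,\bv_t;z_{i_t})]=\frac1n\sum_{j=1}^n\nabla f(\bw_t,\bv_t;z_j)=\nabla F_S(\bw_t,\bv_t)=\nabla\widetilde F_S(\bw_t,\bv_t),
\]
because $i_t$ is uniform and independent of the conditioning variables in $\ebb_t$. The same holds on $S^{(i)}$. Hence $b_t=b_t^{(i)}=0$, and Eq.~\eqref{simple-condition-c} holds with $\bar A=\bar B=0$.

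For Eq.~\eqref{simple-condition-a}, I will condition on everything except $i_t$ and split on the event $\{i_t=i\}$.

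\emph{Case $i_t\neq i$.} This happens with probability $1-1/n$, and then $z_{i_t}^{(i)}=z_{i_t}$. The vector in Eq.~\eqref{simple-condition-a} is, up to the sign of its $\bv$-block, the difference of the full gradient of $f(\cdot,\cdot;z_{i_t})$ at two points. The sign flip does not change the Euclidean norm, so Assumption~\ref{ass:smooth} bounds its squared norm by $L^2\|(\bw_t-\bw_t^{(i)},\bv_t-\bv_t^{(i)})\|_2^2$.

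\emph{Case $i_t=i$.} This happens with probability $1/n$, and the squared norm is exactly the integrand of Lemma~\ref{lem:gradient-norm}, with $z_i,z_i'$ in the roles of the two samples.

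Combining the two cases gives
\[
\ebb_t[\cdot]\le \Big(1-\frac1n\Big)L^2\Big\|\begin{pmatrix}\bw_t-\bw_t^{(i)}\\ \bv_t-\bv_t^{(i)}\end{pmatrix}\Big\|_2^2+\frac1n\Big\|\begin{pmatrix}\nabla_{\bw} f(\bw_t^{(i)},\bv_t^{(i)};z_i')-\nabla_{\bw} f(\bw_t,\bv_t;z_i)\\ \nabla_{\bv} f(\bw_t,\bv_t;z_i)-\nabla_{\bv} f(\bw_t^{(i)},\bv_t^{(i)};z_i')\end{pmatrix}\Big\|_2^2 .
\]
I then drop the factor $1-1/n\le1$, average over $i\in[n]$, take full expectations, and apply Lemma~\ref{lem:gradient-norm} to the second term, which bounds it by $8\ebb[G_S^2]/n$. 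This yields $A=L^2$ and $B=8\ebb[G_S^2]/n$.

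The only point needing care is the symmetry step inside Lemma~\ref{lem:gradient-norm}: $(\bw_t^{(i)},\bv_t^{(i)},z_i')$ has the same joint law as $(\bw_t,\bv_t,z_i)$. This holds because swapping $z_i$ and $z_i'$ exchanges $S$ and $S^{(i)}$, and $i_t$ does not depend on the data. Since that lemma is already established, it can be invoked directly, so I do not anticipate a real obstacle.
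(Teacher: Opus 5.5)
Your proposal is correct and is essentially the paper's proof: the paper likewise notes $b_t=b_t^{(i)}=0$ to get $\bar A=\bar B=0$. It then expands the expectation over the uniform index $i_t$ into the $j\neq i$ terms (bounded via Assumption~\ref{ass:smooth}, absorbing the factor $(n-1)/n\le 1$) and the $j=i$ term (bounded via Lemma~\ref{lem:gradient-norm}, contributing $8\ebb[G_S^2]/n$), which is exactly your case split on $\{i_t=i\}$, with your remarks on the sign flip and the independence of $i_t$ making explicit what the paper leaves implicit.
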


\begin{proof}[Proof of Lemma~\ref{lem:conditions-sgda}]
  Since $b_t=b_t^{(i)}=0$, it is clear that Eq.~\eqref{simple-condition-c} holds with $\bar{A}=0,\bar{B}=0.$
  Furthermore, we have
\begin{align}\label{eq:gradient-bound}
& \ebb\Bigg[\frac{1}{n}\sum_{i=1}^n\Bigg\|\begin{pmatrix}
  g_{\bw}(\bw_t^{(i)},\bv_t^{(i)};z_{i_t}^{(i)}) - g_{\bw}(\bw_t,\bv_t;z_{i_t})\\
  g_{\bv}(\bw_t,\bv_t;z_{i_t}) - g_{\bv}(\bw_t^{(i)},\bv_t^{(i)};z^{(i)}_{i_t})
  \end{pmatrix}\Bigg\|_2^2 \Bigg] \notag\\
 & = \ebb\Bigg[\frac{1}{n^2}\sum_{i=1}^n\sum_{j:j\neq i}\Bigg\|\begin{pmatrix}
  \nabla_{\bw} f(\bw_t^{(i)},\bv_t^{(i)};z_j) - \nabla_{\bw} f(\bw_t,\bv_t;z_j)\\
  \nabla_{\bv} f(\bw_t,\bv_t;z_j) - \nabla_{\bv} f(\bw_t^{(i)},\bv_t^{(i)};z_j)
  \end{pmatrix}\Bigg\|_2^2\Bigg] \notag\\
& +\ebb\Bigg[\frac{1}{n^2}\sum_{i=1}^n\Bigg\|\begin{pmatrix}
  \nabla_{\bw} f(\bw_t^{(i)},\bv_t^{(i)};z_i') - \nabla_{\bw} f(\bw_t,\bv_t;z_i)\\
  \nabla_{\bv} f(\bw_t,\bv_t;z_i) - \nabla_{\bv} f(\bw_t^{(i)},\bv_t^{(i)};z_i')
  \end{pmatrix}\Bigg\|_2^2\Bigg] \notag\\
 & \leq L^2 \ebb\Bigg[\frac{1}{n}\sum_{i=1}^n\Bigg\|\begin{pmatrix}
                \bw_t-\bw_t^{(i)} \\
                \bv_t-\bv_t^{(i)}
            \end{pmatrix}\Bigg\|_2^2\Bigg] + {\frac{8\ebb[G_S^2]}{n}},
\end{align}
where we have used the fact that $i_t$ follows the uniform distribution over $[n]$ for the first step, Assumption~\ref{ass:smooth} and Lemma~\ref{lem:gradient-norm} for the last step.
This shows Eq.~\eqref{simple-condition-a} holds with $A=L^2,B={\frac{8\ebb[G_S^2]}{n}}$.
\end{proof}

We then plug these estimates into Theorem~\ref{lem:stab-gda} to immediately get Theorem~\ref{cor:sgda_stability}. We omit the details for brevity.

\subsubsection{Zeroth-order SGDA\label{sec:proof-zero-sgda}}
In this subsection, we give the proofs for Zeroth-order SGDA. Lemma~\ref{lemma_zeroth_order_sgd} is a simple variant of Lemma~1 in~\cite{nikolakakis2022black}. We give the proof for completeness.

\begin{lemma}\label{sphere_moment}
Let $\mathbf{u} \in \mathbb{R}^d$ from $\operatorname{Unif}(\mathbb{S}^{d-1})$, then
$
\mathbb{E}[\bu \bu^{\top}] = \frac{1}{d} \mathbf{I}_d,
$
where $\mathbf{I}_d$ denotes the $d$-dimensional identity matrix.
\end{lemma}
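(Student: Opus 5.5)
The statement is Lemma~\ref{sphere_moment}: for $\bu\sim\operatorname{Unif}(\mathbb{S}^{d-1})$ we have $\ebb[\bu\bu^{\top}]=\frac{1}{d}\mathbf{I}_d$. The plan is to combine rotational invariance of the uniform distribution on the sphere with a trace computation. No earlier result of the paper is needed.

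\textbf{Off-diagonal entries.} First I would show that $\ebb[u_ju_k]=0$ for $j\neq k$. The uniform measure on $\mathbb{S}^{d-1}$ is invariant under every orthogonal map, and in particular under the reflection $Q$ that flips the sign of the $j$-th coordinate. Hence $Q\bu$ has the same law as $\bu$. It follows that
\[
\ebb[u_ju_k]=\ebb[(Q\bu)_j(Q\bu)_k]=-\ebb[u_ju_k],
\]
so $\ebb[u_ju_k]=0$.

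\textbf{Diagonal entries.} Next I would show that $\ebb[u_j^2]$ does not depend on $j$. The permutation matrix swapping coordinates $j$ and $k$ is orthogonal, so the same invariance gives $\ebb[u_j^2]=\ebb[u_k^2]$.

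\textbf{Trace identity.} Since $\|\bu\|_2=1$ almost surely,
\[
\sum_{j=1}^d\ebb[u_j^2]=\ebb\big[\|\bu\|_2^2\big]=1 .
\]
Combined with the previous step, each diagonal entry equals $1/d$, which proves the claim.

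\textbf{Potential obstacle.} The only point needing justification is the rotational invariance of $\operatorname{Unif}(\mathbb{S}^{d-1})$. I would justify it in one of two ways. One option is to write $\bu=\mathbf{g}/\|\mathbf{g}\|_2$ with $\mathbf{g}\sim\mathcal{N}(0,\mathbf{I}_d)$ and use the orthogonal invariance of the standard Gaussian. The other is to take invariance of the normalized surface measure as the definition of the uniform distribution. With either justification the argument is elementary.
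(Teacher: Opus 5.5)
Your proof is correct and follows the same route as the paper: rotational invariance of $\operatorname{Unif}(\mathbb{S}^{d-1})$ combined with the trace identity. The only difference is that you apply invariance to coordinate reflections and transpositions. This explicitly proves the step the paper cites without proof, namely that a matrix commuting with all orthogonal matrices is a scalar multiple of the identity.
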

\begin{proof}
By definition, the uniform distribution on the unit sphere is rotationally invariant. That is, for any orthogonal matrix $\mathbf{R} \in \rbb^d$, the random vectors $\mathbf{Ru}$ and $\bu$ have the same distribution.\\
Define  $\mathbf{A} = \mathbb{E}[\bu \bu^{\top}]$ . We first show that $\mathbf{A}=c \mathbf{I}_d$ for some constant $c \in \rbb$.\\
For any orthogonal matrix $\mathbf{R}$, using linearity of expectation and rotational invariance,
\[
   \mathbf{R} \mathbf{A} \mathbf{R}^{\top} = \mathbf{R} \, \mathbb{E}[\bu \bu^{\top}] \, \mathbf{R}^{\top} = \mathbb{E}[\mathbf{R} \bu (\mathbf{R} \bu)^{\top}] = \mathbb{E}[\bu \bu^{\top}] = \mathbf{A}.
\]
This implies that $\mathbf{A}$ commutes with all orthogonal matrices. Since any matrix that commutes with all orthogonal matrices must be a scalar multiple of the identity, we conclude that $\mathbf{A}=c \mathbf{I}_d$ for some constant $c$.\\
Now we determine $c$. Since
   \begin{align*}
   \operatorname{tr}(\mathbf{A}) = \operatorname{tr}\big(\mathbb{E}[\bu \bu^{\top}]\big)
   = \mathbb{E}\big[\operatorname{tr}(\bu \bu^{\top})\big]
   = \mathbb{E}[\bu^{\top} \bu]
   = \mathbb{E}[\|\bu\|_2^2]
   = 1,
   \quad \textbf{and} \quad
   \operatorname{tr}(\mathbf{A}) = \operatorname{tr}(c\mathbf{I}_d) = cd,
   \end{align*}
   we know $c=\frac{1}{d}$. The result follows directly.
\end{proof}

\begin{lemma}
\label{lemma_zeroth_order_sgd}
Let $\mathbf{u}_l \in \mathbb{R}^d, l \in\{1,2 \ldots, K\}$ be drawn independently from $\operatorname{Unif}(\mathbb{S}^{d-1})$. For every random vector $\mathbf{v} \in \mathbb{R}^d$ independent of all $\mathbf{u}_l, l \in\{1,2 \ldots, K\}$, it is true that
$$
\mathbb{E}_{\bu}\Big[\Big\|\frac{1}{K} \sum_{l=1}^K\langle\mathbf{v}, \mathbf{u}_l\rangle \mathbf{u}_l\Big\|_2^2\Big] =  \frac{1}{K^2}\Big(\frac{K}{d}+\frac{K(K-1)}{d^2}\Big)\|\mathbf{v}\|_2^2=\frac{K+d-1}{K d^2}\|\mathbf{v}\|_2^2,
$$
where $\mathbb{E}_{\bu}[\cdot]$ denotes the conditional expectation w.r.t. $\bu_1,\ldots,\bu_K$.
\end{lemma}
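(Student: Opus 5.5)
We expand the squared norm directly and use the independence of the $\bu_l$'s together with Lemma~\ref{sphere_moment}. Since $\mathbf{v}$ is independent of $\bu_1,\ldots,\bu_K$, we condition on $\mathbf{v}$ and treat it as fixed. Writing $X_l=\langle\mathbf{v},\bu_l\rangle\bu_l$, we have
\[
\Big\|\frac{1}{K}\sum_{l=1}^K X_l\Big\|_2^2=\frac{1}{K^2}\sum_{l=1}^K\|X_l\|_2^2+\frac{1}{K^2}\sum_{l\neq l'}\langle X_l,X_{l'}\rangle ,
\]
and we handle the diagonal and off-diagonal terms separately.

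\textbf{Diagonal terms.} Since $\|\bu_l\|_2=1$, we get $\|X_l\|_2^2=\langle\mathbf{v},\bu_l\rangle^2=\mathbf{v}^\top\bu_l\bu_l^\top\mathbf{v}$. By Lemma~\ref{sphere_moment}, its conditional expectation is $\mathbf{v}^\top(\frac1d\mathbf{I}_d)\mathbf{v}=\|\mathbf{v}\|_2^2/d$. Summing over $l$ gives a contribution of $\frac{K}{d}\|\mathbf{v}\|_2^2$ inside the parentheses.

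\textbf{Off-diagonal terms.} For $l\neq l'$, the vectors $\bu_l$ and $\bu_{l'}$ are independent, so $\ebb_{\bu}[X_l]=\ebb[\bu_l\bu_l^\top]\mathbf{v}=\mathbf{v}/d$ by Lemma~\ref{sphere_moment}. Hence
\[
\ebb_{\bu}[\langle X_l,X_{l'}\rangle]=\langle\ebb_{\bu}[X_l],\ebb_{\bu}[X_{l'}]\rangle=\|\mathbf{v}\|_2^2/d^2 .
\]
There are $K(K-1)$ ordered pairs, contributing $\frac{K(K-1)}{d^2}\|\mathbf{v}\|_2^2$.

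Adding the two contributions and multiplying by $1/K^2$ gives the first equality. The second equality is the algebraic simplification
\[
\frac{1}{K^2}\Big(\frac{Kd+K^2-K}{d^2}\Big)=\frac{K+d-1}{Kd^2}.
\]
The only delicate point is justifying the factorization of the cross-term expectation. This needs the $\bu_l$'s to be mutually independent and jointly independent of $\mathbf{v}$, so that conditioning on $\mathbf{v}$ leaves them i.i.d.\ uniform on $\mathbb{S}^{d-1}$. Everything else is a direct application of Lemma~\ref{sphere_moment}.
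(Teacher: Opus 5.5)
Your proposal is correct and follows essentially the same route as the paper. Both arguments expand the squared norm into diagonal and cross terms, evaluate the diagonal terms via $\mathbb{E}[\bu\bu^\top]=\frac{1}{d}\mathbf{I}_d$ together with $\|\bu_l\|_2=1$, and factor the cross terms by independence using $\mathbb{E}[\langle\mathbf{v},\bu\rangle\bu]=\mathbf{v}/d$; your explicit conditioning on $\mathbf{v}$ and use of $\|\bu_l\|_2=1$ just make precise steps the paper leaves implicit.
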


\begin{proof}
For any fixed $\mathbf{v} \in \mathbb{R}^d$, we have
\begin{align}\label{zeroth-1-unif}
&\Big\|\frac{1}{K} \sum_{l=1}^K\langle \mathbf{v}, \mathbf{u}_l\rangle \mathbf{u}_l\Big\|_2^2
=  \frac{1}{K^2}\Big[\sum_{l=1}^K \mathbb{E}[(\langle\mathbf{v}, \mathbf{u}_l\rangle)^2]+\sum_{l \neq m} \mathbb{E}[\langle\mathbf{v}, \mathbf{u}_l\rangle\langle\mathbf{u}_l, \mathbf{u}_m\rangle\langle\mathbf{v}, \mathbf{u}_m\rangle]\Big].
\end{align}
For the first term in Eq.~\eqref{zeroth-1-unif}, we have
\begin{align*}
\mathbb{E}[(\langle\mathbf{v}, \mathbf{u}_l\rangle)^2]=\mathbf{v}^{\top} \mathbb{E}[\mathbf{u}_l \mathbf{u}_l^{\top}] \mathbf{v}=\frac{1}{d}\|\mathbf{v}\|_2^2,
\end{align*}
where we use the fact that $\mathbb{E}[\mathbf{u}_l \mathbf{u}_l^{\top}]=\frac{1}{d} \mathbf{I}_d$ from Lemma~\ref{sphere_moment}.\\
For the second term in Eq.~\eqref{zeroth-1-unif}, due to the independency of $\bu_l$ and $\bu_m$, we know
\begin{align*}
\mathbb{E}[\langle\mathbf{v}, \mathbf{u}_l\rangle\langle\mathbf{u}_l, \mathbf{u}_m\rangle\langle\mathbf{v}, \mathbf{u}_m\rangle]=\mathbb{E}[(\langle\mathbf{v}, \mathbf{u}_l\rangle \mathbf{u}_l)^{\top}] \cdot \mathbb{E}[\langle\mathbf{v}, \mathbf{u}_m\rangle \mathbf{u}_m]
=(\frac{1}{d} \mathbf{v})^{\top} \cdot(\frac{1}{d} \mathbf{v})=\frac{1}{d^2}\|\mathbf{v}\|_2^2,
\end{align*}
where we use the fact that $\mathbb{E}[(\langle\mathbf{v}, \mathbf{u}\rangle \mathbf{u})]=\frac{1}{d} \mathbf{v}$ for any $\bu$ from $\operatorname{Unif}(\mathbb{S}^{d-1})$.
We plug the above two identities back into Eq.~\eqref{zeroth-1-unif}, and derive
\begin{align*}
& \mathbb{E}_{\bu}\Big[\Big\|\frac{1}{K} \sum_{l=1}^K\langle\mathbf{v}, \mathbf{u}_l\rangle \mathbf{u}_l\Big\|_2^2\Big]
=\frac{1}{K^2}\Big(\frac{K}{d}+\frac{K(K-1)}{d^2}\Big)\|\mathbf{v}\|_2^2=\frac{K+d-1}{K d^2}\|\mathbf{v}\|_2^2.
\end{align*}
The proof is completed.
\end{proof}

The following lemma is a direct application of Lemma~\ref{lemma_zeroth_order_sgd}, which is useful for our main results.

\begin{lemma}
\label{lemma_zeroth_order_sgda}
Let $\mathbf{u}_1, \mathbf{u}_2 \in \mathbb{R}^{d}$ be drawn independently from $\operatorname{Unif}(\mathbb{S}^{d-1})$. For every random vector $\mathbf{v}_1 \in \mathbb{R}^{d}$ independent of all $\mathbf{u}_1$ and $\mathbf{v}_2 \in \mathbb{R}^{d}$ independent of all $\mathbf{u}_2$, it is true that
$$
\ebb_{\bu}\bigg[\bigg\|\begin{pmatrix}
  \bv_1^\top \bu_1\bu_1\\
 \bv_2^{\top}\bu_2\bu_2
  \end{pmatrix}\bigg\|_2^2\Bigg]
  = \frac{1}{d} \bigg\|\begin{pmatrix}
  \bv_1\\
 \bv_2
  \end{pmatrix}\bigg\|_2^2,
$$
where $\mathbb{E}_{\bu}[\cdot]$ denotes the conditional expectation w.r.t. $\bu_1,\bu_2$.
\end{lemma}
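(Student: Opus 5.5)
The identity splits into a $\bw$-block and a $\bv$-block, and each block reduces to Lemma~\ref{sphere_moment}. First I expand the squared Euclidean norm of the stacked vector as the sum of the squared norms of its two blocks:
\[
\bigg\|\begin{pmatrix}
  \bv_1^\top \bu_1\bu_1\\
 \bv_2^{\top}\bu_2\bu_2
  \end{pmatrix}\bigg\|_2^2=\|(\bv_1^\top\bu_1)\bu_1\|_2^2+\|(\bv_2^\top\bu_2)\bu_2\|_2^2 .
\]
By linearity of $\ebb_{\bu}$, it then suffices to evaluate each term separately. Because $\bv_j$ is independent of $\bu_j$, the conditional expectation w.r.t.\ $\bu_1,\bu_2$ treats $\bv_j$ as fixed.

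For a single block, I use $\|\bu_j\|_2=1$, since $\bu_j$ lies on the unit sphere. This gives
\[
\|(\bv_j^\top\bu_j)\bu_j\|_2^2=(\bv_j^\top\bu_j)^2\|\bu_j\|_2^2=\bv_j^\top\bu_j\bu_j^\top\bv_j .
\]
Taking $\ebb_{\bu}$ and applying Lemma~\ref{sphere_moment}, which states $\ebb[\bu_j\bu_j^\top]=\frac1d\mathbf{I}_d$, I obtain $\frac1d\|\bv_j\|_2^2$. This is the $K=1$ case of Lemma~\ref{lemma_zeroth_order_sgd}, because $\frac{1+d-1}{d^2}=\frac1d$, so one could equally cite that lemma directly.

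Summing the two blocks yields $\frac1d(\|\bv_1\|_2^2+\|\bv_2\|_2^2)$, which equals $\frac1d$ times the squared norm of the stacked vector $(\bv_1;\bv_2)$.

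The only step needing care is justifying that $\bv_j$ can be pulled outside the conditional expectation. Independence of $\bv_j$ from $\bu_j$ is exactly what this requires. The mutual independence of $\bu_1$ and $\bu_2$ is not needed, because no cross terms arise: the two blocks occupy orthogonal coordinates.
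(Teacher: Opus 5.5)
Your proposal is correct and follows essentially the same route as the paper. The paper splits the squared norm into the two blocks and applies Lemma~\ref{lemma_zeroth_order_sgd} with $K=1$ to each. You instead evaluate each block directly from $\|\bu_j\|_2=1$ and Lemma~\ref{sphere_moment}, which is exactly the computation inside that $K=1$ case.
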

\begin{proof}
By Lemma \ref{lemma_zeroth_order_sgd} with $K=1$, we derive
\begin{align*}
\ebb_{\bu}\bigg[\bigg\|\begin{pmatrix}
  \bv_1^\top \bu_1\bu_1\\
 \bv_2^{\top}\bu_2\bu_2
  \end{pmatrix}\bigg\|_2^2\bigg]
& = \ebb_{\bu_1}\big[\|
  \bv_1^\top \bu_1\bu_1\|_2^2\big] + \ebb_{\bu_2}\big[\|
  \bv_2^\top \bu_2\bu_2\|_2^2\big] = \frac{1}{d}\|\bv_1\|_2^2 + \frac{1}{d}\|\bv_2\|_2^2.
\end{align*}
The stated bound follows directly.
\end{proof}

We now prove an important lemma which shows that $\tilde{f}_1,\tilde{f}_2$ preserve the smoothness, bounded second moment condition, convexity, and concavity of $f$, and that $g$ is an unbiased estimator.

\begin{proof}[Proof of Lemma~\ref{lem:unbiase-zeroth}]
\textbf{1. }Since for any $\bz$, $\bw\mapsto f(\bw,\bv;\bz)$ is convex, from Lemma 4.1 in \citet{gao2018information}, we know
$\bw \mapsto \tilde{f}_1(\bw,\bv;\bz)$ is convex. Since for any $\bz$, $\bv\mapsto f(\bw,\bv;\bz)$ is concave, we know $\bv \mapsto \tilde{f}_1(\bw,\bv;\bz)$ is concave. Then, $(\bw,\bv) \mapsto \tilde{f}_1(\bw,\bv;\bz)$ is convex-concave. Similarly, $(\bw,\bv) \mapsto \tilde{f}_2(\bw,\bv;\bz)$ is convex-concave.\\
Recall the surrogate functions
\begin{align*}
&\tilde{f}_1(\bw,\bv;\bz)=\ebb_{\bu \sim \operatorname{Unif}(\mathbb{B}^{d})}\big[f(\bw+\mu_1\bu,\bv;\bz)\big] = \frac{1}{\kappa} \int_{\mathbb{B}^d} f(\bw+\mu_1\bu,\bv;\bz) d\bu,\\
&\tilde{f}_2(\bw,\bv;\bz)=\ebb_{\bu \sim \operatorname{Unif}(\mathbb{B}^{d})}\big[f(\bw,\bv+\mu_2\bu;\bz)\big]= \frac{1}{\kappa} \int_{\mathbb{B}^d} f(\bw,\bv+\mu_2\bu;\bz) d\bu,
\end{align*}
where $\kappa$ is the volume of the ball $\mathbb{B}^d$,
we know
\begin{align*}
\nabla_{\bw} \tilde{f}_1 (\bw,\bv;\bz) = \frac{1}{\kappa} \int_{\mathbb{B}^d} \nabla_{\bw} f(\bw+\mu_1\bu,\bv;\bz) d\bu,\quad \nabla_{\bw} \tilde{f}_2 (\bw,\bv;\bz) = \frac{1}{\kappa} \int_{\mathbb{B}^d} \nabla_{\bw} f(\bw,\bv+\mu_2 \bu;\bz) d\bu,\\
\nabla_{\bv} \tilde{f}_1 (\bw,\bv;\bz) = \frac{1}{\kappa} \int_{\mathbb{B}^d} \nabla_{\bv} f(\bw+\mu_1 \bu,\bv;\bz) d\bu, \quad \nabla_{\bv} \tilde{f}_2 (\bw,\bv;\bz) = \frac{1}{\kappa} \int_{\mathbb{B}^d} \nabla_{\bv} f(\bw,\bv+\mu_2 \bu;\bz) d\bu.
\end{align*}
Then for any $\bw,\bw' \in \wcal$ and $\bv,\bv' \in \vcal$,
\begin{align*}
&\Bigg\|\begin{pmatrix}
           \nabla_{\bw} \tilde{f}_1 (\bw,\bv;\bz)-\nabla_{\bw} \tilde{f}_1 (\bw',\bv';\bz) \\
           \nabla_{\bv} \tilde{f}_1 (\bw,\bv;\bz)-\nabla_{\bv} \tilde{f}_1 (\bw',\bv';\bz)
         \end{pmatrix}\Bigg\|_2\\
& = \Bigg\|\begin{pmatrix}
           \frac{1}{\kappa}\int_{\mathbb{B}^d} (\nabla_{\bw}f(\bw+\mu_1 \bu,\bv;\bz)-\nabla_{\bw}f(\bw'+\mu_1 \bu,\bv';\bz))d\bu \\
           \frac{1}{\kappa}\int_{\mathbb{B}^d} (\nabla_{\bv}f(\bw+\mu_1 \bu,\bv;\bz)-\nabla_{\bv}f(\bw'+\mu_1 \bu,\bv';\bz)) d\bu
         \end{pmatrix}\Bigg\|_2\\
& = \Bigg\|\frac{1}{\kappa}\int_{\mathbb{B}^d} \begin{pmatrix}
           \nabla_{\bw}f(\bw+\mu_1 \bu,\bv;\bz)-\nabla_{\bw}f(\bw'+\mu_1 \bu,\bv';\bz) \\
          \nabla_{\bv}f(\bw+\mu_1 \bu,\bv;\bz)-\nabla_{\bv}f(\bw'+\mu_1 \bu,\bv';\bz)
         \end{pmatrix}d\bu \Bigg\|_2\\
& \leq \frac{1}{\kappa}\int_{\mathbb{B}^d} \bigg\|\begin{pmatrix}
           \nabla_{\bw}f(\bw+\mu_1 \bu,\bv;\bz)-\nabla_{\bw}f(\bw'+\mu_1 \bu,\bv';\bz) \\
          \nabla_{\bv}f(\bw+\mu_1 \bu,\bv;\bz)-\nabla_{\bv}f(\bw'+\mu_1 \bu,\bv';\bz)
         \end{pmatrix} \bigg\|_2 d\bu\\
& \!\leq\!
         L\bigg\|\begin{pmatrix}
        \bw\!-\!\bw' \\
        \bv\!-\!\bv'
\end{pmatrix}\bigg\|_2,
\end{align*}
where we use Assumption~\ref{ass:smooth} for the last step.
In this way, we show that $(\bw,\bv) \mapsto \tilde{f}_1(\bw,\bv;\bz)$ is $L$-smooth, and similarly, $(\bw,\bv) \mapsto \tilde{f}_2(\bw,\bv;\bz)$ is $L$-smooth.\\
We know
\begin{align*}
\nabla_{\mathbf{w}} \tilde{f}_1(\mathbf{w}, \mathbf{v} ; S_r)=\nabla_{\mathbf{w}} \mathbb{E}_{\mathbf{u}}[f(\mathbf{w}+\mu_1 \mathbf{u}, \mathbf{v} ; S_r)]=\mathbb{E}_{\mathbf{u}}[\nabla_{\mathbf{w}} f(\mathbf{w}+\mu_1 \mathbf{u}, \mathbf{v} ; S_r)],
\end{align*}
where $\bu \sim \operatorname{Unif}(\mathbb{S}^{d-1})$. Then, applying Jensen's inequality, we have
\begin{align*}
\|\nabla_{\mathbf{w}} \tilde{f}_1(\mathbf{w}, \mathbf{v} ; S_r)\|_2^2
= \|\mathbb{E}_{\mathbf{u}}[\nabla_{\mathbf{w}} f(\mathbf{w}+\mu_1 \mathbf{u}, \mathbf{v} ; S_r)]\|_2^2
\leq \mathbb{E}_{\mathbf{u}}[\|\nabla_{\mathbf{w}} f(\mathbf{w}+\mu_1 \mathbf{u}, \mathbf{v} ; S_r)\|_2^2].
\end{align*}
Taking expectation w.r.t $r$ to both sides above,
\begin{align*}
\ebb_r [\|\nabla_{\mathbf{w}} \tilde{f}_1(\mathbf{w}, \mathbf{v} ; S_r)\|_2^2 ]
&\leq \mathbb{E}_{\mathbf{u},r}[\|\nabla_{\mathbf{w}} f(\mathbf{w}+\mu_1 \mathbf{u}, \mathbf{v} ; S_r)\|_2^2]\\
& = \mathbb{E}_{\mathbf{u}}\big[\ebb_r[\|\nabla_{\mathbf{w}} f(\mathbf{w}+\mu_1 \mathbf{u}, \mathbf{v} ; S_r)\|_2^2]\big]
\leq G_S^2,
\end{align*}
where we use Assumption~\ref{ass:moment} for the last step. The results hold similarly for $\ebb_r[\|\nabla_{\bv} \tilde{f}_2(\bw,\bv;S_r)\|_2^2]$.\\
\textbf{2. }Follows directly from Lemma 4.4 in~\citet{shalev2012online}.
\end{proof}

We now present the proof of Lemma~\ref{lem:tilde-difference} and Lemma~\ref{lem:condition-a-zero}, which show that Eq.~\eqref{simple-condition-a} and Eq.~\eqref{simple-condition-d} hold for Zeroth-order SGDA.
To prove Lemma~\ref{lem:tilde-difference}, we require the following lemma to show that the Hessian matrices can be bounded.
\begin{lemma}\label{lem:hessian}
Let Assumption \ref{ass:smooth} hold. For any matrix $A$, the spectral norm is
$$
\|\mathbf{A}\|_2=\sup _{\mathbf{x} \in \mathbb{R}^n, \|\mathbf{x}\|_2=1} \|\mathbf{A} \mathbf{x}\|_2.
$$
Then for all $\bw\in\wcal,\bv\in\vcal$ and $z\in\zcal$,
\[
\|\nabla_{\bw}^2 f(\bw,\bv;z)\|_2 \leq L,
\quad
\|\nabla_{\bv}^2 f(\bw,\bv;z)\|_2 \leq L.
\]
\end{lemma}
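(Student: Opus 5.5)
The plan is to derive the Hessian bounds directly from the $L$-Lipschitz continuity of the joint gradient map in Assumption~\ref{ass:smooth}, assuming (as the statement implicitly does) that $f(\cdot,\cdot;z)$ is twice differentiable so that $\nabla_{\bw}^2 f$ and $\nabla_{\bv}^2 f$ exist. I treat the $\bw$-block; the $\bv$-block is symmetric.

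First, I restrict Assumption~\ref{ass:smooth} to perturbations in $\bw$ only. Fix $\bv$ and $z$, and take $\bw'=\bw+s\mathbf{x}$ and $\bv'=\bv$. The right-hand side of Assumption~\ref{ass:smooth} is then $L s\|\mathbf{x}\|_2$. The left-hand side is the norm of the stacked vector, which dominates the norm of its first block. Hence
\[
\|\nabla_{\bw}f(\bw+s\mathbf{x},\bv;z)-\nabla_{\bw}f(\bw,\bv;z)\|_2\leq L s\|\mathbf{x}\|_2 .
\]
This says that $\bw\mapsto\nabla_{\bw}f(\bw,\bv;z)$ is $L$-Lipschitz.

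Second, I pass from this Lipschitz bound to the Hessian. Fix a unit vector $\mathbf{x}$, divide the inequality by $s>0$, and let $s\to0^+$. By definition of the derivative of $\nabla_{\bw}f$ in the direction $\mathbf{x}$, the difference quotient converges to $\nabla_{\bw}^2 f(\bw,\bv;z)\mathbf{x}$. Continuity of the norm then gives $\|\nabla_{\bw}^2 f(\bw,\bv;z)\mathbf{x}\|_2\leq L$. Taking the supremum over unit $\mathbf{x}$ and using the stated definition of the spectral norm yields $\|\nabla_{\bw}^2 f(\bw,\bv;z)\|_2\leq L$.

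The same two steps with $\bw'=\bw$ and $\bv'=\bv+s\mathbf{x}$, now keeping the second block of the stacked vector, give $\|\nabla_{\bv}^2 f(\bw,\bv;z)\|_2\leq L$.

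The only delicate point is the limit in the second step, which needs differentiability of the gradient. If $\wcal$ is not open, the point $\bw+s\mathbf{x}$ must remain feasible. I handle this by taking one-sided directions into the domain, or by noting that $f$ is defined on a neighbourhood, which is implicit whenever second derivatives are used. Everything else is immediate.
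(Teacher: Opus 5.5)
Your argument is correct and is essentially the paper's. The paper takes $\|\nabla^2 f(\bw,\bv;z)\|_2\le L$ for the joint Hessian as a consequence of Assumption~\ref{ass:smooth} and tests it on $(\mathbf{h}_{\bw}^\top,0,\ldots,0)^\top$, dropping the cross-derivative block; you restrict the perturbation to $(s\mathbf{x},0)$ and drop the $\bv$-gradient block before passing to the limit, so the same two steps appear in the opposite order, and your version usefully spells out the difference-quotient limit and the domain caveat that the paper leaves implicit.
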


\begin{proof}
From Assumption \ref{ass:smooth}, we know $\|\nabla^2 f(\bw,\bv ; z)\|_2 \leq L$, where
\begin{align}\label{hessian}
\nabla^2 f(\bw,\bv ; z)=\begin{pmatrix}
\nabla_{\bw}^2 f(\bw,\bv ; z) & \nabla_{\mathbf{w}} \nabla_{\bv} f(\bw,\bv ; z) \\
\nabla_{\bv} \nabla_{\bw} f(\bw,\bv ; z) & \nabla_{\bv}^2 f(\bw,\bv ; z)
\end{pmatrix}.
\end{align}
Then, for any $\mathbf{h}_{\bw} \in \rbb^d$ such that $\|\mathbf{h}_{\bw}\|_2=1$, we construct $\mathbf{h}=(\mathbf{h}_{\bw}^{\top},0,\ldots,0)^{\top} \in \rbb^{2d}$. According to the definition of the matrix norm and Eq.~\eqref{hessian}, we know
\begin{align*}
L &\geq \|\nabla^2 f(\bw,\bv ; z) \cdot \mathbf{h}\|_2=\sqrt{\|\nabla_{\mathbf{w}}^2 f(\bw,\bv ; z) \cdot \mathbf{h}_{\bw}\|_2^2+\|\nabla_{\mathbf{v}, \mathbf{w}}^2 f(\bw,\bv ; z) \cdot \mathbf{h}_{\bw}\|_2^2}\\
& \geq\|\nabla_{\mathbf{w}}^2 f(\bw,\bv ; z)\cdot \mathbf{h}_{\bw}\|_2
\end{align*}
Therefore, $\|\nabla_{\mathbf{w}}^2 f(\bw,\bv ; z)\|_2 \leq L$. Similarly, $\|\nabla_{\mathbf{v}}^2 f(\bw,\bv ; z)\|_2 \leq L$.
\end{proof}

\begin{lemma}\label{lem:tilde-difference}
Let Assumption \ref{ass:moment} hold, $\tilde{f}_1,\tilde{f}_2$ be given in Eq.~\eqref{tilde-f-zero}. Then, Eq.~\eqref{simple-condition-d} holds with $\widetilde{G}^2 = 8\ebb[G_S^2]$.
\end{lemma}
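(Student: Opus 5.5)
The plan is to reproduce the argument of Lemma~\ref{lem:gradient-norm} with $f$ replaced by the surrogates $\tilde{f}_1,\tilde{f}_2$, using only the second-moment property that Lemma~\ref{lem:unbiase-zeroth} transfers to the surrogates. First, expand the squared norm of the stacked vector in Eq.~\eqref{simple-condition-d} into its $\bw$-block and $\bv$-block. Then apply $\|a-b\|_2^2\le 2\|a\|_2^2+2\|b\|_2^2$ to each block, which gives the pointwise bound
\begin{align*}
2\big(\|\nabla_{\bw}\tilde f_1(\bw_t,\bv_t;z_i)\|_2^2+\|\nabla_{\bw}\tilde f_1(\bw_t^{(i)},\bv_t^{(i)};z_i')\|_2^2+\|\nabla_{\bv}\tilde f_2(\bw_t,\bv_t;z_i)\|_2^2+\|\nabla_{\bv}\tilde f_2(\bw_t^{(i)},\bv_t^{(i)};z_i')\|_2^2\big).
\end{align*}

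Next, take expectations and use the exchangeability argument from the proof of Lemma~\ref{lem:gradient-norm}. Swapping $z_i$ and $z_i'$ maps $S$ to $S^{(i)}$ and the trajectory $(\bw_t,\bv_t)$ to $(\bw_t^{(i)},\bv_t^{(i)})$. The directions $\bu_{1,t,l},\bu_{2,t,l}$ and the indices $i_t$ are shared by both runs and are independent of the data, so the two trajectories have the same joint law under this swap. Hence
\[
\ebb\big[\|\nabla_{\bw}\tilde f_1(\bw_t^{(i)},\bv_t^{(i)};z_i')\|_2^2\big]=\ebb\big[\|\nabla_{\bw}\tilde f_1(\bw_t,\bv_t;z_i)\|_2^2\big],
\]
and the analogous identity holds for $\tilde f_2$. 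Averaging over $i\in[n]$ then bounds the left side of Eq.~\eqref{simple-condition-d} by
\[
4\,\ebb\big[\ebb_r\|\nabla_{\bw}\tilde f_1(\bw_t,\bv_t;z_r)\|_2^2\big]+4\,\ebb\big[\ebb_r\|\nabla_{\bv}\tilde f_2(\bw_t,\bv_t;z_r)\|_2^2\big].
\]
Here $r$ is uniform on $[n]$; conditional on $S$ and the algorithmic randomness, the point $(\bw_t,\bv_t)$ is fixed, so the inner average over $r$ is legitimate.

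Finally, apply the first bullet of Lemma~\ref{lem:unbiase-zeroth}, whose proof establishes $\ebb_r[\|\nabla_{\bw}\tilde f_1(\bw,\bv;z_r)\|_2^2]\le G_S^2$ and $\ebb_r[\|\nabla_{\bv}\tilde f_2(\bw,\bv;z_r)\|_2^2]\le G_S^2$ uniformly in $(\bw,\bv)$, via Jensen's inequality and Assumption~\ref{ass:moment}. Evaluating at $(\bw,\bv)=(\bw_t,\bv_t)$ and taking expectations gives the bound $8\ebb[G_S^2]$, so Eq.~\eqref{simple-condition-d} holds with $\widetilde G^2=8\ebb[G_S^2]$.

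The main obstacle is the symmetry step. One has to make sure the exchange $z_i\leftrightarrow z_i'$ truly preserves the joint law of the trajectory together with the evaluated sample, given the extra zeroth-order randomness. I would handle this by conditioning on all algorithmic randomness ($i_t$ and the $\bu$'s), which is independent of $(S,S')$, and noting that $(z_i,z_i')$ are i.i.d. A smaller point is that the bound from Lemma~\ref{lem:unbiase-zeroth} is uniform over $(\bw,\bv)$, so substituting the random iterate causes no issue.
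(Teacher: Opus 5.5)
Your proposal is correct and follows the paper's proof essentially step for step. You split the norm into its $\bw$- and $\bv$-blocks, apply $\|a-b\|_2^2\le 2\|a\|_2^2+2\|b\|_2^2$, use the $z_i\leftrightarrow z_i'$ symmetry to replace the $S^{(i)}$-terms by $S$-terms, and conclude with the second-moment bound for the surrogates from Lemma~\ref{lem:unbiase-zeroth}. Your explicit justification of the symmetry step, by conditioning on the data-independent algorithmic randomness, is a detail the paper leaves implicit.
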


\begin{proof}[Proof of Lemma~\ref{lem:tilde-difference}]
By the inequality that $(a+b)^2 \leq 2a^2 + 2b^2$, we know
\begin{align*}
& \Bigg\|\begin{pmatrix}
  \nabla_{\bw} \tilde{f}(\bw_t^{(i)},\bv_t^{(i)};z_i') - \nabla_{\bw} \tilde{f}(\bw_t,\bv_t;z_i)\\
  \nabla_{\bv} \tilde{f}(\bw_t,\bv_t;z_i) - \nabla_{\bv} \tilde{f}(\bw_t^{(i)},\bv_t^{(i)};z_i')
\end{pmatrix}\Bigg\|_2^2\\
& = \big\|\nabla_{\bw} \tilde{f}(\bw_t^{(i)},\bv_t^{(i)};z_i') - \nabla_{\bw} \tilde{f}(\bw_t,\bv_t;z_i)\big\|_2^2
+ \big\|\nabla_{\bv} \tilde{f}(\bw_t,\bv_t;z_i) - \nabla_{\bv} \tilde{f}(\bw_t^{(i)},\bv_t^{(i)};z_i')\big\|_2^2\\
& \leq 2\Big(\big\|\nabla_{\bw} \tilde{f}(\bw_t^{(i)},\bv_t^{(i)};z_i')\big\|_2^2 +\big\|\nabla_{\bw} \tilde{f}(\bw_t,\bv_t;z_i)\big\|_2^2
+ \big\|\nabla_{\bv} \tilde{f}(\bw_t,\bv_t;z_i) \big\|_2^2
+\big\|\nabla_{\bv} \tilde{f}(\bw_t^{(i)},\bv_t^{(i)};z_i')\big\|_2^2\Big).
\end{align*}
Taking the expectation of the above inequality, then taking an average of $i \in [n]$, we get ($r$ follows the uniform distribution over $[n]$)
\begin{align*}
& \ebb\Bigg[\frac{1}{n}\sum_{i=1}^n\Bigg\|\begin{pmatrix}
  \nabla_{\bw} \tilde{f}(\bw_t^{(i)},\bv_t^{(i)};z_i') - \nabla_{\bw} \tilde{f}(\bw_t,\bv_t;z_i)\\
  \nabla_{\bv} \tilde{f}(\bw_t,\bv_t;z_i) - \nabla_{\bv} \tilde{f}(\bw_t^{(i)},\bv_t^{(i)};z_i')
  \end{pmatrix}\Bigg\|_2^2\Bigg]\\
& \leq 4\ebb\bigg[\frac{1}{n}\sum_{i=1}^n \big\|\nabla_{\bw} \tilde{f}(\bw_t,\bv_t;z_i)\big\|_2^2 \bigg] + 4\ebb\bigg[\frac{1}{n}\sum_{i=1}^n \big\|\nabla_{\bv} \tilde{f}(\bw_t,\bv_t;z_i)\big\|_2^2 \bigg]\\
& = 4\ebb\big[\|\nabla_{\bw} \tilde{f}(\bw_t,\bv_t;z_r)\|_2^2 \big] + 4\ebb\big[\|\nabla_{\bv} \tilde{f}(\bw_t,\bv_t;z_r)\|_2^2 \big] \leq 8\ebb[G_S^2],
\end{align*}
where in the first step we have used the fact $\ebb[\|\nabla_{\bw} \tilde{f}(\bw_t^{(i)},\bv_t^{(i)};z_i')\|_2^2] = \ebb[\|\nabla_{\bw} \tilde{f}(\bw_t,\bv_t;z_i)\|_2^2],\ebb[\|\nabla_{\bv} \tilde{f}(\bw_t^{(i)},\bv_t^{(i)};z_i')\|_2^2] = \ebb[\|\nabla_{\bv} \tilde{f}(\bw_t,\bv_t;z_i)\|_2^2]$, and in the last step we have used Lemma~\ref{lem:unbiase-zeroth}.
\end{proof}

To prove Lemma \ref{lem:condition-a-zero}, we need the following two lemmas. 


\begin{lemma}[Spherical Poincaré inequality; see, e.g.,
\cite{dolbeault2013sharp,ledoux2001concentration}]
\label{lem:hu}
Let $d\geq 2$ and
$\bu\sim\operatorname{Unif}(\mathbb{S}^{d-1})$.
For any locally Lipschitz function
$h:\mathbb{S}^{d-1}\to\mathbb{R}$, we have
\[
\mathbb{E}_{\bu}\![
  (h(\bu)-\mathbb{E}_{\bu}[h(\bu)])^2
]
\leq
\frac{1}{d-1}
\mathbb{E}_{\bu}\![
  \|\nabla_{\mathbb{S}}h(\bu)\|_2^2
],
\]
where $\nabla_{\mathbb{S}}h(\bu)$ denotes the tangential gradient of
$h$ at $\bu$. More precisely, if $\widetilde h$ is a differentiable
extension of $h$ to a neighborhood of $\mathbb{S}^{d-1}$, denoting $\mathbf{I}_d \in \rbb^{d \times d}$ as the identity matrix, then
$
\nabla_{\mathbb{S}}h(\bu)
=
(\mathbf{I}_d-\bu\bu^\top)\nabla_{\bu}\widetilde h(\bu)
$
(see, e.g., Section 4.6.1 in~\cite{absil2008optimization}).
\end{lemma}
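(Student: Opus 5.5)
The statement is the classical spectral-gap (Poincaré) inequality on the sphere, and I would prove it through the spectral decomposition of the Laplace--Beltrami operator $\Delta_{\mathbb{S}}$ on $\mathbb{S}^{d-1}$.

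\textbf{Step 1: identify the tangential gradient.} First I would check that $\nabla_{\mathbb{S}}h(\bu)=(\mathbf{I}_d-\bu\bu^\top)\nabla_{\bu}\widetilde h(\bu)$ does not depend on the chosen extension $\widetilde h$. Two extensions agree on the sphere, so their gradients can differ only in the normal direction $\bu$. The projection $\mathbf{I}_d-\bu\bu^\top$ removes exactly that direction.

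\textbf{Step 2: reduce to smooth $h$.} Since $\mathbb{S}^{d-1}$ is compact, a locally Lipschitz $h$ is Lipschitz. By Rademacher's theorem on charts, $\nabla_{\mathbb{S}}h$ exists almost everywhere and is bounded. Mollifying in charts (or convolving on the rotation group $SO(d)$), I obtain smooth $h_m$ with $h_m\to h$ uniformly and $\nabla_{\mathbb{S}}h_m\to\nabla_{\mathbb{S}}h$ in $L^2$. Both sides of the inequality pass to the limit, so it suffices to treat smooth $h$.

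\textbf{Step 3: integrate by parts.} For smooth $h$, the divergence theorem on the closed manifold $\mathbb{S}^{d-1}$ gives
\[
\mathbb{E}_{\bu}\big[\|\nabla_{\mathbb{S}}h(\bu)\|_2^2\big]=\mathbb{E}_{\bu}\big[h(\bu)\,(-\Delta_{\mathbb{S}}h)(\bu)\big].
\]

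\textbf{Step 4: expand in spherical harmonics.} I would use the standard fact that
\[
L^2(\mathbb{S}^{d-1})=\bigoplus_{k\geq0}\mathcal{H}_k ,
\]
an orthogonal decomposition where $\mathcal{H}_k$ consists of restrictions of harmonic homogeneous polynomials of degree $k$. Each $\mathcal{H}_k$ is an eigenspace: $-\Delta_{\mathbb{S}}Y=k(k+d-2)Y$ for $Y\in\mathcal{H}_k$. This eigenvalue follows from writing the Euclidean Laplacian in polar form,
\[
\Delta=\partial_r^2+\tfrac{d-1}{r}\partial_r+r^{-2}\Delta_{\mathbb{S}},
\]
and applying it to $r^kY$, which is harmonic.

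\textbf{Step 5: conclude.} Write $h=\sum_k h_k$ with $h_k\in\mathcal{H}_k$. The constant component is $h_0=\mathbb{E}_{\bu}[h(\bu)]$, so
\[
\mathbb{E}_{\bu}\big[(h(\bu)-\mathbb{E}_{\bu}[h(\bu)])^2\big]=\sum_{k\geq1}\|h_k\|^2 .
\]
On the other side,
\[
\mathbb{E}_{\bu}\big[h(\bu)(-\Delta_{\mathbb{S}}h)(\bu)\big]=\sum_{k\geq1}k(k+d-2)\|h_k\|^2\geq(d-1)\sum_{k\geq1}\|h_k\|^2 ,
\]
because $k(k+d-2)\geq d-1$ for all $k\geq1$. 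Dividing by $d-1$ gives the claim. The assumption $d\geq2$ is needed so that $d-1>0$ and the sphere is a connected manifold of positive dimension.

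\textbf{Main obstacle.} The delicate ingredients are the completeness of spherical harmonics in $L^2$ and the eigenvalue formula. I would cite these as standard (e.g., via Stone--Weierstrass applied to polynomials together with the harmonic decomposition of homogeneous polynomials) rather than reprove them.

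As a cross-check, Lichnerowicz's theorem gives the same constant: $\mathbb{S}^{d-1}$ has Ricci curvature $d-2$, which yields the spectral-gap bound
\[
\lambda_1\geq\tfrac{d-1}{d-2}(d-2)=d-1
\]
for $d\geq3$. The case $d=2$ is the circle, where Wirtinger's inequality gives $\lambda_1\ge 1$ directly via Fourier series.
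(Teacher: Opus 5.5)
Your proof is correct. The paper itself does not prove this lemma: it quotes it as a known result, citing Dolbeault et al.\ and Ledoux, and Absil et al.\ for the Riemannian-gradient formula. So you are giving a self-contained derivation where the paper relies on citation.

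Your spectral route (integration by parts, the orthogonal decomposition into spherical harmonics, and the eigenvalues $k(k+d-2)$ read off from the polar form of the Laplacian) is the most elementary one. It also gives slightly more than stated: the constant $1/(d-1)$ is sharp, with equality exactly when $h(\bu)=c+\langle\mathbf{a},\bu\rangle$, i.e.\ when $h$ lies in the $k\le 1$ eigenspaces. Its only heavy input is completeness of spherical harmonics in $L^2(\mathbb{S}^{d-1})$, which you rightly cite rather than reprove.

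The curvature route you mention as a cross-check is common in the concentration-of-measure literature. It uses Bochner's formula, i.e.\ Lichnerowicz or Bakry--\'Emery with $\mathrm{Ric}=d-2$. It avoids harmonic analysis but gives nothing on the circle, which is why you need the separate Fourier argument for $d=2$.

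Your Step 2 is worth keeping. For a merely locally Lipschitz $h$, the statement's description of $\nabla_{\mathbb{S}}h$ via a ``differentiable extension'' need not make sense. The Rademacher-plus-mollification reduction is what justifies the generality as stated.

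In the paper's own applications, $h(\bu)=f(\bw+\mu\bu,\bv;z)-f(\bw-\mu\bu,\bv;z)$ is the restriction of a $C^1$ function on $\rbb^d$. There the extension is explicit, and you only need to pass from $C^1$ to smooth $h$, or work directly with the quadratic form on $H^1$.
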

\begin{lemma}\label{lem:lip_u_central_local}
Let Assumption~\ref{ass:smooth} hold. For any $\bw,\bv,\bw',\bv'$,
the function
\begin{align*}
h(\mathbf{u}) := \left[ f(\mathbf{w}'+\mu \mathbf{u}, \mathbf{v}'; z) - f(\mathbf{w}'-\mu \mathbf{u}, \mathbf{v}'; z) \right] - \left[ f(\mathbf{w}+\mu \mathbf{u}, \mathbf{v}; z) - f(\mathbf{w}-\mu \mathbf{u}, \mathbf{v}; z) \right]
\end{align*}
with $\mathbf{u} \in \mathbb{S}^{d-1}$ is $2\mu L \Big\| \begin{pmatrix}
\mathbf{w} - \mathbf{w}' \\
\mathbf{v} - \mathbf{v}'
\end{pmatrix} \Big\|_2$-Lipschitz continuous with respect to $\mathbf{u}$.
\end{lemma}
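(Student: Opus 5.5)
The plan is to bound the gradient of $h$ directly and then convert that bound into a Lipschitz constant. Since $\mathbb{S}^{d-1}$ sits inside $\rbb^d$, it suffices to show that the natural extension of $h$ to all of $\rbb^d$ (the same formula with $\bu\in\rbb^d$) satisfies $\|\nabla_{\bu} h(\bu)\|_2\le 2\mu L\|(\bw-\bw',\bv-\bv')\|_2$ for every $\bu$. Integrating along the segment $\bu_2-\bu_1$ (the segment leaves the sphere, but the bound holds on all of $\rbb^d$) then gives $|h(\bu_1)-h(\bu_2)|\le 2\mu L\|(\bw-\bw',\bv-\bv')\|_2\,\|\bu_1-\bu_2\|_2$. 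This is exactly the Lipschitz property with respect to the Euclidean distance, which is also what Lemma~\ref{lem:hu} needs through the tangential gradient $(\mathbf{I}_d-\bu\bu^\top)\nabla_{\bu}\widetilde h$, whose norm is at most $\|\nabla_{\bu}\widetilde h\|_2$.

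By the chain rule,
\[
\nabla_{\bu}h(\bu)=\mu\big[\nabla_{\bw}f(\bw'+\mu\bu,\bv';z)-\nabla_{\bw}f(\bw+\mu\bu,\bv;z)\big]+\mu\big[\nabla_{\bw}f(\bw'-\mu\bu,\bv';z)-\nabla_{\bw}f(\bw-\mu\bu,\bv;z)\big].
\]
Each bracket compares $\nabla_{\bw}f$ at two points whose difference is $(\bw'-\bw,\bv'-\bv)$, because the $\pm\mu\bu$ shifts cancel.

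Assumption~\ref{ass:smooth} bounds the full stacked gradient difference by $L$ times the joint displacement, and the $\bw$-block norm is at most the full norm. Hence each bracket has norm at most $L\|(\bw-\bw',\bv-\bv')\|_2$. Applying the triangle inequality over the two brackets yields $\|\nabla_{\bu}h(\bu)\|_2\le 2\mu L\|(\bw-\bw',\bv-\bv')\|_2$, as required.

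This argument avoids Hessians (Lemma~\ref{lem:hessian}) entirely and only uses that $f$ is differentiable with $L$-Lipschitz gradient. If one prefers not to differentiate $h$, the same bound follows by writing $h(\bu_1)-h(\bu_2)$ directly as a difference of differences and using the fundamental theorem of calculus along $\bu_1\to\bu_2$. No step is a genuine obstacle; the only points to watch are that the shift cancellation makes the argument displacement exactly $(\bw-\bw',\bv-\bv')$, and that the factor $2$ comes from the two central-difference terms.
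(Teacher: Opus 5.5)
Your proposal is correct and follows essentially the same route as the paper. The paper applies the Newton--Leibniz formula to each of the four terms along the chord $\boldsymbol{\xi}_t=\bu_2+t(\bu_1-\bu_2)$, and the combined integrand is exactly $\langle\nabla_{\bu}h(\boldsymbol{\xi}_t),\bu_1-\bu_2\rangle$ with your chain-rule expression; both arguments then use the cancellation of the $\pm\mu\boldsymbol{\xi}_t$ shifts, joint $L$-smoothness on each of the two brackets, and the triangle inequality to obtain $2\mu L$. Your packaging as a pointwise bound on $\|\nabla_{\bu}h\|_2$ is, if anything, slightly more convenient, because that tangential-gradient bound is exactly what the proof of Lemma~\ref{lem:condition-a-zero} invokes later.
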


\begin{proof}
Take arbitrary $\mathbf{u}_1, \mathbf{u}_2 \in \mathbb{S}^{d-1}$. Decompose the difference $h(\mathbf{u}_1) - h(\mathbf{u}_2)$ as follows:
\begin{align}\label{hu1_hu2}
& h(\mathbf{u}_1) - h(\mathbf{u}_2)\notag\\
&= \left[ f(\mathbf{w}'+\mu\mathbf{u}_1, \mathbf{v}'; z) - f(\mathbf{w}'-\mu\mathbf{u}_1, \mathbf{v}'; z) - f(\mathbf{w}+\mu\mathbf{u}_1, \mathbf{v}; z) + f(\mathbf{w}-\mu\mathbf{u}_1, \mathbf{v}; z) \right] \notag\\
&\quad - \left[ f(\mathbf{w}'+\mu\mathbf{u}_2, \mathbf{v}'; z) - f(\mathbf{w}'-\mu\mathbf{u}_2, \mathbf{v}'; z) - f(\mathbf{w}+\mu\mathbf{u}_2, \mathbf{v}; z) + f(\mathbf{w}-\mu\mathbf{u}_2, \mathbf{v}; z) \right] \notag\\
&= \left[ f(\mathbf{w}'+\mu\mathbf{u}_1, \mathbf{v}'; z) - f(\mathbf{w}'+\mu\mathbf{u}_2, \mathbf{v}'; z) \right] - \left[ f(\mathbf{w}'-\mu\mathbf{u}_1, \mathbf{v}'; z) - f(\mathbf{w}'-\mu\mathbf{u}_2, \mathbf{v}'; z) \right] \notag\\
&\quad - \left[ f(\mathbf{w}+\mu\mathbf{u}_1, \mathbf{v}; z) - f(\mathbf{w}+\mu\mathbf{u}_2, \mathbf{v}; z) \right] + \left[ f(\mathbf{w}-\mu\mathbf{u}_1, \mathbf{v}; z) - f(\mathbf{w}-\mu\mathbf{u}_2, \mathbf{v}; z) \right].
\end{align}

Apply the Newton-Leibniz formula
$$
f(\mathbf{a}+\mathbf{b}, \mathbf{c}; z) - f(\mathbf{a}, \mathbf{c}; z) = \int_0^1 \nabla_{\mathbf{w}} f(\mathbf{a} + t\mathbf{b}, \mathbf{c}; z) \cdot \mathbf{b} dt
$$
to each of the four difference terms:

For the first term, let $\mathbf{a} = \mathbf{w}'+\mu\mathbf{u}_2$, $\mathbf{b} = \mu(\mathbf{u}_1 - \mathbf{u}_2)$, and $\mathbf{c} = \mathbf{v}'$:
    $$
    f(\mathbf{w}'+\mu\mathbf{u}_1, \mathbf{v}'; z) - f(\mathbf{w}'+\mu\mathbf{u}_2, \mathbf{v}'; z) = \mu \int_0^1 \nabla_{\mathbf{w}} f\left( \mathbf{w}' + \mu(\mathbf{u}_2 + t(\mathbf{u}_1-\mathbf{u}_2)), \mathbf{v}'; z \right) \cdot (\mathbf{u}_1 - \mathbf{u}_2) dt.
    $$

For the second term, let $\mathbf{a} = \mathbf{w}'-\mu\mathbf{u}_2$, $\mathbf{b} = -\mu(\mathbf{u}_1 - \mathbf{u}_2)$, and $\mathbf{c} = \mathbf{v}'$:
    $$
    f(\mathbf{w}'-\mu\mathbf{u}_1, \mathbf{v}'; z) - f(\mathbf{w}'-\mu\mathbf{u}_2, \mathbf{v}'; z) = -\mu \int_0^1 \nabla_{\mathbf{w}} f\left( \mathbf{w}' - \mu(\mathbf{u}_2 + t(\mathbf{u}_1-\mathbf{u}_2)), \mathbf{v}'; z \right) \cdot (\mathbf{u}_1 - \mathbf{u}_2) dt.
    $$

For the third term, let $\mathbf{a} = \mathbf{w}+\mu\mathbf{u}_2$, $\mathbf{b} = \mu(\mathbf{u}_1 - \mathbf{u}_2)$, and $\mathbf{c} = \mathbf{v}$:
    $$
    f(\mathbf{w}+\mu\mathbf{u}_1, \mathbf{v}; z) - f(\mathbf{w}+\mu\mathbf{u}_2, \mathbf{v}; z) = \mu \int_0^1 \nabla_{\mathbf{w}} f\left( \mathbf{w} + \mu(\mathbf{u}_2 + t(\mathbf{u}_1-\mathbf{u}_2)), \mathbf{v}; z \right) \cdot (\mathbf{u}_1 - \mathbf{u}_2) dt.
    $$

For the fourth term, let $\mathbf{a} = \mathbf{w}-\mu\mathbf{u}_2$, $\mathbf{b} = -\mu(\mathbf{u}_1 - \mathbf{u}_2)$, and $\mathbf{c} = \mathbf{v}$:
    $$
    f(\mathbf{w}-\mu\mathbf{u}_1, \mathbf{v}; z) - f(\mathbf{w}-\mu\mathbf{u}_2, \mathbf{v}; z) = -\mu \int_0^1 \nabla_{\mathbf{w}} f\left( \mathbf{w} - \mu(\mathbf{u}_2 + t(\mathbf{u}_1-\mathbf{u}_2)), \mathbf{v}; z \right) \cdot (\mathbf{u}_1 - \mathbf{u}_2) dt.
    $$

Let $\boldsymbol{\xi}_t = \mathbf{u}_2 + t(\mathbf{u}_1-\mathbf{u}_2)$. Note that $\|\boldsymbol{\xi}_t\|_2 \leq 1$ for all $t \in [0,1]$ since $\mathbf{u}_1, \mathbf{u}_2 \in \mathbb{S}^{d-1}$. Substitute the four integrals back into Eq.~\eqref{hu1_hu2}, we get
\begin{align}\label{hu1_hu2_}
h(\mathbf{u}_1) - h(\mathbf{u}_2) = \mu \int_0^1 [ ( \nabla_{\mathbf{w}} f( \mathbf{w}' + \mu\boldsymbol{\xi}_t, \mathbf{v}'; z ) - \nabla_{\mathbf{w}} f( \mathbf{w} + \mu\boldsymbol{\xi}_t, \mathbf{v}; z ) )\notag\\
+ ( \nabla_{\mathbf{w}} f( \mathbf{w}' - \mu\boldsymbol{\xi}_t, \mathbf{v}'; z ) - \nabla_{\mathbf{w}} f( \mathbf{w} - \mu\boldsymbol{\xi}_t, \mathbf{v}; z ) ) ] \cdot (\mathbf{u}_1 - \mathbf{u}_2) dt.
\end{align}

Apply the Cauchy-Schwarz inequality $|\mathbf{a} \cdot \mathbf{b}| \leq \|\mathbf{a}\|_2 \|\mathbf{b}\|_2$ followed by the triangle inequality $\|\mathbf{a} + \mathbf{b}\|_2 \leq \|\mathbf{a}\|_2 + \|\mathbf{b}\|_2$ to the integrand:
\begin{align*}
& \left| \left[ (\nabla_{\mathbf{w}} f'_+ - \nabla_{\mathbf{w}} f_+) + (\nabla_{\mathbf{w}} f'_- - \nabla_{\mathbf{w}} f_-) \right] \cdot (\mathbf{u}_1 - \mathbf{u}_2) \right| \\
&\leq \left( \left\| \nabla_{\mathbf{w}} f'_+ - \nabla_{\mathbf{w}} f_+ \right\|_2 + \left\| \nabla_{\mathbf{w}} f'_- - \nabla_{\mathbf{w}} f_- \right\|_2 \right) \cdot \|\mathbf{u}_1 - \mathbf{u}_2\|_2,
\end{align*}
where we use the shorthand $f'_+ = f(\mathbf{w}'+\mu\boldsymbol{\xi}_t, \mathbf{v}'; z)$, $f'_- = f(\mathbf{w}'-\mu\boldsymbol{\xi}_t, \mathbf{v}'; z)$, $f_+ = f(\mathbf{w}+\mu\boldsymbol{\xi}_t, \mathbf{v}; z)$, and $f_- = f(\mathbf{w}-\mu\boldsymbol{\xi}_t, \mathbf{v}; z)$.

Now apply the joint $L$-smoothness in Assumption~\ref{ass:smooth} to each gradient difference. For the first difference:
\begin{align*}
\left\| \nabla_{\mathbf{w}} f'_+ - \nabla_{\mathbf{w}} f_+ \right\|_2
& = \left\| \nabla_{\mathbf{w}} f(\mathbf{w}'+\mu\boldsymbol{\xi}_t, \mathbf{v}'; z) - \nabla_{\mathbf{w}} f(\mathbf{w}+\mu\boldsymbol{\xi}_t, \mathbf{v}; z) \right\|_2\\
&\leq L \left\| \begin{pmatrix} (\mathbf{w}' + \mu\boldsymbol{\xi}_t) - (\mathbf{w} + \mu\boldsymbol{\xi}_t) \\ \mathbf{v}' - \mathbf{v} \end{pmatrix} \right\|_2
= L \left\| \begin{pmatrix} \mathbf{w}' - \mathbf{w} \\ \mathbf{v}' - \mathbf{v} \end{pmatrix} \right\|_2.
\end{align*}
For the second difference, the $-\mu\boldsymbol{\xi}_t$ terms cancel identically:
\begin{align*}
\left\| \nabla_{\mathbf{w}} f'_- - \nabla_{\mathbf{w}} f_- \right\|_2
&\leq L \left\| \begin{pmatrix} (\mathbf{w}' - \mu\boldsymbol{\xi}_t) - (\mathbf{w} - \mu\boldsymbol{\xi}_t) \\ \mathbf{v}' - \mathbf{v} \end{pmatrix} \right\|_2
= L \left\| \begin{pmatrix} \mathbf{w}' - \mathbf{w} \\ \mathbf{v}' - \mathbf{v} \end{pmatrix} \right\|_2.
\end{align*}

Combining these estimates:
$$
\left\| \nabla_{\mathbf{w}} f'_+ - \nabla_{\mathbf{w}} f_+ \right\|_2 + \left\| \nabla_{\mathbf{w}} f'_- - \nabla_{\mathbf{w}} f_- \right\|_2 \leq 2L \left\| \begin{pmatrix} \mathbf{w} - \mathbf{w}' \\ \mathbf{v} - \mathbf{v}' \end{pmatrix} \right\|_2.
$$

Take the absolute value of Eq.~\eqref{hu1_hu2_} and substitute the above back into the expression for $|h(\mathbf{u}_1) - h(\mathbf{u}_2)|$ and integrate over $t \in [0,1]$:
\begin{align*}
|h(\mathbf{u}_1) - h(\mathbf{u}_2)|
&\leq \mu \int_0^1 2L \left\| \begin{pmatrix} \mathbf{w} - \mathbf{w}' \\ \mathbf{v} - \mathbf{v}' \end{pmatrix} \right\|_2 \cdot \|\mathbf{u}_1 - \mathbf{u}_2\|_2 dt
= 2\mu L \left\| \begin{pmatrix} \mathbf{w} - \mathbf{w}' \\ \mathbf{v} - \mathbf{v}' \end{pmatrix} \right\|_2 \cdot \|\mathbf{u}_1 - \mathbf{u}_2\|_2.
\end{align*}
This completes the proof.
\end{proof}

\begin{lemma}\label{lem:condition-a-zero}
Let $g(\bw_t,\bv_t;z_{i_t})$ be given in Eq.~\eqref{g-zero}. If Assumptions \ref{ass:smooth}, \ref{ass:moment} hold, then \eqref{simple-condition-a} holds with $A = L^2(1+\frac{4d}{K})$,
$B = \frac{8\ebb[G_S^2]}{n}(1+\frac{2d}{K})$.
\end{lemma}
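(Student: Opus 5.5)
We follow the template of Lemma~\ref{lem:conditions-sgda}: split according to whether $i_t\neq i$ (same example $z_j$ in both runs, prob.\ $1-1/n$ per $j$) or $i_t=i$ (examples $z_i$ vs.\ $z_i'$, prob.\ $1/n$). In both cases we condition on the iterates and compute $\ebb_{\bu}$ of the squared norm of the difference of zeroth-order estimators. The perturbations $\bu_{1,t,l},\bu_{2,t,l}$ are shared between the two runs, since they are part of the algorithm's randomness. For a fixed pair of points, write the $\bw$-component of the difference as $\frac{1}{K}\sum_l \frac{d}{2\mu_1} h_l(\bu_{1,t,l})\bu_{1,t,l}$, where $h$ is the function of Lemma~\ref{lem:lip_u_central_local}. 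Split $h(\bu)=\ebb_{\bu}[h]+(h(\bu)-\ebb_{\bu}[h])$, or better, split it as a linear part $2\mu_1\langle \nabla_{\bw}f(\bw',\bv';z)-\nabla_{\bw}f(\bw,\bv;z),\bu\rangle$ plus a remainder.

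\textbf{Case $i_t\neq i$.} For the linear part, Lemma~\ref{lemma_zeroth_order_sgd} gives second moment $\frac{K+d-1}{K}\|\nabla_{\bw} f(\bw',\bv';z_j)-\nabla_{\bw}f(\bw,\bv;z_j)\|_2^2\le (1+\frac dK)\|\cdot\|^2$. For the centered fluctuation of $h$ we use the spherical Poincaré inequality (Lemma~\ref{lem:hu}) together with the Lipschitz constant $2\mu_1 L\|(\bw-\bw';\bv-\bv')\|_2$ from Lemma~\ref{lem:lip_u_central_local}. Since $\|\bu\|=1$, this gives $\ebb[h(\bu)^2]\le (\ebb h)^2+\frac{4\mu_1^2L^2}{d-1}\Delta^2$. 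Multiplied by $d^2/(4\mu_1^2)$ and averaged over $K$ independent directions, this contributes about $\frac{d^2}{K(d-1)}L^2\Delta^2\lesssim \frac{2d}{K}L^2\Delta^2$. The cross terms between different $l$ reduce to the mean $\ebb_{\bu}[g]$, which by Lemma~\ref{lem:unbiase-zeroth} is the gradient of the smoothed surrogate. Its difference is at most $L\Delta$, because the surrogates are $L$-smooth.

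The cleanest bookkeeping is therefore the exact variance identity $\ebb\|\frac1K\sum_l X_l\|^2=\|\ebb X\|^2+\frac1K(\ebb\|X\|^2-\|\ebb X\|^2)$. Here $\|\ebb X\|^2\le L^2\Delta^2$ (jointly over the $\bw$ and $\bv$ blocks, by surrogate smoothness). For $\ebb\|X\|^2=\frac{d^2}{4\mu^2}\ebb[h^2]$ we use $\ebb[h^2]=\mathrm{Var}(h)+(\ebb h)^2$. The antisymmetry of $h$ under $\bu\mapsto-\bu$ gives $\ebb h=0$, and Poincaré gives $\mathrm{Var}(h)\le\frac{4\mu^2L^2\Delta^2}{d-1}$. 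Hence $\ebb\|X\|^2\le \frac{d^2}{d-1}L^2\Delta^2\le 2dL^2\Delta^2$ for $d\ge2$. Applying this blockwise, with independent $\bu_1,\bu_2$ and Lemma~\ref{lemma_zeroth_order_sgda}-style separation of blocks, we get
\[
\ebb_{\bu}\|\cdot\|^2\le L^2\Delta^2+\tfrac{2d}{K}L^2\Delta^2,
\]
which lies within the claimed constant $1+\frac{4d}{K}$.

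\textbf{Case $i_t=i$.} The two runs now use different examples, so smoothness cannot be invoked. Instead we bound $\|X-X'\|^2\le 2\|X\|^2+2\|X'\|^2$ for the two separate estimators. Each estimator satisfies the same variance identity, with mean $\nabla\tilde f$ and second moment controlled by Poincaré applied to $\bu\mapsto f(\bw+\mu\bu)-f(\bw-\mu\bu)$. That map is $2\mu\|\nabla f\|$-type Lipschitz with mean zero; more precisely, we get the bound $\ebb\|X\|^2\lesssim d\,\ebb_{\bu}\|\nabla f(\cdot\pm\mu\bu)\|^2$. Combined with Assumption~\ref{ass:moment} (the smoothed points still lie in the domain, where the moment bound holds), the symmetry $\ebb$ over $z_i\leftrightarrow z_i'$ as in Lemma~\ref{lem:gradient-norm}, and weight $1/n$, this yields $\frac{8\ebb[G_S^2]}{n}(1+\frac{2d}{K})$.

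The main obstacle is this second case. The Lipschitz constant of $\bu\mapsto f(\bw+\mu\bu)-f(\bw-\mu\bu)$ involves $\sup$ of the gradient over the segment rather than its second moment. We would handle it by differentiating under the sphere, $\nabla_{\mathbb S}h=\mu(\mathbf I-\bu\bu^\top)(\nabla f(\bw+\mu\bu)+\nabla f(\bw-\mu\bu))$, and applying Poincaré in its gradient-moment form. This keeps only $\ebb_{\bu}\|\nabla f(\bw\pm\mu\bu)\|^2$, which is bounded by $G_S^2$ after averaging over $r$.
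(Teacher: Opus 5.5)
Your proposal is correct and follows the paper's proof essentially step for step. Like the paper, you split the estimator difference into its mean (the surrogate-gradient difference) and a fluctuation reduced by $1/K$; you use surrogate smoothness for the mean; for $i_t\neq i$ you use antisymmetry plus the spherical Poincar\'e inequality with Lemma~\ref{lem:lip_u_central_local}; and for $i_t=i$ you use the gradient-moment form of Poincar\'e with Assumption~\ref{ass:moment} and the $z_i\leftrightarrow z_i'$ symmetry (applying $2a^2+2b^2$ before rather than after the variance identity gives the same $\frac{8\ebb[G_S^2]}{n}(1+\frac{2d}{K})$).

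Two small remarks. First, each block's fluctuation is bounded by $2dL^2\Delta^2$ with the joint distance $\Delta$, so the $\bw$- and $\bv$-blocks together give $4dL^2\Delta^2$, i.e.\ exactly the stated $\frac{4d}{K}$ rather than your $\frac{2d}{K}$. Second, exactly as in the paper, bounding the mean term by $L^2\Delta^2$ ``jointly'' needs more than the separate $L$-smoothness of $\tilde f_1\neq\tilde f_2$, which by itself only yields $2L^2\Delta^2$ (i.e.\ $A=L^2(2+\frac{4d}{K})$).
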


\begin{proof}[Proof of Lemma~\ref{lem:condition-a-zero}]
For $\bu\sim\operatorname{Unif}(\mathbb{S}^{d-1})$, define the one-direction gradient estimators
\begin{align*}
\widehat{g}_{\bw}(\bw,\bv;z,\bu)
&=
\frac{f(\bw+\mu_1\bu,\bv;z)-f(\bw-\mu_1\bu,\bv;z)}
     {2\mu_1/d}\bu,\\
\widehat{g}_{\bv}(\bw,\bv;z,\bu)
&=
\frac{f(\bw,\bv+\mu_2\bu;z)-f(\bw,\bv-\mu_2\bu;z)}
     {2\mu_2/d}\bu.
\end{align*}
Since $\ebb[X^2]=\ebb[(X-\ebb[X])^2]+(\ebb[X])^2$ for a random variable $X$, we know
\begin{align}\label{condition-a-zero}
   & \ebb_{\bu}\Bigg[\Bigg\|\begin{pmatrix}
  g_{\bw}(\bw_t^{(i)},\bv_t^{(i)};z_{i_t}^{(i)}) - g_{\bw}(\bw_t,\bv_t;z_{i_t})\\
  g_{\bv}(\bw_t,\bv_t;z_{i_t}) - g_{\bv}(\bw_t^{(i)},\bv_t^{(i)};z^{(i)}_{i_t})
  \end{pmatrix}\Bigg\|_2^2\Bigg]\notag\\
  & = \Bigg\|\ebb_{\bu}\Bigg[\begin{pmatrix}
  g_{\bw}(\bw_t^{(i)},\bv_t^{(i)};z_{i_t}^{(i)}) - g_{\bw}(\bw_t,\bv_t;z_{i_t})\\
  g_{\bv}(\bw_t,\bv_t;z_{i_t}) - g_{\bv}(\bw_t^{(i)},\bv_t^{(i)};z^{(i)}_{i_t})
  \end{pmatrix}\Bigg]\Bigg\|_2^2\notag\\
  & \quad + \ebb_{\bu}\Bigg[\Bigg\|\begin{pmatrix}
  g_{\bw}(\bw_t^{(i)},\bv_t^{(i)};z_{i_t}^{(i)}) - g_{\bw}(\bw_t,\bv_t;z_{i_t})\\
  g_{\bv}(\bw_t,\bv_t;z_{i_t}) - g_{\bv}(\bw_t^{(i)},\bv_t^{(i)};z^{(i)}_{i_t})
  \end{pmatrix}
  -
   \ebb_{\bu}\Bigg[\begin{pmatrix}
  g_{\bw}(\bw_t^{(i)},\bv_t^{(i)};z_{i_t}^{(i)}) - g_{\bw}(\bw_t,\bv_t;z_{i_t})\\
  g_{\bv}(\bw_t,\bv_t;z_{i_t}) - g_{\bv}(\bw_t^{(i)},\bv_t^{(i)};z^{(i)}_{i_t})
  \end{pmatrix}\Bigg]\Bigg\|_2^2\Bigg].
\end{align}

Since an average of $K$ independent random vectors divides the
variance by a factor of $K$, we know
\begin{align}\label{eq:variance_reduce}
& \ebb_{\bu}\Bigg[\Bigg\|\begin{pmatrix}
  g_{\bw}(\bw_t^{(i)},\bv_t^{(i)};z_{i_t}^{(i)}) - g_{\bw}(\bw_t,\bv_t;z_{i_t})\\
  g_{\bv}(\bw_t,\bv_t;z_{i_t}) - g_{\bv}(\bw_t^{(i)},\bv_t^{(i)};z^{(i)}_{i_t})
  \end{pmatrix}
  -
   \ebb_{\bu}\Bigg[\begin{pmatrix}
  g_{\bw}(\bw_t^{(i)},\bv_t^{(i)};z_{i_t}^{(i)}) - g_{\bw}(\bw_t,\bv_t;z_{i_t})\\
  g_{\bv}(\bw_t,\bv_t;z_{i_t}) - g_{\bv}(\bw_t^{(i)},\bv_t^{(i)};z^{(i)}_{i_t})
  \end{pmatrix}\Bigg]\Bigg\|_2^2\Bigg]\notag\\
& \leq \frac{1}{K} \ebb_{\bu}\Bigg[\Bigg\|\begin{pmatrix}
  \widehat{g}_{\bw}(\bw_t^{(i)},\bv_t^{(i)};z_{i_t}^{(i)},\bu_{1,t,1})
  -\widehat{g}_{\bw}(\bw_t,\bv_t;z_{i_t},\bu_{1,t,1})\\
  \widehat{g}_{\bv}(\bw_t,\bv_t;z_{i_t},\bu_{2,t,1})
  -\widehat{g}_{\bv}(\bw_t^{(i)},\bv_t^{(i)};z_{i_t}^{(i)},\bu_{2,t,1})
  \end{pmatrix}
  \Bigg\|_2^2\Bigg],
\end{align}
where we have also used the inequality $\ebb[(X-\ebb[X])^2]\leq\ebb[X^2]$.

We next establish an empirical second-moment bound for these
one-direction estimators. For fixed $\bw,\bv$ and $z_i$, let
\[
h_i(\bu)
=
f(\bw+\mu_1\bu,\bv;z_i)
-
f(\bw-\mu_1\bu,\bv;z_i).
\]
Since $h_i(-\bu)=-h_i(\bu)$, we have
$\ebb_{\bu}[h_i(\bu)]=0$. By Lemma~\ref{lem:hu},
\[
\ebb_{\bu}[h_i(\bu)^2]
\leq
\frac{1}{d-1}\ebb_{\bu}
\big[\|\nabla_{\mathbb{S}}h_i(\bu)\|_2^2\big],
\]
where $\nabla_{\mathbb{S}}$ denotes the spherical gradient.
Furthermore,
\begin{align*}
\|\nabla_{\mathbb{S}}h_i(\bu)\|_2^2
&\leq
\mu_1^2
\big\|
\nabla_{\bw}f(\bw+\mu_1\bu,\bv;z_i)
+
\nabla_{\bw}f(\bw-\mu_1\bu,\bv;z_i)
\big\|_2^2\\
&\leq
2\mu_1^2
\Big(
\|\nabla_{\bw}f(\bw+\mu_1\bu,\bv;z_i)\|_2^2
+
\|\nabla_{\bw}f(\bw-\mu_1\bu,\bv;z_i)\|_2^2
\Big).
\end{align*}
Then, by Assumption~\ref{ass:moment},
\begin{align}
&\ebb_{\bu}\bigg[
\frac1n\sum_{i=1}^n
\|\widehat{g}_{\bw}(\bw,\bv;z_i,\bu)\|_2^2
\bigg]=
\frac{d^2}{4\mu_1^2}
\ebb_{\bu}\Big[
\frac1n\sum_{i=1}^n h_i(\bu)^2
\Big]
\leq
\frac{d^2}{4\mu_1^2(d-1)}
\ebb_{\bu}\Big[
\frac1n\sum_{i=1}^n
\|\nabla_{\mathbb{S}}h_i(\bu)\|_2^2
\Big]\notag\\
&\leq
\frac{d^2}{2(d-1)}
\ebb_{\bu}\bigg[
\frac1n\sum_{i=1}^n
\Big(
\|\nabla_{\bw}f(\bw+\mu_1\bu,\bv;z_i)\|_2^2
+
\|\nabla_{\bw}f(\bw-\mu_1\bu,\bv;z_i)\|_2^2
\Big)
\bigg]\notag\\
&\leq
\frac{d^2}{d-1}G_S^2
\leq 2dG_S^2,                                      \label{eq:empirical-zero-w}
\end{align}
where we have used $d/(d-1)\leq2$ for $d\geq2$. Analogously,
\begin{equation}\label{eq:empirical-zero-v}
\ebb_{\bu}\bigg[
\frac1n\sum_{i=1}^n
\|\widehat{g}_{\bv}(\bw,\bv;z_i,\bu)\|_2^2
\bigg]
\leq
\frac{d^2}{d-1}G_S^2
\leq 2dG_S^2.
\end{equation}

We now estimate the right-hand side of
Eq.~\eqref{eq:variance_reduce} by considering the following two
cases.

\textbf{Case 1:} $i_t=i$, which holds with probability $1/n$.
Using $(a+b)^2\leq2a^2+2b^2$,
Eqs.~\eqref{eq:empirical-zero-w} and
\eqref{eq:empirical-zero-v}, and the symmetry between
$(S^{(i)},z_i')$ and $(S,z_i)$, we obtain
\begin{align}
&\ebb\Bigg[
\frac1n\sum_{i=1}^n
\ebb_{\bu}\Bigg[
\Bigg\|
\begin{pmatrix}
\widehat{g}_{\bw}(\bw_t^{(i)},\bv_t^{(i)};z_i',\bu_{1,t,1})
-\widehat{g}_{\bw}(\bw_t,\bv_t;z_i,\bu_{1,t,1})\\
\widehat{g}_{\bv}(\bw_t,\bv_t;z_i,\bu_{2,t,1})
-\widehat{g}_{\bv}(\bw_t^{(i)},\bv_t^{(i)};z_i',\bu_{2,t,1})
\end{pmatrix}
\Bigg\|_2^2
\Bigg]\Bigg]
\leq 16d\ebb[G_S^2].
\label{eq:case1_estimate}
\end{align}

\textbf{Case 2:} $i_t=j\neq i$, which holds with probability
$1-1/n$. Define
\begin{multline*}
h_{\bw,j}^{(i)}(\bu)
=
\big[
f(\bw_t^{(i)}+\mu_1\bu,\bv_t^{(i)};z_j)
-
f(\bw_t^{(i)}-\mu_1\bu,\bv_t^{(i)};z_j)
\big]\\
-
\big[
f(\bw_t+\mu_1\bu,\bv_t;z_j)
-
f(\bw_t-\mu_1\bu,\bv_t;z_j)
\big].
\end{multline*}
Since $h_{\bw,j}^{(i)}(-\bu)=-h_{\bw,j}^{(i)}(\bu)$, we have
$\ebb_{\bu}[h_{\bw,j}^{(i)}(\bu)]=0$. Furthermore,
Lemma~\ref{lem:lip_u_central_local} implies
\[
\|\nabla_{\mathbb{S}}h_{\bw,j}^{(i)}(\bu)\|_2
\leq
2\mu_1L
\Bigg\|
\begin{pmatrix}
\bw_t-\bw_t^{(i)}\\
\bv_t-\bv_t^{(i)}
\end{pmatrix}
\Bigg\|_2.
\]
It then follows from Lemma~\ref{lem:hu} that
\begin{align*}
&\ebb_{\bu}\Big[
\big\|
\widehat{g}_{\bw}(\bw_t^{(i)},\bv_t^{(i)};z_j,\bu)
-
\widehat{g}_{\bw}(\bw_t,\bv_t;z_j,\bu)
\big\|_2^2
\Big]=
\frac{d^2}{4\mu_1^2}
\ebb_{\bu}\big[
h_{\bw,j}^{(i)}(\bu)^2
\big]\\
& \leq
\frac{d^2}{4\mu_1^2(d-1)}
\ebb_{\bu}\big[
\|\nabla_{\mathbb{S}}h_{\bw,j}^{(i)}(\bu)\|_2^2
\big]
\leq
\frac{d^2L^2}{d-1}
\Bigg\|
\begin{pmatrix}
\bw_t-\bw_t^{(i)}\\
\bv_t-\bv_t^{(i)}
\end{pmatrix}
\Bigg\|_2^2
\leq
2dL^2
\Bigg\|
\begin{pmatrix}
\bw_t-\bw_t^{(i)}\\
\bv_t-\bv_t^{(i)}
\end{pmatrix}
\Bigg\|_2^2.
\end{align*}
The same argument applies to the estimator with respect to $\bv$.
Then,
\begin{align}
\ebb_{\bu}\Bigg[
\Bigg\|
\begin{pmatrix}
\widehat{g}_{\bw}(\bw_t^{(i)},\bv_t^{(i)};z_j,\bu_{1,t,1})
-\widehat{g}_{\bw}(\bw_t,\bv_t;z_j,\bu_{1,t,1})\\
\widehat{g}_{\bv}(\bw_t,\bv_t;z_j,\bu_{2,t,1})
-\widehat{g}_{\bv}(\bw_t^{(i)},\bv_t^{(i)};z_j,\bu_{2,t,1})
\end{pmatrix}
\Bigg\|_2^2
\Bigg]
\leq
4L^2d
\Bigg\|
\begin{pmatrix}
\bw_t-\bw_t^{(i)}\\
\bv_t-\bv_t^{(i)}
\end{pmatrix}
\Bigg\|_2^2.
\label{eq:case2_estimate}
\end{align}

Combining the two cases with
Eq.~\eqref{eq:variance_reduce}, followed by taking an average over
$i\in[n]$ and expectation, we obtain
\begin{multline}\label{zero-4-mu_free}
\ebb\Bigg[
\frac1n\sum_{i=1}^n
\Bigg\|
\begin{pmatrix}
g_{\bw}(\bw_t^{(i)},\bv_t^{(i)};z_{i_t}^{(i)})
-g_{\bw}(\bw_t,\bv_t;z_{i_t})\\
g_{\bv}(\bw_t,\bv_t;z_{i_t})
-g_{\bv}(\bw_t^{(i)},\bv_t^{(i)};z_{i_t}^{(i)})
\end{pmatrix}
-
\ebb_{\bu}\Bigg[
\begin{pmatrix}
g_{\bw}(\bw_t^{(i)},\bv_t^{(i)};z_{i_t}^{(i)})
-g_{\bw}(\bw_t,\bv_t;z_{i_t})\\
g_{\bv}(\bw_t,\bv_t;z_{i_t})
-g_{\bv}(\bw_t^{(i)},\bv_t^{(i)};z_{i_t}^{(i)})
\end{pmatrix}
\Bigg]
\Bigg\|_2^2
\Bigg]\\
\leq
\frac{4L^2d}{K}
\ebb\Bigg[
\frac1n\sum_{i=1}^n
\Bigg\|
\begin{pmatrix}
\bw_t-\bw_t^{(i)}\\
\bv_t-\bv_t^{(i)}
\end{pmatrix}
\Bigg\|_2^2
\Bigg]
+
\frac{16d\,\ebb[G_S^2]}{Kn}.
\end{multline}

Furthermore, we have
\begin{align}\label{zero-5}
& \ebb\Bigg[\frac{1}{n}\sum_{i=1}^n
\Bigg\|\ebb_{\bu}\Bigg[\begin{pmatrix}
  g_{\bw}(\bw_t^{(i)},\bv_t^{(i)};z_{i_t}^{(i)})
  -g_{\bw}(\bw_t,\bv_t;z_{i_t})\\
  g_{\bv}(\bw_t,\bv_t;z_{i_t})
  -g_{\bv}(\bw_t^{(i)},\bv_t^{(i)};z^{(i)}_{i_t})
  \end{pmatrix}\Bigg]\Bigg\|_2^2\Bigg]\notag\\
& = \ebb\Bigg[\frac{1}{n}\sum_{i=1}^n
\Bigg\|\begin{pmatrix}
  \nabla_{\bw} \tilde{f}_1(\bw_t^{(i)},\bv_t^{(i)};z_{i_t}^{(i)})
  -\nabla_{\bw} \tilde{f}_1(\bw_t,\bv_t;z_{i_t})\\
  \nabla_{\bv} \tilde{f}_2(\bw_t,\bv_t;z_{i_t})
  -\nabla_{\bv} \tilde{f}_2(\bw_t^{(i)},\bv_t^{(i)};z^{(i)}_{i_t})
  \end{pmatrix}\Bigg\|_2^2\Bigg]\notag\\
& = \frac{1}{n^2}\sum_{i=1}^n \sum_{j\neq i}
\ebb\Bigg[\Bigg\|\begin{pmatrix}
  \nabla_{\bw} \tilde{f}_1(\bw_t^{(i)},\bv_t^{(i)};z_j)
  -\nabla_{\bw} \tilde{f}_1(\bw_t,\bv_t;z_j)\\
  \nabla_{\bv} \tilde{f}_2(\bw_t,\bv_t;z_j)
  -\nabla_{\bv} \tilde{f}_2(\bw_t^{(i)},\bv_t^{(i)};z_j)
  \end{pmatrix}\Bigg\|_2^2\Bigg]\notag\\
&\quad+\frac{1}{n^2}\sum_{i=1}^n
\ebb\Bigg[\Bigg\|\begin{pmatrix}
  \nabla_{\bw} \tilde{f}_1(\bw_t^{(i)},\bv_t^{(i)};z_i')
  -\nabla_{\bw} \tilde{f}_1(\bw_t,\bv_t;z_i)\\
  \nabla_{\bv} \tilde{f}_2(\bw_t,\bv_t;z_i)
  -\nabla_{\bv} \tilde{f}_2(\bw_t^{(i)},\bv_t^{(i)};z_i')
  \end{pmatrix}\Bigg\|_2^2\Bigg]\notag\\
&\leq
L^2\ebb\Bigg[
\frac{1}{n}\sum_{i=1}^n
\Bigg\|\begin{pmatrix}
\bw_t-\bw_t^{(i)}\\
\bv_t-\bv_t^{(i)}
\end{pmatrix}\Bigg\|_2^2
\Bigg]
+\frac{8\ebb[G_S^2]}{n},
\end{align}
where we use the smoothness of $\tilde{f}_1,\tilde{f}_2$ (Lemma \ref{lem:unbiase-zeroth}) for the first term in the identity,
\begin{align*}
\Bigg\|\begin{pmatrix}
  \nabla_{\bw} \tilde{f}_1(\bw_t^{(i)},\bv_t^{(i)};z_j) - \nabla_{\bw} \tilde{f}_1(\bw_t,\bv_t;z_j)\\
  \nabla_{\bv} \tilde{f}_2(\bw_t,\bv_t;z_j) - \nabla_{\bv} \tilde{f}_2(\bw_t^{(i)},\bv_t^{(i)};z_j)
  \end{pmatrix}\Bigg\|_2^2
\leq L^2 \Bigg\|\begin{pmatrix}
                \bw_t-\bw_t^{(i)} \\
                \bv_t-\bv_t^{(i)}
            \end{pmatrix}\Bigg\|_2^2,
\end{align*}
and Lemma \ref{lem:tilde-difference} for the second term in the last identity.

Finally, taking an average over $i\in[n]$ and expectation in
Eq.~\eqref{condition-a-zero}, and then plugging
Eqs.~\eqref{zero-4-mu_free} and~\eqref{zero-5} into it, we obtain
\begin{align*}
&\ebb\Bigg[
\frac1n\sum_{i=1}^n
\Bigg\|
\begin{pmatrix}
g_{\bw}(\bw_t^{(i)},\bv_t^{(i)};z_{i_t}^{(i)})
-g_{\bw}(\bw_t,\bv_t;z_{i_t})\\
g_{\bv}(\bw_t,\bv_t;z_{i_t})
-g_{\bv}(\bw_t^{(i)},\bv_t^{(i)};z_{i_t}^{(i)})
\end{pmatrix}
\Bigg\|_2^2
\Bigg]\\
&\leq
L^2\Big(1+\frac{4d}{K}\Big)
\ebb\Bigg[
\frac1n\sum_{i=1}^n
\Bigg\|
\begin{pmatrix}
\bw_t-\bw_t^{(i)}\\
\bv_t-\bv_t^{(i)}
\end{pmatrix}
\Bigg\|_2^2
\Bigg]
+
\frac{8\ebb[G_S^2]}{n}
\Big(1+\frac{2d}{K}\Big).
\end{align*}
The proof is completed.
\end{proof}

We then apply Lemma \ref{lem:condition-a-zero} and Lemma \ref{lem:tilde-difference} to derive Theorem \ref{cor:zero-stability}. We omit the details for brevity.

We are now ready to control the optimization error for Zeroth-order SGDA, which
requires the following lemmas and we show the proofs here.

\begin{lemma}\label{lem:lip_u}
If $f$ is $G_z$-Lipschitz continuous w.r.t $\bw$, then the function
\begin{align*}
h(\bu) = f(\bw+\mu \bu, \bv;z) - f(\bw-\mu \bu, \bv;z)
\end{align*}
with $\bu \in \mathbb{S}^{d-1}$ is $2\mu G_z$-Lipschitz continuous w.r.t $\bu$.
\end{lemma}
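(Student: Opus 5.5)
Fix $\bu_1,\bu_2\in\mathbb{S}^{d-1}$. We bound $|h(\bu_1)-h(\bu_2)|$ by splitting it into the difference of the "plus" evaluations and the difference of the "minus" evaluations, then applying the Lipschitz hypothesis in $\bw$ to each piece. Specifically,
\[
h(\bu_1)-h(\bu_2)=\big[f(\bw+\mu\bu_1,\bv;z)-f(\bw+\mu\bu_2,\bv;z)\big]-\big[f(\bw-\mu\bu_1,\bv;z)-f(\bw-\mu\bu_2,\bv;z)\big].
\]

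For the first bracket, $\bv$ is held fixed and the two $\bw$-arguments differ by $\mu(\bu_1-\bu_2)$. The $G_z$-Lipschitz continuity of $f$ in $\bw$ therefore gives
\[
\big|f(\bw+\mu\bu_1,\bv;z)-f(\bw+\mu\bu_2,\bv;z)\big|\le G_z\mu\|\bu_1-\bu_2\|_2.
\]
In the second bracket the arguments differ by $-\mu(\bu_1-\bu_2)$, which has the same norm, so the same bound $G_z\mu\|\bu_1-\bu_2\|_2$ holds. Adding the two estimates via the triangle inequality yields
\[
|h(\bu_1)-h(\bu_2)|\le 2\mu G_z\|\bu_1-\bu_2\|_2,
\]
which is the claimed Lipschitz constant.

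The argument uses no smoothness and no integral representation, in contrast to the proof of Lemma~\ref{lem:lip_u_central_local}. The only point needing care is the interpretation of the hypothesis: it must mean $|f(\bw,\bv;z)-f(\bw',\bv;z)|\le G_z\|\bw-\bw'\|_2$ for fixed $\bv$ and $z$, and it must hold at the perturbed points $\bw\pm\mu\bu$. If $f$ is only defined on $\wcal$, these points should lie in the domain where Lipschitzness is assumed. We take the Lipschitz condition to hold on all of $\rbb^d$ in the $\bw$-variable, as is implicit in the definition of the zeroth-order estimator. With that reading there is no real obstacle.
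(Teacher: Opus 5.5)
Your proof is correct and is essentially the paper's own argument: you split $h(\bu_1)-h(\bu_2)$ into the ``plus'' and ``minus'' differences, bound each by $\mu G_z\|\bu_1-\bu_2\|_2$ using Lipschitzness in $\bw$, and combine them with the triangle inequality. Your remark that the Lipschitz condition must hold at the perturbed points $\bw\pm\mu\bu$ (for instance on all of $\rbb^d$ in the $\bw$-variable) is a reasonable clarification of an assumption the paper leaves implicit.
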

\begin{proof}
Take arbitrary $\bu_1, \bu_2 \in \mathbb{S}^{d-1}$. Decompose the difference:
\begin{align*}
& h(\bu_1) - h(\bu_2)\\
&= \left[f(\bw+\mu\bu_1,\bv;z) - f(\bw-\mu\bu_1,\bv;z)\right] - \left[f(\bw+\mu\bu_2,\bv;z) - f(\bw-\mu\bu_2,\bv;z)\right] \\
&= \left[f(\bw+\mu\bu_1,\bv;z) - f(\bw+\mu\bu_2,\bv;z)\right] + \left[f(\bw-\mu\bu_2,\bv;z) - f(\bw-\mu\bu_1,\bv;z)\right].
\end{align*}
For the first term: by $G_z$-Lipschitz continuity of $f$,
  $$\left\|f(\bw+\mu\bu_1,\bv;z)-f(\bw+\mu\bu_2,\bv;z)\right\| \leq G_z\|\mu\bu_1 - \mu\bu_2\| = \mu G_z\|\bu_1 - \bu_2\|.$$
For the second term: similarly,
  $$\left\|f(\bw-\mu\bu_2,\bv;z)-f(\bw-\mu\bu_1,\bv;z)\right\| \leq G_z\|-\mu\bu_2 + \mu\bu_1\| = \mu G_z\|\bu_1 - \bu_2\|.$$
Combining via triangle inequality gives $|h(\bu_1)-h(\bu_2)| \leq 2\mu G_z\|\bu_1 - \bu_2\|$.
\end{proof}

\begin{lemma}\label{lem:optimization-g-l2-lip}
Let Assumption~\ref{ass:moment} hold. Let $g_{\bw}(\bw_t,\bv_t;z_{i_t})$ and $g_{\bv}(\bw_t,\bv_t;z_{i_t})$ be defined in Eq.~\eqref{g-zero}. Then
\begin{align*}
\ebb_{i_t,\bu}[\|g_{\bw}(\bw_t,\bv_t;z_{i_t})\|_2^2]
&\leq \ebb_{i_t}[\|\nabla_{\bw} \tilde{f}_1(\bw_t,\bv_t;z_{i_t})\|_2^2] + \frac{2dG_S^2}{K},\\
\ebb_{i_t,\bu}[\|g_{\bv}(\bw_t,\bv_t;z_{i_t})\|_2^2]
&\leq \ebb_{i_t}[\|\nabla_{\bv} \tilde{f}_2(\bw_t,\bv_t;z_{i_t})\|_2^2] + \frac{2dG_S^2}{K}.
\end{align*}
\end{lemma}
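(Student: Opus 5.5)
We prove only the $\bw$-bound; the $\bv$-bound follows by the same argument with $\mu_2,\bu_{2,t,l}$ in place of $\mu_1,\bu_{1,t,l}$. The plan is a bias–variance split conditional on $i_t$ and the iterate. Fix $\bw_t,\bv_t$ and $z=z_{i_t}$, and write $g_{\bw}=\frac1K\sum_{l=1}^K\widehat g_{\bw}(\bw_t,\bv_t;z,\bu_{1,t,l})$ with the one-direction estimator $\widehat g_{\bw}$ from the proof of Lemma~\ref{lem:condition-a-zero}. By the second statement of Lemma~\ref{lem:unbiase-zeroth}, $\ebb_{\bu}[\widehat g_{\bw}]=\nabla_{\bw}\tilde f_1(\bw_t,\bv_t;z)$. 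Since the $K$ directions are i.i.d., we have
\[
\ebb_{\bu}\|g_{\bw}\|_2^2=\|\nabla_{\bw}\tilde f_1(\bw_t,\bv_t;z)\|_2^2+\tfrac1K\,\ebb_{\bu}\|\widehat g_{\bw}-\ebb_{\bu}\widehat g_{\bw}\|_2^2 .
\]
Using $\ebb[(X-\ebb X)^2]\le\ebb[X^2]$, the variance term is at most $\frac1K\ebb_{\bu}\|\widehat g_{\bw}(\bw_t,\bv_t;z,\bu)\|_2^2$.

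Next, take $\ebb_{i_t}$, which averages over $z_r$ uniformly in $S$. For the remaining term we reuse the empirical second-moment estimate \eqref{eq:empirical-zero-w} with $\bw=\bw_t$ and $\bv=\bv_t$, which gives
\[
\ebb_{i_t}\ebb_{\bu}\|\widehat g_{\bw}(\bw_t,\bv_t;z_{i_t},\bu)\|_2^2\le \tfrac{d^2}{d-1}G_S^2\le 2dG_S^2 .
\]
Recall how that estimate is obtained. The odd function $h(\bu)=f(\bw+\mu_1\bu,\bv;z)-f(\bw-\mu_1\bu,\bv;z)$ has mean zero under $\bu\sim\operatorname{Unif}(\mathbb S^{d-1})$. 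The spherical Poincaré inequality (Lemma~\ref{lem:hu}) then bounds $\ebb h^2$ by $\frac{1}{d-1}\ebb\|\nabla_{\mathbb S}h\|^2$. The tangential gradient is controlled by $\mu_1\|\nabla_{\bw}f(\bw+\mu_1\bu,\bv;z)+\nabla_{\bw}f(\bw-\mu_1\bu,\bv;z)\|$. Finally, Assumption~\ref{ass:moment}, which holds at every point including the shifted points $\bw\pm\mu_1\bu$, is applied after averaging over $r$.

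Combining the two steps yields
\[
\ebb_{i_t,\bu}\|g_{\bw}\|_2^2\le \ebb_{i_t}\|\nabla_{\bw}\tilde f_1(\bw_t,\bv_t;z_{i_t})\|_2^2+\frac{2dG_S^2}{K}.
\]

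The main obstacle is the order of averaging. Assumption~\ref{ass:moment} controls only the empirical average over $r$, not each individual $z$. So the Poincaré bound must be applied pointwise in $z$, and the average over $i_t$ taken before invoking $G_S$. This is legitimate because $\bu$ is independent of $i_t$ and the shifts $\bw\pm\mu_1\bu$ do not depend on $i_t$; Fubini then lets us swap $\ebb_{\bu}$ and $\ebb_r$.

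The Lipschitz variant, Lemma~\ref{lem:lip_u}, is not needed, since Assumption~\ref{ass:moment} suffices once the order is handled this way. The factor $\mu_1^2$ from the gradient of $h$ cancels exactly with the $1/(2\mu_1/d)^2$ normalization, which is why the bound is $\mu$-free.
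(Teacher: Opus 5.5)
Your proposal is correct and follows the paper's proof essentially step for step. It uses the conditional bias--variance split with the $1/K$ variance reduction, and bounds the one-direction variance by its second moment. It then applies the odd-symmetry plus spherical Poincar\'e argument (Lemma~\ref{lem:hu}) to each $h_i$, averaging over $r$ before invoking Assumption~\ref{ass:moment}. Finally, it identifies $\ebb_{\bu}[g_{\bw}]=\nabla_{\bw}\tilde f_1$ via Lemma~\ref{lem:unbiase-zeroth}. The only cosmetic differences are that you cite \eqref{eq:empirical-zero-w} instead of re-deriving it inline, and you write the i.i.d.\ variance step as an equality where the paper writes an inequality.
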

\begin{proof}
Since $\ebb[X^2]=\ebb[(X-\ebb[X])^2]+(\ebb[X])^2$ for a random variable $X$, we know
\begin{align}\label{g-l2-1-lip}
\ebb_{\bu}[\|g_{\bw}(\bw_t,\bv_t;z_{i_t})\|_2^2]
= \|\ebb_{\bu}[g_{\bw}(\bw_t,\bv_t;z_{i_t})]\|_2^2
+ \ebb_{\bu}[\|g_{\bw}(\bw_t,\bv_t;z_{i_t})-
\ebb_{\bu}[g_{\bw}(\bw_t,\bv_t;z_{i_t})]\|_2^2].
\end{align}
Since an average of $K$ independent random variables divides the variance by a factor of $K$, we know
\begin{multline}\label{eq:average_k}
\ebb_{\bu}[\|g_{\bw}(\bw_t,\bv_t;z_{i_t})-
\ebb_{\bu}[g_{\bw}(\bw_t,\bv_t;z_{i_t})]\|_2^2]\leq\\ \frac{1}{K} \ebb_{\bu}\Big[\Big\|\frac{f(\bw_t+\mu_1\bu_{1,t,1},\bv_t ; z_{i_t})-f(\bw_t-\mu_1\bu_{1,t,1},\bv_t; z_{i_t})}{2\mu_1/d} \bu_{1,t,1}\Big\|_2^2\Big].
\end{multline}
For each $i\in[n]$, define
\[
h_i(\bu)=f(\bw_t+\mu_1\bu,\bv_t;z_i)-f(\bw_t-\mu_1\bu,\bv_t;z_i).
\]
Since $h_i(-\bu)=-h_i(\bu)$, we have $\ebb_{\bu}[h_i(\bu)]=0$. By
Lemma~\ref{lem:hu},
\[
\ebb_{\bu}[h_i(\bu)^2]
\leq
\frac{1}{d-1}\ebb_{\bu}\big[\|\nabla_{\mathbb{S}}h_i(\bu)\|_2^2\big].
\]
By the definition in Lemma~\ref{lem:hu},
\begin{align*}
\nabla_{\mathbb{S}}h_i(\bu)
&=
(\mathbf{I}_d-\bu\bu^\top)\nabla_{\bu} h_i(\bu)\\
&= (\mathbf{I}_d-\bu\bu^\top)\big(\mu_1\nabla_{\bw}f(\bw_t+\mu_1\bu,\bv_t;z_i)
+
\mu_1\nabla_{\bw}f(\bw_t-\mu_1\bu,\bv_t;z_i)\big).
\end{align*}
Note
$\|(\mathbf{I}_d-\bu\bu^\top)\mathbf{y}\|_2\leq \|\mathbf{y}\|_2$ for any $\mathbf{y}\in\mathbb{R}^d$ and $\mathbf{u}\in\mathbb{S}^{d-1}$.
Applying this with $\mathbf{y}=\nabla_{\bu}h_i(\bu)$ and using
$(a+b)^2\leq 2a^2+2b^2$, we obtain
\begin{align*}
\|\nabla_{\mathbb{S}}h_i(\bu)\|_2^2
&\leq
\|\nabla_{\bu}h_i(\bu)\|_2^2\\
&=
\mu_1^2
\big\|
\nabla_{\bw}f(\bw_t+\mu_1\bu,\bv_t;z_i)
+
\nabla_{\bw}f(\bw_t-\mu_1\bu,\bv_t;z_i)
\big\|_2^2\\
&\leq
2\mu_1^2
\Big(
\|\nabla_{\bw}f(\bw_t+\mu_1\bu,\bv_t;z_i)\|_2^2
+
\|\nabla_{\bw}f(\bw_t-\mu_1\bu,\bv_t;z_i)\|_2^2
\Big).
\end{align*}
Therefore,
\begin{align*}
&\ebb_{i_t,\bu}\Big[\Big\|\frac{f(\bw_t+\mu_1\bu_{1,t,1},\bv_t ; z_{i_t})-f(\bw_t-\mu_1\bu_{1,t,1},\bv_t; z_{i_t})}{2\mu_1/d} \bu_{1,t,1}\Big\|_2^2\Big]\\
&=
\frac{d^2}{4\mu_1^2}
\ebb_{i_t,\bu}[h_{i_t}(\bu)^2]
=
\frac{d^2}{4\mu_1^2}
\cdot\frac{1}{n}\sum_{i=1}^n\ebb_{\bu}[h_i(\bu)^2]\leq
\frac{d^2}{4\mu_1^2(d-1)}
\cdot\frac{1}{n}\sum_{i=1}^n\ebb_{\bu}\big[\|\nabla_{\mathbb{S}}h_i(\bu)\|_2^2\big]\\
&\leq
\frac{d^2}{2(d-1)}
\cdot\frac{1}{n}\sum_{i=1}^n\ebb_{\bu}\Big[
\|\nabla_{\bw}f(\bw_t+\mu_1\bu,\bv_t;z_i)\|_2^2
+
\|\nabla_{\bw}f(\bw_t-\mu_1\bu,\bv_t;z_i)\|_2^2
\Big]\\
&\leq
\frac{d^2}{d-1}G_S^2
\leq 2dG_S^2,
\end{align*}
where we use Assumption~\ref{ass:moment} and $d/(d-1)\leq 2$ for $d\geq 2$. Plugging into Eq.~\eqref{eq:average_k}, we get
\begin{align}\label{g-l2-3-lip}
\ebb_{i_t,\bu}[\|g_{\bw}(\bw_t,\bv_t;z_{i_t})-
\ebb_{\bu}[g_{\bw}(\bw_t,\bv_t;z_{i_t})]\|_2^2]
\leq \frac{2dG_S^2}{K}.
\end{align}
Furthermore, we know
\begin{align}\label{g-l2-4-lip}
\|\ebb_{\bu}[g_{\bw}(\bw_t,\bv_t;z_{i_t})]\|_2^2
= \|\nabla_{\bw} \tilde{f}_1(\bw_t,\bv_t;z_{i_t})\|_2^2.
\end{align}
Taking expectation over $i_t$ in Eq.~\eqref{g-l2-1-lip}, and plugging Eq.~\eqref{g-l2-3-lip} and Eq.~\eqref{g-l2-4-lip} into it, we know
\begin{align*}
\ebb_{i_t,\bu}[\|g_{\bw}(\bw_t,\bv_t;z_{i_t})\|_2^2]
\leq \ebb_{i_t}[\|\nabla_{\bw} \tilde{f}_1(\bw_t,\bv_t;z_{i_t})\|_2^2] + \frac{2dG_S^2}{K}.
\end{align*}
The proof for $\ebb_{i_t,\bu}[\|g_{\bv}(\bw_t,\bv_t;z_{i_t})\|_2^2]$ holds similarly.
\end{proof}

The following lemma presents convergence rate analysis for Zeroth-order SGDA.

\begin{proof}[Proof of Lemma~\ref{lem:zero-optimization}]
Note that the projection step is nonexpansive. For any $\bw \in B_R$, we know
\begin{align*}
& \|\bw_{t+1}-\bw\|_2^2
\leq \|\bw_{t}-\eta_t g_{\bw}(\bw_t,\bv_t;z_{i_t})-\bw\|_2^2\\
& = \|\bw_t-\bw\|_2^2 - 2\eta_t \langle \bw_t-\bw,g_{\bw}(\bw_t,\bv_t;z_{i_t})\rangle + \eta_t^2 \|g_{\bw}(\bw_t,\bv_t;z_{i_t})\|_2^2\\
& = \|\bw_t-\bw\|_2^2 + 2\eta_t \langle \bw-\bw_t,\nabla \widetilde{F}_{1,S}(\bw_t,\bv_t) \rangle + 2\eta_t \langle \bw-\bw_t,g_{\bw}(\bw_t,\bv_t;z_{i_t}) - \nabla \widetilde{F}_{1,S}(\bw_t,\bv_t) \rangle \\
& + \eta_t^2 \|g_{\bw}(\bw_t,\bv_t;z_{i_t})\|_2^2\\
& \leq \|\bw_t-\bw\|_2^2+2\eta_t \big(\widetilde{F}_{1,S}(\bw,\bv_t)-\widetilde{F}_{1,S}(\bw_t,\bv_t)\big)  \\
& + 2\eta_t \langle \bw-\bw_t,g_{\bw}(\bw_t,\bv_t;z_{i_t}) - \nabla \widetilde{F}_{1,S}(\bw_t,\bv_t) \rangle
+ \eta_t^2 \|g_{\bw}(\bw_t,\bv_t;z_{i_t})\|_2^2,
\end{align*}
where we have used the convexity of $\widetilde{F}_{1,S}$ established in Lemma \ref{lem:unbiase-zeroth}.
Taking a summation of the above inequality from $t=1$ to $t=T$ ($\bw_1=0$), we derive
\begin{multline*}
\sum_{t=1}^{T}2\eta_t(\widetilde{F}_{1,S}(\bw_t,\bv_t)-\widetilde{F}_{1,S}(\bw,\bv_t))\leq \|\bw\|_2^2+
\sum_{t=1}^{T}\eta_t^2\|g_{\bw}(\bw_t,\bv_t;z_{i_t})\|_2^2\\
+ 2 \sum_{t=1}^T\eta_t \langle \bw-\bw_t,g_{\bw}(\bw_t,\bv_t;z_{i_t}) - \nabla \widetilde{F}_{1,S}(\bw_t,\bv_t) \rangle.
\end{multline*}
Since $\eta_t=\eta$, we further get
\begin{multline*}
\sum_{t=1}^{T}(\widetilde{F}_{1,S}(\bw_t,\bv_t)-\widetilde{F}_{1,S}(\bw,\bv_t)) \leq \frac{1}{2}\eta^{-1}\|\bw\|_2^2+
\frac{1}{2}\eta\sum_{t=1}^{T}\|g_{\bw}(\bw_t,\bv_t;z_{i_t})\|_2^2\\
+ \sum_{t=1}^T \langle \bw_t,\nabla \widetilde{F}_{1,S}(\bw_t,\bv_t)-g_{\bw}(\bw_t,\bv_t;z_{i_t}) \rangle
+ \sum_{t=1}^T \langle \bw,g_{\bw}(\bw_t,\bv_t;z_{i_t}) - \nabla \widetilde{F}_{1,S}(\bw_t,\bv_t) \rangle.
\end{multline*}
It then follows from the concavity of $\widetilde{F}_{1,S}(\bw,\cdot)$ and Schwartz's inequality that
\begin{multline*}
\sum_{t=1}^{T}(\widetilde{F}_{1,S}(\bw_t,\bv_t)-\widetilde{F}_{1,S}(\bw,\bar{\bv}_T))\leq \frac{1}{2}\eta^{-1}\|\bw\|_2^2 +
\frac{1}{2}\eta\sum_{t=1}^{T}\|g_{\bw}(\bw_t,\bv_t;z_{i_t})\|_2^2\\
+ \sum_{t=1}^T \langle \bw_t,\nabla \widetilde{F}_{1,S}(\bw_t,\bv_t)-g_{\bw}(\bw_t,\bv_t;z_{i_t}) \rangle
+ \|\bw\|_2 \Big\|\sum_{t=1}^T \big(g_{\bw}(\bw_t,\bv_t;z_{i_t}) - \nabla \widetilde{F}_{1,S}(\bw_t,\bv_t)\big)\Big\|_2.
\end{multline*}
Since the above inequality holds for all $\bw \in B_R$, we further get
\begin{multline}\label{zero-optimization-1-lip}
\sum_{t=1}^{T}(\widetilde{F}_{1,S}(\bw_t,\bv_t)-\inf_{\bw \in B_R}\widetilde{F}_{1,S}(\bw,\bar{\bv}_T))\leq \frac{1}{2}\eta^{-1}R^2+
\frac{1}{2}\eta\sum_{t=1}^{T}\|g_{\bw}(\bw_t,\bv_t;z_{i_t})\|_2^2+ \\
\sum_{t=1}^T \langle \bw_t,\nabla \widetilde{F}_{1,S}(\bw_t,\bv_t)-g_{\bw}(\bw_t,\bv_t;z_{i_t}) \rangle
+ R \Big\|\sum_{t=1}^T \big(g_{\bw}(\bw_t,\bv_t;z_{i_t}) - \nabla \widetilde{F}_{1,S}(\bw_t,\bv_t)\big)\Big\|_2.
\end{multline}
According to Lemma~\ref{lem:optimization-g-l2-lip} and Assumption~\ref{ass:moment},
\begin{align}\label{zero-optimization-gradient-norm-lip}
\ebb[\|g_{\bw}(\bw_t,\bv_t;z_{i_t})\|_2^2] \leq
\ebb[\|\nabla_{\bw} \tilde{f}_1(\bw_t,\bv_t;z_{i_t})\|_2^2] + \frac{cd\ebb[G_S^2]}{K}.
\end{align}
Then, taking an expectation on both sides of Eq.~\eqref{zero-optimization-1-lip}, noting Eq.~\eqref{zero-optimization-gradient-norm-lip} and
\begin{align}\label{exp-0-lip}
\ebb_{i_t}[\langle \bw_t,\nabla \widetilde{F}_{1,S}(\bw_t,\bv_t)-g_{\bw}(\bw_t,\bv_t;z_{i_t}) \rangle] = 0,
\end{align}
we have
\begin{align}\label{zero-optimization-2-lip}
&\sum_{t=1}^{T}\ebb[\widetilde{F}_{1,S}(\bw_t,\bv_t)-\inf_{\bw \in B_R} \widetilde{F}_{1,S}(\bw,\bar{\bv}_T)]\notag\\
& \leq \frac{R^2}{2\eta}
+ R \ebb\Big[\Big\|\sum_{t=1}^T \big(g_{\bw}(\bw_t,\bv_t;z_{i_t}) - \nabla \widetilde{F}_{1,S}(\bw_t,\bv_t)\big)\Big\|_2\Big]
+
\frac{\eta}{2} \sum_{t=1}^T \big(\ebb[\|\nabla_{\bw} \tilde{f}_1(\bw_t,\bv_t;z_{i_t})\|_2^2] + \frac{2d\ebb[G_S^2]}{K}\big) \notag\\
& \leq \frac{1}{2}\eta^{-1}R^2
+ \frac{d\ebb[G_S^2]}{K} \eta T
+ R \ebb\Big[\Big\|\sum_{t=1}^T \big(g_{\bw}(\bw_t,\bv_t;z_{i_t}) - \nabla \widetilde{F}_{1,S}(\bw_t,\bv_t)\big)\Big\|_2\Big]\notag\\
&+
\frac{1}{2}\eta \sum_{t=1}^T \ebb[\|\nabla_{\bw} \tilde{f}_1(\bw_t,\bv_t;z_{i_t})\|_2^2].
\end{align}
According to Jensen's inequality and $\ebb[(X-\ebb[X])^2] \leq \ebb[X^2]$ (note that $\ebb_{\bu,i_t}[g_{\bw}(\bw_t,\bv_t;z_{i_t})] = \nabla \widetilde{F}_{1,S}(\bw_t,\bv_t)$ by Lemma~\ref{lem:unbiase-zeroth}), we know
\begin{align*}
&\bigg(\ebb\Big[\Big\|\sum_{t=1}^T \big(g_{\bw}(\bw_t,\bv_t;z_{i_t}) - \nabla \widetilde{F}_{1,S}(\bw_t,\bv_t)\big)\Big\|_2\Big]\bigg)^2
\leq \ebb\Big[\Big\|\sum_{t=1}^T \big(g_{\bw}(\bw_t,\bv_t;z_{i_t}) - \nabla \widetilde{F}_{1,S}(\bw_t,\bv_t)\big)\Big\|_2^2\Big]\\
& = \sum_{t=1}^T\ebb\Big[\Big\| g_{\bw}(\bw_t,\bv_t;z_{i_t}) - \nabla \widetilde{F}_{1,S}(\bw_t,\bv_t)\Big\|_2^2\Big]
\leq \sum_{t=1}^T\ebb[\|g_{\bw}(\bw_t,\bv_t;z_{i_t})\|_2^2],
\end{align*}
from which and Eq.~\eqref{zero-optimization-gradient-norm-lip}, we get
\begin{align*}
&R \ebb\Big[\Big\|\sum_{t=1}^T \big(g_{\bw}(\bw_t,\bv_t;z_{i_t}) \!-\! \nabla \widetilde{F}_{1,S}(\bw_t,\bv_t)\big)\Big\|_2\Big]
\!\leq\! \frac{R^2}{4\eta} \!+\! \bigg(\!\ebb\Big[\Big\|\sum_{t=1}^T \big(g_{\bw}(\bw_t,\bv_t;z_{i_t}) \!-\! \nabla \widetilde{F}_{1,S}(\bw_t,\bv_t)\big)\Big\|_2\Big]\!\bigg)^2 \eta\\
& \leq \frac{\eta^{-1}R^2}{4} + \eta \sum_{t=1}^T\ebb[\|g_{\bw}(\bw_t,\bv_t;z_{i_t})\|_2^2]
\leq \frac{\eta^{-1}R^2}{4}+
\eta \sum_{t=1}^T \ebb[\|\nabla_{\bw} \tilde{f}_1(\bw_t,\bv_t;z_{i_t})\|_2^2] + \frac{2d\ebb[G_S^2]}{K}\eta T,
\end{align*}
where we use Cauchy-Schwarz's inequality for the first step.
Plugging the above inequality into Eq.~\eqref{zero-optimization-2-lip}, we have
\begin{align*}
\sum_{t=1}^{T}\ebb[\widetilde{F}_{1,S}(\bw_t,\bv_t)-\inf_{\bw \in B_R}\widetilde{F}_{1,S}(\bw,\bar{\bv}_T)]
\leq & \frac{3R^2}{4\eta}+\frac{3d\ebb[G_S^2]\eta T}{K} + \frac{3}{2}\eta \sum_{t=1}^T \ebb[\|\nabla_{\bw} \tilde{f}_1(\bw_t,\bv_t;z_{i_t})\|_2^2]\\
\leq & \frac{3R^2}{4\eta}+\frac{3d\ebb[G_S^2]}{K}\eta T + \frac{3}{2} \ebb[G_S^2] \eta T,
\end{align*}
where we use Lemma~\ref{lem:unbiase-zeroth} and Assumption~\ref{ass:moment} for the last step.
This yields
\begin{align*}
\frac{1}{T}\sum_{t=1}^{T}\ebb[\widetilde{F}_{1,S}(\bw_t,\bv_t)]-\ebb[\inf_{\bw \in B_R} \widetilde{F}_{1,S}(\bw,\bar{\bv}_T)]
\lesssim \frac{R^2}{\eta T}+ \big(1+\frac{d}{K}\big)\ebb[G_S^2] \eta.
\end{align*}
Moreover, by the mean-value theorem and Cauchy--Schwarz's inequality, we know
\begin{align*}
\big|\widetilde{F}_{1,S}(\bw,\bv) - F_S(\bw,\bv)\big|
&=
\Big|
\frac{1}{n}\sum_{i=1}^n\ebb_{\bu}\big[
f(\bw+\mu_1\bu,\bv;z_i)-f(\bw,\bv;z_i)
\big]
\Big|\\
&\leq
\mu_1\Big(
\frac{1}{n}\sum_{i=1}^n\ebb_{\bu}\big[
\|\nabla_{\bw}f(\bw+\theta_{i}\mu_1\bu,\bv;z_i)\|_2^2
\big]
\Big)^{\frac{1}{2}}
\leq \mu_1 G_S,
\end{align*}
for some $\theta_{i}\in[0,1]$. Taking expectation then gives
\[
\big|\ebb[\widetilde{F}_{1,S}(\bw,\bv) - F_S(\bw,\bv)]\big|
\leq \mu_1\sqrt{\ebb[G_S^2]}.
\]
Then,
\begin{align*}
\frac{1}{T}\sum_{t=1}^{T}\ebb[F_S(\bw_t,\bv_t)]-\ebb[\inf_{\bw \in B_R} F_S(\bw,\bar{\bv}_T)]
\lesssim \frac{R^2}{\eta T}+ \big(1+\frac{d}{K}\big)\ebb[G_S^2] \eta + \mu_1\sqrt{\ebb[G_S^2]}.
\end{align*}
In a similar way, we can show that
\begin{align*}
\ebb[\sup_{\bv \in B_R} F_S(\bar{\bw}_T,\bv)]-\frac{1}{T}\sum_{t=1}^{T}\ebb[F_S(\bw_t,\bv_t)]
\lesssim \frac{R^2}{\eta T}+ \big(1+\frac{d}{K}\big)\ebb[G_S^2] \eta + \mu_2\sqrt{\ebb[G_S^2]}.
\end{align*}
Then,
\begin{align*}
\ebb[\sup_{\bv \in B_R}F_S(\bar{\bw}_T,\bv)-\inf_{\bw \in B_R} F_S(\bw,\bar{\bv}_T)] \lesssim \frac{R^2}{\eta T}+ \big(1+\frac{d}{K}\big)\ebb[G_S^2] \eta + \mu\sqrt{\ebb[G_S^2]}.
\end{align*}
The proof is completed.
\end{proof}

Combining Theorem~\ref{cor:zero-stability} and Lemma~\ref{lem:zero-optimization}, we derive the following theorem.
\begin{proof}[Proof of Theorem~\ref{thm:stab-zero}]
Define
\begin{align*}
\triangle^{w,R}_S({\bw},{\bv})=\sup_{\bv'\in B_R}\ebb\big[F_S({\bw},\bv')\big]-\inf_{\bw'\in B_R}\ebb\big[F_S(\bw',{\bv})\big].
\end{align*}
We have the decomposition
\begin{align}\label{decomp}
\Delta^{w,R}(\bar{\mathbf{w}}_T, \bar{\mathbf{v}}_T)=\Delta^{w,R}(\bar{\mathbf{w}}_T, \bar{\mathbf{v}}_T)-\Delta_S^{w,R}(\bar{\mathbf{w}}_T, \bar{\mathbf{v}}_T)+\Delta_S^{w,R}(\bar{\mathbf{w}}_T, \bar{\mathbf{v}}_T) .
\end{align}
From Theorem~\ref{cor:zero-stability}, we know
\begin{align*}
\epsilon
\lesssim \Big(\frac{\sqrt{\ebb[G_S^2]}}{\sqrt{n}}\Big(1+\frac{d^{\frac{1}{2}}}{K^{\frac{1}{2}}}\Big)\Big) \eta \sqrt{T}
+ \frac{\sqrt{\ebb[G_S^2]}}{n}\eta T
\lesssim
\frac{\sqrt{\ebb[G_S^2]}/L}{\sqrt{n}}
+\frac{\sqrt{\ebb[G_S^2]}}{n}\eta T,
\end{align*}
where we used $L^2\big(1+\frac{2cd}{K}\big)\eta^2 \leq \frac{1}{T}$ in Theorem~\ref{cor:zero-stability}.

Furthermore, Assumption~\ref{ass:moment} implies that the data- and algorithm-dependent constant $G_{S,A}$ in Assumption~\ref{ass:lipschitz} satisfies
$\ebb_{S,A}[G_{S,A}^2]\leq \ebb[G_S^2]$.
According to Theorem~\ref{thm:stab-gen} and Eq.~\eqref{thm:zero-risk-condition}
$
L\epsilon^2
\lesssim
\sqrt{\ebb[G_S^2]}\epsilon$,
we know
\begin{align*}
\triangle^{w,R}(A_{\bw}(S),A_{\bv}(S))
-\triangle^{w,R}_S(A_{\bw}(S),A_{\bv}(S))
\lesssim
\frac{\ebb[G_S^2]/L}{\sqrt{n}}
+\frac{\ebb[G_S^2]}{n}\eta T.
\end{align*}
According to Lemma~\ref{lem:zero-optimization}, we know
\begin{align*}
\Delta_S^{w,R}(\bar{\mathbf{w}}_T, \bar{\mathbf{v}}_T)
&\lesssim
\frac{R^2}{\eta T}
+\Big(1+\frac{d}{K}\Big)\ebb[G_S^2]\eta
+\mu\sqrt{\ebb[G_S^2]}\lesssim
\frac{R^2}{\eta T}
+\frac{\ebb[G_S^2]}{L^2\eta T}
+\mu\sqrt{\ebb[G_S^2]},
\end{align*}
where we use the condition $L^2\big(1+\frac{4d}{K}\big)\eta^2 \leq 1/T$ for the last step.
Combining the above two inequalities with Eq.~\eqref{decomp}, we obtain
\begin{align*}
\Delta^{w,R}(\bar{\mathbf{w}}_T,\bar{\mathbf{v}}_T)
\lesssim&
\frac{\ebb[G_S^2]/L}{\sqrt{n}}
+\frac{\ebb[G_S^2]}{n}\eta T
+\frac{R^2}{\eta T}
+\frac{\ebb[G_S^2]}{L^2\eta T}
+\mu\sqrt{\ebb[G_S^2]}.
\end{align*}
The bound then follows directly.
\end{proof}

\subsubsection{Clipped-SGDA\label{sec:proof-clipped-sgda}}

To prove verify Eq.~\eqref{simple-condition-a} for Clipped-SGDA, we introduce the following lemma which shows that the clipping operator is non-expansive. Actually, one can show that the clipping operator is a projector onto a convex set. We omit the proof for simplicity.

\begin{lemma}\label{lem:clip_diff}
For any $\bv_1,\bv_2\in\wcal$, we have
$
\|\clip(\bv_1,\tau)-\clip(\bv_2,\tau)\|_2 \leq \|\bv_1-\bv_2\|_2.
$
\end{lemma}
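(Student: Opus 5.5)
The approach is to show that $\clip(\cdot,\tau)$ is the Euclidean projection onto the closed ball $B_\tau=\{\mathbf{x}:\|\mathbf{x}\|_2\leq\tau\}$, and then use the standard nonexpansiveness of projections onto closed convex sets.

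\textbf{Step 1: identify the clipping map as a projection.} Fix $\mathbf{y}$.
\begin{itemize}
\item If $\|\mathbf{y}\|_2\leq\tau$, then $\clip(\mathbf{y},\tau)=\mathbf{y}$, which is trivially the nearest point of $B_\tau$.
\item If $\|\mathbf{y}\|_2>\tau$, then $\clip(\mathbf{y},\tau)=\tau\mathbf{y}/\|\mathbf{y}\|_2$. For any $\mathbf{x}\in B_\tau$, the reverse triangle inequality gives $\|\mathbf{y}-\mathbf{x}\|_2\geq\|\mathbf{y}\|_2-\|\mathbf{x}\|_2\geq\|\mathbf{y}\|_2-\tau$. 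This lower bound equals $\|\mathbf{y}-\tau\mathbf{y}/\|\mathbf{y}\|_2\|_2$, so the radial point attains the minimum distance.
\end{itemize}
Hence $\clip(\mathbf{y},\tau)=\mathrm{Proj}_{B_\tau}(\mathbf{y})$ in both cases.

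\textbf{Step 2: nonexpansiveness of projection onto a closed convex set $C$.} Write $p_j=\mathrm{Proj}_C(\bv_j)$ for $j=1,2$. The variational characterization of the projection gives
\[
\langle \bv_1-p_1,\,p_2-p_1\rangle\leq 0,\qquad \langle \bv_2-p_2,\,p_1-p_2\rangle\leq 0.
\]
Adding the two inequalities yields $\|p_1-p_2\|_2^2\leq\langle \bv_1-\bv_2,\,p_1-p_2\rangle$. Applying Cauchy--Schwarz and dividing by $\|p_1-p_2\|_2$ (the case $p_1=p_2$ is trivial) gives $\|p_1-p_2\|_2\leq\|\bv_1-\bv_2\|_2$. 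If one prefers to avoid quoting the variational characterization, it can be checked directly for the ball: in the case $\|\mathbf{y}\|_2>\tau$, the vector $\mathbf{y}-p$ is a positive multiple of $p$, and $\langle p,\mathbf{x}-p\rangle=\langle p,\mathbf{x}\rangle-\tau^2\leq 0$ for every $\mathbf{x}\in B_\tau$.

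\textbf{Main obstacle.} The only step requiring care is Step 1, the identification of clipping with projection. Once that holds, the claim is exactly the standard projection inequality, the same fact the paper already uses when calling $\text{Proj}_{\wcal}$ and $\text{Proj}_{\vcal}$ nonexpansive. The statement's restriction $\bv_1,\bv_2\in\wcal$ plays no role, since the argument works for arbitrary vectors in $\rbb^d$.
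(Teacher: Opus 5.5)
Your proposal is correct and matches the paper's argument: the paper omits the proof, remarking only that $\clip(\cdot,\tau)$ is the projection onto a convex set, namely the ball of radius $\tau$, whose nonexpansiveness gives the bound. Your Steps 1 and 2 supply exactly those omitted details, including the direct check of the variational inequality for the ball.
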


Then, we apply Lemma~\ref{lem:clip_diff} to verify Eq.~\eqref{simple-condition-a} for Clipped-SGDA in the following lemma.

\begin{lemma}\label{lem:condition-a-clip}
Let $g(\bw_t,\bv_t;z_{i_t})$ be given in Eq.~\eqref{g-clip}, and Assumptions \ref{ass:smooth}, \ref{ass:moment} hold with $G_S\leq G$ for some $G>0$. Then, Eq. \eqref{simple-condition-a} holds with $A = L^2,B = \frac{8G^2}{n}$.
\end{lemma}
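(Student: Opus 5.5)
The plan is to follow the proof of Lemma~\ref{lem:conditions-sgda} almost line by line, with Lemma~\ref{lem:clip_diff} inserted to get rid of the clipping. First condition on everything except $i_t$ and average over $i_t$, which is uniform on $[n]$. This splits the left-hand side of Eq.~\eqref{simple-condition-a} into two parts. The first collects the indices $i_t=j\neq i$, where both runs see the same example $z_j$. The second is the single index $i_t=i$, where the runs see $z_i$ and $z_i'$ respectively. The resulting identity is
\begin{align*}
&\ebb\Bigg[\frac{1}{n^2}\sum_{i=1}^n\sum_{j\neq i}\Bigg\|\begin{pmatrix}
\clip(\nabla_{\bw} f(\bw_t^{(i)},\bv_t^{(i)};z_j),\tau_1)-\clip(\nabla_{\bw} f(\bw_t,\bv_t;z_j),\tau_1)\\
\clip(\nabla_{\bv} f(\bw_t,\bv_t;z_j),\tau_2)-\clip(\nabla_{\bv} f(\bw_t^{(i)},\bv_t^{(i)};z_j),\tau_2)
\end{pmatrix}\Bigg\|_2^2\Bigg]\\
&\quad+\ebb\Bigg[\frac{1}{n^2}\sum_{i=1}^n\Bigg\|\begin{pmatrix}
\clip(\nabla_{\bw} f(\bw_t^{(i)},\bv_t^{(i)};z_i'),\tau_1)-\clip(\nabla_{\bw} f(\bw_t,\bv_t;z_i),\tau_1)\\
\clip(\nabla_{\bv} f(\bw_t,\bv_t;z_i),\tau_2)-\clip(\nabla_{\bv} f(\bw_t^{(i)},\bv_t^{(i)};z_i'),\tau_2)
\end{pmatrix}\Bigg\|_2^2\Bigg].
\end{align*}

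For the first sum, apply Lemma~\ref{lem:clip_diff} separately to the $\bw$-block with $\tau_1$ and to the $\bv$-block with $\tau_2$. Squaring and adding removes both clippings. Each summand is then at most the squared norm of the stacked gradient difference. By Assumption~\ref{ass:smooth} that is at most $L^2\|(\bw_t-\bw_t^{(i)};\bv_t-\bv_t^{(i)})\|_2^2$. Since $\sum_{j\neq i}\le n$, the first sum is at most $L^2\,\ebb[\frac1n\sum_i\|\cdot\|_2^2]$, which gives $A=L^2$.

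For the second sum, use Lemma~\ref{lem:clip_diff} in the same way. This bounds it by $\frac1n$ times the quantity controlled in Lemma~\ref{lem:gradient-norm}, so it is at most $\frac{8\ebb[G_S^2]}{n}\le \frac{8G^2}{n}$, using $G_S\le G$. This gives $B=8G^2/n$.

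There is no real obstacle, since clipping is nonexpansive. The one point needing care is that Lemma~\ref{lem:clip_diff} applies blockwise: each block is clipped with its own threshold $\tau_1$ or $\tau_2$, not the joint vector. A simpler alternative for the second sum is to note that $\|\clip(\mathbf{x},\tau)\|_2\le\|\mathbf{x}\|_2$ and then rerun the proof of Lemma~\ref{lem:gradient-norm} directly with $(a+b)^2\le 2a^2+2b^2$.
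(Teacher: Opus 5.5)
Your proposal is correct and is essentially the paper's argument. The only difference is the order of steps. The paper first applies Lemma~\ref{lem:clip_diff} blockwise, pointwise in $i_t$, and then cites Eq.~\eqref{eq:gradient-bound}, which does the same $j\neq i$ versus $j=i$ split with smoothness and Lemma~\ref{lem:gradient-norm}; you split first and remove the clipping inside each sum, which changes nothing.
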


\begin{proof}[Proof of Lemma~\ref{lem:condition-a-clip}]
According to Eq.~\eqref{g-clip} and Lemma \ref{lem:clip_diff}, we know
\begin{align}\label{condition-a-clip}
& \Bigg\|\begin{pmatrix}
g_{\bw}(\bw_t^{(i)},\bv_t^{(i)};z_{i_t}^{(i)}) - g_{\bw}(\bw_t,\bv_t;z_{i_t})\\
g_{\bv}(\bw_t,\bv_t;z_{i_t}) - g_{\bv}(\bw_t^{(i)},\bv_t^{(i)};z^{(i)}_{i_t})
\end{pmatrix}\Bigg\|_2^2 \notag\\
& \leq \|\nabla_{\bw} f (\bw_t^{(i)},\bv_t^{(i)};z_{i_t}^{(i)}) - \nabla_{\bw} f (\bw_t,\bv_t;z_{i_t})\|_2^2 + \|\nabla_{\bv} f (\bw_t,\bv_t;z_{i_t}) - \nabla_{\bv} f (\bw_t^{(i)},\bv_t^{(i)};z_{i_t}^{(i)}) \|_2^2\notag\\
& = \Bigg\|\begin{pmatrix}
\nabla_{\bw} f (\bw_t^{(i)},\bv_t^{(i)};z_{i_t}^{(i)}) - \nabla_{\bw} f (\bw_t,\bv_t;z_{i_t})\\
\nabla_{\bv} f (\bw_t,\bv_t;z_{i_t}) - \nabla_{\bv} f (\bw_t^{(i)},\bv_t^{(i)};z_{i_t}^{(i)})
\end{pmatrix}\Bigg\|_2^2.
\end{align}
It then follows that
\begin{align}
& \ebb\Bigg[\frac{1}{n}\sum_{i=1}^n \Bigg\|\begin{pmatrix}
  g_{\bw}(\bw_t^{(i)},\bv_t^{(i)};z_{i_t}^{(i)}) - g_{\bw}(\bw_t,\bv_t;z_{i_t})\\
  g_{\bv}(\bw_t,\bv_t;z_{i_t}) - g_{\bv}(\bw_t^{(i)},\bv_t^{(i)};z^{(i)}_{i_t})
  \end{pmatrix}\Bigg\|_2^2 \Bigg] \notag\\
& \leq \ebb\Bigg[\frac{1}{n}\sum_{i=1}^n \Bigg\|\begin{pmatrix}
\nabla_{\bw} f (\bw_t^{(i)},\bv_t^{(i)};z_{i_t}^{(i)}) - \nabla_{\bw} f (\bw_t,\bv_t;z_{i_t})\\
\nabla_{\bv} f (\bw_t,\bv_t;z_{i_t}) - \nabla_{\bv} f (\bw_t^{(i)},\bv_t^{(i)};z_{i_t}^{(i)})
\end{pmatrix}\Bigg\|_2^2 \Bigg] \notag\\
& \leq L^2 \ebb\Bigg[\frac{1}{n}\sum_{i=1}^n \Bigg\|\begin{pmatrix}
                \bw_t-\bw_t^{(i)} \\
                \bv_t-\bv_t^{(i)}
            \end{pmatrix}\Bigg\|_2^2\Bigg] + \frac{8\ebb[G_S^2]}{n},
\end{align}
where we use Eq.~\eqref{eq:gradient-bound} for the last step.
The proof is completed.
\end{proof}


The following lemma shows the Lipschitz continuity of the bias of clipping operator.
\begin{lemma}[\label{lem:lipschitz-clip}\citep{zeng2026stochastic}]
For any $\bv_1,\bv_2\in\wcal$, we have
\begin{align}
\|(\clip(\bv_1,\tau)-\clip(\bv_2,\tau))-(\bv_1-\bv_2)\|_2
\leq \|\bv_1-\bv_2\|_2\ibb\big[\|\bv_1\|_2>\tau \text{ or }\|\bv_2\|_2>\tau\big], \label{clip-2}
\end{align}
where $\ibb[\cdot]$ denotes the indicator function, i.e., returning $1$ if the argument holds and $0$ otherwise.
\end{lemma}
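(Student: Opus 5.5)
We write $c_j=\clip(\bv_j,\tau)$ and $\delta=\bv_1-\bv_2$, and split into cases according to whether the indicator on the right-hand side of \eqref{clip-2} is $0$ or $1$.

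\textbf{Indicator equal to zero.} This is the case $\|\bv_1\|_2\le\tau$ and $\|\bv_2\|_2\le\tau$. Here $\min(1,\tau/\|\bv_j\|_2)=1$, so $c_1=\bv_1$ and $c_2=\bv_2$. The left-hand side is then exactly $0$ and the inequality holds trivially. (If some $\bv_j=0$, the clip is interpreted as $0$, which is consistent with this.)

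\textbf{Indicator equal to one.} Now the claim reduces to
\[
\|(c_1-c_2)-\delta\|_2\le\|\delta\|_2 .
\]
Expanding the square, this is equivalent to
\[
\|c_1-c_2\|_2^2\le 2\langle c_1-c_2,\delta\rangle .
\]
We use the fact, already noted before Lemma~\ref{lem:clip_diff}, that $\clip(\cdot,\tau)$ is the Euclidean projection $P$ onto the closed convex ball $\{\bx:\|\bx\|_2\le\tau\}$. Projections onto closed convex sets are firmly nonexpansive:
\[
\|P\bv_1-P\bv_2\|_2^2\le\langle P\bv_1-P\bv_2,\bv_1-\bv_2\rangle .
\]
This is obtained by adding the two variational inequalities $\langle \bv_j-P\bv_j,\,P\bv_k-P\bv_j\rangle\le0$ for $(j,k)=(1,2)$ and $(2,1)$. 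Firm nonexpansiveness gives $\|c_1-c_2\|_2^2\le\langle c_1-c_2,\delta\rangle$. Since this inner product is nonnegative, it is at most $2\langle c_1-c_2,\delta\rangle$, which proves the displayed inequality.

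Combining the two cases yields \eqref{clip-2}. The only step that needs any care is identifying the clip as a projection. This follows because $\bv\min(1,\tau/\|\bv\|_2)$ is the radial projection onto the ball of radius $\tau$, which is also the point of the ball closest to $\bv$. Everything else is routine.
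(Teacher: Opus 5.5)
Your proof is correct. The paper does not prove this lemma itself; it imports it from \citet{zeng2026stochastic}, so there is no in-paper argument to compare against. The only related ingredient the paper mentions is the remark before Lemma~\ref{lem:clip_diff} that $\clip(\cdot,\tau)$ is a projection onto a convex set, and that is exactly the structure your argument uses.

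Going through firm nonexpansiveness keeps the proof self-contained and avoids a geometric case split on which of $\bv_1,\bv_2$ lie outside the ball. It also yields slightly more than the statement. The inequality $\|c_1-c_2\|_2^2\le\langle c_1-c_2,\delta\rangle$ gives $\|(c_1-c_2)-\delta\|_2^2+\|c_1-c_2\|_2^2\le\|\delta\|_2^2$ in every case. So $P$ and $I-P$ are simultaneously nonexpansive, and Lemma~\ref{lem:clip_diff} follows as a byproduct.

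The same argument also covers the four-variable Lemma~\ref{lem:lipschitz-clip-2}, whose proof the paper omits. The map $(\bv_1,\bv_3)\mapsto(\clip(\bv_1,\tau_1),\clip(\bv_3,\tau_2))$ is the projection onto the product of two balls. That product is again a closed convex set, so your two cases apply unchanged to the stacked vectors.
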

We can easily generalize Lemma~\ref{lem:lipschitz-clip} with two variables $\bv_1,\bv_2$ in the following lemma, which shows the Lipschitz continuity of the bias of the clipping operator with four variables $\bv_1,\bv_2,\bv_3,\bv_4$. We omit the details for brevity.
\begin{lemma}\label{lem:lipschitz-clip-2}
For any $\bv_1,\bv_2, \bv_3, \bv_4$, we have
\begin{multline}
\bigg\|\begin{pmatrix}
\clip(\bv_1,\tau_1)-\clip(\bv_2,\tau_1))-(\bv_1-\bv_2)\\
\clip(\bv_3,\tau_2)-\clip(\bv_4,\tau_2))-(\bv_3-\bv_4)
\end{pmatrix}\bigg\|_2 \notag\\
\leq \bigg\|\begin{pmatrix}
\bv_1-\bv_2 \\
\bv_3-\bv_4
\end{pmatrix}\bigg\|_2 \ibb\big[\max\{\|\bv_1\|_2/\tau_1,\|\bv_2\|_2/\tau_1,\|\bv_3\|_2/\tau_2,\|\bv_4\|_2/\tau_2\}>1 \big].
\end{multline}
\end{lemma}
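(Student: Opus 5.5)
The plan is to reduce the four-variable statement to two applications of Lemma~\ref{lem:lipschitz-clip}, one per block, and then combine them through the block structure of the Euclidean norm. Write $\mathbf{a}=(\clip(\bv_1,\tau_1)-\clip(\bv_2,\tau_1))-(\bv_1-\bv_2)$ and $\mathbf{b}=(\clip(\bv_3,\tau_2)-\clip(\bv_4,\tau_2))-(\bv_3-\bv_4)$. The left-hand side is then $(\|\mathbf{a}\|_2^2+\|\mathbf{b}\|_2^2)^{1/2}$.

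\textbf{Bounding each block.} Apply Lemma~\ref{lem:lipschitz-clip} with threshold $\tau_1$ to the pair $(\bv_1,\bv_2)$. This gives $\|\mathbf{a}\|_2\leq \|\bv_1-\bv_2\|_2\,\ibb[\|\bv_1\|_2>\tau_1 \text{ or } \|\bv_2\|_2>\tau_1]$. Apply it again with threshold $\tau_2$ to $(\bv_3,\bv_4)$ to get the analogous bound on $\|\mathbf{b}\|_2$.

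\textbf{Replacing the indicators.} Each of these indicators is at most
\[
I:=\ibb\big[\max\{\|\bv_1\|_2/\tau_1,\|\bv_2\|_2/\tau_1,\|\bv_3\|_2/\tau_2,\|\bv_4\|_2/\tau_2\}>1\big].
\]
This holds because the event $\|\bv_1\|_2>\tau_1$ or $\|\bv_2\|_2>\tau_1$ is exactly the event that one of the first two ratios exceeds $1$, which implies the maximum exceeds $1$; the same reasoning covers the second block. Since indicators take values in $\{0,1\}$, we have $I^2=I$.

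\textbf{Combining the blocks.} Squaring and summing the two block bounds gives
\[
\|\mathbf{a}\|_2^2+\|\mathbf{b}\|_2^2\leq I\big(\|\bv_1-\bv_2\|_2^2+\|\bv_3-\bv_4\|_2^2\big).
\]
Taking square roots and using $\sqrt{I}=I$ yields the stated inequality.

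There is no real obstacle here. The only points needing care are that the thresholds differ between the blocks, which is why the maximum is taken over ratios rather than raw norms, and that squaring an indicator leaves it unchanged, so no constant is lost.
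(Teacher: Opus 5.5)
Your proposal is correct and follows the route the paper intends. The paper omits the details, calling the four-variable version an easy generalization of Lemma~\ref{lem:lipschitz-clip}; you fill them in by applying the two-variable bound blockwise with thresholds $\tau_1$ and $\tau_2$, bounding each block's indicator by the joint one, and recombining the squared norms using $\ibb[\cdot]^2=\ibb[\cdot]$.
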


We now prove that the bias of Clipped-SGDA satisfies a generalized Lipschitz-type condition  in Lemma~\ref{lem:bias-clip-st}, which uses Lemma~\ref{lem:lipschitz-clip-2} and Lemma~\ref{lem:clip-ibb}. Lemma~\ref{lem:clip-ibb} helps deal with the indicator function in the proof of Lemma~\ref{lem:bias-clip-st}. Let $S_r$ denote the $r$-th elemtn of $S$.

\begin{lemma}\label{lem:clip-ibb}
Let $\bw,\bw' \in\wcal, \bv,\bv' \in \vcal$ and $r$ follow the uniform distribution over $[n]$. Denote $\theta = (\bw,\bv), \theta'=(\bw',\bv')$. Then
\begin{align*}
\frac{1}{n}\sum_{j:j\neq i} \ibb\big[\max\{\|\nabla_{\bw} f(\theta;\bz_j)\|_2/\tau_1,\|\nabla_{\bw} f(\theta';\bz_j)\|_2/\tau_1,\|\nabla_{\bv} f(\theta';\bz_j)\|_2/\tau_2,\|\nabla_{\bv} f(\theta;\bz_j)\|_2/\tau_2\}>1 \big]\\
\leq
\frac{\ebb_r[\|\nabla_{\bw} f(\theta;S_r)\|_2^2]}{\tau_1^2}
+ \frac{\ebb_r[\|\nabla_{\bw} f(\theta';S_r^{(i)})\|_2^2]}{\tau_1^2}
+ \frac{\ebb_r[\|\nabla_{\bv} f(\theta;S_r)\|_2^2]}{\tau_2^2}
+ \frac{\ebb_r[\|\nabla_{\bv} f(\theta';S_r^{(i)})\|_2^2]}{\tau_2^2}.
\end{align*}
\end{lemma}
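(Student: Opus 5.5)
The plan is a union bound on the indicator, followed by Markov's inequality applied term by term, and then extending the sum over $j\neq i$ to all of $[n]$ so that the averages become $\ebb_r[\cdot]$. First, the indicator that the maximum of four nonnegative ratios exceeds $1$ is at most the sum of the four indicators:
\[
\ibb\big[\|\nabla_{\bw} f(\theta;\bz_j)\|_2>\tau_1\big]+\ibb\big[\|\nabla_{\bw} f(\theta';\bz_j)\|_2>\tau_1\big]+\ibb\big[\|\nabla_{\bv} f(\theta;\bz_j)\|_2>\tau_2\big]+\ibb\big[\|\nabla_{\bv} f(\theta';\bz_j)\|_2>\tau_2\big].
\]
Second, each indicator is bounded pointwise by the squared ratio, e.g.
\[
\ibb\big[\|\nabla_{\bw} f(\theta;\bz_j)\|_2>\tau_1\big]\leq \|\nabla_{\bw} f(\theta;\bz_j)\|_2^2/\tau_1^2,
\]
which is the deterministic form of Markov/Chebyshev.

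Next, average over $j\neq i$ with weight $1/n$. For the terms evaluated at $\theta$, all summands are nonnegative, so adding back the $j=i$ term only increases the sum. This gives
\[
\frac1n\sum_{j\neq i}\|\nabla_{\bw} f(\theta;\bz_j)\|_2^2\leq \frac1n\sum_{j=1}^n\|\nabla_{\bw} f(\theta;S_j)\|_2^2=\ebb_r[\|\nabla_{\bw} f(\theta;S_r)\|_2^2],
\]
and likewise for the $\bv$-component.

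For the terms evaluated at $\theta'$, note that $S^{(i)}_j=\bz_j$ for every $j\neq i$. So $\frac1n\sum_{j\neq i}\|\nabla_{\bw} f(\theta';\bz_j)\|_2^2$ equals the sum over $j\neq i$ of the $S^{(i)}$-indexed terms. Adding the nonnegative $j=i$ term (involving $z_i'$) bounds it by $\ebb_r[\|\nabla_{\bw} f(\theta';S_r^{(i)})\|_2^2]$, and the same holds for the $\bv$-component.

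Summing the four bounds gives exactly the stated inequality. There is no real obstacle here. The only point needing care is to attach each $\theta'$ term to the dataset $S^{(i)}$ rather than $S$, which the identity $S^{(i)}_j=\bz_j$ for $j\neq i$ makes legitimate. This is what later lets Assumption~\ref{ass:moment} for $S^{(i)}$ (with $G_{S^{(i)}}$) be applied together with the symmetry argument.
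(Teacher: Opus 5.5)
Your proof is correct and follows essentially the same argument as the paper: a union bound over the four events, Markov's inequality, and absorbing the missing $j=i$ term into the full averages over $S$ and $S^{(i)}$ via $S^{(i)}_j=\bz_j$ for $j\neq i$. The only difference is the order of steps. You apply the union bound and the pointwise bound $\ibb[\|x\|_2>\tau]\le\|x\|_2^2/\tau^2$ first and then extend the sum, whereas the paper first bounds the sum by $\ebb_r$ of the indicator and then applies the union bound and Markov to probabilities over $r$.
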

\begin{proof}
We know
\begin{align*}
  & \frac{1}{n}\sum_{j:j\neq i} \ibb\big[\max\{\|\nabla_{\bw} f(\theta;\bz_j)\|_2/\tau_1,\|\nabla_{\bw} f(\theta';\bz_j)\|_2/\tau_1,\|\nabla_{\bv} f(\theta';\bz_j)\|_2/\tau_2,\|\nabla_{\bv} f(\theta;\bz_j)\|_2/\tau_2\}>1 \big]\\
  & \leq \ebb_r \big[\ibb\big[\max\{\|\nabla_{\bw} f(\theta;S_r)\|_2/\tau_1,\|\nabla_{\bw} f(\theta';S_r^{(i)})\|_2/\tau_1,\|\nabla_{\bv} f(\theta';S_r^{(i)})\|_2/\tau_2,\|\nabla_{\bv} f(\theta;S_r)\|_2/\tau_2\}>1 \big]\big]\\
  & \leq \mathrm{Pr}\big\{\|\nabla_{\bw} f(\theta;S_r)\|_2>\tau_1\big\} + \mathrm{Pr}\big\{\|\nabla_{\bw} f(\theta';S_r^{(i)})\|_2>\tau_1\big\}
  + \mathrm{Pr}\big\{\|\nabla_{\bv} f(\theta';S_r^{(i)})\|_2>\tau_2\big\}\\
  &+ \mathrm{Pr}\big\{\|\nabla_{\bv} f(\theta;S_r)\|_2>\tau_2\big\}.
  \end{align*}
By Markov's inequality, we know
\begin{align*}
\mathrm{Pr}\big\{\|\nabla_{\bw} f(\theta;S_r)\|_2>\tau_1\big\}
= \mathrm{Pr}\big\{\|\nabla_{\bw} f(\theta;S_r)\|_2^2>\tau_1^2\big\}
\leq \frac{\ebb_r[\|\nabla_{\bw} f(\theta;S_r)\|_2^2]}{\tau_1^2}, 
\end{align*}
and similarly,
\begin{align*}
& \mathrm{Pr}\big\{\|\nabla_{\bw} f(\theta';S_r^{(i)})\|_2>\tau_1\big\} \leq \frac{\ebb_r[\|\nabla_{\bw} f(\theta';S_r^{(i)})\|_2^2]}{\tau_1^2},\\
& \mathrm{Pr}\big\{\|\nabla_{\bv} f(\theta;S_r)\|_2>\tau_2\big\}
\leq \frac{\ebb_r[\|\nabla_{\bv} f(\theta;S_r)\|_2^2]}{\tau_2^2},\\
&\mathrm{Pr}\big\{\|\nabla_{\bv} f(\theta';S_r^{(i)})\|_2>\tau_2\big\} \leq \frac{\ebb_r[\|\nabla_{\bv} f(\theta';S_r^{(i)})\|_2^2]}{\tau_2^2}.
\end{align*}
We complete the proof by combining all these inequalities together.
\end{proof}

\begin{lemma}\label{lem:bias-clip-st}
Let $\bw,\bw' \in\wcal, \bv,\bv' \in \vcal$ and $r$ follow the uniform distribution over $[n]$. Denote $\theta = (\bw,\bv), \theta'=(\bw',\bv')$. Then
  \begin{multline*}
  \Bigg\|\begin{pmatrix}
    \ebb_{r}[\clip(\nabla_{\bw} f(\theta;S_r),\tau_1)]-\nabla_{\bw} F_S(\theta)-\ebb_{r}[\clip(\nabla_{\bw} f(\theta';S^{(i)}_{r}),\tau_1)]+\nabla_{\bw} F_{S^{(i)}}(\theta')\\
    \ebb_{r}[\clip(\nabla_{\bv} f(\theta';S^{(i)}_r),\tau_2)]-\nabla_{\bv} F_{S^{(i)}}(\theta')-\ebb_{r}[\clip(\nabla_{\bv} f(\theta;S_{r}),\tau_2)]+\nabla_{\bv} F_S(\theta)
    \end{pmatrix}\Bigg\|_2 \\
    \leq L \|\theta-\theta'\|_2
    \Bigg(\frac{\ebb_r[\|\nabla_{\bw} f(\theta;S_r)\|_2^2+\|\nabla_{\bw} f(\theta';S_r^{(i)})\|_2^2]}{\tau_1^2}
  + \frac{\ebb_r[\|\nabla_{\bv} f(\theta';S_r^{(i)})\|_2^2+\|\nabla_{\bv} f(\theta;S_r)\|_2^2]}{\tau_2^2}\Bigg)\\
+ \frac{1}{n}\bigg\|\begin{pmatrix}
    \nabla_{\bw} f(\theta;\bz_i)-\nabla_{\bw} f(\theta';\bz_i')\\
    \nabla_{\bv} f(\theta';\bz_i')-\nabla_{\bv} f(\theta;\bz_i)
    \end{pmatrix}\bigg\|_2.
  \end{multline*}
\end{lemma}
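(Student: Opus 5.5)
We will expand the expectations over $r$ as empirical averages and split the index set into $j\neq i$ and $j=i$. Since $S_r$ and $S_r^{(i)}$ agree at every $j\neq i$, and $\nabla F_S(\theta)=\frac1n\sum_j\nabla f(\theta;z_j)$, the vector on the left-hand side equals
\[
\frac1n\sum_{j:j\neq i}\begin{pmatrix}
\big(\clip(a_j,\tau_1)-\clip(a_j',\tau_1)\big)-(a_j-a_j')\\
\big(\clip(c_j',\tau_2)-\clip(c_j,\tau_2)\big)-(c_j'-c_j)
\end{pmatrix}
+\frac1n\,\mathbf{r}_i,
\]
where $a_j=\nabla_{\bw}f(\theta;z_j)$, $a_j'=\nabla_{\bw}f(\theta';z_j)$, $c_j=\nabla_{\bv}f(\theta;z_j)$ and $c_j'=\nabla_{\bv}f(\theta';z_j)$. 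The remainder $\mathbf{r}_i$ collects the $j=i$ terms, in which $z_i$ is paired with $\theta$ and $z_i'$ with $\theta'$.

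\textbf{The $j=i$ term.} By the triangle inequality it suffices to bound each part separately. The $j=i$ part is the clipped gradient difference minus the unclipped one. Using the non-expansiveness of Lemma~\ref{lem:clip_diff}, or directly Lemma~\ref{lem:lipschitz-clip-2} with the indicator bounded by $1$, its norm is at most the norm of the unclipped difference vector
\[
\begin{pmatrix}\nabla_{\bw} f(\theta;z_i)-\nabla_{\bw} f(\theta';z_i')\\ \nabla_{\bv} f(\theta';z_i')-\nabla_{\bv} f(\theta;z_i)\end{pmatrix}.
\]
With the factor $1/n$ this gives exactly the last term of the claim. Strictly, ``clip difference minus raw difference'' is bounded by the raw difference through Lemma~\ref{lem:lipschitz-clip-2}, which is the form we need.

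\textbf{The $j\neq i$ terms.} For each $j\neq i$ we apply Lemma~\ref{lem:lipschitz-clip-2} with $\bv_1=a_j$, $\bv_2=a_j'$, $\bv_3=c_j'$ and $\bv_4=c_j$. This bounds the $j$-th summand by $\|(a_j-a_j',\,c_j'-c_j)\|_2$ times the indicator that one of the four normalized norms exceeds $1$. Assumption~\ref{ass:smooth} bounds the first factor by $L\|\theta-\theta'\|_2$. Pulling this factor out, the remaining quantity is $\frac1n\sum_{j\neq i}$ of the indicator, which is exactly the left-hand side of Lemma~\ref{lem:clip-ibb}. That lemma bounds it by the four second-moment ratios. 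Since $S^{(i)}_r=z_j$ for $r=j\neq i$, this matches the stated terms.

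The main obstacle is bookkeeping: aligning the sign conventions in the $\bv$-block with the argument order in Lemma~\ref{lem:lipschitz-clip-2}, and checking that the indicator's arguments are the ones Lemma~\ref{lem:clip-ibb} controls, evaluated at $\theta$ on $S$ and $\theta'$ on $S^{(i)}$. Once that is matched, combining the two pieces by the triangle inequality gives the claim.
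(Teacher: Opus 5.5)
Your proposal is correct and is essentially the paper's proof. The paper moves the norm inside $\ebb_r$ by Jensen's inequality, which is the same as your triangle inequality over the $n$ summands; it then applies Lemma~\ref{lem:lipschitz-clip-2} pointwise, uses smoothness with Lemma~\ref{lem:clip-ibb} for $j\neq i$, and bounds $\ibb[\cdot]\leq 1$ for $j=i$. You are right to use Lemma~\ref{lem:lipschitz-clip-2} rather than Lemma~\ref{lem:clip_diff} for the $j=i$ term, since non-expansiveness of $\clip$ plus the triangle inequality would only give a factor $2$.
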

\begin{proof}
  Noting that
  \begin{align*}
  \nabla_{\bw} F_S(\theta) = \ebb_r[\nabla_{\bw} f(\theta;S_r)],\qquad \nabla_{\bw} F_{S^{(i)}}(\theta') = \ebb_r[\nabla_{\bw} f(\theta';S^{(i)}_r)], \\
  \nabla_{\bv} F_S(\theta) = \ebb_r[\nabla_{\bv} f(\theta;S_r)], \qquad \nabla_{\bv} F_{S^{(i)}}(\theta') = \ebb_r[\nabla_{\bv} f(\theta';S^{(i)}_r)],
  \end{align*}
  we know (by Jensen's inequality)
  \begin{align}\label{clip-bias-1}
   & \Bigg\|\begin{pmatrix}
    \ebb_{r}[\clip(\nabla_{\bw} f(\theta;S_r),\tau_1)]-\nabla_{\bw} F_S(\theta)-\ebb_{r}[\clip(\nabla_{\bw} f(\theta';S^{(i)}_{r}),\tau_1)]+\nabla_{\bw} F_{S^{(i)}}(\theta')\\
    \ebb_{r}[\clip(\nabla_{\bv} f(\theta';S^{(i)}_r),\tau_2)]-\nabla_{\bv} F_{S^{(i)}}(\theta')-\ebb_{r}[\clip(\nabla_{\bv} f(\theta;S_{r}),\tau_2)]+\nabla_{\bv} F_S(\theta)
    \end{pmatrix}\Bigg\|_2\notag\\
    & \leq \ebb_r \Bigg[\Bigg\|\begin{pmatrix}
    \clip(\nabla_{\bw} f(\theta;S_r),\tau_1)-\nabla_{\bw} f(\theta;S_r)-\clip(\nabla_{\bw} f(\theta';S^{(i)}_{r}),\tau_1)+\nabla_{\bw} f(\theta';S^{(i)}_r)\\
    \clip(\nabla_{\bv} f(\theta';S^{(i)}_r),\tau_2)-\nabla_{\bv} f(\theta';S^{(i)}_r)-\clip(\nabla_{\bv} f(\theta;S_{r}),\tau_2)+\nabla_{\bv} f(\theta;S_r)
    \end{pmatrix}\Bigg\|_2\Bigg]\notag\\
    & \leq \ebb_r \Bigg[\Bigg\|\begin{pmatrix}
    \nabla_{\bw} f(\theta;S_r)-\nabla_{\bw} f(\theta';S^{(i)}_r)\\
    \nabla_{\bv} f(\theta';S^{(i)}_r)-\nabla_{\bv} f(\theta;S_r)
    \end{pmatrix}\Bigg\|_2
   \ibb
    \begin{bmatrix}
    \max\{\|\nabla_{\bw} f(\theta;S_r)\|_2/\tau_1,\|\nabla_{\bw} f(\theta';S_r^{(i)})\|_2/\tau_1,\\
    \|\nabla_{\bv} f(\theta';S_r^{(i)})\|_2/\tau_2,\|\nabla_{\bv} f(\theta;S_r)\|_2/\tau_2\}>1
    \end{bmatrix}
    \Bigg],
  \end{align}
  where we use Lemma \ref{lem:lipschitz-clip-2} for the last step. Since $r$ follows the uniform distribution over $[n]$, we know
  \begin{align*}
  & \ebb_r \Bigg[\Bigg\|\begin{pmatrix}
    \nabla_{\bw} f(\theta;S_r)-\nabla_{\bw} f(\theta';S^{(i)}_r)\\
    \nabla_{\bv} f(\theta';S^{(i)}_r)-\nabla_{\bv} f(\theta;S_r)
    \end{pmatrix}\Bigg\|_2 \ibb
    \begin{bmatrix}
    \max\{\|\nabla_{\bw} f(\theta;S_r)\|_2/\tau_1,\|\nabla_{\bw} f(\theta';S_r^{(i)})\|_2/\tau_1,\\
    \|\nabla_{\bv} f(\theta';S_r^{(i)})\|_2/\tau_2,\|\nabla_{\bv} f(\theta;S_r)\|_2/\tau_2\}>1
    \end{bmatrix}
    \Bigg]\\
  & = \frac{1}{n}\sum_{j:j\neq i}\Bigg\|\begin{pmatrix}
    \nabla_{\bw} f(\theta;\bz_j)-\nabla_{\bw} f(\theta';\bz_j)\\
    \nabla_{\bv} f(\theta';\bz_j)-\nabla_{\bv} f(\theta;\bz_j)
    \end{pmatrix}\Bigg\|_2 \ibb
    \begin{bmatrix}
    \max\{\|\nabla_{\bw} f(\theta;z_j)\|_2/\tau_1,\|\nabla_{\bw} f(\theta';z_j)\|_2/\tau_1,\\
    \|\nabla_{\bv} f(\theta';z_j)\|_2/\tau_2,\|\nabla_{\bv} f(\theta;z_j)\|_2/\tau_2\}>1
    \end{bmatrix}\\
    & + \frac{1}{n}\Bigg\|\begin{pmatrix}
    \nabla_{\bw} f(\theta;\bz_i)-\nabla_{\bw} f(\theta';\bz_i')\\
    \nabla_{\bv} f(\theta';\bz_i')-\nabla_{\bv} f(\theta;\bz_i)
    \end{pmatrix}\Bigg\|_2 \ibb
    \begin{bmatrix}
    \max\{\|\nabla_{\bw} f(\theta;z_i)\|_2/\tau_1,\|\nabla_{\bw} f(\theta';z_i')\|_2/\tau_1,\\
    \|\nabla_{\bv} f(\theta';z_i')\|_2/\tau_2,\|\nabla_{\bv} f(\theta;z_i)\|_2/\tau_2\}>1
    \end{bmatrix}.
  \end{align*}
  By the smoothness of $\theta\mapsto f(\theta;z)$ (Assumption~\ref{ass:smooth}) and Lemma \ref{lem:clip-ibb}, we further get
    \begin{align}\label{clip-bias-2}
  & \ebb_r \Bigg[\Bigg\|\begin{pmatrix}
    \nabla_{\bw} f(\theta;S_r)-\nabla_{\bw} f(\theta';S^{(i)}_r) \notag\\
    \nabla_{\bv} f(\theta';S^{(i)}_r)-\nabla_{\bv} f(\theta;S_r)
    \end{pmatrix}\Bigg\|_2 \ibb
    \begin{bmatrix}
    \max\{\|\nabla_{\bw} f(\theta;S_r)\|_2/\tau_1,\|\nabla_{\bw} f(\theta';S_r^{(i)})\|_2/\tau_1,\\
    \|\nabla_{\bv} f(\theta';S_r^{(i)})\|_2/\tau_2,\|\nabla_{\bv} f(\theta;S_r)\|_2/\tau_2\}>1
    \end{bmatrix}
    \Bigg]\notag\\
  & \leq L \|\theta-\theta'\|_2
    \Bigg(\!\frac{\ebb_r[\|\nabla_{\bw} f(\theta;S_r)\|_2^2\!+\!\|\nabla_{\bw} f(\theta';S_r^{(i)})\|_2^2]}{\tau_1^2}
  + \frac{\ebb_r[\|\nabla_{\bv} f(\theta';S_r^{(i)})\|_2^2\!+\!\|\nabla_{\bv} f(\theta;S_r)\|_2^2]}{\tau_2^2}\!\Bigg)\notag\\
&  + \frac{1}{n}\bigg\|\begin{pmatrix}
    \nabla_{\bw} f(\theta;\bz_i)-\nabla_{\bw} f(\theta';\bz_i')\\
    \nabla_{\bv} f(\theta';\bz_i')-\nabla_{\bv} f(\theta;\bz_i)
    \end{pmatrix}\bigg\|_2,
  \end{align}
  where we simply use the inequality
  \[\ibb \big[
    \max\{\|\nabla_{\bw} f(\theta;z_i)\|_2/\tau_1,\|\nabla_{\bw} f(\theta';z_i')\|_2/\tau_1,
    \|\nabla_{\bv} f(\theta';z_i')\|_2/\tau_2,\|\nabla_{\bv} f(\theta;z_i)\|_2/\tau_2\}>1
    \big] \leq 1.
    \]
  Plugging Eq.~\eqref{clip-bias-2} into Eq.~\eqref{clip-bias-1} completes the proof.
\end{proof}

We now apply the above lemma to show that Eq.~\eqref{simple-condition-c} holds for Clipped-SGDA.

\begin{lemma}\label{lem:condition-b-clip}
Let $g(\bw_t,\bv_t;z_{i_t})$ be defined in Eq.~\eqref{g-clip}. Let Assumptions \ref{ass:smooth}, \ref{ass:moment} hold with $G_S\leq G$ for some $G>0$. Then, Eq. \eqref{simple-condition-c} holds with $\bar{A} = 8L^2G^4(\frac{1}{\tau_1^2}+\frac{1}{\tau_2^2})^2, \bar{B} = \frac{16G^2}{n^2}$.
\end{lemma}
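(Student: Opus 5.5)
We apply Lemma~\ref{lem:bias-clip-st} with $\theta=(\bw_t,\bv_t)$ and $\theta'=(\bw_t^{(i)},\bv_t^{(i)})$, then square, average over $i$ and take expectations. With $\tilde f_1=\tilde f_2=f$ and $i_t$ uniform over $[n]$, the conditional expectation in Eq.~\eqref{bias} gives
\[
b_{t,\bw}=\ebb_r[\clip(\nabla_{\bw} f(\theta;S_r),\tau_1)]-\nabla_{\bw}F_S(\theta),
\qquad
b^{(i)}_{t,\bw}=\ebb_r[\clip(\nabla_{\bw} f(\theta';S^{(i)}_r),\tau_1)]-\nabla_{\bw}F_{S^{(i)}}(\theta'),
\]
and analogously for $b_{t,\bv}$ and $b^{(i)}_{t,\bv}$. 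Hence the vector on the left of Eq.~\eqref{simple-condition-c} is exactly the vector bounded in Lemma~\ref{lem:bias-clip-st}.

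First, bound the multiplicative factor. By Assumption~\ref{ass:moment} with $G_S\le G$, each of $\ebb_r\|\nabla_{\bw} f(\theta;S_r)\|_2^2$ and $\ebb_r\|\nabla_{\bv} f(\theta;S_r)\|_2^2$ is at most $G^2$. The same bound holds for the $S^{(i)}$ terms, since $S^{(i)}$ is also a sample and the assumption is applied to it as well, with the same uniform constant $G$. The bracket in Lemma~\ref{lem:bias-clip-st} is therefore at most $2G^2(\tau_1^{-2}+\tau_2^{-2})$, and the first term is at most $2LG^2(\tau_1^{-2}+\tau_2^{-2})\|\theta-\theta'\|_2$.

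Next, square using $(a+b)^2\le 2a^2+2b^2$. This gives
\[
\Big\|\begin{pmatrix} b_{t,\bw}-b^{(i)}_{t,\bw}\\ b^{(i)}_{t,\bv}-b_{t,\bv}\end{pmatrix}\Big\|_2^2
\le 8L^2G^4\big(\tfrac{1}{\tau_1^2}+\tfrac{1}{\tau_2^2}\big)^2\|\theta-\theta'\|_2^2
+\frac{2}{n^2}\Big\|\begin{pmatrix}\nabla_{\bw} f(\theta;z_i)-\nabla_{\bw} f(\theta';z_i')\\ \nabla_{\bv} f(\theta';z_i')-\nabla_{\bv} f(\theta;z_i)\end{pmatrix}\Big\|_2^2 .
\]
Averaging over $i\in[n]$ and taking expectations, Lemma~\ref{lem:gradient-norm} bounds the averaged second term by $8\ebb[G_S^2]\le 8G^2$. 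This yields $\bar A=8L^2G^4(\tau_1^{-2}+\tau_2^{-2})^2$ and $\bar B=16G^2/n^2$, as claimed.

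The main point needing care is the identification of $b_t-b_t^{(i)}$ with the expression in Lemma~\ref{lem:bias-clip-st}. This requires that the conditioning in $\ebb_t$ fixes both $S$ and $S'$, so that $\ebb_t$ reduces to averaging over $i_t$ alone. The other point needing care is applying the second-moment bound $G^2$ at the point $\theta'$ for the perturbed dataset $S^{(i)}$. Everything else is direct arithmetic.
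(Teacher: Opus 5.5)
Your proposal is correct and follows the paper's proof essentially step for step. You apply Lemma~\ref{lem:bias-clip-st} at $\theta_t,\theta_t^{(i)}$, bound the second-moment bracket by $2G^2(\tau_1^{-2}+\tau_2^{-2})$, square with $(a+b)^2\le 2a^2+2b^2$, and control the averaged $1/n$ term via Lemma~\ref{lem:gradient-norm}. The only difference is that you apply the uniform constant $G$ explicitly to the $S^{(i)}$ terms, which the paper covers loosely by writing $G_S$.
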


\begin{proof}[Proof of Lemma~\ref{lem:condition-b-clip}]
Denote
$\theta_t=(\bw_t,\bv_t)$ and
$\theta_t^{(i)}=(\bw_t^{(i)},\bv_t^{(i)})$.
From Lemma~\ref{lem:bias-clip-st}, we know
\begin{align}
& \Bigg\|\begin{pmatrix}
        b_{t,\bw} - b_{t,\bw}^{(i)}\\
        b_{t,\bv}^{(i)} - b_{t,\bv}
        \end{pmatrix}\Bigg\|_2 \notag\\
& = \Bigg\|\!\begin{pmatrix}
        \ebb_{i_t}[\clip(\nabla_{\bw} f(\theta_t;z_{i_t}),\tau_1)]
        -\nabla_{\bw}F_S(\theta_t)
        -\ebb_{i_t}[\clip(\nabla_{\bw} f(\theta_t^{(i)};z^{(i)}_{i_t}),\tau_1)]
        +\nabla_{\bw}F_{S^{(i)}}(\theta_t^{(i)})\\
        \ebb_{i_t}[\clip(\nabla_{\bv} f(\theta_t^{(i)};z^{(i)}_{i_t}),\tau_2)]
        -\nabla_{\bv}F_{S^{(i)}}(\theta_t^{(i)})
        -\ebb_{i_t}[\clip(\nabla_{\bv} f(\theta_t;z_{i_t}),\tau_2)]
        +\nabla_{\bv}F_S(\theta_t)
        \end{pmatrix}\Bigg\|_2 \notag\\
& \leq \frac{1}{n}\Bigg\|\begin{pmatrix}
    \nabla_{\bw} f(\theta_t;\bz_i)
    -\nabla_{\bw} f(\theta_t^{(i)};\bz_i')\\
    \nabla_{\bv} f(\theta_t^{(i)};\bz_i')
    -\nabla_{\bv} f(\theta_t;\bz_i)
    \end{pmatrix}\Bigg\|_2
+2LG_S^2\Big(\frac{1}{\tau_1^2}
    +\frac{1}{\tau_2^2}\Big)
    \Bigg\|\begin{pmatrix}
        \bw_t-\bw_t^{(i)}\\
        \bv_t-\bv_t^{(i)}
    \end{pmatrix}\Bigg\|_2,
\end{align}
where we use Assumption~\ref{ass:moment} in the last step.
It then follows from $(a+b)^2\leq2(a^2+b^2)$ that
\begin{align}
&\ebb\bigg[\frac{1}{n}\sum_{i=1}^n
\bigg\|\begin{pmatrix}
        b_{t,\bw} - b_{t,\bw}^{(i)}\\
        b_{t,\bv}^{(i)} - b_{t,\bv}
        \end{pmatrix}\bigg\|_2^2\bigg] \notag\\
& \leq \frac{2}{n^2}\ebb\Bigg[
\frac{1}{n}\sum_{i=1}^n
\Bigg\|\begin{pmatrix}
    \nabla_{\bw} f(\theta_t;\bz_i)
    -\nabla_{\bw} f(\theta_t^{(i)};\bz_i')\\
    \nabla_{\bv} f(\theta_t^{(i)};\bz_i')
    -\nabla_{\bv} f(\theta_t;\bz_i)
    \end{pmatrix}\Bigg\|_2^2\Bigg]
    +8L^2\Big(\frac{1}{\tau_1^2}
    +\frac{1}{\tau_2^2}\Big)^2
    \ebb\Bigg[
    \frac{G_S^4}{n}\sum_{i=1}^n
    \Bigg\|\begin{pmatrix}
        \bw_t-\bw_t^{(i)}\\
        \bv_t-\bv_t^{(i)}
    \end{pmatrix}\Bigg\|_2^2\Bigg] \notag\\
& \leq
    8L^2\Big(\frac{1}{\tau_1^2}
    +\frac{1}{\tau_2^2}\Big)^2
    \ebb\Bigg[
    \frac{G_S^4}{n}\sum_{i=1}^n
    \Bigg\|\begin{pmatrix}
        \bw_t-\bw_t^{(i)}\\
        \bv_t-\bv_t^{(i)}
    \end{pmatrix}\Bigg\|_2^2\Bigg]
    +\frac{16\ebb[G_S^2]}{n^2},
\end{align}
where we use Lemma~\ref{lem:gradient-norm} and
Assumption~\ref{ass:moment} for the last step.
The proof is completed.
\end{proof}

We then apply Lemma~\ref{lem:condition-a-clip} and Lemma~\ref{lem:condition-b-clip} to directly derive Theorem~\ref{cor:clip-stability}. We omit the details for brevity.

The following lemma presents the optimization error bounds for Clipped-SGDA.

\begin{proof}[Proof of Lemma~\ref{lem:opt-clip}]
Let $\bw \in B_R$.
We denote $\hat{g}_{\bw,t}=\clip(\nabla_{\bw} f (\bw_t,\bv_t;z_{i_t}),\tau_1)$.
By the update of Clipped-SGD and note that the projection step is nonexpansive, we know that
\begin{align*}
  \|\bw_{t+1}-\bw\|_2^2 & \leq \big\|\bw_t-\eta_t \hat{g}_{\bw,t}-\bw\big\|_2^2
   = \|\bw_t-\bw\|_2^2+\eta_t^2\|\hat{g}_{\bw,t}\|_2^2+2\eta_t \langle\bw-\bw_t,\hat{g}_{\bw,t}\rangle.
\end{align*}
By $(a+b+c)^2\leq 3a^2+3b^2+3c^2$, we know
\[
\|\hat{g}_{\bw,t}\|_2^2\leq 3\|\hat{g}_{\bw,t}-\ebb_t[\hat{g}_{\bw,t}]\|_2^2+3\|\ebb_t[\hat{g}_{\bw,t}]-\nabla_{\bw} F_S(\bw_t,\bv_t)\|_2^2+3\|\nabla_{\bw} F_S(\bw_t,\bv_t)\|_2^2
\]
and
\begin{multline*}
  \langle\bw-\bw_t,\hat{g}_{\bw,t}\rangle=\langle\bw-\bw_t,\hat{g}_{\bw,t} - \ebb_t[\hat{g}_{\bw,t}]\rangle+\langle\bw-\bw_t,\ebb_t[\hat{g}_{\bw,t}]-\nabla_{\bw} F_S(\bw_t,\bv_t)\rangle\\
  +\langle\bw-\bw_t,\nabla_{\bw} F_S(\bw_t,\bv_t)\rangle.
\end{multline*}
We combine the above three inequalities together
to derive
\begin{multline*}
  \|\bw_{t+1}-\bw\|_2^2 \leq \|\bw_t-\bw\|_2^2+3\eta_t^2\|\hat{g}_{\bw,t}-\ebb_t[\hat{g}_{\bw,t}]\|_2^2+3\eta_t^2\|\ebb_t[\hat{g}_{\bw,t}]-\nabla_{\bw} F_S(\bw_t,\bv_t)\|_2^2 \\
  + 3\eta_t^2 \|\nabla_{\bw} F_S(\bw_t,\bv_t)\|_2^2
  +2\eta_t \langle\bw-\bw_t,\hat{g}_{\bw,t} - \ebb_t[\hat{g}_{\bw,t}]\rangle +2\eta_t\langle\bw-\bw_t,\ebb_t[\hat{g}_{\bw,t}]-\nabla_{\bw} F_S(\bw_t,\bv_t)\rangle\\
  +2\eta_t \langle\bw-\bw_t,\nabla_{\bw} F_S(\bw_t,\bv_t)\rangle.
\end{multline*}
It was shown that~\citep{zhang2020adaptive}
\[
\big\|\ebb_t[\hat{g}_{\bw,t}]-\nabla_{\bw} F_S(\bw_t,\bv_t)\big\|_2\leq G^{2}\tau_1^{-1}.
\]
Then we derive that
\begin{align*}
  & \|\bw_{t+1}-\bw\|_2^2
  \leq \|\bw_t-\bw\|_2^2+3\eta_t^2\|\hat{g}_{\bw,t}-\ebb_t[\hat{g}_{\bw,t}]\|_2^2+3\eta_t^2G^{4}\tau_1^{-2} + 3\eta_t^2 \|\nabla_{\bw} F_S(\bw_t,\bv_t)\|_2^2\\
  & +2\eta_t \langle\bw-\bw_t,\hat{g}_{\bw,t} - \ebb_t[\hat{g}_{\bw,t}]\rangle +2G\tau_1^{-1}\eta_t\|\bw-\bw_t\|_2+2\eta_t \langle\bw-\bw_t,\nabla_{\bw} F_S(\bw_t,\bv_t)\rangle\\
  & \leq \|\bw_t-\bw\|_2^2+3\eta_t^2\|\hat{g}_{\bw,t}-\ebb_t[\hat{g}_{\bw,t}]\|_2^2+3\eta_t^2G^{4}\tau_1^{-2} + 3\eta_t^2 \|\nabla_{\bw} F_S(\bw_t,\bv_t)\|_2^2\\
  & +2\eta_t \langle\bw-\bw_t,\hat{g}_{\bw,t} - \ebb_t[\hat{g}_{\bw,t}]\rangle +2G^2\tau_1^{-1}\eta_t\|\bw-\bw_t\|_2+2\eta_t\big(F_S(\bw,\bv_t)-F_S(\bw_t,\bv_t)\big),
\end{align*}
where we use the convexity of $\bw\mapsto F_S(\bw,\bv)$ for the last step above.
Taking a summation of the above inequality from $t=1$ to $t=T$ ($\bw_1=0$), we derive 
\begin{align*}
  & \|\bw_{T+1}-\bw\|_2^2 + 2\sum_{t=1}^T\eta_t \big(F_S(\bw_t,\bv_t)-F_S(\bw,\bv_t)\big) \\
  & \leq \|\bw_1-\bw\|_2^2+3\sum_{t=1}^T\eta_t^2\|\hat{g}_{\bw,t}-\ebb_t[\hat{g}_{\bw,t}]\|_2^2 +3 G^{4}\tau_1^{-2} \sum_{t=1}^T\eta_t^2
  + 3\sum_{t=1}^T\eta_t^2 \|\nabla_{\bw} F_S(\bw_t,\bv_t)\|_2^2\notag\\
  & +2\sum_{t=1}^T\eta_t \langle\bw-\bw_t,\hat{g}_{\bw,t} - \ebb_t[\hat{g}_{\bw,t}]\rangle +2G^2\tau_1^{-1}\sum_{t=1}^T\eta_t\|\bw_t-\bw\|_2.
\end{align*}
Setting $\eta_t = \eta$, we further get
\begin{align}\label{clip-optimization-0}
  & \|\bw_{T+1}-\bw\|_2^2 + 2\eta \sum_{t=1}^T\big(F_S(\bw_t,\bv_t)-F_S(\bw,\bv_t)\big) \notag\\
  & \leq \|\bw_1-\bw\|_2^2+3\eta^2\sum_{t=1}^T\|\hat{g}_{\bw,t}-\ebb_t[\hat{g}_{\bw,t}]\|_2^2
  +3 G^{4}\tau_1^{-2} \eta^2 T
  + 3\eta^2\sum_{t=1}^T \|\nabla_{\bw} F_S(\bw_t,\bv_t)\|_2^2\notag\\
  & +2\eta \sum_{t=1}^T \langle\bw-\bw_t,\hat{g}_{\bw,t} - \ebb_t[\hat{g}_{\bw,t}]\rangle +2G^2\tau_1^{-1}\eta\sum_{t=1}^T\|\bw_t-\bw\|_2.
\end{align}
It then follows from the concavity of $F_S(\mathbf{w}, \cdot)$ and Schwartz's inequality that
\begin{align*}
  & \|\bw_{T+1}-\bw\|_2^2 + 2\eta \sum_{t=1}^T\big(F_S(\bw_t,\bv_t)-F_S(\bw,\bar{\bv}_T)\big) \\
  & \leq \|\bw_1-\bw\|_2^2+3\eta^2\sum_{t=1}^T\|\hat{g}_{\bw,t}-\ebb_t[\hat{g}_{\bw,t}]\|_2^2+3 G^{4}\tau_1^{-2} \eta^2 T
  + 3\eta^2\sum_{t=1}^T \|\nabla_{\bw} F_S(\bw_t,\bv_t)\|_2^2\\
  & +2\eta \sum_{t=1}^T \langle \bw_t, \ebb_t[\hat{g}_{\bw,t}]-\hat{g}_{\bw,t}\rangle + 2\eta R\Big\|\sum_{t=1}^T \big(\hat{g}_{\bw,t} - \ebb_t[\hat{g}_{\bw,t}]\big)\Big\|_2 +2G^2\tau_1^{-1}\eta\sum_{t=1}^T\|\bw_t-\bw\|_2.
\end{align*}
Define
$
\bw_{S,1} = \arg \min_{\bw \in B_R} F_S(\bw,\bar{\bv}_T),
$
then
\begin{align}
&\|\bw_{T+1}-\bw_{S,1}\|_2^2 + 2\eta \sum_{t=1}^T\big(F_S(\bw_t,\bv_t)-\inf_{\bw \in B_R}F_S(\bw,\bar{\bv}_T)\big)\notag \\
&\leq \|\bw_1-\bw_{S,1}\|_2^2 +3\eta^2\sum_{t=1}^T\|\hat{g}_{\bw,t}-\ebb_t[\hat{g}_{\bw,t}]\|_2^2+3 G^{4}\tau_1^{-2} \eta^2 T
+ 3\eta^2\sum_{t=1}^T \|\nabla_{\bw} F_S(\bw_t,\bv_t)\|_2^2\notag\\
&+2\eta \sum_{t=1}^T \langle \bw_t, \ebb_t[\hat{g}_{\bw,t}] - \hat{g}_{\bw,t}\rangle + 2\eta R\Big\|\sum_{t=1}^T \big(\hat{g}_{\bw,t} - \ebb_t[\hat{g}_{\bw,t}]\big)\Big\|_2 +2G^2\tau_1^{-1}\eta\sum_{t=1}^T\|\bw_t-\bw_{S,1}\|_2.\label{clip-optimization-1}
\end{align}
Note $\ebb\big[\|\nabla_{\bw} F_S(\bw_t,\bv_t)\|_2^2\big] \leq G^2$, $\ebb[\|\hat{g}_{\bw,t}-\ebb_t[\hat{g}_{\bw,t}]\|_2^2]\leq G^2$ and
\begin{align}\label{clip-optimization-exp}
\ebb_t[\langle \bw_t, \ebb_t[\hat{g}_{\bw,t}] - \hat{g}_{\bw,t} \rangle] = 0.
\end{align}
Then taking an expectation to both sides of Eq.~\eqref{clip-optimization-1}, we have
\begin{align}\label{clip-optimization-2}
& \ebb[\|\bw_{T+1}-\bw_{S,1}\|_2^2] + 2\eta \sum_{t=1}^T\ebb\big[F_S(\bw_t,\bv_t)-\inf_{\bw \in B_R}F_S(\bw,\bar{\bv}_T)\big] \notag\\
& \leq \ebb[\|\bw_1-\bw_{S,1}\|_2^2]+3G^2\eta^2T+3 G^{4}\tau_1^{-2} \eta^2 T
+ 3G^2\eta^2 T
+ 2\eta R\ebb\Big[\Big\|\sum_{t=1}^T \big(\hat{g}_{\bw,t} - \ebb_t[\hat{g}_{\bw,t}]\big)\Big\|_2\Big] \notag\\
& +2G^2\tau_1^{-1}\eta\sum_{t=1}^T\ebb[\|\bw_t-\bw_{S,1}\|_2].
\end{align}
According to Jensen's inequality and Eq.~\eqref{clip-optimization-exp}, we know
\begin{align*}
\Big(\ebb_A\Big[\Big\|\sum_{t=1}^T \big(\hat{g}_{\bw,t} - \ebb_t[\hat{g}_{\bw,t}]\big)\Big\|_2\Big]\Big)^2
\leq \ebb_A\Big[\Big\|\sum_{t=1}^T \big(\hat{g}_{\bw,t} - \ebb_t[\hat{g}_{\bw,t}]\big)\Big\|_2^2\Big]
= \sum_{t=1}^T \ebb_A\big[\big\|\hat{g}_{\bw,t} - \ebb_t[\hat{g}_{\bw,t}]\big\|_2^2\big]
\end{align*}
and
\begin{align*}
\ebb\Big[\Big\|\sum_{t=1}^T \big(\hat{g}_{\bw,t} - \ebb_t[\hat{g}_{\bw,t}]\big)\Big\|_2\Big]
& \leq \ebb\big[\big(\sum_{t=1}^T \ebb_A\big[\big\|\hat{g}_{\bw,t} - \ebb_t[\hat{g}_{\bw,t}]\big\|_2^2\big]\big)^{\frac{1}{2}}\big]\\
& \leq \big(\sum_{t=1}^T \ebb[\big\|\hat{g}_{\bw,t} - \ebb_t[\hat{g}_{\bw,t}]\big\|_2^2]\big)^{\frac{1}{2}}
\leq G\sqrt{T}.
\end{align*}
Plugging the above inequality into Eq.~\eqref{clip-optimization-2} and using $\ebb[\|\hat{g}_{\bw,t}-\ebb_t[\hat{g}_{\bw,t}]\|_2^2]\leq G^2$, we have
\begin{multline}\label{clip-optimization-3}
\ebb[\|\bw_{T+1}-\bw_{S,1}\|_2^2] + 2\eta \sum_{t=1}^T\ebb\big[F_S(\bw_t,\bv_t)-\inf_{\bw \in B_R}F_S(\bw,\bar{\bv}_T)\big] \leq \ebb[\|\bw_1-\bw_{S,1}\|_2^2]\\+3\big(2+G^2\tau_1^{-2}\big)G^2\eta^2 T
+ 2 R G\eta\sqrt{T} +2G^2\tau_1^{-1}\eta\sum_{t=1}^T\ebb[\|\bw_t-\bw_{S,1}\|_2].
\end{multline}
Note if $\sum_{t=1}^T\ebb\big[F_S(\bw_t,\bv_t)-\inf_{\bw \in B_R}F_S(\bw,\bar{\bv}_T)\big] \geq 0$, Eq.~\eqref{clip-optimization-2} implies the following inequality for any $t\in[T]$:
\begin{multline*}
\ebb[\|\bw_{t+1}-\bw_{S,1}\|_2^2] \leq \ebb[\|\bw_1-\bw_{S,1}\|_2^2]+3\big(2+\tau_1^{-2}\big)G^2\eta^2 T
+ 2 R G\eta\sqrt{T}\\
+2G^2\tau_1^{-1}\eta\sum_{t=1}^T\ebb[\|\bw_t-\bw_{S,1}\|_2].
\end{multline*}
If we denote $\bar{\triangle}=\big(\max_{t\in[T]}\ebb\big[\|\bw_{t}-\bw_{S,1}\|_2^2\big]\big)^{\frac{1}{2}}$, then it follows that
\[
\bar{\triangle}^2\leq \ebb\big[\|\bw_1-\bw_{S,1}\|_2^2\big]+3\big(2+G^2\tau_1^{-2}\big)G^2\eta^2 T
+ 2 R G\eta\sqrt{T}
+2G^2\tau_1^{-1}\eta T\bar{\triangle}.
\]
Solving this quadratic inequality of $\bar{\triangle}$ implies that
\[
\bar{\triangle}\leq \big(\ebb\big[\|\bw_1-\bw_{S,1}\|_2^2\big]\big)^{\frac{1}{2}}+\sqrt{3}\big(2+G^2\tau_1^{-2}\big)^{\frac{1}{2}} G \eta \sqrt{T}
+ \sqrt{2}R^{\frac{1}{2}}G^{\frac{1}{2}}\eta^{\frac{1}{2}}T^{\frac{1}{4}}
+2G^{2}\tau_1^{-1}\eta T.
\]
We plug this inequality back into Eq.~\eqref{clip-optimization-3} and derive (noting $\tau_1\gtrsim G$)
\begin{multline*}
\eta \sum_{t=1}^T\ebb\big[F_S(\bw_t,\bv_t)-\inf_{\bw \in B_R}F_S(\bw,\bar{\bv}_T)\big] \lesssim \ebb[\|\bw_{S,1}\|_2^2]+G^2\eta^2 T
+ RG\eta\sqrt{T} \\
+ G^2\tau_1^{-1}\eta T \Big(\big(\ebb\big[\|\bw_{S,1}\|_2^2\big]\big)^{\frac{1}{2}}+G\eta \sqrt{T}
+ R^{\frac{1}{2}}G^{\frac{1}{2}}\eta^{\frac{1}{2}}T^{\frac{1}{4}}
+G^2\tau_1^{-1}\eta T\Big).
\end{multline*}
It then follows that
\begin{multline*}
\eta \sum_{t=1}^T\ebb\big[F_S(\bw_t,\bv_t)-\inf_{\bw \in B_R}F_S(\bw,\bar{\bv}_T)\big] \lesssim \ebb[\|\bw_{S,1}\|_2^2]+G^2\eta^2 T
+ R G\eta\sqrt{T} \\
+ G^2\tau^{-1}\eta T\big(\ebb\big[\|\bw_{S,1}\|_2^2\big]\big)^{\frac{1}{2}}+G^3\tau_1^{-1}\eta^2 T^{\frac{3}{2}}
+ G^{\frac{5}{2}}\tau_1^{-1} R^{\frac{1}{2}}\eta^{\frac{3}{2}}T^{\frac{5}{4}}
+G^{4}\tau_1^{-2}\eta^2 T^2.
\end{multline*}
It is clear that
$
2G^2\tau_1^{-1}(\ebb\big[\|\bw_{S,1}\|_2^2\big]\big)^{\frac{1}{2}}\eta T\leq \ebb\big[\|\bw_{S,1}\|_2^2\big]
+G^4\tau_1^{-2}\eta^2 T^2
$
\[
2G^{3}\tau_1^{-1}\eta^2 T^{\frac{3}{2}}
\leq G^2\eta^2 T + G^4\tau_1^{-2}\eta^2 T^2
\]
and
\[
2 G^{\frac{5}{2}}\tau_1^{-1} R^{\frac{1}{2}}\eta^{\frac{3}{2}}T^{\frac{5}{4}}
\leq R G\eta\sqrt{T}
+ G^4\tau_1^{-2}\eta^2 T^2.
\]
Then,
\begin{align*}
\eta \sum_{t=1}^T\ebb\big[F_S(\bw_t,\bv_t)-\inf_{\bw \in B_R}F_S(\bw,\bar{\bv}_T)\big] \lesssim \ebb[\|\bw_{S,1}\|_2^2]+G^2\eta^2 T
+ R G\eta\sqrt{T}
+G^4\tau_1^{-2}\eta^2 T^2,
\end{align*}
which yields
\begin{align*}
\frac{1}{T}\sum_{t=1}^T\ebb\big[F_S(\bw_t,\bv_t)\big]-\ebb\big[\inf_{\bw \in B_R}F_S(\bw,\bar{\bv}_T)\big] \lesssim \frac{R^2}{\eta T}+G^2\eta
+ \frac{G R}{\sqrt{T}}
+G^{4}\tau_1^{-2}\eta T.
\end{align*}
If $\sum_{t=1}^T\ebb\big[F_S(\bw_t,\bv_t)-\inf_{\bw \in B_R}F_S(\bw,\bar{\bv}_T)\big] < 0$, the above inequality also holds.
In a similar way, we can show that
\begin{align*}
\ebb[\sup_{\bv \in B_R}F_S(\bar{\bw}_T,\bv)]-\frac{1}{T}\sum_{t=1}^{T}\ebb[F_S(\bw_t,\bv_t)] \lesssim \frac{R^2}{\eta T}+G^2\eta
+ \frac{G R}{\sqrt{T}}
+G^4\tau_2^{-2}\eta T.
\end{align*}
Combining these two inequalities, we have
\begin{align*}
\ebb\big[\sup_{\bv \in B_R}F_S(\bar{\bw}_T,\bv)-\inf_{\bw \in B_R}F_S(\bw,\bar{\bv}_T)\big] \lesssim \frac{R^2}{\eta T}+G^2\eta
+ \frac{G R}{\sqrt{T}}
+G^4(\tau_1^{-2}+\tau_2^{-2})\eta T.
\end{align*}
The proof is completed by noting $\frac{GR}{\sqrt{T}} \leq \frac{R^2}{\eta T}+G^2 \eta$.
\end{proof}

Combining Theorem~\ref{cor:clip-stability} and Lemma~\ref{lem:opt-clip}, we derive the following theorem.

\begin{proof}[Proof of Theorem~\ref{thm:clip-gen}]
According to Part (b) of Theorem \ref{thm:stab-gen} and Theorem \ref{cor:clip-stability}, we know
\begin{align*}
\triangle^{w,R}(A_{\bw}(S),A_{\bv}(S))-\triangle^{w,R}_S(A_{\bw}(S),A_{\bv}(S))
\lesssim \frac{LG^2\eta^2T^2}{n^2} + \frac{LG^2\eta^2T}{n} + \frac{G^2\eta T}{n} + \frac{G^2 \eta \sqrt{T}}{\sqrt{n}}.
\end{align*}
According to Lemma \ref{lem:opt-clip}, we know
\begin{align*}
\Delta_S^{w,R}(\bar{\mathbf{w}}_T, \bar{\mathbf{v}}_T) \lesssim \frac{R^2}{\eta T}+G^2\eta
+G^4(\tau_1^{-2}+\tau_2^{-2})\eta T.
\end{align*}
The bound then follows directly from Eq.~\eqref{decomp}.
\end{proof}

\end{document}